\patchcmd{\algorithmic}{\addtolength{\ALC@tlm}{\leftmargin} }{\addtolength{\ALC@tlm}{\leftmargin}}{}{}
\theoremstyle{plain}
\newtheorem{theorem}{Theorem}[section]
\newtheorem{lemma}[theorem]{Lemma}
\newtheorem{corollary}[theorem]{Corollary}
\theoremstyle{definition}
\newtheorem{definition}[theorem]{Definition}
\newtheorem{assumption}[theorem]{Assumption}
\theoremstyle{remark}
\newtheorem{remark}[theorem]{Remark}
\icmltitlerunning{On the Power of Context-Enhanced Learning in LLMs}
\def\eqref#1{equation~\ref{#1}}
\def\ceil#1{\lceil #1 \rceil}
\def\floor#1{\lfloor #1 \rfloor}
\def\1{\bm{1}}
\def\ra{{\textnormal{a}}}
\def\rmV{{\mathbf{V}}}
\def\va{{\bm{a}}}
\def\vb{{\bm{b}}}
\def\ve{{\bm{e}}}
\def\vg{{\bm{g}}}
\def\vk{{\bm{k}}}
\def\vl{{\bm{l}}}
\def\vo{{\bm{o}}}
\def\vq{{\bm{q}}}
\def\vs{{\bm{s}}}
\def\vv{{\bm{v}}}
\def\vx{{\bm{x}}}
\def\vy{{\bm{y}}}
\def\mA{{\bm{A}}}
\def\mC{{\bm{C}}}
\def\mH{{\bm{H}}}
\def\mI{{\bm{I}}}
\def\mM{{\bm{M}}}
\def\mP{{\bm{P}}}
\def\mQ{{\bm{Q}}}
\def\mS{{\bm{S}}}
\def\mV{{\bm{V}}}
\def\mW{{\bm{W}}}
\DeclareMathAlphabet{\mathsfit}{\encodingdefault}{\sfdefault}{m}{sl}
\SetMathAlphabet{\mathsfit}{bold}{\encodingdefault}{\sfdefault}{bx}{n}
\newcommand{\cJ}{\mathcal{J}}
\newcommand{\cL}{\mathcal{L}}
\newcommand{\cU}{\mathcal{U}}
\newcommand{\cY}{\mathcal{Y}}
\newcommand{\R}{\mathbb{R}}
\DeclareMathOperator*{\argmax}{arg\,max}
\DeclareMathOperator*{\argmin}{arg\,min}
\newcommand{\norm}[1]{\left\lVert#1\right\rVert}
\newcommand{\snorm}[1]{\lVert#1\rVert}
\newcommand{\abs}[1]{\left\lvert#1\right\rvert}
\newcommand{\pdr}[2]{\frac{\partial #1}{\partial #2}}
\newcommand{\pr}[1]{\mathbb{P}\left[#1\right]}
\newcommand{\var}[1]{Var\left[#1\right]}
\newcommand{\rbr}[1]{\left(#1\right)}
\newcommand{\srbr}[1]{(#1)}
\newcommand{\sbr}[1]{\left[#1\right]}
\newcommand{\cbr}[1]{\left\{#1\right\}}
\newcommand{\scbr}[1]{\{#1\}}
\newcommand{\NN}{\mathbb{N}}
\newcommand{\diag}{\text{diag}}
\newcommand{\T}{\top}
\newtcolorbox{thmbox}{
  colback=blue!5,        
  colframe=blue!75!black,
}
\newtcolorbox{defnbox}{
  colback=black!5,        
  colframe=black!75!black,
}
\newtcolorbox{promptbox}{
  colback=brown!5,        
  colframe=brown!75!black,
  fontupper=\linespread{1}\selectfont
}
\newtcolorbox{defnboxmaintext}{
  colback=black!5,        
  colframe=black!75!black,
  left=4pt,right=4pt,top=1pt,bottom=1pt
}
\newtcolorbox{remarkbox}{
  colback=orange!5,        
  colframe=orange!75!black,
}
\newtcolorbox{remarkboxmaintext}{
    colback=orange!5,        
  colframe=orange!75!black,
  left=4pt,right=4pt,top=4pt,bottom=4pt
}
\def\functionclass{\mathcal{F}}
\def\distribution{\mathcal{D}}
\def\sqdim{\text{SQ-dim}}
\def \task {g}
\def \loss {\ell}
\def \autoloss {\ell_{\text{auto}}}
\def \taskdesc {\textsc{desc}}
\def \str {\textsc{str}}
\def \curriculumtext {curriculum-text}
\def \langmodel {f}
\def \lmparam {\theta}
\def \taskcapable {$\task$-capable}
\def \dataset {D}
\def \maskcurr {\textsc{curr}}
\def \numdatapoints {N}
\def \numDPs {\numdatapoints}
\def \strcodebook {set of phrasebooks}
\def \strdicts {phrasebooks}
\def \strdict {phrasebook}
\def \var {x}
\def \ind {\mathbbm{1}}
\def \depth {d}
\def \numchar {n}
\def \dictionary {\mathbf{\pi}}
\def \specialdictionary {\mathbf{\Delta}}
\def \vardictionary {\mathbf{\pi}}
\def \codebook {\mathbf{\Pi}}
\def \transfunc {T}
\def \totaltransfunc {\translationtask}
\def \vardict {\vardictionary}
\def \alphabetset {A}
\def \translationtask {\text{\textbf{MLT}}}
\def \inputlength {L}
\def \inputseq {\vs_1}
\def \intermseq {\vs }
\def \shiftintermseq {\Tilde{\vs}}
\def \ICM {context-enhanced learning}
\def \split {\textit{Split}}
\def \translate {\textit{Translate}}
\def \circularshift {\textit{Circular shift}}
\def \merge{\textit{Merge}}
\def \copyop{\text{\textbf{copy}}}
\def \xorop{\text{\textbf{xor}}}
\def \notop{\text{\textbf{not}}}
\def \uniform {\mathcal{U}}
\def \corr {\text{Correlation}}
\def \mirror {\text{Mirrorset}}
\def \targetcodebook {{\codebook^*}}
\def \targetCB {\targetcodebook}
\def \Ttoken {\texttt{<THINK>}}
\def \Tstring {\texttt{<THINK>,...}}
\def \surrogateModel {\surrogate}
\def \surrogateWeights {{\cbr{\W_i}_{i=1}^\depth}}
\def \Q {\shiftmat}
\def \QT {\shiftmat^\T}
\def \zeromat {\mathbf{0}}
\def \vecrep {\vv}
\def \svecrep {\ve}
\def \matrep {\mV}
\def \embed {\mH}
\def \rmatrep {\rmV}
\def \shiftmatrep {\tilde{\mV}}
\def \rshiftmatrep {\tilde{\rmV}}
\def \shiftfunc {\mathop{\texttt{Shift}}}
\def \maxfunc {\texttt{HardMax}}
\def \softmaxfunc {\texttt{SoftMax}}
\def \matricize {\textbf{Mat}}
\def \shiftmat {\mQ}
\def \transmat {\texttt{Matrix}}
\def \contextmat {\mC}
\def \weightmat {\mW}
\def \surrogate {\textsc{Surr-MLT}}
\def \onehot {\ve}
\def \levelembedding {\vl}
\def \startendembedding {\vb}
\def \segment {\vg}
\def \constructmodule {\textsc{MLT-module}}
\def \constructcircularshiftmodule {\textsc{Circular Shift-module}}
\def \constructtranslatemodule {\textsc{Translate-module}}
\def \think {\Ttoken}
\def \pad {\texttt{<P>}}
\def \W {\weightmat}
\def \P {\mP}
\def \CM {\contextmat}
\def \Vmat {\matrep}
\def \gtW {\weightmat^*}
\def \gdloss {\cL}
\def \len {\inputlength}
\def \lonehot {\bar{\onehot}}
\newcommand{\ai}[1]{a_{(#1)}}
\newcommand{\rai}[1]{{\ra_{(#1)}}}
\newcommand{\bi}[1]{b_{(#1)}}
\newcommand{\V}[2]{\Vmat_{#1}^{(#2)}}
\newcommand{\RV}[2]{\rmatrep_{#1}^{(#2)}}
\newcommand{\VT}[2]{\Vmat_{#1}^{*(#2)}}
\newcommand{\RVS}[2]{\rshiftmatrep_{#1}^{*(#2)}}
\newcommand{\SV}[2]{\tilde{\Vmat}_{#1}^{(#2)}}
\newcommand{\SVT}[2]{\tilde{\Vmat}_{#1}^{*(#2)}}
\newcommand{\transarrow}[1]{\overset{#1}{\longrightarrow}}
\definecolor{newgreen}{RGB}{17,200,214}
\definecolor{newpink}{RGB}{254,149,208}
\definecolor{newred}{RGB}{254,1,143}
\newcommand{\red}[1]{{\color{newred}{#1}}}
\newcommand{\orange}[1]{{\color{newpink}{#1}}}
\newcommand{\green}[1]{{\color{newgreen}{#1}}}
\def\shownotes{1}  \ifnum\shownotes=1
\newcommand{\authnote}[2]{{[#1: #2]}}
\newcommand{\authnote}[2]{}
\newcommand{\circular}[1]{{}^{\circlearrowleft}{#1}}
\def \family {\textsc{family}}
\def \masking {\textsc{Random-Drop}}
\def \batchsize {B}
\def \fixdpstrat {{\bf Fixed Dropout}}
\def \annealdpstrat {{\bf Annealing Dropout}}
\begin{document}
\doparttoc 
\faketableofcontents 
\twocolumn[
\icmltitle{On the Power of Context-Enhanced Learning in LLMs}

\icmlsetsymbol{equal}{*}

\begin{icmlauthorlist}
\icmlauthor{Xingyu Zhu}{princeton university,equal}
\icmlauthor{Abhishek Panigrahi}{princeton university,equal}
\icmlauthor{Sanjeev Arora}{princeton university}
\end{icmlauthorlist}

\icmlaffiliation{princeton university}{Princeton Language and Intelligence, Princeton University}

\icmlcorrespondingauthor{}{$\{$xingyu.zhu, ap34$\}$@princeton.edu}

\icmlkeywords{Machine Learning, ICML}

\vskip 0.3in
]

\printAffiliationsAndNotice{\icmlEqualContribution} 


\begin{abstract}
We formalize a new concept for LLMs, \textbf{context-enhanced learning}. It involves standard gradient-based learning on text except that the context is enhanced with additional data on which no auto-regressive gradients are computed. This setting is a gradient-based analog of usual in-context learning (ICL) and appears in some recent works.

Using a multi-step reasoning task, we prove in a simplified setting that context-enhanced learning can be \textbf{exponentially more sample-efficient} than standard learning when the model is capable of ICL. At a mechanistic level, we find that the benefit of context-enhancement arises from a more accurate gradient learning signal.
We also experimentally demonstrate that it appears hard to detect or recover learning materials that were used in the context during training. This may have implications for data security as well as copyright.
\end{abstract}

\section{Introduction}
\label{sec:intro}

Pre-trained LLMs \citep{brown2020language,touvron2023llama,team2023gemini} show strong capability to learn new material at inference time, for instance via in-context-learning (ICL). There is also emerging evidence that gradient-based learning on a piece of text (say, math Q\&A) can be enhanced if additional helpful text is placed in the context, even though no auto-regressive loss is computed on this helpful text ~\cite{liao2024textit,zou2024promptintern,choi2025teaching}. Such strategies have also been shown to benefit pre-training as prepending source URLs to documents can enhance the model's training efficiency and memorization capacity \citep{allen2024physics,gao2025metadata}. 

In this paper we seek to formally study this phenomenon, whereby LLMs' gradient-based learning is {\em enhanced} via placement of additional helpful material in the context,  but without actual auto-regressive gradient updates on this  material. We will call this form of learning {\em \ICM{}}.
Because the material used for context-enhancement can evolve over the course of training, this approach naturally aligns with the idea of a {\em curriculum}.

Context-enhanced learning intuitively mirrors how humans learn: when solving problems, they refer to textbooks or demonstrations for guidance, yet they do not seek to memorize these resources {\em per se}. An analogous concept {\em Learning using Privileged Information} (or LUPI), has been well-studied in the context of kernel SVMs~\cite{vapnik2009new} and classification models. Our work adapts this concept for LLMs, and surfaces the following questions:

\looseness-1\hypertarget{q1}{\textbf{Q1:}}\quad  Even though autoregressive loss is computed on the same set of tokens, can \ICM{} be significantly more powerful than usual auto-regressive learning that has no additional in-context materials? If so, can we theoretically characterize and understand the mechanism behind such improvement?

\looseness-1\hypertarget{q2}{\textbf{Q2:}}\quad Do models need a certain capability level to benefit from \ICM{}? 
This is a natural question, since
leveraging in-context information (e.g., ICL) likely requires a minimum capability level or model size~\cite{brown2020language,wei2022emergent}.

\looseness-1\hypertarget{q3}{\textbf{Q3:}}\quad Is \ICM{} a viable way to use  privileged/private information during learning? Providing such privileged information in the context could conceivably enhance the model's learning, but since no auto-regressive gradient updates happen on the privileged/private information, there might be a lower risk of leakage of such information via  API calls. 

\looseness-1
\textbf{Paper Overview:}
 \cref{sec:setup-contextAidedLearning}  formally defines \ICM{}. To allow rigorous understanding of the power of \ICM{}, \cref{sec:setup-translationTask}  introduces a multi-step reasoning task 
called {\em Multi-layer translation}.  This is a synthetic setting involving  $d+1$ languages $L_1, L_2, \dots, L_{d+1}$ over finite alphabets. For each $i$, there is a simple {\em \strdict{}} that describes how to translate from $L_i$ to $L_{i+1}$, and the mapping from $L_1$ to $L_{d+1}$ is a sequential application of the set of \strdicts{}.

The goal is to learn how to translate text from $L_1$ into $L_{d+1}$ without explicitly writing down intermediate steps. The learner is provided excerpts from these \strdicts{} as helpful information in the context during training, but allowed no auto-regressive gradient updates on these tokens. 

If we train with auto-regressive loss on translation output conditioning on the \strdicts{}' excerpts and the input, a model with certain ICL capacity level may quickly learn the translation task by leveraging the in-context \strdicts{}.
However, this learning could be brittle, that the model becomes reliant on having the \strdicts{}' excerpts in context.
This reliance can be weaned off by use of probabilistic {\em dropout} on \strdicts{} tokens in context. 
Intuitively, this curriculum forces the model to not only read \strdicts{}' excerpts, but also gradually internalize the \strdicts{}' contents. Over time, the model's ability to translate from $L_1$ to $L_{\depth+1}$ will become robust to the dropout of \strdicts{}' excerpts, and eventually, their complete removal.

\looseness-1Experiments show that this training strategy indeed works when the learner is a pre-trained LLM that is capable of ICL (but fails when LLM is incapable of ICL). Even when training with $20\%$ dropout rate, the model can perfectly translate strings from $L_1$ to $L_{\depth+1}$ without any \strdicts{}' excerpts at test time. The rest of the paper is structured as follows:

\looseness-1$\bullet$ \cref{sec:experiments} details our experiments and the findings sketched above. Experiments show that an ICL-capable model follows an intuitive sequential processing of the \strdicts{} provided in-context, whereby transformer layers approach stages of translation in an intuitive way; e.g., $L_3 \rightarrow L_4$  is done after $L_2 \rightarrow L_3$
(\cref{sec:mechanistic}).

\looseness-1$\bullet$ \cref{sec:memorization}, shows
that after \ICM{}, the output probabilities of the model reveal little about the  \strdicts{} rules that were seen during training.

\looseness-1$\bullet$ In \cref{sec:theory}, we propose a theoretical framework using a surrogate/simplified model that represents an ideal LLM for the translation task (\cref{sec:theory-surrogateModel}). This framework shows an exponential gap in sample complexity depending on whether the model is trained with or without in-context information on \strdicts{} (\cref{sec:theory-hard,sec:theory-converge}).  Experiments reveal that the mechanism behind the increased sample efficiency of context-enhanced learning is an improved gradient signal, measured by gradient prediction accuracy (\cref{sec:theory-snr}).

\section{Setup}

\subsection{Context-Enhanced Learning}
\label{sec:setup-contextAidedLearning}

Let $X$ be the space of all possible text strings and let $\mathcal{Y}$ be the space of all possible distributions over texts. Let $\task$ be a language task mapping inputs $x\in X_g\subset X$ to a distribution $Y \in \mathcal{Y}$. Let $\langmodel_\lmparam: X \to \mathcal{Y}$ be a general auto-regressive language model. We characterize $\langmodel_\lmparam$'s capability on task $g$ as follows:
\begin{defnboxmaintext}
\begin{definition}[\taskcapable{} model, informal]
\label{def:task-capable-model}
   A language model $\langmodel_\lmparam$ is \taskcapable{} for a language task $\task$ if  $\langmodel_\lmparam$ is close to $\task$, as measured by a suitable metric on $X_g$.
\end{definition}
\end{defnboxmaintext}

\looseness-1Vanilla supervised fine-tuning (SFT) aims to create a \taskcapable{} model by minimizing auto-regressive loss $\autoloss$ on a supervised dataset $\dataset_\task = \cbr{(x_i, y_i)}_{i=1}^{\numDPs}$, where the label $y_i$ for each $x_i\in X_g$ is sampled from $\task(x_i)$. 

\looseness-1 \emph{Context-enhanced learning} involves augmenting the supervision with additional \curriculumtext{} that depends on the task $\task$, input $x$, and training step $t$. We denote \curriculumtext{} as $\maskcurr_\task(x, t)$, which could be anything (helpful explanations, excerpts from textbooks,   worked-out examples, etc.).

\begin{algorithm}[H]
   \caption{\textbf{Context-Enhanced Learning} \\In contrast to standard SFT, it relies on \color{blue}{curriculum-text} in context on which \textbf{no auto-regressive loss is computed}.}
   \label{alg:contextEnhancedLearning}
\begin{algorithmic}
   \STATE {\bfseries Input:} Supervised dataset $\dataset_\task$, \curriculumtext{} $\maskcurr_\task$, initialization $\lmparam$, total steps $T$
   \FOR{$t=1$ {\bfseries to} $T$}
   \STATE Sample $(x, y)\sim \dataset_\task$
   \STATE Compute loss $l \gets \autoloss \rbr{\langmodel_\lmparam([{\color{blue}\maskcurr_\task(x, t)}, x,y]), y}.$
   \STATE Update parameters $\lmparam$ with gradient $\nabla_\lmparam l$ 
   \ENDFOR
   \STATE Return $\lmparam$
\end{algorithmic}
\end{algorithm}
\vspace{-0.2in}

\looseness-1On sample $(x, y)$ drawn from the supervised dataset, we use auto-regressive loss for model's prediction on $y$ conditioned on $[\maskcurr_\task(x, t), x]$ to train our models. Note that \emph{no loss is computed for \curriculumtext{} tokens.} We denote this loss as $\autoloss \rbr{\langmodel_\lmparam([\maskcurr_\task(x, t), x, y]), y}$. 

\subsection{Multi-level Translation (\translationtask)}
\label{sec:setup-translationTask}

To study the  power of context-enhanced learning, we introduce a multi-step translation task that is easy to learn with a straightforward curriculum in the context, but 
very difficult to learn with just input-output examples.

The task is inspired by encryption methods\footnote{Note that there is no proof that current cryptographic tasks are fundamentally difficult for computers. Here the task makes sense as an example of multi-step reasoning and we are interested in sample-complexity lower bounds.}  such as the Feistel cipher 
\cite{knudsen1993Feistelcipher}. 
The multi-level translation (\translationtask) task involves a bijective mapping from strings to strings that is a composition of $2d$ simpler bijections, each involving simple shift by $1$, or transforming bigrams ($2$-tuples of characters) via a bijection.  The depth-$d$ translation can be described by $O(d)$ bits.  But we will show that learning the task only from input-output pairs would require $e^{\Omega(d)}$ sample complexity in the SQ-learning framework \citep{kearns1998efficient} (see~\cref{thm:theory-sqdim}).

\vspace{-0.1in}
\begin{table}[H]
\caption{Important notations for defining \translationtask{}}
    \label{tab:notation_table}
\scalebox{0.8}{
    \centering
    \begin{tabular}{c|c}
    \toprule
       $\depth$  & Depth of translation task \\
       $\numchar$  & Number of characters in each alphabet \\
       $\alphabetset$ & An alphabet set \\
       $\dictionary$ & A \strdict{} between 2-tuples in two alphabets\\
       $\codebook$ & A set of \strdicts{} $\{\dictionary_i\}$ defining a translation task \\
       $\translationtask_\codebook$ & Translation task with a \strcodebook{} $\codebook$\\
       $\translationtask(\depth, \numchar)$ & Family of translation tasks of depth $\depth$ and $\numchar$ characters \\
    \bottomrule
    \end{tabular}
}
\end{table}
\vspace{-0.1in}

Concretely, let $\alphabetset_1, \dots, \alphabetset_{\depth+1}$ be $\depth+1$ alphabets all of the same size with $\numchar$ characters. For every consecutive pair of alphabets $\alphabetset_i$ and $\alphabetset_{i+1}$, we fix a \emph\strdict{} $\vardictionary_i: \alphabetset_i^2\to \alphabetset_{i+1}^2$ as a bijective mapping from 2-tuples in $\alphabetset_i$ to 2-tuples in $\alphabetset_{i+1}$. Each \strdict{} $\vardict_i$ can be represented by a binary stochastic matrix $\transmat({\vardict_i})$ with rules represented as one-hot columns (see \cref{def:apdx-surrogate-translationMatrixRepresentation}).

\begin{figure}[H]
    \centering
    \vspace{-0.05in}
    \includegraphics[width=0.85\linewidth]{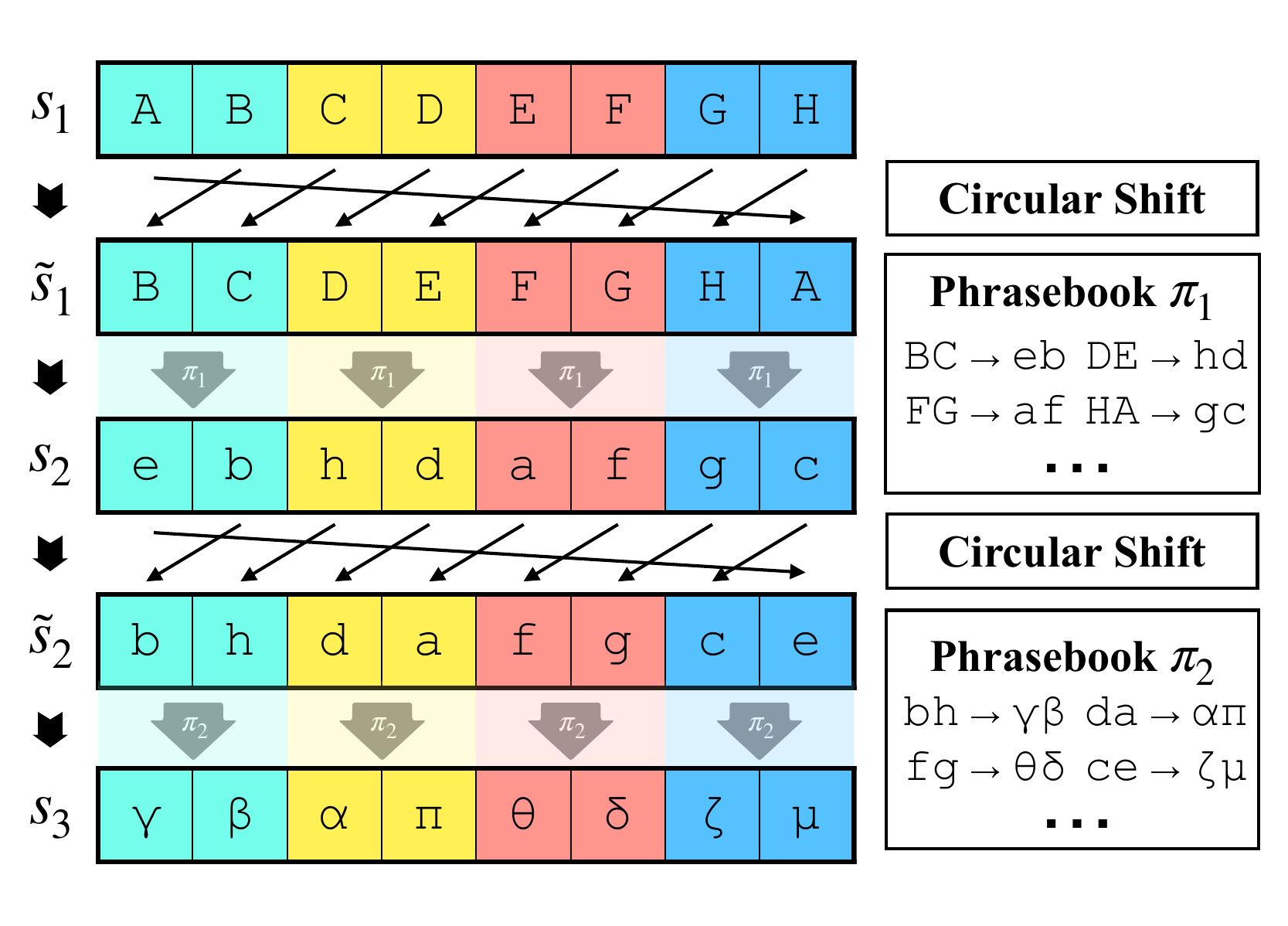}
    \vspace{-0.13in}
    \caption{Illustration of an \translationtask$(2,8)$ instance with input sequence of $8$ tokens. Input sequence $\inputseq$ went through 2 translation steps to $\intermseq_3$.
    Each output character depends on $4$ input characters, for example, character $\mu$ is derived from \texttt{c,e} in $\intermseq_2$, which, in turn, are computed from \texttt{A,B,C,H} in $\intermseq_1$.
    }
    \vspace{-0.1in}
    \label{fig:translation-demo}
\end{figure}

The input of the translation process is an even-length sequence, referred to as $\inputseq \in \alphabetset_1^{\inputlength}$ where $\inputlength$ is the sequence length. The translation process modifies $\inputseq$ recursively. For every $i\in[\depth]$, $\intermseq_i \in \alphabetset_i^{\inputlength}$ will be transformed to $\intermseq_{i+1}\in \alphabetset_{i+1}^{\inputlength}$ using \emph\strdict{} $\vardict_i$ through the following 2 sub-processes:

$\bullet$ \circularshift{}: The characters in $\intermseq_i \in \alphabetset_i^{\inputlength}$ are shifted by $1$ character leftward (and wrapped around to the end if necessary) to give sequence $\shiftintermseq_i \in \alphabetset_i^{\inputlength}$. Formally, for each $j \in [1, \inputlength]$ we have $\shiftintermseq_{i, j} = \intermseq_{i, (j+1) \% \inputlength}$.

 $\bullet$  \translate{}: Using the \emph\strdict{} $\vardictionary_i: \alphabetset_i^2\to \alphabetset_{i+1}^2$, we translate 2-tuples (bigrams) of consecutive characters in sequence $\shiftintermseq_i$ to create $\intermseq_{i+1}$. That is, for every odd $j  \in [1, \inputlength]$, $(\intermseq_{i+1, j}, \intermseq_{i+1, j+1}) = \vardict_i (\shiftintermseq_{i, j}, \shiftintermseq_{i, j+1})$.

We denote the mapping from $\intermseq_i$ to $\intermseq_{i+1}$ as $\intermseq_{i+1} = \transfunc_{\vardict_i}(\intermseq_i)$.  
The $\depth$-step translation is defined as the composition $\intermseq_{\depth+1} = \transfunc_{\vardict_\depth}\circ \transfunc_{\vardict_{\depth-1}}\circ\dots\circ \transfunc_{\vardict_{1}}\rbr{s_1}$
which converts the input sequence $\inputseq$ to $\intermseq_{d+1}$ through $\depth$ translation steps. Please see \cref{fig:translation-demo} for a visual illustration with $d=2, n=8$. 

We denote $\codebook=\cbr{\dictionary_i}_{i=1}^\depth$ the \strcodebook{} of all levels, and denote $\totaltransfunc_\codebook: \inputseq\mapsto \totaltransfunc_\codebook(\inputseq):=\transfunc_{\vardict_\depth}\circ\dots\circ \transfunc_{\vardict_{1}}\rbr{s_1}$ the mapping from input $\inputseq$ to output $\intermseq_{\depth + 1}$ using $\codebook$. We use $\translationtask(\depth,\numchar)$ to refer to the family of translation tasks involving $\depth$ step and $\numchar$ characters in each alphabet.

We note that $\translationtask(\depth,\numchar)$ has two key properties: 
\textbf{1.} Once the \strdicts{} are fixed, the translation task defines a bijection between input and output strings since \circularshift{} and \translate{} are invertible (see \cref{lemma:setting-MLTinvertability}).
\textbf{2.} Each character in the output string depends on $2 \depth$ characters from the input text string (see caption of \cref{fig:translation-demo}), making learning from input-output pairs very difficult (\cref{thm:theory-sqdim}). 

\looseness-1For each \strdict{} $\dictionary_i$, we compose its textual representation $\str\rbr{\dictionary_i}$ to be of the form \texttt{... a b -> C D; e d -> B A; ...}
which lists (insensitive of ordering) \strdict{} rules between 2-tuples in the previous alphabet and the next alphabet. Moreover, we denote the concatenation $[\str\rbr{\dictionary_1},\dots, \str\rbr{\dictionary_d}]$ as $\str\rbr{\codebook}$, and will be used to define \curriculumtext{}.

\subsection{Needed: Curriculum without Explicit CoT}
\looseness-1To teach the model a particular translation task $\translationtask_\codebook$ from input-output pairs of the form $(\inputseq, \totaltransfunc_\codebook(\inputseq))$, we can train it with relevant sections of the phrasebooks $\str\rbr{\codebook}$ in context as curriculum, but at test time it would not have access to the phrasebook so it is important not to teach it explicit chain-of-thought (CoT) containing in-context information. (Another consideration is data privacy, with the phrasebook being considered privileged information.) However, a dual use of CoT is to provide the model extra compute at inference time \cite{goyal2023thinkToken}, which is needed here since the translation task has $d$ stages.  To facilitate such silent computation we teach the model to output a fixed number of {\tt <THINK>} tokens, sometimes refered to as {\em silent CoT} or {\em internalized CoT}.

 \subsection{ICL-capablity for $\translationtask(\depth, \numchar)$}

To learn from books, one needs to know how to read. The analogous notion under study here is whether context-enhanced learning requires  capability to sort-of ``understand'' the in-context material (\hyperlink{q2}{\textbf{Q2}}).  In the context of \translationtask, we formalize such capability as being able to achieve low loss on the translation task when provided with the relevant phrasebook sections in context while allowing silent CoT.

\begin{defnboxmaintext}
\begin{definition}[$\translationtask(\depth, \numchar)$-ICL-capability, informal]
\label{def:icl-capable-model}
    A language model $\langmodel_\lmparam$ is $\translationtask(\depth, \numchar)$\emph{-ICL-capable} if for any set of phrasebooks $\codebook$ in $\translationtask(\depth, \numchar)$, $\langmodel_\lmparam([\str\rbr{\codebook}, \inputseq])$ is close to $\intermseq_{\depth + 1} = \translationtask_{\codebook}(\inputseq)$ disregarding the \think{} tokens, when measured by a discrepancy metric over all valid input strings $\inputseq$.
\end{definition}
\end{defnboxmaintext}

\section{Experiments and Observations}
\label{sec:experiments}

\looseness-1In this section, we fix a \strcodebook{} $\targetCB$ and study \ICM{} on $\translationtask_\targetCB$.

We first introduce the preparation of an $\translationtask(\depth, \numchar)$-ICL-capable model. We then introduce a \ICM{} curriculum involving random dropping of phrasebook rules in context. We then present empirical evidence for significant sample efficiency of \ICM{}. We conclude the section with mechanistic insights into \ICM{} concerning internal representations and  evolution of parameters.

We use the \textbf{Llama 3.2-3B instruction-tuned model} \citep{dubey2024llama} as the base model and fix $\depth=5$ with $n=8$ or $10$.
Detailed configurations are available in \cref{sec:apdx-expconfig}.

\subsection{Experimental Setup}\label{sec:experiments-stage1}

\textbf{(i) Preparing an $\translationtask(\depth, \numchar)$-ICL-Capable Model:}\\
The Llama 3.2B model is $\translationtask(\depth, \numchar)$-ICL-capable as it has not seen the  task during training. To make it ICL-capable for our purpose, we use SFT on other random translation tasks with random \strdicts{} $\codebook_1,\dots, \codebook_M$, following common CoT internalization pipeline \citep{deng2024internalizeCoT,pfau2024let, hao2024coconut}. We use one training example per \strcodebook{} to prevent memorization of specific phrasebooks. At the end of training, given  input $[\str(\codebook),\inputseq]$ for any $\inputseq$ and $\codebook$ the model can generate $\Tstring,\translationtask_\codebook(\inputseq)$ correctly. 
Details on the first stage of training are described in \cref{sec:apdx-expconfig-stage1}.

\textbf{(ii) Setting up \ICM{} for $\translationtask_\targetCB$:}
We use the $\translationtask(\depth, \numchar)$-ICL-capable model above as initialization and train for $\translationtask_\targetCB$. Supervised dataset $D_\targetCB$ is curated with input-label pairs of the form $(\inputseq, [\Tstring,\translationtask_\targetCB(\inputseq)])$, where $\inputseq$ is a random string sampled from $\alphabetset_1$, with length between 20 and 40.

We define curriculum-text $\maskcurr_\targetCB(\inputseq, t)$ using excerpts from phrasebooks $\str(\targetCB)$ (selected based on $\inputseq$) with random dropout of rules (parameterized by training step $t$). We explore the following curriculum and study their impact:

$\bullet$ {\bf No Context} (vanilla SFT): Empty \curriculumtext{}.

$\bullet$ \fixdpstrat{}: A simple strategy independent of step $t$; given $\inputseq$, only curate rules in $\targetCB$ used in the translation of $\inputseq$, then randomly drop 20\% of the curated rules.

$\bullet$ \annealdpstrat{}: A better strategy: for $\inputseq$, select the necessary rules from $\targetCB$ plus $25\%$ unused rules. Apply random dropout on these rules, increasing linearly from $0\%$ to $100\%$ over the first $60\%$ of training, then maintain $100\%$.

$\bullet$ {\bf No Dropout} (ablation): Given $\inputseq$, always provide all rules in $\targetCB$ used in the translation of $\inputseq$ in \curriculumtext{}.

$\bullet$ {\bf Wrong Context} (ablation): Equivalent to \annealdpstrat{} but the rules in the curriculum are incorrect.

\subsection{Experiment Results}
\label{sec:experiments-results}

To check the sample efficiency benefit of \ICM{} (\hyperlink{q1}{\textbf{Q1}}), we construct supervised datasets $D_\targetCB$ with $10^4$ to $10^6$ unique samples and train the models for one epoch on each.\footnote{We fix 1 epoch for fair comparison on sample efficiency.} We report the next-token prediction accuracy on the final answer tokens (ignoring thought tokens) for held-out samples when conditioning on no \curriculumtext{} (100\% dropped-out) and compare against the  supervised dataset size. To check the necessity of proper ICL capability (\hyperlink{q2}{\textbf{Q2}}), we ablate with \annealdpstrat{} but starting from non-$\translationtask(\depth, \numchar)$-ICL-Capable 3B base model.

\begin{figure}[!htbp]
    \centering
    \includegraphics[width=1\linewidth]{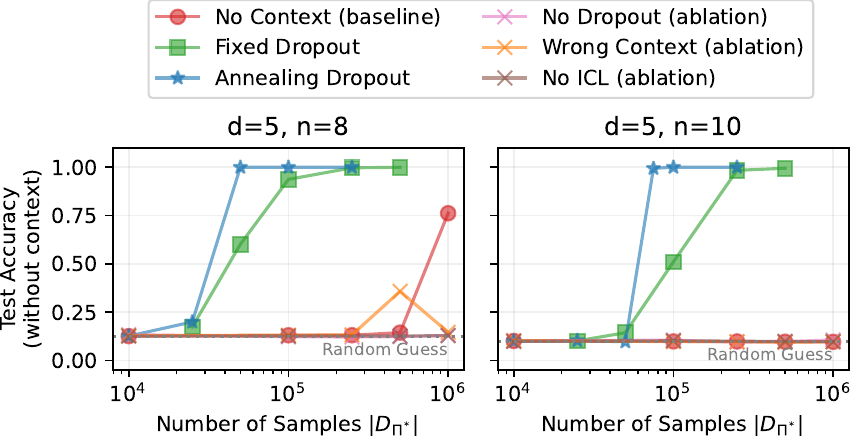}
    \caption{Context-enhanced learning of $\translationtask_\targetCB$ when $n=8$ and $n=10$. 
    \annealdpstrat{} learns the   fastest followed by \fixdpstrat{}, requiring 10x less samples compared to vanilla SFT. Ablations show the necessity of correct context, proper dropout, and sufficient ICL capability as a good initialization. Note that there are multiple ablations overlapping around random guesses.}
    \label{fig:experiments-main-n10}
\end{figure}

\looseness-1\cref{fig:experiments-main-n10} demonstrates the significant sample efficiency of \ICM{}. 
Moreover, models trained with subsets of \strdicts{} and just 20\% dropout give perfect heldout test-time accuracy with 100\% dropout rate. Thus they are able to effectively use \strdict{} rules from subsets of the phrasebook that did not co-occur in the same training sample. Clearly, the model has learned the \strdict{} atomically, and can combine the rules as needed at test time.

In an ablation (\cref{sec:apdx-experiments}), we show that \ICM{} only internalizes the rules whose dropout from  \curriculumtext{} leads to an increase in loss on training data.

The experiment results can be summarized as follows:
\begin{remarkboxmaintext}
\textbf{(i)} Context-enhanced learning from an ICL-capable model greatly improves training sample efficiency.\\
\textbf{(ii)} The \strdict{} rules are internalized atomically, and only when missing them incurs an increased loss.
\end{remarkboxmaintext}

\begin{figure*}[h]
    \centering
    \includegraphics[width=0.95\linewidth]{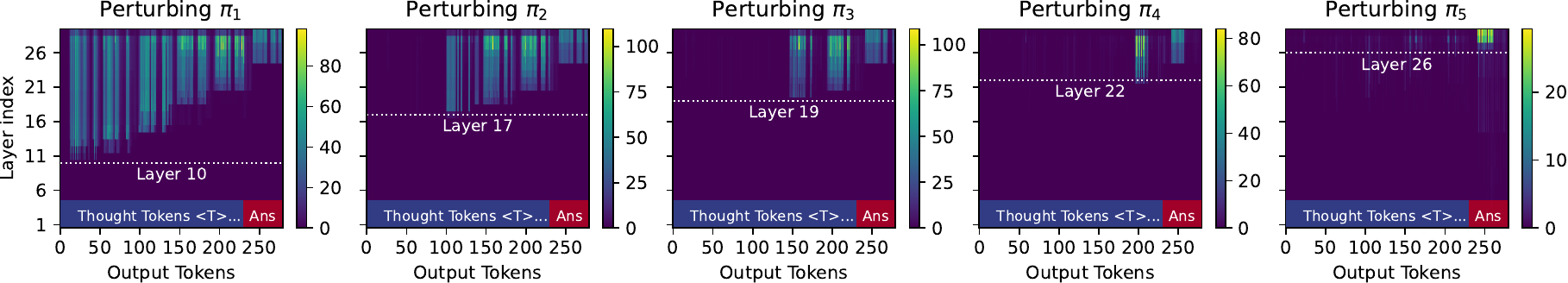}
    \caption{Evidence for sequential processing in a $\translationtask(5, 8)$-ICL-capable model. Each entry is the $l_2$ norm between the latent representation pre and post-perturbing $\dictionary_i$ (after certain layer at certain token position). Perturbing later \strdicts{} in the context changes representations in the later layers, suggesting that later translation steps happens in later layers. To rule out the possibility that the affected depth is only dependent on position of the perturbation instead of the semantic content, we conduct the same set of experiment except that we only perturb rules that are \emph{not} used when translation $\inputseq$, which yields negligible representation difference (see comparison in \cref{fig:mechanistic-icl-additional}).}
    \label{fig:mechanistic-icl}
\end{figure*}

\begin{figure*}[!htbp]
    \centering
    \includegraphics[width=0.97\linewidth]{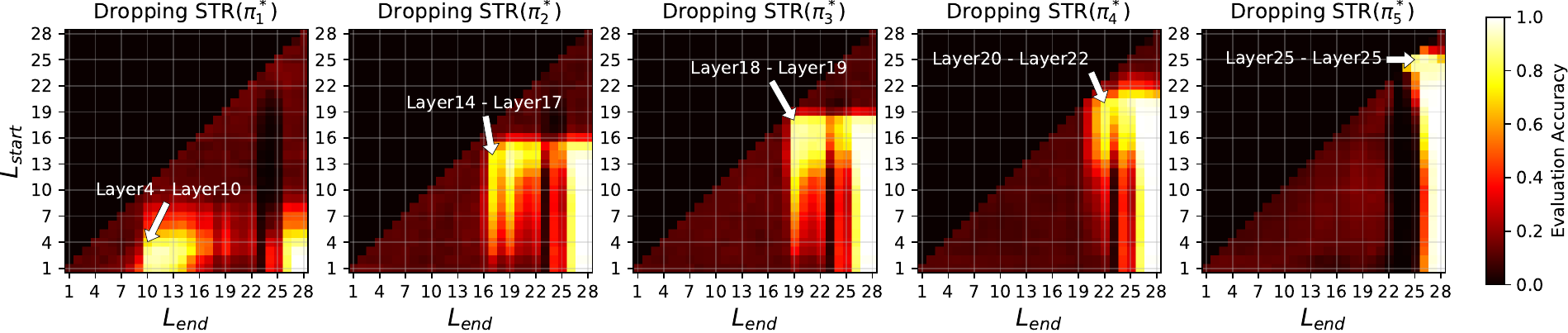}
    \caption{Starting from the $\translationtask(5,8)$-ICL-capable model $\langmodel_\lmparam$ evaluated in \cref{fig:mechanistic-icl}, we fix a set of \strdicts{} $\targetCB$, train with \annealdpstrat{} curriculum on 100k samples, and obtain an $\translationtask_\targetCB$-capable model $\langmodel_{\lmparam^*}$. We construct ``stitched models'' by selectively substituting layers from layer $L_{\text{start}}$ to layer $L_{\text{end}}$ in $\langmodel_\lmparam$ by $\langmodel_{\lmparam^*}$ and evaluate on $\translationtask_\targetCB$ with certain phrasebooks dropped from context. Bright colors close to the diagonal reflects that knowledge required to compensate the dropped phrasebook in context can be localized in a small subset of layers in $\langmodel_{\lmparam^*}$. It is particularly worth noting that for any level $i$, the end of the group of layers responsible for storing $\dictionary^*_i$ matches the start of the layers responsible for reading $\dictionary_i$ (see \cref{fig:mechanistic-icl}) before context-enhanced learning.} 
    \label{fig:mechanistic-localizedStorage}
    \vspace{-0.1in}
\end{figure*}

\subsection{Mechanistic Insights: Layer by Layer Mapping of the Translation Task}\label{sec:mechanistic}

\looseness-1To understand the behavior of \ICM{}, we probe into the hidden representations and weights of models before and after context-enhanced learning. 

\textbf{Sequential Processing in ICL-capable model}

First, we look into how \strdicts{} in \curriculumtext{} are used by an $\translationtask(\depth, \numchar)$-ICL-capable model during the translation task via the following controlled experiment.
Fix a random \strcodebook{} $\codebook=\cbr{\pi_1,\dots,\pi_\depth}$ and an input sequence $\inputseq$, we pass $[\str(\codebook),\intermseq_1,\Tstring, \intermseq_{d+1}]$ into the ICL-capable model, and record the model's hidden representations (output of each transformer block) for the tokens $\Tstring,\intermseq_{\depth+1}$.

Next, we apply independent perturbations to each phrasebook $\pi_i$ in the \curriculumtext{} and measure the change in the model's representations. That is, for each level $1 \leq i \leq \depth$, we randomly sample 10 rules in $\pi_i$ used in the translation of $\inputseq$ and replace their image by other randomly chosen tuples. We denote the perturbed phrasebook $\str(\hat\vardict_i)$, and compute the norm difference between the model's representations at tokens $\Tstring,\intermseq_{\depth+1}$ before and after replacing $\str(\vardict_i)$ with $\str(\hat\vardict_i)$. The first layer showing a significant difference in its output representation is identified as the layer where the model begins processing the \strdict{}.

In \cref{fig:mechanistic-icl}, we show that an ICL-capable model processes \strdicts{} in \curriculumtext{} sequentially, with earlier \strdicts{} read by earlier layers. Formal details are in \cref{sec:hid_rep_model} and additional experiments are in \cref{sec:apdx-experiments-mechanistic}.

\textbf{Localized Storage after Context-Enhanced Learning}

\looseness-1
\looseness-1Here, we verify whether a similar sequential pattern for internalizing \strdicts{} is used in a $\translationtask_\targetCB$-capable model. We denote the ICL-capable model as $\langmodel_{\lmparam}$ and its post \ICM{} counterpart as $\langmodel_{\lmparam^*}$. As $\langmodel_{\lmparam^*}$'s capability no longer depends on textual representation of the \strdicts{}, our analysis focuses on the model's parameters.

\looseness-1We assess the importance of each layer in model $\langmodel_{\lmparam^*}$ for each \strdict{} by constructing ``stitched'' models and measuring their output behavior. For every $1 \leq L_{\text{start}} \leq L_{\text{end}} \leq 28$ (total layers in Llama3.2 3B), we replace layers $L_{\text{start}}$ to $L_{\text{end}}$ of the ICL model $\langmodel_\lmparam$ with corresponding layers from $\langmodel_{\lmparam^*}$ to create a stitched model. This strategy has been used in prior works on localizing information after SFT \citep{gong2022finding,panigrahi2023task,wei2024assessing}.

For each level $1 \le i \le \depth$, we evaluate the stitched models on $\translationtask_\targetCB$ using the following in-context information: $[\str(\vardict^*_1), \cdots, \str(\vardict^*_{i-1}),\str(\vardict^*_{i+1}), \cdots, \str(\vardict^*_{\depth})]$. Note here the textual representation of $\vardict^*_i$ has been dropped. If a stitched model shows high accuracy on inputs that use $\vardict^*_i$ for translation but contain only the above in-context information, then the layers selected from $\langmodel_{\lmparam^*}$ to create the stitched model can be deemed responsible for storing information on $\vardict^*_{i}$ in their parameter space. 

\looseness-1\cref{fig:mechanistic-localizedStorage} demonstrates that information of all \strdicts{} can be localized to a few mutually disjoint layers in $\langmodel_{\lmparam^*}$. Moreover, the end of the group of layers where  $\dictionary^*_i$ is localized in $\langmodel_{\lmparam^*}$ marks the start of the layers that begin processing level $i$ \strdict{} in the ICL-capable model $\langmodel_{\lmparam}$ (e.g. compare the role of layer $17$ in \cref{fig:mechanistic-localizedStorage,fig:mechanistic-icl}). Formal details and additional experiments are in \cref{sec:stich_model,sec:apdx-experiments-mechanistic}.

This suggests that instead of storing \strdicts{} as a single chunk in its parameters, the model re-learns each translation step locally to compensate for missing information when rules are dropped. Thus, we conjecture that \ICM{} leverages \curriculumtext{} to improve training by localizing learning in the parameter space. We build a surrogate model in \cref{sec:theory} to show how such localized learning can prove beneficial for faster training.

\section{Curriculum-text: Detectable from Queries?}\label{sec:memorization}

Context-enhanced learning for MLT uses the \strdicts{} in its curriculum-text. Can rules of the \strdicts{} be recovered post-hoc  querying the model (\hyperlink{q3}{\textbf{Q3}}) using likelihood-based methods for detecting training data?

\looseness-1We take an $\translationtask_\targetCB$-capable model (with $d=5, n=10$) trained with \ICM{} from \cref{sec:experiments-results}. Let $\str\rbr{\targetCB}$ denote the concatenation of all phrasebooks. For each rule in $\str\rbr{\targetCB}$, which are of the form ``\texttt{a b -> C D}''\footnote{$\texttt{a,b,C,D}$ are generic tokens for presentation}, we measure the probability of the model generating the ground truth tokens ``\texttt{C D}'' after observing ``\texttt{a b ->}''. We compute (1) the fraction of cases where the model’s top-1 prediction matches the ground truth (greedy decoding) and (2) the probability of generating ground truth tokens via random sampling (with temperature set as 1). These are standard tests on textual description, and we would expect high probabilities for the correct tokens.

\looseness-1We also test two adversarial strategies with additional {\em token filtering}\footnote{See detailed characterization in \cref{sec:apdx-experiments-memorization}.} in generation: (i) setting probability of \think{} tokens to zero (ii) when querying for a rule in $\dictionary_i$ with output alphabet $\alphabetset_{i+1}$, setting probability of all tokens outside $\alphabetset_{i+1}$ to zero\footnote{We did not explore the full spectrum of attacks (e.g. adversarial prompt engineering) and leave that as interesting future work.}. Note that this corresponds to a  strong adversary which knows the alphabet set of the intermediate \strdicts{}, but not the rules.

\looseness-1As shown in \cref{tab:memorization-query-succ}, recovering rules from intermediate phrasebooks ($\vardict_1, \dots, \vardict_4$) is nearly impossible without token filtering. For the final phrasebook ($\vardict_5$), results remain near-random, where random guess probability is $1\%$ for a 2-tuple when $\numchar=10$. Even with filtering (ii), recovery success rates remain only slightly above random. Setup details and more results are available in \cref{sec:apdx-experiments-memorization}.

\begin{table}[t]
\caption{Recovery Success Rate (Rounded to 2 decimals)}

    \centering
    \scalebox{0.95}{
    \begin{tabular}{l|cc|cc}
\toprule
\multirow{2}{*}{Token Filter} & \multicolumn{2}{c|}{Greedy Decoding} & \multicolumn{2}{c}{Sampling} \\
                              & $\pi^*_1-\pi^*_4$        & $\pi^*_5$       & $\pi^*_1-\pi^*_4$        & $\pi^*_5$       \\ \midrule
None & $0.00\%$ & $0.20\%$ & $0.00\%$ & $0.89\%$ \\
No \texttt{<THINK>} & $0.00\%$ & $0.20\%$ & $0.00\%$ & $0.90\%$ \\
Only from $\alphabetset_{i+1}$ & $1.66\%$ & $0.20\%$ & $1.28\%$ & $0.94\%$ \\ \bottomrule
\end{tabular}}
\label{tab:memorization-query-succ}
\end{table}

\section{Mathematical Analysis}\label{sec:theory}

\looseness-1 Having empirically demonstrated the sample efficiency benefits of \ICM{}, we now formalize them through a mathematical lens. We first define the sample complexity of an algorithm for learning the task. 

\begin{definition}[informal]
    Sample complexity for an algorithm to learn $\translationtask_{\codebook}$ is defined as the minimum total length of all (possibly repeated) input sequences required by the algorithm to return an $\translationtask_{\codebook}$-capable model $\langmodel_{\lmparam}$.
\end{definition}

Analysis of gradient-based learning on a multilayer transformer (let alone a $28$-layer model like Llama 3.2-3B) is an open mathematical question.   We use our mechanistic findings (i.e., translation layers map onto transformer layers) to propose a surrogate model to think about how transformers learn $\translationtask(\depth, \numchar)$. In this surrogate model we demonstrate that learning a task $\translationtask_{\codebook}$ via vanilla SFT will require $\numchar^{\Omega(\depth)}$ samples. Then, we prove that, with \ICM{}, the surrogate model can learn $\translationtask_{\targetCB}$ with a sample complexity of $\mathcal{O}(\mathsf{poly}(n)d\log d)$.

\subsection{Surrogate Model ($\surrogateModel$)}
\label{sec:theory-surrogateModel}
Formalization of the surrogate model, in short $\surrogateModel$, relies on observations from \cref{sec:mechanistic}, which reveal that an ICL-capable model (\cref{def:icl-capable-model}) performs the translation task step-by-step, with earlier \strdict{} processed by lower layers. This aligns with how the model stores internalized knowledge  in layers after \ICM{}. Our $\surrogateModel$ represents an idealized and simplified transformer that has already been ``pre-conditioned'' to solve $\translationtask$ in this sequential fashion. Using $\surrogateModel$, we will show the benefits of \ICM{}.

\looseness-1Without loss of generality we assume the alphabet sets as $\alphabetset_1,\dots,\alphabetset_{d+1} := \alphabetset = \cbr{1, 2, \dots, \numchar}$.  $\surrogateModel$ will represent a length-$L$ sequence $\intermseq_i = \srbr{s_{i,1}, \dots, s_{i,\inputlength}}$ as an embedding matrix $\matrep_i\in \R^{\numchar^2\times \inputlength/2}$ that uses  $\{\vecrep\rbr{s_{i,1}, s_{i,2}}, \cdots, \vecrep\rbr{s_{i,\inputlength-1}, s_{i,\inputlength}}\}$ as columns.
Here, for any $2$-tuple $(a,b)$, $\vecrep(a,b)\in\R^{\numchar^2}$ represents a one-hot vector with $1$ at dimension $a\numchar+b$.
\looseness-1$\surrogateModel$ operates on embedding matrices, transforming $\matrep_1 \to \matrep_2 \to \cdots \to \matrep_{\depth+1}$. Each layer $i$ will be primarily defined by two matrices, $\contextmat_i, \weightmat_i \in \R^{\numchar^2 \times \numchar^2}$. $\contextmat_i$ presents a (possibly partial or completely dropped)  \strdict{} that is provided in-context, and $\weightmat_i$ is a trainable parameter storing \strdict{} information during \ICM{}\footnote{For simplicity of presentation $\contextmat_i$'s are provided directly to the corresponding layers. We note that $\surrogateModel$ can be exactly re-parameterized that $\contextmat_i$'s are provided in context (i.e. concatenated with $\matrep_1$). Please see \cref{defn:apdx-surrogate-surrogateModel-context-augmented} for more discussions.}. 

\begin{defnboxmaintext}
\begin{definition}
\label{eqn:theory-surrogateRep-recurrance}
    $\surrogateModel$ with trainable parameters $\{\weightmat_i\}_{i=1}^{\depth}$ and in-context representation $\{\contextmat_i\}_{i=1}^{\depth}$,  is represented by its operation on an input $\inputseq$ as 
    \begin{align*}
        &\matrep_{\depth+1} = \surrogate_{ \{\weightmat_i\}_{i=1}^{\depth} } \rbr{ \{\contextmat_i\}_{i=1}^{\depth}, \matrep_{1} },  \quad \text{where}\\
        &\matrep_{i+1} = \maxfunc(\contextmat_i + \weightmat_i)\shiftfunc(\matrep_i), \text{ for } i \ge 1,  \nonumber
    \end{align*}
    and $\matrep_1$ is the embedding matrix for the input string $\inputseq$. 
\end{definition}
\end{defnboxmaintext}

Here $\shiftfunc$ represents \circularshift{} operation and is defined as a Hadamard product on the embedding matrices (details in \cref{defn:apdx-surrogate-shift-operator}). $\maxfunc$ represents hard-max function converting $\contextmat_i + \weightmat_i$ to a binary column stochastic matrix. In the following discussion, we show $2$ examples where the surrogate model can perfectly represent an ICL-capable model and $\translationtask_{\targetCB}$-capable model.

\looseness-1\textbf{Case 1 (Representing $\translationtask(\depth, \numchar)$-ICL-capable): } The trainable matrices  are all $0$'s as the model hasn't undergone \ICM{}. To produce the output for a task $\translationtask_{\codebook}$, $\surrogateModel$ takes \strdicts{} into in-context representations by setting each $\contextmat_i$ as a stochastic matrix $\transmat(\vardict_i)$, which represents rules of $\vardict_i$ as one-hot columns (please see \cref{def:apdx-surrogate-translationMatrixRepresentation,lemma:translate-equiv}).

\looseness-1\textbf{Case 2 (Representing $\translationtask_{\targetCB}$-capable): } For a model that has performed \ICM{} on a translation task $\translationtask_{\targetCB}$, no in-context information will be provided to the surrogate model and so $\{\contextmat_i\}_{i=1}^{\depth}$ will be all $0$s. A  $\translationtask_{\targetCB}$-capable-model should contain the \strdicts{} as $\{ \transmat(\vardict^*_i)\}_{i=1}^{\depth}$  in its trainable parameters $\{\weightmat_i\}_{i=1}^{\depth}$.

\textbf{$\surrogateModel$ as an Ideal Transformer for $\translationtask$:} The following theorem constructs a transformer that can simulate $\surrogateModel$. Thus, while our discussions and proofs focus on the surrogate model for simplicity, they remain fully applicable to the transformer architecture.
\begin{theorem}[cf \cref{lem:construction}]
    There exists a transformer that can simulate $\surrogateModel$  with $2\depth$ self-attention and $2\depth$ MLP layers with embedding dimension $2\numchar^2 + 2\depth + 4$. 
\end{theorem}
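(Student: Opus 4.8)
The goal is to construct an explicit transformer (with $2d$ self-attention layers and $2d$ MLP layers, embedding dimension $2n^2 + 2d + 4$) that implements the recurrence $\matrep_{i+1} = \maxfunc(\contextmat_i + \weightmat_i)\shiftfunc(\matrep_i)$ of $\surrogateModel$. The plan is to realize each translation level $i$ using \emph{one attention layer} (to perform the \circularshift{} and gather the relevant in-context \strdict{} columns) followed by \emph{one MLP layer} (to compute the $\maxfunc$ of $\contextmat_i + \weightmat_i$ applied to the shifted embedding), for a total of $2d$ of each. The embedding at a position holds: a block of $n^2$ coordinates for the current bigram one-hot $\vecrep(s_{i,2j-1},s_{i,2j})$; a second block of $n^2$ coordinates as scratch space for the shifted bigram; $O(d)$ coordinates for a level/layer counter and a positional marker indicating parity (odd/even index within a bigram pair) and sequence endpoints; and $O(1)$ coordinates for constants (the "$4$" in $2n^2+2d+4$) used for bias/normalization tricks. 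The in-context \strdict{} text $\str(\codebook)$ is assumed to have been embedded so that, at the appropriate positions, the columns of $\transmat(\vardict_i)$ are available to be read by attention; the trainable $\weightmat_i$ is stored in the MLP weights of the $i$-th MLP block.

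The key steps, in order, are: (1) \textbf{Encoding.} Specify the token embedding + positional embedding so that each input character pair maps to the one-hot bigram block, and so the level counter is initialized and parity/endpoint markers are set; cite \cref{def:apdx-surrogate-translationMatrixRepresentation} for the matrix encoding of \strdicts{}. (2) \textbf{Shift via attention.} Show that a single attention head with a position-shift attention pattern (query at position $j$ attends to position $(j{+}1)\bmod L$, implementable by encoding positions as $(\cos,\sin)$ pairs — these are among the $O(d)$ extra coordinates — and choosing $W_Q,W_K$ so the inner product is maximized at the shifted index) copies the neighbor's character into the scratch block, assembling $\shiftfunc(\matrep_i)$ columnwise; this uses the hardmax/softmax-at-low-temperature attention, matching $\maxfunc$. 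Invoke \cref{defn:apdx-surrogate-shift-operator} for the definition being matched and \cref{lemma:translate-equiv} to connect the bigram-tuple action to the matrix action. (3) \textbf{Translate via MLP.} Show a two-layer MLP (with ReLU) computes $\maxfunc\big((\contextmat_i+\weightmat_i)\,v\big)$ for a one-hot input column $v$: the first linear map forms the $n^2$ pre-activation scores $(\contextmat_i+\weightmat_i)v$ (reading $\contextmat_i$ from the attended in-context slot and $\weightmat_i$ from the stored weights), and a ReLU-based argmax-selection gadget (standard: subtract a running max, threshold, renormalize) outputs the one-hot of the argmax; since inputs are one-hot and the surrogate's $\maxfunc$ is applied to a column-stochastic-after-hardmax matrix, exactness holds. (4) \textbf{Counter increment and iterate.} The MLP also increments the level counter; repeating (2)–(3) $d$ times yields $\matrep_{d+1}$; finally a read-out map decodes bigram one-hots back to output characters. (5) \textbf{Dimension accounting.} Tally: $2n^2$ (current + scratch bigram blocks) $+\,2d$ (level counter + positional/parity encoding, or $(\cos,\sin)$ slots) $+\,4$ (endpoint/start markers and constant channels), confirming $2n^2+2d+4$.

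The main obstacle is step (2)–(3) done \emph{jointly and exactly}: implementing the \circularshift{} with a single attention layer requires a positional encoding that lets the attention logits have a unique hard maximum at the wrapped-around index $(j{+}1)\bmod L$ for \emph{all} $L$ up to the max sequence length, which forces a careful choice (e.g. a pair of coordinates giving a "rotation by one step" whose dot product peaks at the successor, robust to ties); and one must ensure the ReLU argmax gadget in the MLP is genuinely exact (not just approximate) on one-hot-derived inputs, handling the tie-breaking rule of $\maxfunc$ consistently with the surrogate's definition. Both are routine in spirit (these gadgets appear throughout the transformer-expressivity literature) but need the dimension budget to be spent precisely, which is why the count comes out to exactly $2n^2+2d+4$ rather than a loose $O(n^2+d)$. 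I would therefore present the construction layer-by-layer, prove a single "one level = one attention + one MLP" lemma by explicitly exhibiting the weight matrices, and then conclude by induction on $i$ that after $2i$ layers the embedding encodes $\matrep_{i+1}$, with the dimension tally as a corollary.
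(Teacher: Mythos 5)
Your high-level plan (simulate one translation level by attention-for-shift plus MLP-for-translate, then induct) points in the right direction, but it has a concrete architectural gap that the paper's construction is specifically built to avoid. In your scheme each level gets \emph{one} attention layer and \emph{one} MLP (which, incidentally, contradicts your own tally of ``$2\depth$ of each''). The problem is the in-context lookup: the relevant column of $\contextmat_i$ must be retrieved by attention \emph{keyed on the shifted bigram} $\shiftmatrep_i^{(j)}$, since $\contextmat_i$ lives at other token positions and an entire $\numchar^2\times\numchar^2$ matrix cannot be copied into a residual stream of width $2\numchar^2+2\depth+4$. But the shifted bigram one-hot is itself a \emph{product} of information from two adjacent positions (it is $\Q\matrep_i^{(j)}\odot\Q^\T\matrep_i^{(j+1)}$, cf.\ \cref{defn:apdx-surrogate-shift-operator}), so it only exists in the residual stream \emph{after} the first attention layer (which can only gather the two marginals) and an MLP multiplication gadget. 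Your step (3) instead has the MLP ``reading $\contextmat_i$ from the attended in-context slot,'' which is not an operation an MLP can perform: MLPs are position-wise and cannot attend. This is exactly why the paper uses two attention layers and two MLPs per level --- a \constructcircularshiftmodule{} (attention with heads computing $\rbr{\1_\numchar\otimes I_\numchar}^\top\matrep_i^{(j-1)}$ and $\rbr{I_\numchar\otimes\1_\numchar}\matrep_i^{(j)}$, then a GELU-based multiplication MLP, \cref{lem:gelu_mul}) followed by a \constructtranslatemodule{} (attention keyed on the now-available shifted one-hot against the context-segment tokens $[\onehot_r;\contextmat_i\onehot_r]$, then an MLP folding in $\weightmat_i$ and approximating $\maxfunc$). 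Without that second attention per level your construction cannot use the in-context information at all.

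Two further points would also need repair. First, causality and wrap-around: your shift head attends forward to position $(j{+}1)\bmod L$, but the attention in this setting is causally masked; the paper handles this by right-shifting the output stream by one position (inserting a \think{} token) so all heads attend backwards, and uses a dedicated third head driven by start/end indicator embeddings for the wrap-around from the last column to the first, rather than a rotary-style positional trick. Second, your claim that a single ReLU layer computes an \emph{exact} argmax (``subtract a running max, threshold, renormalize'') is not substantiated --- a running max is not a one-layer position-wise operation --- and in fact the paper does not achieve exact $\maxfunc$ either: it uses GELU followed by $\ell_2$ (RMS) normalization and is exact only under the structural conditions that each column of $\contextmat_i+\weightmat_i$ is either zero or a (matching) one-hot, with the multiplication gadget incurring an $\mathcal{O}(N^{-4})$ error. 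Your proof would need either a correct exact gadget or an explicit statement of the approximation regime in which the simulation holds.
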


\subsection{Sample Complexity for Vanilla SFT} \label{sec:theory-hard}
\looseness-1 For the surrogate model, vanilla SFT corresponds to always setting in-context representations $\{\contextmat_i\}_{i=1}^{\depth}$ to $0$s when training for $\translationtask_{\targetCB}$. While we use the surrogate model for consistency in our discussion, the argument generalizes to any model learning $\translationtask_{\targetCB}$ with vanilla SFT.

\looseness-1Our analysis is built on the Statistical Query (SQ) framework \citep{kearns1998efficient}, which measures the difficulty of learning tasks using algorithms that rely on expectation estimates of specific functions to approximate the true solution. Gradient-based methods, such as Stochastic Gradient Descent (SGD), fall under this framework as they compute gradients by estimating expectations of loss functions and their derivatives.

The complexity of learning a task is quantified by the SQ dimension, which measures the number of candidate functions that are pairwise uncorrelated under the input distribution and difficult to distinguish with limited samples. A higher SQ dimension implies a richer hypothesis class, that requires more samples to identify the correct function. We show in the following theorem that the SQ dimension of $\translationtask(\depth, \numchar)$ grows exponentially with task parameters.

\begin{theorem}
    \label{thm:theory-sqdim}
    \looseness-1
    SQ dimension of $\translationtask(\depth, \numchar)$ under uniform input distribution is at least $\numchar^{\Omega(\depth)}$. 
\end{theorem}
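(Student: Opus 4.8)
The plan is to exhibit a large family of functions within $\translationtask(\depth, \numchar)$ that are pairwise uncorrelated under the uniform input distribution, since the SQ dimension is by definition the maximal size of such a family (after centering). The natural candidates are the translation maps $\totaltransfunc_\codebook$ themselves, but two subtleties must be handled. First, the SQ dimension is usually phrased for $\{\pm 1\}$- or real-valued functions, whereas $\totaltransfunc_\codebook$ outputs a string; I would therefore fix a single output coordinate (say the value of the first output character, or a bit extracted from it) and consider the induced Boolean/real-valued functions $\task_\codebook(\inputseq) := \phi(\totaltransfunc_\codebook(\inputseq)_1)$ for a suitable fixed $\phi$. Second, I need enough codebooks $\codebook$ so that the corresponding functions are genuinely different; here the key structural fact is that a single output character of $\translationtask_\codebook$ depends on exactly $2\depth$ input characters (the dependency ``light cone'' illustrated in \cref{fig:translation-demo}), and within that light cone the composed map is an essentially free bijection that can be tuned by choosing the $\depth$ phrasebooks appropriately.

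Concretely, I would restrict attention to a block of $2\depth$ consecutive input characters feeding into one output position, so that each codebook $\codebook$ induces a bijection $F_\codebook : \alphabetset^{2\depth} \to \alphabetset^{2\depth}$ on that block (up to the circular-shift bookkeeping), and then read off one output character. The family of such block-bijections realizable by varying $\codebook$ is large: at each of the $\depth$ levels one may choose the bigram phrasebook $\vardict_i$ freely among $(\numchar^2)!$ options, and I would argue that the induced maps on the light cone are ``spread out'' enough that the number of distinct, pairwise-decorrelated functions is $\numchar^{\Omega(\depth)}$. The cleanest way to get decorrelation is to show that for two codebooks $\codebook \ne \codebook'$ chosen from a carefully constructed sub-family, the functions $\task_\codebook$ and $\task_{\codebook'}$ agree on essentially a $1/\numchar + o(1)$ fraction of inputs (i.e., no better than chance), so that $\E_{\inputseq}[\task_\codebook(\inputseq)\task_{\codebook'}(\inputseq)] - \E[\task_\codebook]\E[\task_{\codebook'}]$ is negligible. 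A convenient construction: fix all phrasebooks except the bottom layer $\vardict_1$ (or top layer), and let that one layer range over a set of bijections of $\alphabetset^2$ that forms a combinatorial design — e.g., a sharply transitive or near-orthogonal family — so that composing with the fixed remaining layers still yields decorrelated outputs; such a family already has size $\numchar^{\Omega(1)}$, and iterating the freedom across a constant fraction of the $\depth$ levels multiplies this to $\numchar^{\Omega(\depth)}$.

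The main obstacle I anticipate is establishing the pairwise decorrelation cleanly: two compositions of bijections can be subtly correlated even when the individual layers differ, and one has to rule out ``accidental'' agreement patterns created by the interaction of the circular shift with the bigram translations. I would handle this by appealing to the invertibility/bijectivity noted in \cref{lemma:setting-MLTinvertability}: since each $\totaltransfunc_\codebook$ is a bijection on strings, the map $\inputseq \mapsto (\totaltransfunc_\codebook(\inputseq), \totaltransfunc_{\codebook'}(\inputseq))$ pushes the uniform distribution to the uniform distribution on the graph of the bijection $\totaltransfunc_{\codebook'}\circ \totaltransfunc_\codebook^{-1}$, reducing the correlation computation to understanding how ``mixing'' this single composed bijection is on the relevant coordinates. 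If the sub-family is chosen so that all pairwise compositions $\totaltransfunc_{\codebook'}\circ\totaltransfunc_\codebook^{-1}$ act as ``pseudorandom-looking'' permutations on the $2\depth$-character light cone — which is exactly what a Feistel-like layered construction is designed to give after enough rounds — the output bit becomes nearly balanced and independent of the input bit, yielding the claimed decorrelation. Finally, plugging a family of size $\numchar^{\Omega(\depth)}$ and decorrelation $\numchar^{-\Omega(\depth)}$ into the standard SQ-dimension lower bound (and noting that real-valued/average-case SQ bounds degrade only polynomially) gives the stated $\numchar^{\Omega(\depth)}$ lower bound on the SQ dimension, hence on the sample complexity of any SGD-style learner.
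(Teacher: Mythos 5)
Your high-level skeleton is the same as the paper's (fix one output coordinate, exhibit $\numchar^{\Omega(\depth)}$ pairwise (nearly) uncorrelated tasks in $\translationtask(\depth,\numchar)$, and invoke the standard SQ-dimension machinery), but the step that carries all the difficulty --- pairwise decorrelation of exponentially many codebooks --- is not actually proved in your proposal. Your argument for it is an appeal to the composed maps $\totaltransfunc_{\codebook'}\circ\totaltransfunc_{\codebook}^{-1}$ being ``pseudorandom-looking'' because the construction is Feistel-like. That is not a theorem you can use here: Luby--Rackoff-style results require the round functions to be random or pseudorandom functions, whereas here each round is a fixed, very simple bigram bijection composed with a shift, so no unconditional mixing statement is available, and assuming it is essentially assuming the conclusion (the paper explicitly avoids any cryptographic hardness assumption). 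Similarly, your claim that varying one layer gives $\numchar^{\Omega(1)}$ decorrelated functions and that ``iterating the freedom across a constant fraction of the $\depth$ levels multiplies this to $\numchar^{\Omega(\depth)}$'' presupposes exactly the missing composition lemma: decorrelation of individual layers does not compose for free, and in fact two codebooks that differ in every layer can remain perfectly correlated at the output with non-negligible probability, so a naive product construction or an unspecified ``combinatorial design'' does not suffice.

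What the paper does instead, and what your proposal has no substitute for, is an explicit correlation-decay induction. For $\numchar=2$ it classifies all $24$ bijections on bit-pairs via \textbf{copy}/\textbf{xor}/\textbf{not} operations and mirror-map sets (\cref{lem:map_form_n2}, \cref{lem:corr_n2}), shows how correlation propagates through one translation level (\cref{lem:corr_compose}, together with the uniformity and circular-shift invariance lemmas), and deduces that for two \emph{random} \strdicts{} the first-output-character correlation is exactly $0$ with probability at least $1-\tfrac13\left(\tfrac79\right)^{\depth-1}$ (\cref{lem:corr_random_map_n2}); since correlated pairs still occur with probability $\approx(7/9)^{\depth}$, it then needs the Lov\'asz local lemma (\cref{cor:local}) to extract $2^{\Omega(\depth)}$ codebooks that are \emph{simultaneously} pairwise uncorrelated, giving \cref{lem:sqn2}. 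The general-$\numchar$ case is handled not by your light-cone/design argument but by embedding: each \strdict{} acts on the binary representations of characters through a random instance of $\translationtask(\log_2\numchar,2)$, so the $\numchar=2$ result lifts to $\numchar^{\Omega(\depth)}$ pairwise uncorrelated tasks (\cref{lem:sqn}). Without an analogue of the per-level correlation-propagation analysis (or some other quantitative mixing bound for these specific fixed bijections), your proposal does not establish the $\numchar^{-\Omega(\depth)}$ pairwise correlation bound it needs, so the claimed SQ-dimension lower bound does not follow.
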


\begin{remarkboxmaintext}
Informally this implies that any algorithm that tries to learn a $\translationtask_{\targetCB}$-capable $\surrogateModel$ with trainable parameters $\{\weightmat_i\}_{i=1}^{\depth}$, and in-context information $\{\contextmat_i\}_{i=1}^{\depth}$ always fixed at $\zeromat$s,  by minimizing loss across samples will require at least $\numchar^{\Omega(\depth)}$ sample complexity.
\end{remarkboxmaintext}

\looseness-1A corollary is that for vanilla SFT with SGD, sample complexity to learn $\translationtask_{\targetCB}$ can be at least $\numchar^{\Omega(\depth)}$. This is adapted from \citet{edelman2023pareto}, who analyse for sparse parity that has similar SQ dimension (\cref{cor:sgd}).

\textbf{Informal proof for SQ dimension:} Our proof extensively analyses the case where number of characters is $2$ (lem. \ref{lem:sqn2}), on which we will build proof for general $\numchar$ (lem. \ref{lem:sqn}).
The proof for $\numchar=2$ leverages uncorrelations between two randomly selected translation tasks; i.e. we show that for two random \strcodebook{} $\codebook^{\alpha}, \codebook^{\beta}$, the translation tasks $\translationtask_{\codebook^{\alpha}}, \translationtask_{\codebook^{\beta}}$ will have $0$ output correlation with probability at least $1-2^{-\Omega(\depth)}$ w.r.t. random choice of $\codebook^{\alpha}, \codebook^{\beta}$ (\cref{lem:corr_random_map_n2}). We then show that we can pick exponentially many such random \strcodebook{} for which the translation tasks will be pairwise uncorrelated. This translates to a high SQ dimension.

\subsection{Sample Complexity of Context-Enhanced Learning} \label{sec:theory-converge}

Here, we show that \ICM{} substantially improves the sample complexity of learning in $\surrogateModel$.
\looseness-1An $\translationtask(\depth, \numchar)$-ICL-capable model gets a \strcodebook{} $\targetCB$ by setting $\{\transmat(\vardict^*_i)\}_{i=1}^{\depth}$ as in-context representations $\{\contextmat_i\}_{i=1}^{\depth}$. When a curriculum is followed such that a translation rule is dropped from a \strdict{}, say $\vardict^{*}_i$, $\surrogateModel$ will set the corresponding column in its in-context representation $\contextmat_i$ as $\zeromat$'s.
Denote the zero-ed out column by $\contextmat_i^{(j)}$ for some generic column index $j$, we note that the loss will be low if and only if the corresponding column in the learnable parameters $\weightmat_i^{(j)}$ exactly matches $\transmat(\vardict^*_i)^{(j)}$.

A heuristic search algorithm, that searches among $\numchar^2$ possibilities for the dropped rule and stores its one-hot representation in the corresponding column in $\weightmat_i$, can be used to minimize the loss. By sequentially dropping rules, followed by a search and store process, the algorithm achieves polynomial sample complexity.
\begin{theorem}[Informal; cf \cref{cor:apdx-surrogate-search-learning-random}]

    For any task $\emph{\translationtask}_\targetCB$, there is a heuristic search algorithm paired with a curriculum of iteratively dropping rules from \strdicts{},
    that can learn a $\emph{\translationtask}_\targetCB$-capable $\surrogateModel$ with sample complexity $\mathcal{O}(\numchar^6 \depth \log \depth)$ with high probability.
\end{theorem}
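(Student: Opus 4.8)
The plan is to track the algorithm through a sequence of ``rounds,'' one per dropped rule, and show each round consumes only $\mathcal{O}(\numchar^4 \log\depth)$ total input length. The curriculum drops rules from the \strdicts{} one at a time (say in some fixed order over all $\numchar^2$ columns of each of the $\depth$ levels, so $\depth\numchar^2$ rounds total). In the round where column $j$ of $\vardict^*_i$ is dropped, the in-context matrix $\contextmat_i$ has its $j$-th column zeroed, while all previously-dropped columns are already correctly stored in $\weightmat_i$ (and all not-yet-dropped columns are still supplied in context). The key structural fact, noted in the text right before the theorem, is that the surrogate's loss on an input $\inputseq$ whose level-$i$ translation touches bigram-column $j$ is low \emph{iff} $\weightmat_i^{(j)} = \transmat(\vardict^*_i)^{(j)}$; this is because $\maxfunc(\contextmat_i+\weightmat_i)$ recovers the correct stochastic matrix exactly when the missing column is filled in correctly, and the remaining $2\depth-1$ translation steps are bijective (\cref{lemma:setting-MLTinvertability}), so an error in one output bigram propagates to a visibly wrong final answer with probability $1$.

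First I would formalize the heuristic search subroutine: for the currently-dropped column $j$ of level $i$, enumerate the $\numchar^2$ candidate one-hot vectors $\ve(a,b)$; for each candidate, place it in $\weightmat_i^{(j)}$ and estimate the training loss (equivalently, next-token accuracy on the final answer) on a fresh batch of inputs that are guaranteed to exercise column $j$ at level $i$. By the iff-characterization, exactly one candidate yields zero loss, so a single correctly-chosen input suffices to reject a wrong candidate; with a batch of $\Theta(\log(\depth\numchar^2))$ such ``informative'' inputs we can distinguish the right candidate from all $\numchar^2-1$ wrong ones and, by a union bound over all $\depth\numchar^2$ rounds, succeed in every round with high probability. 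Next I would bound how many \emph{random} input strings one must draw to collect enough informative inputs: a uniformly random $\inputseq$ of length $L=\Theta(\numchar^2)$ hits any particular level-$i$ bigram-column with constant probability (each of the $L/2$ output bigrams at level $i$ depends on a contiguous window of the input, and over the randomness of $\inputseq$ the induced level-$i$ bigram is roughly uniform over $\numchar^2$ possibilities since the intermediate \strdicts{} are bijections), so $\mathcal{O}(\numchar^2\log(\depth\numchar^2))$ random inputs, each of length $\mathcal{O}(\numchar^2)$, suffice per round. Multiplying: $\depth\numchar^2$ rounds $\times\ \mathcal{O}(\numchar^2\log(\depth\numchar^2))$ inputs/round $\times\ \mathcal{O}(\numchar^2)$ length/input $=\mathcal{O}(\numchar^6\depth\log\depth)$ total length, matching the claim. (The $\numchar^2$ enumeration cost is compute, not sample complexity, so it does not enter the bound.)

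The main obstacle I anticipate is the claim that a random input is ``informative'' for a target level-$i$ column with constant probability --- this requires showing the distribution of the level-$i$ bigram $(\shiftintermseq_{i,j},\shiftintermseq_{i,j+1})$ induced by a uniform $\inputseq$ is not too concentrated. Since each $\transfunc_{\vardict_\ell}$ is a bijection on $\alphabetset_\ell^L$ but acts bigram-wise (after a shift), the marginal on a single output bigram is \emph{not} exactly uniform, and a naive argument breaks. The fix is to note that the \circularshift{} by one character at each level mixes bigram boundaries across levels, so after even two levels a single level-$i$ bigram is a bijective image of a window of $\le 2i \le 2\depth$ input characters; choosing $L \ge 4\depth$ (absorbed into the $\Theta(\numchar^2)$ if $\numchar^2 \ge 4\depth$, else into a harmless additive $\depth$ term) makes these windows disjoint across the relevant output positions and the induced bigram distribution has full support with each atom of probability $\ge \numchar^{-2\depth}\cdot$(something) --- but that is too small. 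The cleaner route, which I would actually take, is to \emph{not} rely on random inputs at all for the informative set: since the algorithm is adaptive and already knows which columns of the earlier \strdicts{} it has stored, it can \emph{construct} an input $\inputseq$ that, run forward through the (now fully known) first $i-1$ levels, produces any desired bigram at position $j$ of level $i$ --- this is exactly invertibility (\cref{lemma:setting-MLTinvertability}) applied to the known partial composition. With constructed inputs, each round needs only $\mathcal{O}(\log(\depth\numchar^2))$ inputs of length $\mathcal{O}(\numchar^2)$, giving the even better bound $\mathcal{O}(\numchar^4\depth\log\depth)$; the stated $\mathcal{O}(\numchar^6\depth\log\depth)$ then holds with slack, absorbing any looseness in the per-input length or the number of candidates that must be tested simultaneously.
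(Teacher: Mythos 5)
Your skeleton (drop one column of one \strdict{} at a time, use invertibility to get an ``output matches iff the held-out column is correct'' characterization as in \cref{cor:apdx-surrogate-holdingOutCol}, enumerate the $\numchar^2$ candidates, and pay roughly $\numchar^2$ sequence length $\times$ $\numchar^2$ checks $\times$ $\numchar^2\depth$ columns) is the same as the paper's \cref{alg:mlt-surrogate-search} and \cref{thm:apdx-surrogate-search-learning-coverable}. The divergence comes at the distributional step, and there you make a concrete error that then pushes you onto a route the setting does not support. You assert that under a uniformly random input the marginal of a level-$i$ bigram ``is not exactly uniform, and a naive argument breaks.'' This is false: because \circularshift{} pairs two characters originating from \emph{distinct} input bigrams and each \strdict{} is a bijection on bigrams, an easy induction (the paper's \cref{lemma:apdx-surrogate-sequence-distn}) shows that at every level the columns of $\shiftmatrep^*_i$ are exactly i.i.d.\ uniform over the $\numchar^2$ one-hot vectors. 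That exact uniformity is the whole point: it reduces ``does my labeled sequence exercise every column at every level'' to a coupon-collector problem, so a \emph{single} random sequence of length $L=O\bigl(\numchar^2\log(\numchar\depth/\delta)\bigr)$ is $\targetCB$-coverable with probability $1-\delta$, and that one sequence with its one given label is simply reused for all $O(\numchar^4\depth)$ forward passes, yielding the stated $\mathcal{O}(\numchar^6\depth\log\depth)$ total presented length (\cref{cor:apdx-surrogate-search-learning-random}). No per-round fresh batches, no statistical loss estimation, and no union bound over candidates are needed: on a coverable input the surrogate is deterministic, so one input rejects every wrong candidate outright.

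Having wrongly abandoned the random-input route, your preferred fix constructs inputs adaptively by inverting the already-learned prefix of the \strcodebook{} and then evaluates the loss on them. That silently changes the access model: the learner here is handed labeled pairs $(\inputseq,\translationtask_\targetCB(\inputseq))$ plus curriculum-text; it cannot design its own input and demand the ground-truth output, so the claimed $\mathcal{O}(\numchar^4\depth\log\depth)$ improvement rests on membership-query access the paper does not grant (invertibility, \cref{lemma:setting-MLTinvertability}, tells you the desired input exists, not its label). There are also two smaller accounting slips: a single informative input already separates the correct candidate from all $\numchar^2-1$ wrong ones, so your $\Theta(\log(\depth\numchar^2))$-sized batches are unnecessary; and under the paper's definition (total length of all, possibly repeated, presented sequences) the $\numchar^2$ candidate forward passes per column do count, whereas you declare them ``compute only'' while over-counting inputs per round---the product happens to land on $\numchar^6\depth\log\depth$, but not for the right reasons.
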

The enumerative step in the heuristic search algorithm requires $\Theta(\numchar^2)$ steps on average, as the algorithm needs to search over $\Theta(\numchar^2)$ possibilities when a rule is dropped from a \strdict{}. Instead, we show that gradient descent requires only a few steps per dropped rule. Dropping a rule sets the respective column in the in-context representation to $0$, causing the gradient for the corresponding column in the trainable parameters to strongly align with the missing column.
We formally present results for $\depth=2$; due to exponentially growing number of terms to analyse with higher $\depth$, we 
keep the result for general $\depth$ as a conjecture.

\begin{theorem}[Informal; cf \cref{thm:apdx-surrogate-GD-learning-random}] 
\label{thm:surrogate-GD-learning-coverable}
When $d=2$, there is a gradient descent based algorithm, paired with a curriculum of iteratively dropping a random rule from \strdicts{},  that can return $\emph{\translationtask}_\targetCB$-capable $\surrogateModel$ with sample complexity $\mathcal{O}(\numchar^4)$ with high probability.
\end{theorem}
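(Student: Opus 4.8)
The plan is to treat the curriculum as a sequence of $2\numchar^2$ rounds, each round dropping one not‑yet‑dropped rule uniformly at random, and to show that in each round a short burst of gradient descent writes exactly that rule into the trainable parameters $\{\weightmat_i\}$ without perturbing any other coordinate. Throughout I would run the forward pass of $\surrogateModel$ with $\maxfunc$ replaced by a temperature‑$\tau$ $\softmaxfunc$ so that gradients are defined (the $\tau\to\infty$ limit recovers $\maxfunc$, and a finite but large $\tau$ already gives enough margin that the final $\{\weightmat_i\}$ are read off correctly by $\maxfunc$), and train the squared loss $\gdloss=\E_{\inputseq}\norm{\Vmat_{\depth+1}(\inputseq)-\Vmat_{\depth+1}^{*}(\inputseq)}^2$, where $\Vmat_{\depth+1}^{*}$ is the embedding matrix of the output string $\totaltransfunc_\targetCB(\inputseq)$. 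Since sample complexity counts total input length and a length‑$\inputlength$ input supplies $\inputlength/2$ translate slots per level, the budget argument reduces to counting how often each dropped rule is ``triggered''.

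The core is an induction over rounds maintaining the invariant: after round $t$, every already‑dropped column $(i,j)$ satisfies $\weightmat_i^{(j)}=\transmat(\vardict^*_i)^{(j)}$ (up to $\softmaxfunc_\tau$‑negligibility), while every not‑yet‑dropped column keeps $\contextmat_i^{(j)}$ equal to the correct one‑hot and $\weightmat_i^{(j)}=\vzero$. Under this invariant the forward map agrees with $\totaltransfunc_\targetCB$, so $\gdloss=0$ and all gradients vanish; hence the freshly dropped column is the unique source of signal in the next round, and the base case $t=0$ is exactly the $\translationtask(\depth,\numchar)$‑ICL‑capable initialization (Case 1 of \cref{sec:theory-surrogateModel}). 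When column $(i,j)$ is dropped, $\contextmat_i^{(j)}$ is set to $\vzero$, and because every other column is correct the level‑$i$ intermediate representation $\Vmat_i$ is still exact; the loss becomes positive precisely on inputs whose level‑$i$ sequence carries bigram $j$ in a translate slot, and on such an input the corresponding column of $\Vmat_{i+1}$ equals $\softmaxfunc_\tau(\weightmat_i^{(j)})$ rather than the phrasebook one‑hot — the only place $\weightmat_i^{(j)}$ enters the computation.

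For $i=\depth$ ($=2$) that column feeds the output directly, so the round is a one‑layer $\softmaxfunc$‑regression: $-\nabla_{\weightmat_2^{(j)}}\gdloss$ is a positive multiple of $\transmat(\vardict^*_2)^{(j)}$ minus a near‑uniform term, and a potential/contraction argument drives $\softmaxfunc_\tau(\weightmat_2^{(j)})$ onto the target one‑hot in $\mathrm{poly}(\numchar,\tau)$ steps. For $i=1$ the gradient must be pulled back through the fixed, parameter‑free operator $\shiftfunc$ and through the now‑correct second layer $\maxfunc(\transmat(\vardict^*_2))$; the key point is that $\shiftfunc$ is known and $\maxfunc(\transmat(\vardict^*_2))$ acts as a bijection on one‑hot columns, so the error pulled back onto the affected column of $\Vmat_2$ is again a positive multiple of $\transmat(\vardict^*_1)^{(j)}$ up to a controllable perturbation, and the same contraction closes. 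This is precisely where $\depth=2$ is used: for $\depth>2$ the pulled‑back gradient becomes a sum over all length‑$(\depth-i)$ chains of interleaved translate slots induced by $\shiftfunc$ (the $2\depth$‑character dependence of \cref{fig:translation-demo}), whose number grows exponentially in $\depth-i$, so the sign/magnitude bookkeeping no longer closes — hence the general‑$\depth$ version is left as a conjecture.

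Finally I would pass from the population gradient to the empirical one: each triggering slot contributes an $\mathcal{O}(1)$‑bounded vector whose mean is aligned with the target as above, so a Bernstein bound makes $\widetilde{\mathcal{O}}(\numchar^2)$ triggering events per rule sufficient for the descent analysis to go through with the empirical gradient; a union bound over the $2\numchar^2$ rules, together with the fact that a fixed bigram occupies a translate slot with probability $\Theta(1/\numchar^2)$ under uniform inputs, shows that a single pool of total input length $\mathcal{O}(\numchar^4)$ (up to logarithmic factors, consistent with the high‑probability claim) simultaneously supplies the triggering events needed for every round, and all $2\numchar^2$ rounds of (full‑batch) gradient descent may be run on it. The step I expect to be the main obstacle is the $i=1$ round: making it quantitative, uniformly in $\tau$, that the gradient on $\weightmat_1^{(j)}$ stays dominantly aligned with $\transmat(\vardict^*_1)^{(j)}$ after back‑propagation through $\shiftfunc$ and the second layer's $\softmaxfunc$, and at the same time showing that descent on $\weightmat_1^{(j)}$ does not leak into the still‑in‑context columns of $\weightmat_2$ through the coupling inside that $\softmaxfunc$ — bounding this cross‑talk residual is the technical crux.
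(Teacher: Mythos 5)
There is a genuine gap at exactly the point you flag as the crux, and it is not a minor technicality: your per-round descent analysis for the deeper layer is asserted rather than established, and the asserted form of the pulled-back gradient is not what actually comes out of the model. In the paper's analysis (\cref{lemma:apdx-surrogate-grad-key-W1}), when the single dropped column $\P_1^{(k)}$ currently equals $\onehot_a\otimes\onehot_b$ while the target is $\onehot_{a^*}\otimes\onehot_{b^*}$, the gradient is \emph{not} ``a positive multiple of $\transmat(\vardict^*_1)^{(k)}$ up to a controllable perturbation'': backpropagating through $\shiftfunc$ smears the error into row-block and column-block terms of the form $\1_\numchar\otimes\onehot_b$, $\onehot_{a^*}\otimes\1_\numchar$, etc., with coefficients $\alpha,\beta$ counting isolated versus consecutive occurrences of the dropped bigram (the ``case 4'' contributions when the same bigram occupies adjacent translate slots). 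The target index is only recovered because the \emph{intersection} of two such blocks has the strictly smallest entry, and in the partially-correct cases ($a=a^*,\,b\ne b^*$ or vice versa) a single update does not isolate the target at all --- the paper needs two gradient steps and an argument about the \emph{sum} of the two steps (\cref{lemma:apdx-surrogate-learnW1}). A one-step ``positive multiple of the target minus near-uniform'' picture, which is what your contraction argument leans on, is simply false for the first layer, so the induction you build on it does not close as written.

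Relatedly, the two difficulties you correctly identify --- cross-talk into the still-in-context columns of $\weightmat_2$ through the soft relaxation, and quantifying alignment uniformly in the temperature $\tau$ --- are problems created by your formulation and are precisely what the paper's construction is designed to avoid. The paper (\cref{alg:mlt-d2-GD}, \cref{thm:apdx-surrogate-GD-learning-coverable}) keeps the hard-max forward pass, defines a \emph{surrogate} gradient with respect to the post-hard-max translation matrix $\P_i^{(k)}$ applied directly to $\weightmat_i^{(k)}$, trains strictly layer-wise with the other layer frozen and fully conditioned in context ($\W_2^*$ supplied while learning $\W_1$), and reuses a single $\targetCB$-coverable sequence of length $\mathcal{O}(\numchar^2\log(\numchar/\delta))$ (coupon collector) across the $\Theta(\numchar^2)$ column-wise updates, giving the $\mathcal{O}(\numchar^4)$ total. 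Your random interleaved dropping order plus finite-temperature softmax forces you to track approximation error in already-internalized columns and leakage between layers, neither of which your sketch bounds; without resolving the first-layer gradient structure and this bookkeeping, the proposal does not yet constitute a proof, though the coverage/triggering count giving $\mathcal{O}(\numchar^4)$ is consistent with the paper's accounting.
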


While theoretical analysis of GD dynamics beyond $d=2$ is challenging, empirically we show that when training with SGD, the trainable parameters for much deeper \surrogateModel{} can quickly learn the set of \strdicts{} in $\targetCB$.
In \cref{fig:surrogate-multilayer-GD-dynamics}, we show \ICM{} results for a \surrogateModel{} on a randomly selected set of \strdicts{} $\targetCB$ in $\translationtask(10, 10)$. Regardless of whether we learn each layer separately with layer-wise SGD or all layers simultaneously with SGD in the surrogate model, the trainable parameters learn the \strdicts{} in $\targetCB$ very quickly. More discussions are deferred to \cref{sec:apdx-experiments-linear-surrogate}.

\begin{figure}[H]
    \centering
    \includegraphics[width=1\linewidth]{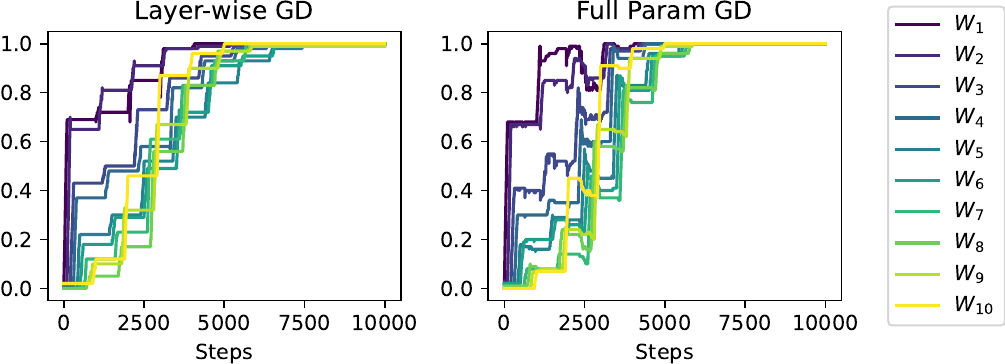}
    \vspace{-0.2in}
    \caption{Percentage of columns in $\maxfunc(\W_i)$ matching $\transmat(\vardict^*_i)$ when optimizing a \surrogateModel{} for $\translationtask(10,10)$. Learnable parameters of all layers quickly converge to the correct values, regardless of whether layer-wise updates are imposed.}
    \label{fig:surrogate-multilayer-GD-dynamics}
\end{figure}
\vspace{-0.2in}

\subsection{Insight from $\surrogateModel$: Gradient Quality}
\label{sec:theory-snr}

In this section, we show the major difference between \ICM{} and vanilla SFT to be the amount of predictive information in gradients for the trainable parameters. Our theoretical analysis in \cref{thm:surrogate-GD-learning-coverable} on $\surrogateModel$ primarily shows that when a single translation rule in a \strdict{} $\vardict^*_i$ is dropped, equivalently a column in the in-context representation, say $\contextmat_i^{(j)}$, is zero-ed out, the gradient of the corresponding column in trainable parameter $\weightmat_i^{(j)}$ aligns strongly with the one-hot vector representation of the dropped rule, $\transmat(\vardict^*_i)^{(j)}$.
However, this strong alignment heavily relies on the presence of other rules in-context. When more rules are dropped, the gradient signal gets increasingly ``inaccurate''. We quantitatively characterize such degradation as follows:

Consider an ICL-capable $\surrogateModel$, we focus on a column (with index $j$, denoted as superscript) $\weightmat_1^{(j)}$ in the first layer with ground truth $\transmat(\vardict^*_1)^{(j)}$. We define  \textbf{gradient prediction accuracy} (see formal definition in \cref{defn:apdx-grad-acc}) for $\weightmat_1^{(j)}$ as the probability that the argmax entry of the negative stochastic batch gradient on $\weightmat_1^{(j)}$ matches the argmax entry of $\transmat(\vardict^*_1)^{(j)}$. Intuitively, a higher gradient prediction accuracy will make learning $\transmat(\vardict^*_1)$ easier. We track the accuracy metric when only $\contextmat_i^{(j)}$ is zero-ed out as well as when taking more aggressive context dropout schemes.

\begin{figure}[!htbp]
    \centering
    \includegraphics[width=\linewidth]{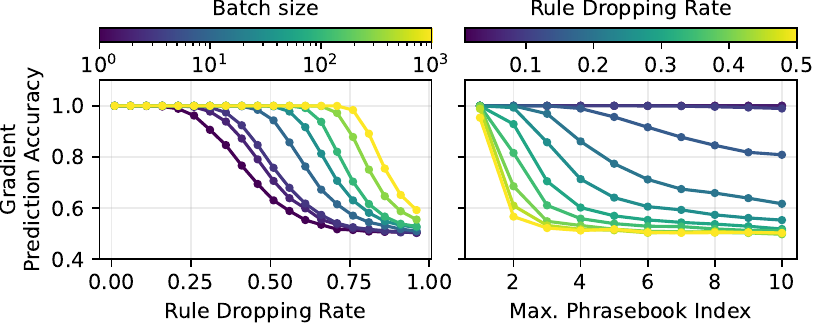}
    \vspace{-0.2in}
    \caption{
    Gradient prediction accuracy  when (left) varying dropping rates for rules in $\vardict_1$ and batch sizes of gradient computation, and (right) varying dropping rates and max \strdict{} index ($k$) for rules dropped from $\vardict_1, \dots, \vardict_k$. Higher dropping rates always lead to lower gradient prediction accuracy.
    \vspace{-0.1in}
    }
    \label{fig:grad-accuracy}
\end{figure}

In \cref{fig:grad-accuracy}, we report the gradient prediction accuracy with dropout schemes involving (1) multiple rules dropped out from the first \strdict{} and (2) rules dropped from multiple \strdicts{}. We can see that higher dropping rates significantly degrade the gradient prediction accuracy, highlighting the necessity of proper in-context information in order to obtain the optimization benefit. More details on the metric and experiments are deferred to \cref{apdx:gpa}.
We note that finding above is based on observations on the simplified surrogate model. A quantitative characterization of the optimization benefit for context-enhanced learning in the LLM regime is left as an interesting future work.

\section{Related Works}

In this section, we discuss related works to this paper. More related works regarding (1) compositional and OOD generalization and (2) mechanistic understanding of transformers are discussed more extensively in \cref{sec:addition_work}.

\subsection*{Learning using Privileged Information (LUPI)} LUPI was formally introduced by \citet{vapnik2009new} in kernel SVMs, and it is heavily related to concepts of Learning with Side Information \cite{LSIkuusela,jonschkowski2015patterns,zhang2018joint}.
Primarily, both concepts refer to training a model with additional information that could help training but may not be available at test time. 
This framework has been extended theoretically for classification tasks (e.g. \citet{pechyony2010theory,momeni2018understanding}) and has been used to explain benefits of knowledge distillation \citep{vapnik2015learning,lopez2015unifying}. To name a few applications, this concept has been heavily studied for improving boosting for classification tasks \citep{chen2012boosting}, visual and video encoders \cite{hoffman2016learning,cheng2020weakly,xu2017end}, human preference predictions \cite{farias2019learning}, multi-agent games \citep{sessa2020contextual}, speech recognition \citep{acousticLSI}, and medical recognition \citep{ceccarelli2008improving,sabeti2020learning}.

While LUPI has rich application in classification, extending LUPI to LLMs introduces unique challenges due to their auto-regressive training. Such a concept raises questions on whether applying auto-regressive loss on the additional information is necessary to get the benefits of additional supervision information in context, and how the additional information changes the training behavior of LLMs.
Hence, our work is a nontrivial generalization of this framework to LLMs that connects to their in-context learning strengths.

\looseness-1\subsection*{In-context learning and memorization} Recent works have studied
emergence of ICL and competition with in-weights learning (IWL) (equivalent to knowledge memorization) during pre-training of language models
\citep{chan2022data,reddymechanistic,singh2024transient,singh2024needs,nguyen2024differential}. These works show that data distribution properties affect the behavior of the model during training. Our work can be thought of as contemporary to the works above, where we show that strong ICL capabilities can be utilized for improving knowledge on a task by \ICM{}.  

\textbf{Benefits of in-context learning:} In-context learning has been primarily studied in the context of few-shot prompting of large language models. Better supervision with in-context supervision can help in improved performance (e.g. some representative works \citep{arora2022ask,si2022prompting,wu2022self,lu2021fantastically,su2022selective}), OOD generalization and factuality (reduced hallucination) \citep{yang2023supervised,dhuliawala2023chain,chen2023skills,didolkar2024metacognitive}, and more structured latent representations \citep{park2024iclr} for large language models. On the other hand, we show that improved supervision with in-context supervision can also help a model learn faster in SFT, while seemingly not leaking the in-context information in its output probabilities.

\section{Discussion, Limitations, and Future work}
\label{sec:conclusion}

\looseness-1Some experimental works have implicitly used the notion of \ICM{} but the current paper formalized this notion  for auto-regressive models and showed, using $\translationtask$, that this form of learning can be exponentially more sample-efficient than standard SFT.  At the end of training it is hard to recover the in-context information seen during training from the model's output probabilities. 
We note that this finding appears to have implications about copyright law (e.g., whether or not LLM training amounts to ``transformative use'' of text \cite{carlini2021extracting,carlini2022quantifying,karamolegkou2023copyright}) whose further study is left for future work.

\looseness-1Our experiments focus on a synthetic $\translationtask$ task for a few reasons: (1) to ensure that the task is absent from LLM pre-training, which allows precise quantification of benefits of \ICM{}, including  not revealing the curriculum text at inference time. (2) the task is too difficult  (at least for Llama 3.2 3B model) to learn via vanilla SFT, but is learnable via \ICM{}. Extending these findings to real-world complicated tasks (e.g., in math and coding) is left for future work.

\looseness-1Our convergence analysis for \ICM{} relies on a surrogate model, and extending it to an actual transformer remains an open challenge for theory of deep learning. Extending formalization of \ICM{} to explore LLM training in multi-agent settings--where models collaborate and learn from each other to discover novel concepts--would be an exciting avenue for future research.

\section*{Acknowledgment}
We thank Yun Cheng, Simon Park, Tianyu Gao, Yihe Dong, Zixuan Wang, Haoyu Zhao, and Bingbin
Liu for discussions, suggestions, and proof-reading at various stages of the paper. AP, SA acknowledge funding from NSF, PLI, DARPA, ONR, and OpenAI. XZ is additionally supported by a Gordon Y.S. Wu Fellowship in Engineering. 

\section*{Impact statement}
We formulate a basic notion ``\ICM{}'', and study how it is different from usual learning. One consequence of our study is that enhancing the context with good quality data can enhance the quality of the learner.

This enhancement could be used with privileged information and it appears that the training can be done in such a way that the model does not leak this privileged information. This can be seen as enhancing privacy, since there is a lower chance of leaking privileged training data.

The flip side is that it suggests --- albeit in very toy and synthetic setting, with full study left for future work --- that  model training could use off-limits data (albeit with no gradient updates on it) and this use might not be  detectable from querying  the trained model. But this is hypothetical at this point since the paper concerns a very toy setting.

\bibliography{reference}
\bibliographystyle{arxiv}

\newpage
\onecolumn

\appendix
\addcontentsline{toc}{section}{Appendix} 
\part{Appendix} 
\parttoc 

\section{Overview of the Appendix and Common Notations}
Here, we outline the structure for the appendix for easier readability. Due to space constraints, we had to defer a lot of details from the main paper. 
\cref{apdx:gpa} contains details on the experiments on gradient prediction accuracy from \cref{sec:theory-snr}. \cref{sec:apdx-expconfig-stage1} shows the details on CoT internalization pipeline that we followed to get an $\translationtask(\depth, \numchar)$-ICL-capable model. \cref{sec:apdx-expconfig} gives more details on the \ICM{} experiments conducted in \cref{sec:experiments}.

\cref{sec:apdx-experiments} shows mechanistic experiments, designed in \cref{sec:mechanistic}, for additional experimental settings. \cref{sec:apdx-experiments-memorization} present additional details and results for information recovery through querying experiments conducted in \cref{sec:memorization}.
\cref{sec:addition_work} presents additional relevant related works, including discussion on OOD generalization of language models and mechanistic understanding of large transformer architectures.
\cref{sec:properties_mlt} discusses few properties on the $\translationtask$. \cref{sec:apdx-lowerBound} presents the formal theorems on the computational hardness of learning the translation task and their proofs, which were informally outlined in \cref{sec:theory-hard} in the main paper. \cref{sec:apdx-surrogateModel} then presents the formal statements and proofs for \ICM{} in the surrogate model, which were informally outlined in \cref{sec:theory-surrogateModel} in the main paper. Finally, \cref{app:construction_transformer} presents the theoretical construction of an ideal transformer that can simulate the surrogate model. We present extensive details on prompts, and examples of training sequences, that we used for \ICM{} in \cref{sec:training_data_setups}.

\begin{table}[H]
\caption{Important notations}
    \label{tab:notation_table_apendix}
    \centering
    \begin{tabular}{l|l|l}
    \toprule
    \textbf{Scope of Notation} & \textbf{Symbol} & \textbf{Description}\\
    \midrule

    \multirow{4}{*}{General Notations} & $A$ & A set \\
    & $\mA$ & A matrix \\
    & $\mA^{(j)}$ & The $j$-th column of matrix $\mA$\\
    & $\mathbf{e}_k$ & One-hot embedding vector with 1 at dimension $k$ \\
    \midrule    
    \multirow{6}{*}{Language Modeling} & $X$ & All possible text strings (input space for a causal LM) \\
    & $\cY$ & All possible distribution over texts (output space for a causal LM) \\
    & $\langmodel_\lmparam$ & Causal LM parameterized by $\theta$ \\
    & $g$ & Language task mapping inputs $x \in X_g \subset X$ to a distribution $Y \in \mathcal{Y}$ \\ 
    & $\ell_{\text{auto}}$ & Auto-regressive cross entropy loss \\
    & $\maskcurr_\task(x, t)$ & In-context curriculum for learning task $\task$ with input $x$ at step $t$ \\
    
    \midrule

       \multirow{10}{*}{MLT Translation Task} & $\depth$  & Depth of translation task \\
       & $\numchar$  & Number of characters in each alphabet \\
       & $\alphabetset$ & An alphabet set \\
       & $\intermseq$ & A sequence in alphabet $\alphabetset$\\
       & $\dictionary$ & A \strdict{} between 2-tuples in two alphabets, e.g. $(\dictionary_1:\alphabetset_1^2\to \alphabetset_2^2)$.\\
       & $\mathcal{B}^{\dictionary}$ & Space of all possible \strdict{} on $\alphabetset_1^2\to \alphabetset_2^2$\\
       & $\codebook$ & A set of \strdicts{} $\{\dictionary_i\}$ defining a translation task \\
       & $\translationtask_\codebook$ & Translation task with a \strcodebook{} $\codebook$\\
       & $\translationtask(\depth, \numchar)$ & Family of translation tasks of depth $\depth$ and $\numchar$ characters \\
       & $\str\rbr{\dictionary}$ & Descriptive text for a phrasebook $\dictionary$ (see \cref{sec:setup-translationTask}) \\

    \midrule
    \multirow{7}{*}{Surrogate Model} 
    & $\matrep_i$ & Embedding matrix representing sequence $\intermseq_i$ (\cref{defn:apdx-seq-matrep}) \\
    & $\contextmat_i$ & In-context information matrix for level $i$ \\
    & $\weightmat_i$ & Trainable parameter matrix for level $i$ \\
    & $\P_i$ & Effective translation matrix for level $i$ (\cref{defn:apdx-surrogate-effectivetransmat}) \\
    & $\maxfunc$ & Column-wise hard-max function \\
    & $\transmat(\pi)$ & Matrix representation of a phrasebook $\pi$ (\cref{def:apdx-surrogate-translationMatrixRepresentation}) \\
    & $\surrogate_{\{\weightmat_i\}_{i=1}^d}$ & Surrogate model parameterized by $\weightmat_i$'s (\cref{defn:apdx-surrogate-surrogateModel})\\

    \bottomrule
    \end{tabular}
\end{table}

\newpage

\section{Deferred definitions, and experimental details from the main paper}
\label{apdx:deferred}

\subsection{Gradient Prediction Accuracy}\label{apdx:gpa}

Our theoretical analysis in \cref{thm:surrogate-GD-learning-coverable} was built on the fact that when a translation rule in a \strdict{} is dropped by zeroing out a column in an in-context representation $\contextmat_i$, the gradient for the corresponding column in $\weightmat_i$ points to the direction of the dropped rule. 

\looseness-1On the other hand, we show that when multiple rules are simultaneously dropped from the \strdicts{}, the gradients for the trainable parameters become increasingly noisy. To quantify this degradation, we compute gradient for each column of the trainable parameters for an ICL-capable model, when the corresponding rule is dropped from \strdicts{} by zeroing out the relevant column in the in-context representations. We then compute whether the computed gradient points to the right rule. By progressively increasing the number of simultaneously dropped rules, we measure the resulting degradation in the accuracy of the gradient's predictions.

More formally, denote $\masking$ as an operation that takes in column dropping rates per layer $p_1, \cdots, p_{\depth}$, \strcodebook{} $\targetCB$, and returns in-context representations $\{\contextmat_i\}_{i=1}^{\depth}$, such that  $\contextmat_{i}^{(j)} = \transmat\rbr{{\vardict^*_i}}^{(j)}$ ($j$th columns of $\contextmat_{i}$ and $\transmat\rbr{{\vardict^*_i}}$ are equal) with probability $1-p_i$ and $0$ otherwise. Then,

\begin{defnbox}
\begin{definition}
\label{defn:apdx-grad-acc}
    For a set of column dropping rates per layer $p_1, \cdots, p_{\depth}$, \strcodebook{} $\targetCB$ and $\translationtask(\depth, \numchar)$-ICL-capable model, predictive accuracy of gradients is defined as 
    \begin{align*}
        &\mathbb{E}_{ \{\contextmat_i\}_{i=1}^{\depth} = \masking (p_1, \cdots, p_{\depth}, \targetCB) } \\&  \mathbb{E}_{j \in [1, \numchar^2] \mid \contextmat_1^{(j)} = 0} \mathbb{I} \left[ \maxfunc \rbr{-\nabla_{ \weightmat_1^{(j)} }  \mathcal{L} } = \rbr{\transmat\rbr{\vardict^*_i}}^{(j)} \right] 
        \\& \mathcal{L} = \mathbb{E}_{\inputseq} \loss \rbr{ \surrogate_{ \{\weightmat_i\}_{i=1}^{\depth} } \rbr{ \{\contextmat_i\}_{i=1}^{\depth}, \matrep_1 }, \translationtask_{\targetCB} (\inputseq) }, 
    \end{align*}    
    where $\loss$, adapted from \cref{sec:setup-contextAidedLearning}, computes cross-entropy loss on the predicted output embeddings of surrogate model using true output string and $\mathbb{I}$ denotes the indicator function. 
\end{definition}
\end{defnbox}

The above definition computes gradients on the expected loss of the model. We primarily focus on predicting the trainable parameters of the first layer, i.e. $\weightmat_1$, as that is the deepest layer in the surrogate model and intuitively should suffer the most with noise accumulation from dropped rules.
On the other hand, we can further adapt the definition to compute the accuracy for batched gradients, where the gradients are computed using average loss on a randomly sampled batch of input sequences.

In \cref{fig:grad-accuracy}, we report the predictive accuracy of gradient for an $\translationtask(\depth, \numchar)$-ICL-capable model, and its behavior with varying batch size and the column dropping rates. We report for two cases, one where column dropping rates is non-zero only for the first \strdict{}, and one where we increase the number of \strdicts{} for which rules are independently and uniformly dropped. In both cases,

\begin{itemize}
    \item Increased column dropping rates leads to noisier gradients and reduced prediction accuracy.
    \item Larger batch sizes improve gradient accuracy but cannot fully compensate for high dropout rates.
    \item Dropping rules from multiple phrasebooks significantly degrades gradient prediction accuracy.
\end{itemize}

\newpage\subsection{Hidden Representations of a Model} \label{sec:hid_rep_model}

A transformer $\langmodel_{\lmparam}$ with embedding dimension $p$ and $K$ layers takes any input sequence $\vx$, say of length $\inputlength$, converts to an embedding matrix $\embed_1 \in \mathbb{R}^{\inputlength \times p}$, and modifies the embeddings using a succession of $K$ transformer layers; which we will denote by $\langmodel_\lmparam^{(1)}, \langmodel_\lmparam^{(2)}, \cdots, \langmodel_\lmparam^{(K)}$. 
We refer to the hidden representations for the input $\vx$, with embedding matrix $\embed_1$, as the output of the model after every layer. We will denote them as $\embed_{i+1} \in \mathbb{R}^{\inputlength \times p}$ for the output of layer $\langmodel_\lmparam^{(i)}$. That is,
\begin{align*}
    \embed_{i+1} = \langmodel_\lmparam^{(i)} \circ \cdots \circ \langmodel_\lmparam^{(2)} \circ \langmodel_\lmparam^{(1)} \rbr{\embed_1}, \quad \text{ for all } i \ge 1.
\end{align*}

\paragraph{$\ell_2$-norm in change in hidden representation with perturbation in in-context information} 

For an $\translationtask(\depth, \numchar)$-ICL-capable model, we supply in-context information for the textual description of a \strcodebook{} $\codebook$ as $\str(\codebook) = [\str(\vardict_1), \cdots, \str(\vardict_{\depth})]$. Our inputs to the transformer for an input string $\inputseq$ will be of the form $[\str\rbr{\codebook}, \inputseq, \Tstring, \intermseq_{\depth+1}]$, where $\intermseq_{\depth+1} = \translationtask_{\codebook}(\inputseq)$. By the definition of hidden representations, $\embed_2, \cdots, \embed_{K+1}$ will denote the output of the transformer layers for this input string. However, we will be only interested in the hidden representations for the tokens involved in the tokens for $\Tstring, \intermseq_{\depth+1}$; and we will refer to the corresponding subsets of $\embed_2, \cdots, \embed_{K+1}$ that represent these specific tokens as  $\matrep_2, \cdots, \matrep_{K+1}$.

Now, suppose we randomly take a \strdict{} $\vardict_i$ in $\codebook$ and change to a random \strdict{} $\Tilde{\vardict}_i$. The corresponding  textual description that will augment the context for an input string will then be $[\str(\vardict_1), \cdots, \str(\vardict_{i-1}), \str(\Tilde{\vardict}_i),  \str(\vardict_{i+1}), \cdots, \str(\vardict_{\depth})].$ If $\Tilde{\matrep}_2, \cdots, \Tilde{\matrep}_{K+1}$ now denote the hidden representations that represent the tokens for $\Tstring, \intermseq_{\depth+1}$, then the $\ell_2$-norm in the change of the hidden representation after layer $j$ (for any $1 \le j \le K$) with the perturbation in $\codebook$ will be given by $\norm{ \Tilde{\matrep}_{j} - \matrep_j }_2.$

\subsection{Definition of a ``stitched'' model} \label{sec:stich_model}

We reuse notations from \cref{sec:hid_rep_model}. Suppose we have an $\translationtask(\depth, \numchar)$-ICL-capable model $\langmodel_{\lmparam}$ and an $\translationtask_\targetCB$-capable model $\langmodel_{\lmparam^*}$. Their corresponding transformer layers are denoted by $\langmodel_\lmparam^{(1)}, \langmodel_\lmparam^{(2)}, \cdots, \langmodel_\lmparam^{(K)}$ and $\langmodel_{\lmparam^*}^{(1)}, \langmodel_{\lmparam^*}^{(2)}, \cdots, \langmodel_{\lmparam^*}^{(K)}$. Each model takes in an input sequence and processes them with their $K$ transformer layers.

Formally, we will write for the ICL-capable model. It takes in input sequence $\vx$, and converts to an embedding matrix, say $\embed_1$, and the output after the $K$layers are given by:
\begin{align*}
     \embed_{K+1} = \langmodel_\lmparam^{(\depth)} \circ \cdots \circ \langmodel_\lmparam^{(2)} \circ \langmodel_\lmparam^{(1)} \rbr{\embed_1}.
\end{align*}

\paragraph{Process of ``stitching'':} The process of stitching takes in two parameters $L_{\text{start}}$ and $L_{\text{end}}$ and replaces all layers from $L_{\text{start}}$ to $L_{\text{end}}$ in $\langmodel_{\lmparam}$ with the corresponding layers in $\langmodel_{\lmparam^*}$ to give a ``stitched'' model, say $f_{\lmparam, \lmparam^*, L_{\text{start}}, L_{\text{end}}}$. The output of the ``stitched'' model $f_{\lmparam, \lmparam^*, L_{\text{start}}, L_{\text{end}}}$ on an input sequence $\vx$ will be given by
\begin{align*}
    \embed_{K+1} = \langmodel_\lmparam^{(\depth)} \circ \cdots \circ \langmodel_{\lmparam}^{(L_{\text{end}+1})} \circ \underbrace{\langmodel_{\lmparam^*}^{(L_{\text{end}})} \circ \cdots \langmodel_{\lmparam^*}^{(L_{\text{start}})}}_{\text{Layers are replaced by layers from } \langmodel_{\lmparam^*} } \circ  \langmodel_\lmparam^{(L_{\text{start}}-1)} \circ  \langmodel_\lmparam^{(2)} \circ \langmodel_\lmparam^{(1)} \rbr{\embed_1}.
\end{align*}

\subsection{Pipeline on CoT Internalization}
\label{sec:apdx-expconfig-stage1}

We randomly sample $M$ sets of \strdicts{} $\pi_1,\dots, \pi_M$ not equal to $\targetCB$. For each set of \strdicts{} $\pi_i$, we randomly sample a single input sequence $\inputseq$ and compute all the intermediate translation steps $\intermseq_2,\dots,\intermseq_{d+1}$. We first train the model to do robust explicit CoT by auto-regressive training on sequences $[\str(\pi_i),\intermseq_1,\intermseq_2,\dots, \intermseq_{d},\intermseq_{d+1}]$ with loss computed over $\intermseq_2,\dots,\intermseq_{d+1}$.
Then we follow common CoT internalization strategies \citep{deng2024internalizeCoT,hao2024coconut,yu2024distilling,su2024dualformer} and gradually replace the intermediate sequences by \think{} tokens in training. 
After all intermediate sequences have been replaced, the model has low loss on $\intermseq_{d+1}$ with input $[\str(\codebook_1),\intermseq_1,\think,\dots, \think,\intermseq_{d+1}]$, satisfying \cref{def:icl-capable-model}. Since we only sample on sequence per set of \strdicts{}, there is little memorization on particular \strdicts{}.

\textbf{Details on training hyperparameters:} We use $M=3 \times 10^5$ random sets of \strdicts{}
with length between $20$ and $40$, for getting $\translationtask(5, 8)$-ICL-capable model, and $M=10^6$ random sets of \strdicts{} with length between $20$ and $40$, for getting $\translationtask(5, 10)$-ICL-capable model. We use cosine learning rate schedule \citep{loshchilov2016sgdr}, with peak learning rate $10^{-4}$ and a $6\%$ warmup phase, where learning rate is linearly increased from $0$ to the peak. We use AdamW optimizer \citep{loshchilov2018decoupled} with weight decay fixed at $10^{-4}$. We use a batch size of $64$ for training.

\textbf{CoT internalization curriculum:} For the first $10\%$ fraction of training, we train the model with explicit CoT tokens that contain the intermediate steps in translation. Then between $10\%$ to $60\%$ fractions of training, CoT tokens are gradually replaced by $\think$ tokens, with the rate of replacement increasing linearly from $0\%$ to $100\%$. We follow a deterministic first-to-last order for replacing CoT tokens; earlier CoT tokens are replaced first with $\think$ tokens.
After that, the model is trained with the $\think$ CoT tokens till the end of training.

\subsection{Experiment configuration for \ICM{}}
\label{sec:apdx-expconfig}

For \ICM{}, we create supervised datasets $\dataset_{\targetCB}$ of different sizes; each containing between $10^4$ to $10^6$ samples. When performing \annealdpstrat{} or \fixdpstrat{}, at each step of training, we randomly perform dropout on the rules of all \strdicts{} or apply dropout to the rules of a randomly sampled \strdict{} to define \curriculumtext{}. That is, if $\vardict^*_1, \dots, \vardict^*_5$ represent the \strdicts{}, then at each step of training, we either randomly drop rules uniformly from all of $\vardict^*_1, \dots, \vardict^*_5$, or just drop from one of the  \strdicts{} randomly selected from $\vardict^*_1, \dots, \vardict^*_5$, while keeping the rules of all other \strdicts{} intact, to create \curriculumtext{}.

\textbf{Hyperparameters:} Training hyperparameters are set equal to the optimization hyperparameters used in preparation of ICL-capable training phase (\cref{sec:apdx-expconfig-stage1}), except we set weight decay to $0$ in all experiments. We report the performance of the trained model after single epoch of training on each $\dataset_{\targetCB}$ and plot against the size of the dataset in \cref{fig:experiments-main-n10}.





\newpage

\section{Additional Experiments Results}
\label{sec:apdx-experiments}

\subsection{Selective Internalization of Context}
\label{sec:apdx-experiments-selectiveMemorization}

In this experiment, we test whether the model can internalize rules that don't incur an increase in loss when dropped during training. To do so, we ablate on \annealdpstrat{}. We select a \strdict{} and create 2 splits of rules in the \strdict{}; one set of rules will be utilized by training samples for performing the translation task (which we call the training split), while other set of rules will appear in \curriculumtext{} during training but never utilized for the translation task (which we call the heldout split). 

At test time, we measure the performance of the trained model on 2 sets of evaluation examples, one that only use rules from the training split for their translation (equal to training distribution), and other that uses rules only from the heldout split for their translation (different from training distribution). The model is being measured without any \strdicts{} information at evaluation. We conduct the above experiment for each \strdict{}, i.e. we create $5$ sets of experiments where we only create heldout split for one specific level of \strdict{}. In \cref{fig:apdx-experiments-selectiveMemorization}, we show that the model fails to perform any translation that use the rules from the held-out split in all settings. This shows that the model only internalizes those rules that are important for the translation task for the training samples, and which incurs an increase in training loss when dropped.

\begin{figure}[H]
    \vspace{-0.1in}
    \centering
    \includegraphics[width=0.9\linewidth]{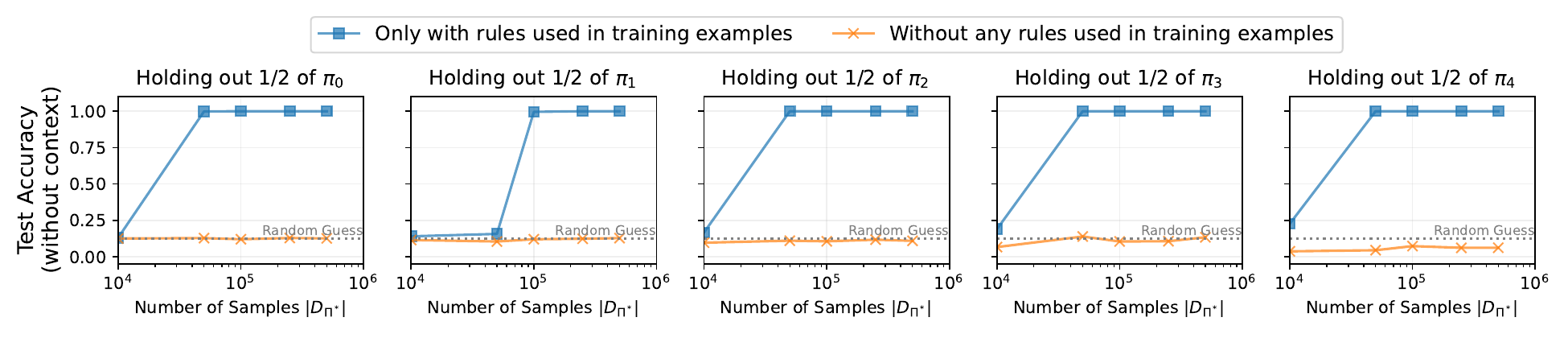}
    \vspace{-0.1in}
    \caption{Ablations on \annealdpstrat{} for $\translationtask(5,10)$ assess whether models internalize rules not used in training samples. (Left to right) $1 \leq i \leq 5$: Five independent experiments where a held-out split is created for \strdict{} $\vardict^*_i$, and training excludes samples that use the translation rules in the heldout split of $\vardict^*_i$. Evaluation is conducted on two sets: one where samples do not use held-out rules from $\vardict^*_i$ and one where only held-out rules from $\vardict^*_i$ are used. The model’s random performance on the latter indicates that it internalizes only rules used during training, particularly those whose removal increases loss.
 }
    \label{fig:apdx-experiments-selectiveMemorization}
\end{figure}

\subsection{Mechanistic Insights}

Here we provide additional figures corresponding to \cref{fig:mechanistic-icl} and \cref{fig:mechanistic-localizedStorage}, but in more settings ($n=10$ vs $n=8$, \annealdpstrat{} vs \fixdpstrat{}).
\label{sec:apdx-experiments-mechanistic}
\begin{figure}[H]
    \centering
    \includegraphics[width=0.8\linewidth]{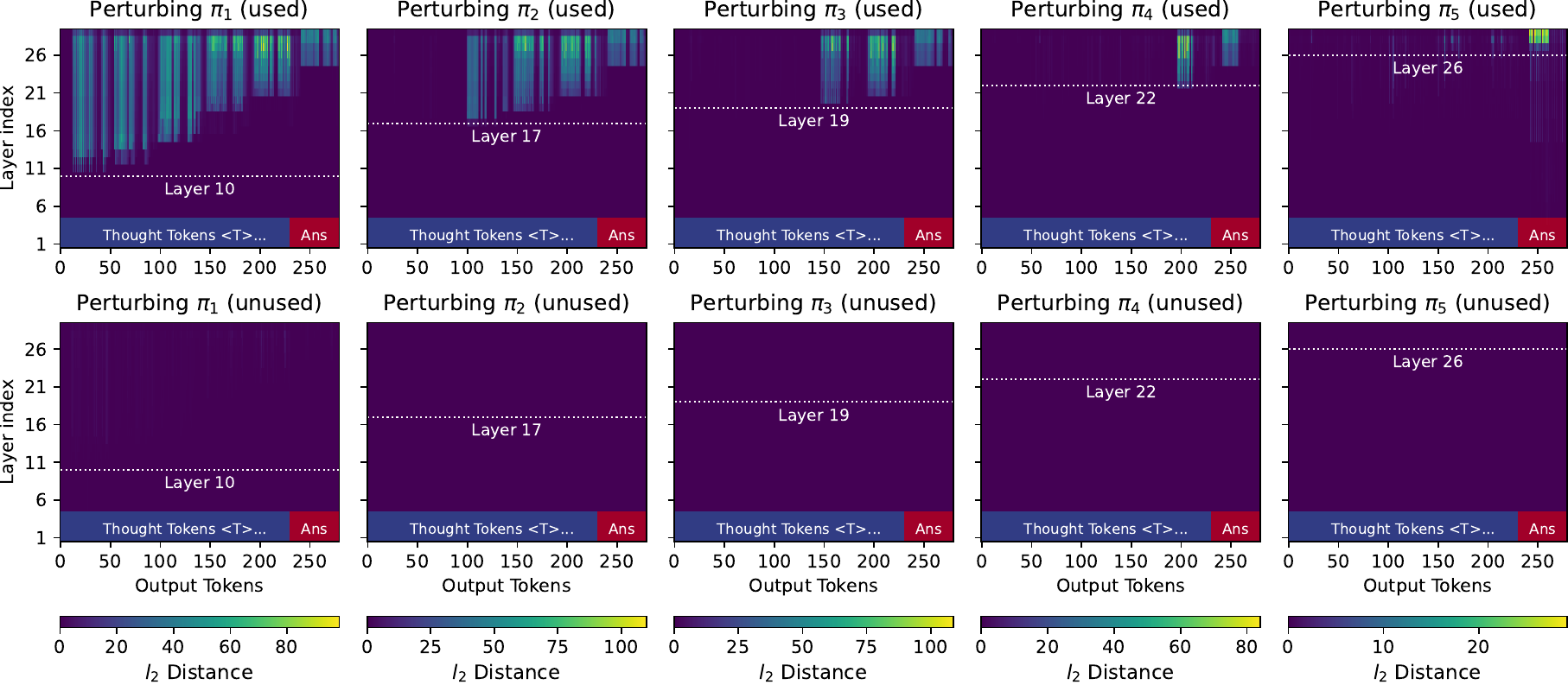}
    \caption{Comparison between perturbing used rules (top row, identical to \cref{fig:mechanistic-icl}) and unused rules (bottom row) in context. When perturbing rules used in the translation at a later phrasebook, representation changes at later layers. However only perturbing unused rules leads to negligible representation changes. This experiment rules out the possibility that the affected depth is only dependent on position of the perturbation instead of the semantic content. }
    \label{fig:mechanistic-icl-additional}
\end{figure}

\begin{figure}[H]
     \centering
     \begin{subfigure}[b]{0.495\linewidth}
         \centering
         \includegraphics[width=\textwidth]{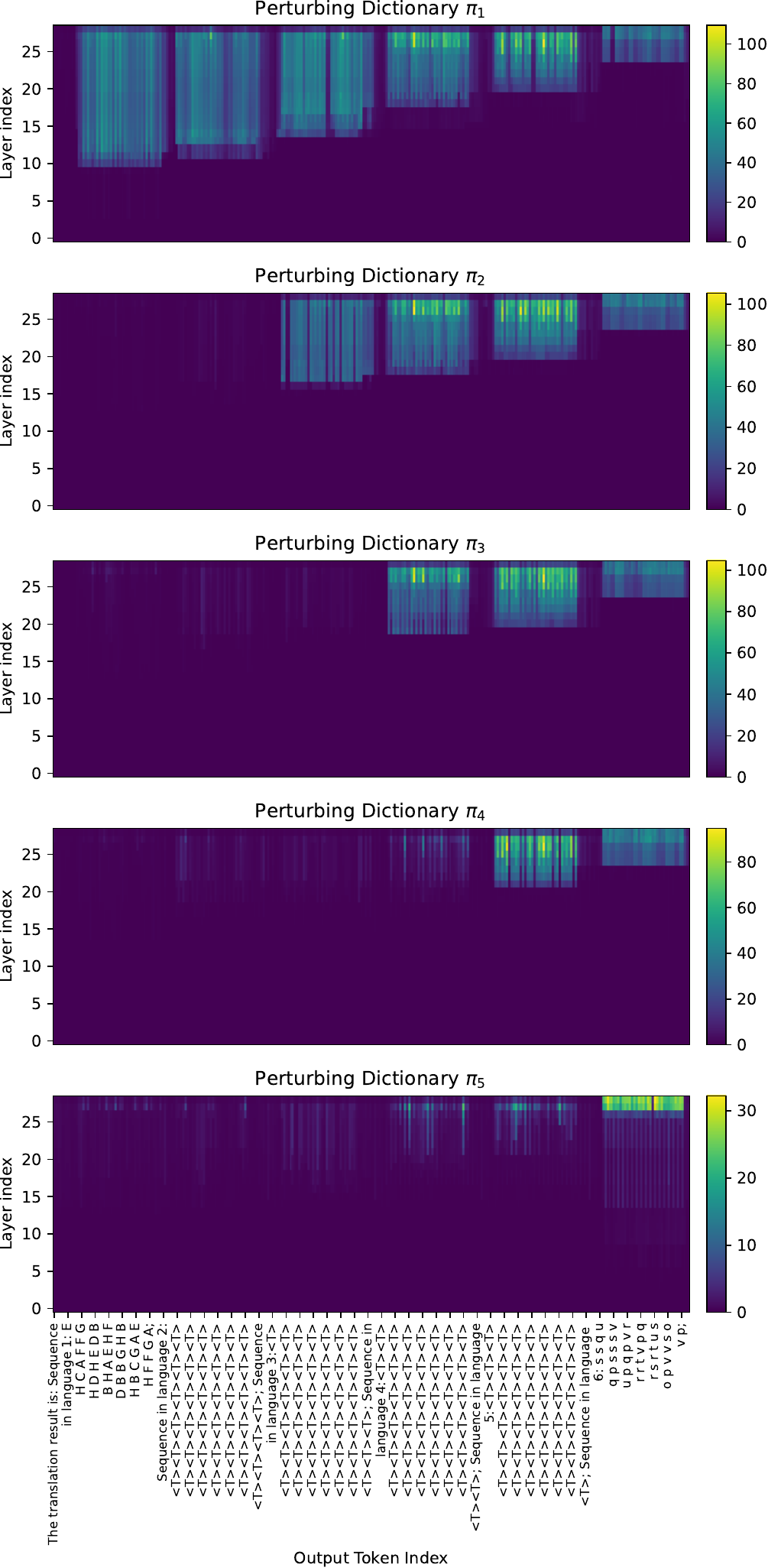}
         \caption{$\translationtask(5,8)$-ICL-capable Model}
     \end{subfigure}
     \hfill
     \begin{subfigure}[b]{0.495\linewidth}
         \centering
         \includegraphics[width=\textwidth]{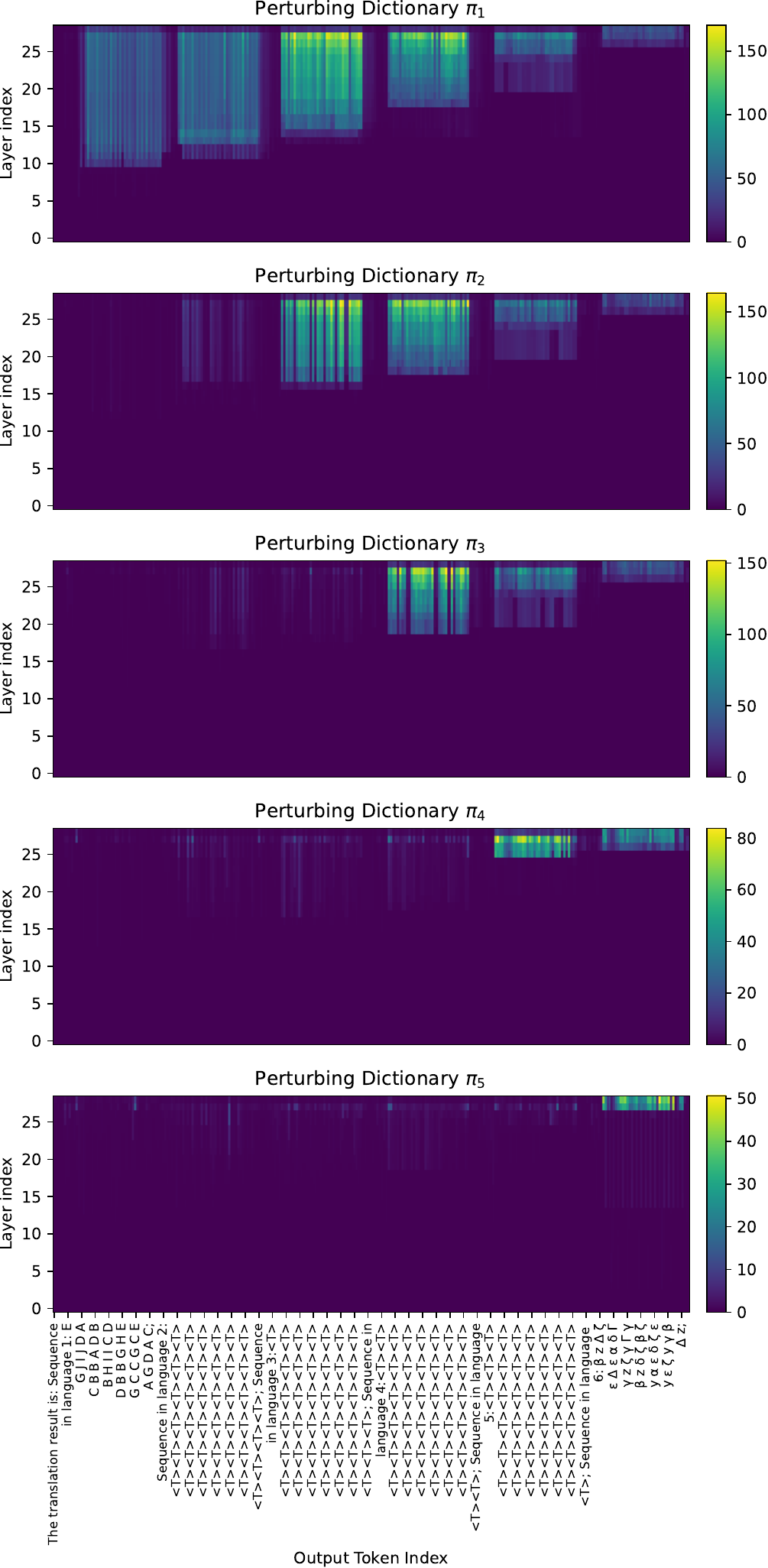}
         \caption{$\translationtask(5,10)$-ICL-capable Model}
     \end{subfigure}
        \caption{Evidence for sequential processing in \translationtask-ICL capable models ($n=8, n=10$). The left figure is identical to \cref{fig:mechanistic-icl}, except we substitute the entire phrasebook $\pi_i$ with another random phrasebook of the same level $\hat\pi_i$. We observe the same behavior for $\translationtask(5, 10)$-ICL-capable model:  perturbing later \strdicts{} in the context changes output representations in the later layers.}
\end{figure}
\newpage

\begin{figure}[H]
     \centering
     \begin{subfigure}[b]{0.85\linewidth}
         \centering
         \includegraphics[width=\textwidth]{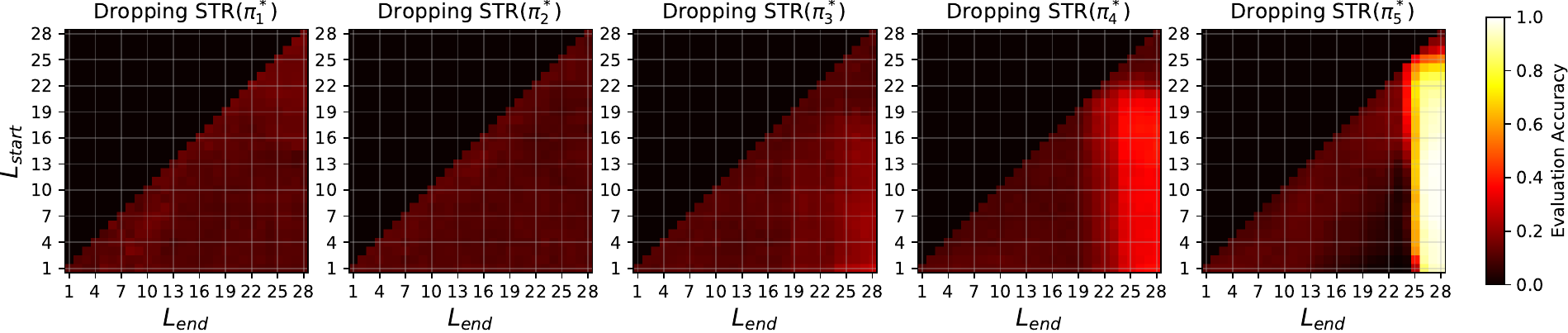}
         \vspace{-0.23in}
         \caption{$D_\targetCB$ with 10000 samples}\vspace{0.2in}
     \end{subfigure}
     \begin{subfigure}[b]{0.85\linewidth}
         \centering
         \includegraphics[width=\textwidth]{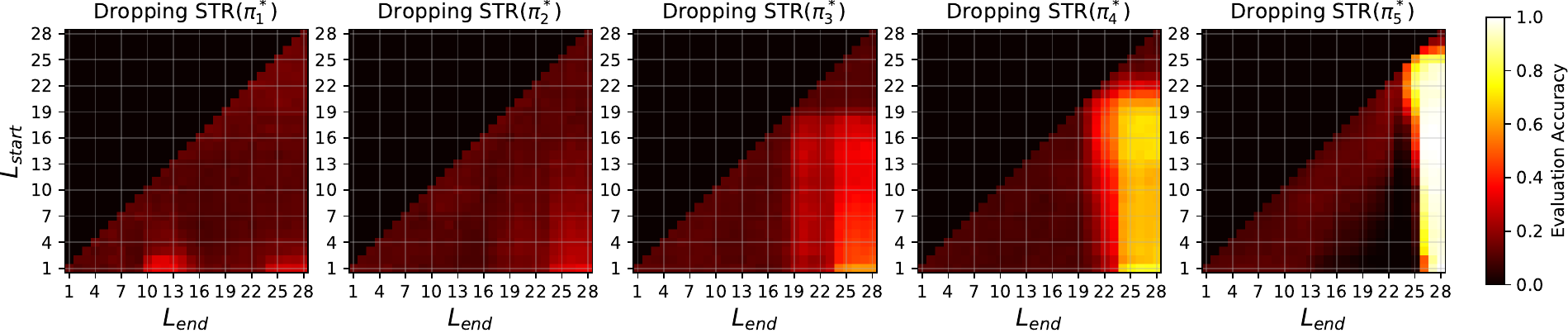}
         \vspace{-0.23in}
         \caption{$D_\targetCB$ with 25000 samples}\vspace{0.2in}
     \end{subfigure}
     \begin{subfigure}[b]{0.85\linewidth}
         \centering
         \includegraphics[width=\textwidth]{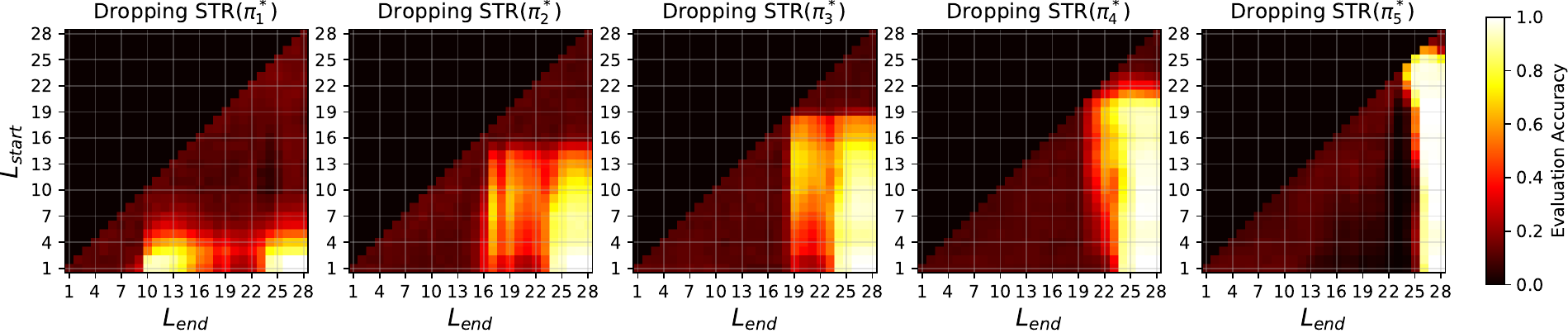}
         \vspace{-0.23in}
         \caption{$D_\targetCB$ with 50000 samples}\vspace{0.2in}
     \end{subfigure}
     \begin{subfigure}[b]{0.85\linewidth}
         \centering
         \includegraphics[width=\textwidth]{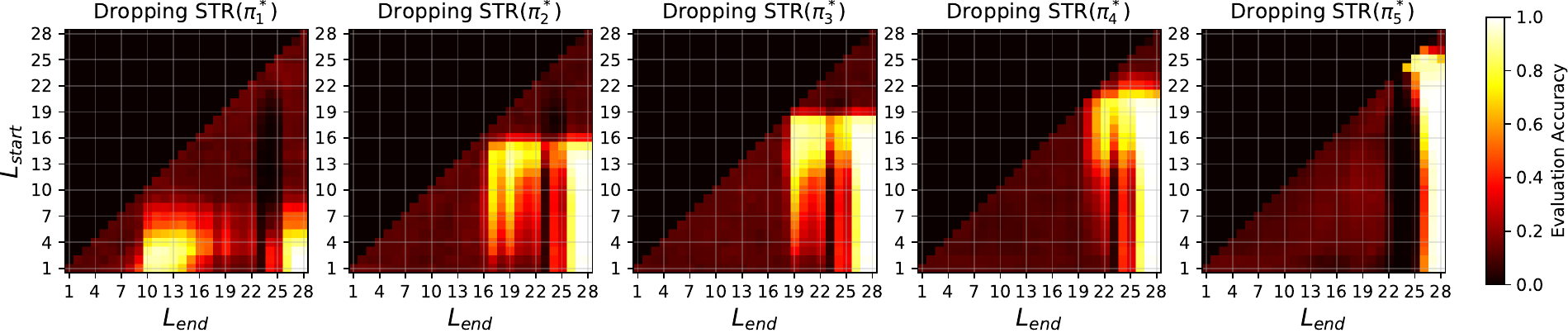}
         \vspace{-0.23in}
         \caption{$D_\targetCB$ with 100000 samples}\vspace{0.2in}
     \end{subfigure}
     \begin{subfigure}[b]{0.85\linewidth}
         \centering
         \includegraphics[width=\textwidth]{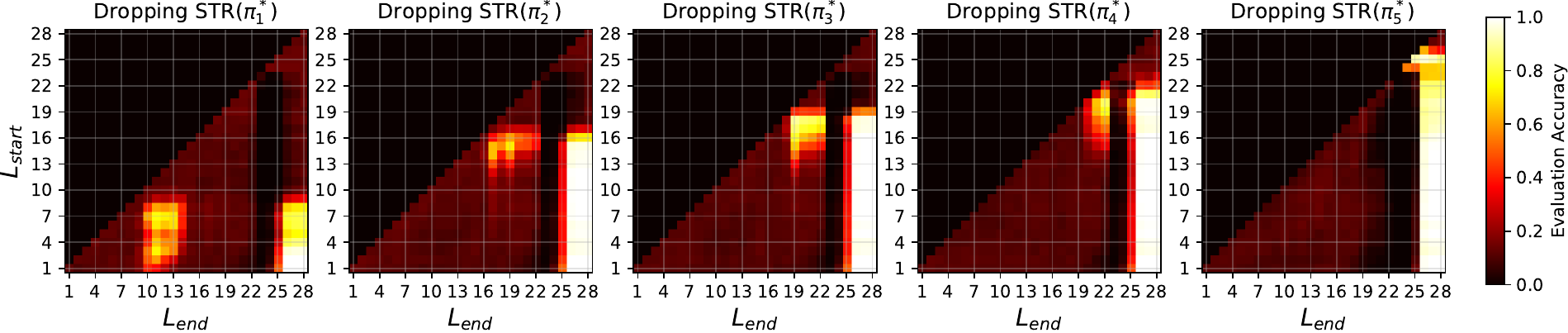}
         \vspace{-0.23in}
         \caption{$D_\targetCB$ with 250000 samples}
     \end{subfigure} 
        \caption{Repeated experiments from \cref{fig:mechanistic-localizedStorage} for models trained with \ICM{} at varying supervised dataset sizes (Plots for row (d) are identical to the plots in \cref{fig:mechanistic-localizedStorage}). We observe that \strdicts{} are progressively internalized with the number of training samples available during \ICM{}; later \strdicts{} are internalized with fewer samples than the earlier ones. We observe similar localization patterns across layers from different \strdicts{} across the models, however, we also observe that the localization patterns get increasingly sparser as training continues. }
        \label{fig:apdx-experiments-mechanistic-stitchingAnneal}
\end{figure}

\begin{figure}[H]
     \centering
     \begin{subfigure}[b]{0.85\linewidth}
         \centering
         \includegraphics[width=\textwidth]{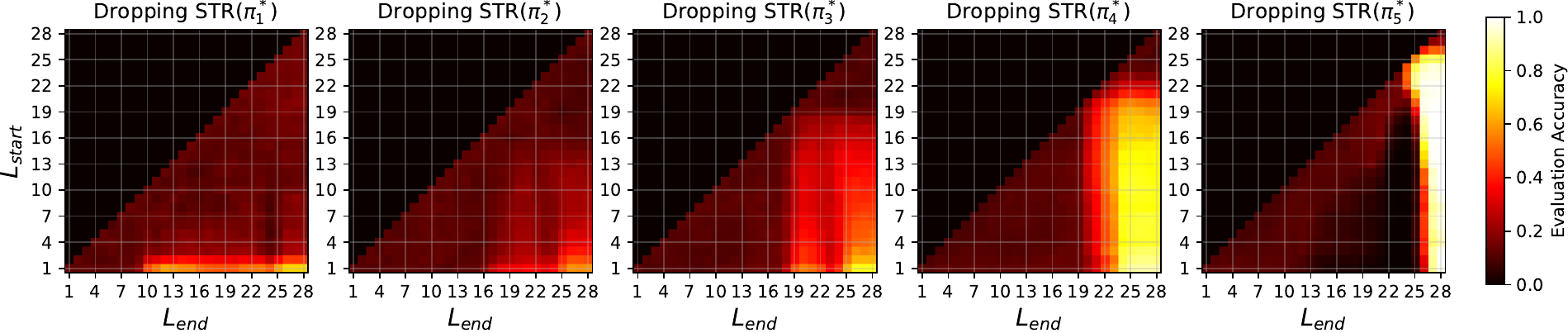}
         \vspace{-0.23in}
         \caption{$D_\targetCB$ with 25000 samples}\vspace{0.2in}
     \end{subfigure}
     \begin{subfigure}[b]{0.85\linewidth}
         \centering
         \includegraphics[width=\textwidth]{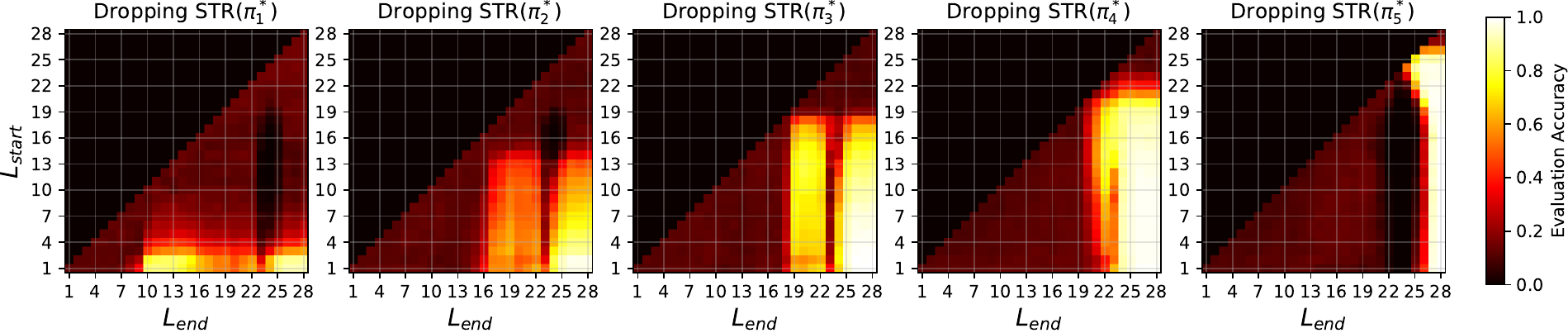}
         \vspace{-0.23in}
         \caption{$D_\targetCB$ with 50000 samples}\vspace{0.2in}
     \end{subfigure}
     \begin{subfigure}[b]{0.85\linewidth}
         \centering
         \includegraphics[width=\textwidth]{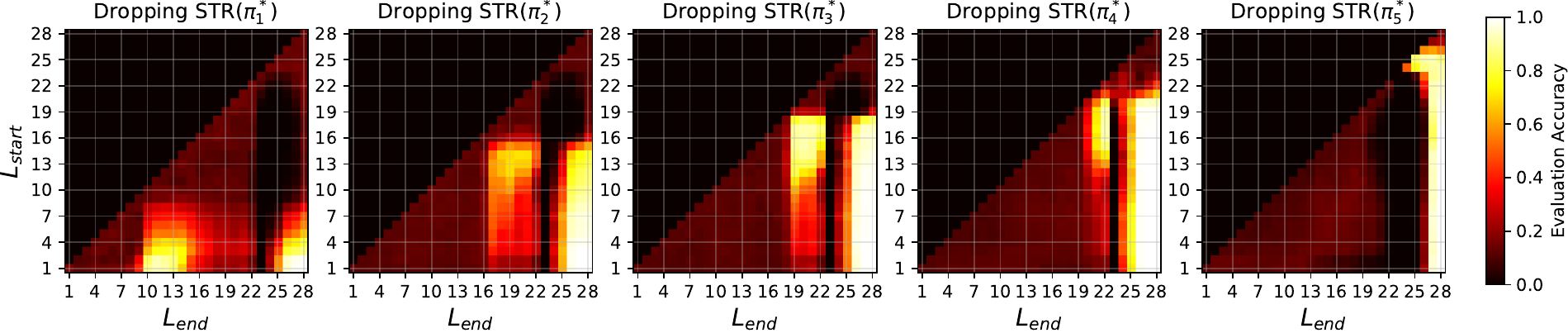}
         \vspace{-0.23in}
         \caption{$D_\targetCB$ with 100000 samples}\vspace{0.2in}
     \end{subfigure}
     \begin{subfigure}[b]{0.85\linewidth}
         \centering
         \includegraphics[width=\textwidth]{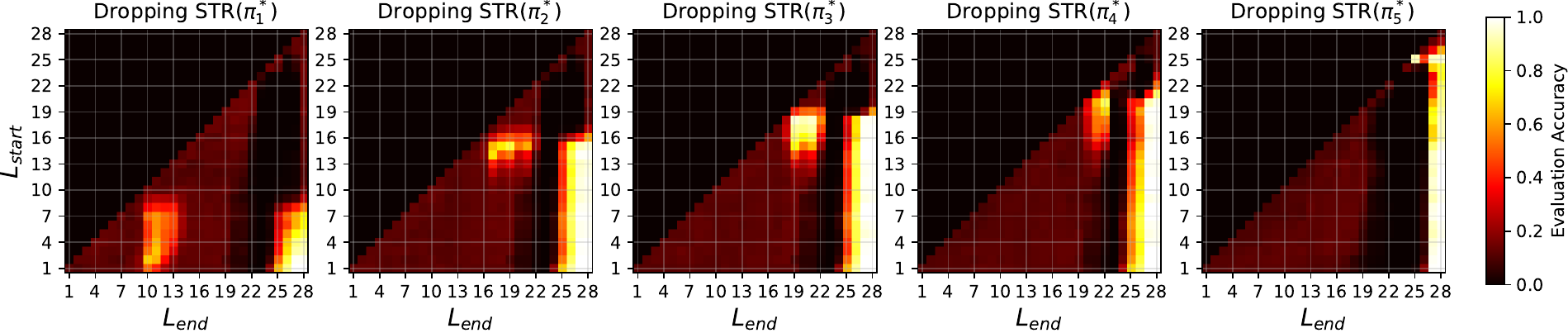}
         \vspace{-0.23in}
         \caption{$D_\targetCB$ with 250000 samples}\vspace{0.2in}
     \end{subfigure}
     \begin{subfigure}[b]{0.85\linewidth}
         \centering
         \includegraphics[width=\textwidth]{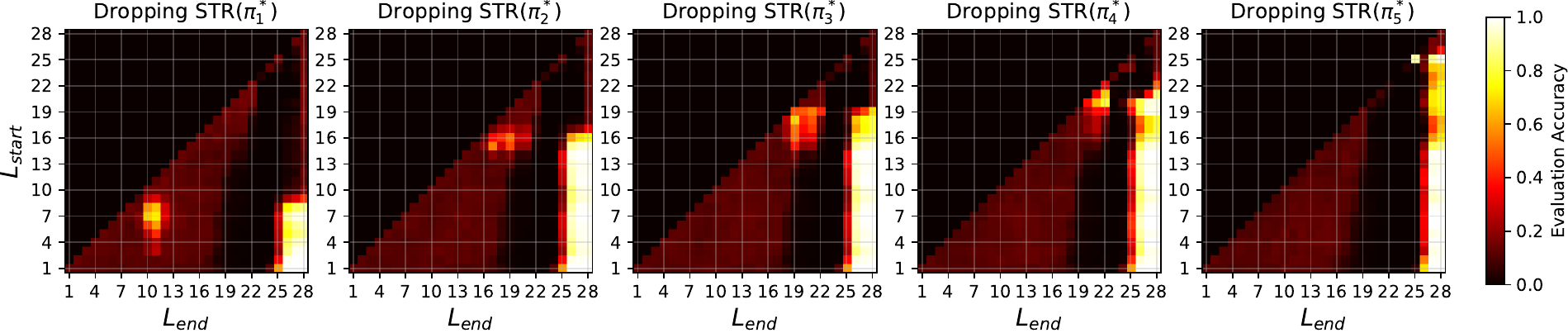}
         \vspace{-0.23in}
         \caption{$D_\targetCB$ with 500000 samples}
     \end{subfigure}
        \caption{Repeated experiments from \cref{fig:apdx-experiments-mechanistic-stitchingAnneal} for models trained with \fixdpstrat{} instead of \annealdpstrat{}. Observations remain similar. This suggests that the position of phrasebook internalization is primarily decided by the $\translationtask(\depth, \numchar)$-ICL-capable initialization, and less dependent on the specific dropout curriculum we use for context-enhanced learning.}
        \label{fig:apdx-experiments-mechanistic-stitchingfix20}
\end{figure}

\newpage
\subsection{(Non)Verbatim Memorization of Phrasebook Rules}
\label{sec:apdx-experiments-memorization}

In this subsection, we provide a more detailed explanation for the evaluation on the feasibility of recovering phrasebook rules from models trained with \ICM{} with phrasebook excerpts in context.

Given a $\translationtask_\targetCB$-capable model $\langmodel_{\lmparam^*}$ trained with \ICM{} where phrasebook rules from $\str(\targetCB)$ are provided within the context during training, we test if the model retains explicit memory of the textual format of rules. Namely, for each of the phrasebook rule of the form ``\texttt{a b -> C D}'' in $\str(\targetCB)$, whether it can complete the ground truth output tokens ``\texttt{C D}'' providing ``\texttt{a b ->}'' and additional context of the phrasebook $\str(\targetCB)$ before ``\texttt{a b -> C D}''.
We conduct the test by computing the forward pass through $\langmodel_{\lmparam^*}$ with $\str(\targetCB)$ as the input (see exact input format in \cref{fig:apdx-expconfig-inputformat-internalizedcot}).

Now we formally define the query strategies and metrics we used for creating \cref{tab:memorization-query-succ}. Suppose there are $h$ unique tokens and $\str(\targetCB)$ contains $L$ tokens, let $\mS\in \R^{h\times L}$ denote the corresponding output logit score matrix when we pass $\str(\targetCB)$ into $\langmodel_{\lmparam^*}$. For a translation rule with ground truth output tokens of index $a_1, a_2\in \alphabetset_{i+1}\subset [h]$, let $\mS^{(k)}$ and $\mS^{(k+1)}\in\R^h$ denote the logit vector corresponding to predicting these two entries.

If we are doing greedy decoding, then we will recover the correct phrasebook rule if and only if we greedily select both $a_1$ from $\mS^{(k)}$ and $a_2$ from $\mS^{(k+1)}$, so the probability of correctly recovering $(a_1,a_2)$ conditioned on $\mS$ is \begin{align*}
    \pr{\text{Greedy-Recover} (a_1, a_2)} = \ind\sbr{\argmax_{i\in[h]}\mS^{(k)} = a_1}\cdot\ind\sbr{\argmax_{i\in[h]}\mS^{(k+1)} = a_2}.
\end{align*}
Meanwhile, if we apply random sampling with softmax temperature of 1, the probability of correctly recovering $(a_1,a_2)$ conditioned on $\mS$ is just then \begin{align*}
    \pr{\text{Sampling-Recover} (a_1, a_2)} = \rbr{\frac{\exp\rbr{\mS^{(k)}_{a_1}}}{\sum_{i\in[h]}\exp\rbr{\mS^{(k)}_{i}}}}\cdot \rbr{\frac{\exp\rbr{\mS^{(k+1)}_{a_1}}}{\sum_{i\in[h]}\exp\rbr{\mS^{(k+1)}_{i}}}}.
\end{align*}
Recall from \cref{sec:memorization} that we have also introduced two stronger adversaries with additional \emph{token filtering} when doing decoding: (1) setting probability of \think{} tokens to zero (2) when querying for a rule in $\dictionary_i$ with output alphabet $\alphabetset_{i+1}$, setting probability of all tokens outside $\alphabetset_{i+1}$ to zero.

Denoting the token index of \think{} as $a_{\think}$, then filter 1 corresponds to
\begin{align*}
\begin{split}
    \pr{\text{Greedy-Filter1-Recover} (a_1, a_2)} &= \ind\sbr{\argmax_{i\in[h]\backslash\cbr{a_{\think}}}\mS^{(k)} = a_1}\cdot\ind\sbr{\argmax_{i\in[h]\backslash\cbr{a_{\think}}}\mS^{(k+1)} = a_2},\\
    \pr{\text{Sampling-Filter1-Recover} (a_1, a_2)} &= \rbr{\frac{\exp\rbr{\mS^{(k)}_{a_1}}}{\sum_{i\in[h]\backslash\cbr{a_{\think}}}\exp\rbr{\mS^{(k+1)}_{i}}}}\cdot \rbr{\frac{\exp\rbr{\mS^{(k)}_{a_1}}}{\sum_{i\in[h]\backslash\cbr{a_{\think}}}\exp\rbr{\mS^{(k+1)}_{i}}}}.
\end{split}
\end{align*}
Similarly, for filter 2, the probabilities are 
\begin{align*}
\begin{split}
    \pr{\text{Greedy-Filter2-Recover} (a_1, a_2)} &= \ind\sbr{\argmax_{i\in\alphabetset_{i+1}}\mS^{(k)} = a_1}\cdot\ind\sbr{\argmax_{i\in\alphabetset_{i+1}}\mS^{(k+1)} = a_2},\\
    \pr{\text{Sampling-Filter2-Recover} (a_1, a_2)} &= \rbr{\frac{\exp\rbr{\mS^{(k)}_{a_1}}}{\sum_{i\in\alphabetset_{i+1}}\exp\rbr{\mS^{(k+1)}_{i}}}}\cdot \rbr{\frac{\exp\rbr{\mS^{(k)}_{a_1}}}{\sum_{i\in\alphabetset_{i+1}}\exp\rbr{\mS^{(k+1)}_{i}}}}.
\end{split}
\end{align*}
Note that the second filter is a very strong adversarial assumption which assumes the user already has side information on the set of tokens contained in alphabet $\alphabetset_i$ and $\alphabetset_{i+1}$. To compute the final statistics, we sample 20 permutations of $\str(\targetCB)$ for forward passes and compute the mean of the above statistics over all atomic phrasebook rules appearing in the context. That is 1280 entries for each phrasebook in the case of $n=8$ and 2000 entries for each phrasebook in the case of $n = 10$.

\begin{table}[H]
\label{tab:apdx-experiments-memorization-c8}
\caption{Recovery Success Rate For $n=8, d=5$ Cases (Rounded to 2 decimals) Random Guess Baseline: $1.56\%$}
\begin{center}
    
\begin{tabular}{cclcccc}
\toprule
\multirow{2}{*}{Curriculum} & \multirow{2}{*}{\# Training Samples} & \multirow{2}{*}{Query Method} & \multicolumn{2}{c}{Greedy Decoding} & \multicolumn{2}{c}{Sampling $(T=1)$} \\
                                    & &                               & $\pi_1-\pi_4$       & $\pi_5$       & $\pi_1-\pi_4$        & $\pi_5$       \\\midrule
\multirow{3}{*}{\annealdpstrat{}} & \multirow{3}{*}{50000} & Base & $0.00\%$ & $0.00\%$ & $0.00\%$ & $0.01\%$ \\
& & Rule out \texttt{<THINK>}  &   $0.06\%$ & $1.95\%$ & $0.00\%$ & $0.54\%$ \\
& & Only keeping $\alphabetset_i$ & $3.18\%$ & $1.95\%$ & $1.82\%$ & $1.49\%$ \\\midrule
\multirow{3}{*}{\annealdpstrat{}} & \multirow{3}{*}{100000} & Base & $0.00\%$ & $0.00\%$ & $0.00\%$ & $0.64\%$ \\
& & Rule out \texttt{<THINK>}     & $0.00\%$ & $0.08\%$ & $0.00\%$ & $1.25\%$ \\
& & Only keeping $\alphabetset_i$ & $1.37\%$ & $0.08\%$ & $1.69\%$ & $1.80\%$ \\\midrule
\multirow{3}{*}{\annealdpstrat{}} & \multirow{3}{*}{250000} & Base & $0.00\%$ & $0.23\%$ & $0.00\%$ & $1.25\%$ \\
& & Rule out \texttt{<THINK>}     & $0.00\%$ & $0.23\%$ & $0.00\%$ & $1.28\%$ \\
& & Only keeping $\alphabetset_i$ & $2.95\%$ & $0.23\%$ & $2.53\%$ & $1.38\%$ \\\midrule
\multirow{3}{*}{\fixdpstrat{}} & \multirow{3}{*}{50000} & Base & $0.00\%$ & $0.00\%$ & $0.00\%$ & $0.00\%$ \\
& & Rule out \texttt{<THINK>}  &   $0.08\%$ & $0.78\%$ & $0.00\%$ & $0.09\%$ \\
& & Only keeping $\alphabetset_i$ &  $0.82\%$ & $0.86\%$ & $1.48\%$ & $1.43\%$ \\\midrule
\multirow{3}{*}{\fixdpstrat{}} & \multirow{3}{*}{100000} & Base & $0.00\%$ & $0.00\%$ & $0.00\%$ & $0.21\%$\\
& & Rule out \texttt{<THINK>}     & $0.43\%$ & $0.00\%$ & $0.03\%$ & $0.80\%$ \\
& & Only keeping $\alphabetset_i$ & $2.36\%$ & $0.00\%$ & $1.95\%$ & $1.32\%$ \\\midrule
\multirow{3}{*}{\fixdpstrat{}} & \multirow{3}{*}{250000} & Base & $0.00\%$ & $0.47\%$ & $0.02\%$ & $0.51\%$ \\
& & Rule out \texttt{<THINK>}     & $0.68\%$ & $1.09\%$ & $0.11\%$ & $0.73\%$ \\
& & Only keeping $\alphabetset_i$ & $2.23\%$ & $1.09\%$ & $2.19\%$ & $1.10\%$ \\
\bottomrule

\end{tabular}
\end{center}
\end{table}
\vspace{-0.2in}
\begin{table}[H]
\label{tab:apdx-experiments-memorization-c10}
\caption{Recovery Success Rate For $n=10, d=5$ Cases (Rounded to 2 decimals) Random Guess Baseline: $1\%$}
\begin{center}
    
\begin{tabular}{cclcccc}
\toprule
\multirow{2}{*}{Curriculum} & \multirow{2}{*}{\# Training Samples} & \multirow{2}{*}{Query Method} & \multicolumn{2}{c}{Greedy Decoding} & \multicolumn{2}{c}{Sampling $(T=1)$} \\
                                    & &                               & $\pi_1-\pi_4$       & $\pi_5$       & $\pi_1-\pi_4$        & $\pi_5$       \\\midrule
\multirow{3}{*}{\annealdpstrat{}} & \multirow{3}{*}{100000} & Base & $0.00\%$ & $0.20\%$ & $0.00\%$ & $0.89\%$ \\
& & Rule out \texttt{<THINK>}     & $0.00\%$ & $0.20\%$ & $0.00\%$ & $0.90\%$ \\
& & Only keeping $\alphabetset_i$ & $1.66\%$ & $0.20\%$ & $1.28\%$ & $0.94\%$ \\\midrule
\multirow{3}{*}{\annealdpstrat{}} & \multirow{3}{*}{250000} & Base & $0.00\%$ & $1.80\%$ & $0.00\%$ & $1.12\%$ \\
& & Rule out \texttt{<THINK>}   & $0.00\%$ & $1.80\%$ & $0.00\%$ & $1.13\%$   \\
& & Only keeping $\alphabetset_i$ & $3.05\%$ & $1.80\%$ & $1.72\%$ & $1.23\%$ \\\midrule
\multirow{3}{*}{\fixdpstrat{}} & \multirow{3}{*}{250000} & Base & $0.00\%$ & $1.60\%$ & $0.00\%$ & $1.07\%$ \\
& & Rule out \texttt{<THINK>}  &   $0.05\%$ & $1.60\%$ & $0.01\%$ & $1.07\%$ \\
& & Only keeping $\alphabetset_i$ &  $2.46\%$ & $2.15\%$ & $1.99\%$ & $1.39\%$ \\\midrule
\multirow{3}{*}{\fixdpstrat{}} & \multirow{3}{*}{500000} & Base & $0.26\%$ & $2.05\%$ & $0.05\%$ & $1.13\%$\\
& & Rule out \texttt{<THINK>}     & $0.33\%$ & $2.05\%$ & $0.07\%$ & $1.13\%$ \\
& & Only keeping $\alphabetset_i$ & $2.11\%$ & $2.05\%$ & $1.91\%$ & $1.34\%$ \\
\bottomrule

\end{tabular}
\end{center}
\end{table}
\vspace{-0.1in}

Here we report the query success rate for a variaty of models trained with \ICM{} using different curriculum on different datasets sizes. All models reaches nearly perfect test accuracy when no context is provided (see \cref{fig:experiments-main-n10}), i.e., the in context phrasebook rules played significant role in the learning of the models.

For all runs, we can see that it is nearly impossible (with recovery probability $<0.1\%$ in most cases) to recover the correct phrasebook rules for hidden steps ($\pi_1,\dots, \pi_4$) even when we provide the correct partial phrasebooks in context (see columns corresponding to Query Method ``Base''). Ruling out \think{} token when sampling also did not significantly increase the recovery rate.
We note that the phrasebook knowledge \textbf{are not memorized in an completely undetectable manner}, as the strongest token filtering gives non-random probability of outputting the correct target 2-tuple. However the probability is still very low ($<3\%$), which can be considered as negligible to recover the full correct phrasebooks $\str(\targetCB)$.

\section{Additional Related Works}
\label{sec:addition_work}

\subsection*{Differences with Masked Language Modeling (MLM) and Language Infilling} 
MLM models like BERT, RoBERTa, and T5 \citep{kenton2019bert,liu1907roberta,raffel2020exploring} train models by either masking or removing tokens from a sequence and compute loss on the model's prediction on the missing tokens. This concept has been adapted for training auto-regressive models via language infilling task ~\citep{bavarian2022efficient,li2022tip,donahue2020enabling,li2022tip}. The primary difference from these works is that \ICM{} does not take loss on the context tokens when they are removed from our \curriculumtext{}.

\subsection*{Compositional and OOD generalization} Measuring generalization for a transformer beyond training distribution has been a study of interest in many prior works. OOD generalization is measured by training a transformer on simpler examples and measuring its performance on harder ones. Prominent studies include length generalization, informally defined as the ability of the model to reason longer than what it has been trained on \citep{zhou2023algorithms,anil2022exploring}, and compositional generalization on concepts, defined as the ability of the model to reason on composition of the concepts that it has seen during training \citep{press2022measuring,allen2023physics3point2,ramesh2023compositional,yu2023skill,zhao2024can,wang2024grokked,yang2024large}. Our experiments on \fixdpstrat{} in \cref{sec:experiments}, where we train with $20\%$ dropout on the \strdicts{} information but measure performance with $100\%$ dropout at test time, measures compositional OOD generalization behavior of the language model. The results show that the model internalizes the rules from the \strdicts{} in an atomic way, and re-compose them together as necessary at test time.

\subsection*{Mechanistic behavior of transformers with synthetic datasets:} Our work builds on a growing body of research exploring the behavior of transformers trained on synthetic datasets. Prior studies have examined tasks such as modular addition \citep{nanda2023progress,zhong2024clock}, context-free grammars \citep{zhao2023transformers,allen2023physics}, regular and $n$-gram languages \citep{bhattamishra2020ability,yao2021self,akyurek2024context,li2023transformers}, and synthetic article-style datasets \citep{allen2023physics3point2,allen2024physics,eldan2023tinystories}. While our work is structurally similar to these studies, it investigates mechanistic study on \ICM{} that has not been explored in previous works.

\section{Properties of $\translationtask$}
\label{sec:properties_mlt}

$\translationtask(\depth, \numchar)$ is defined by the \strdicts{} $\dictionary_1, \cdots, \dictionary_{\depth}$ at each of its translation layers. We use $\mathcal{B}^{\dictionary}$ as all possible set of bijective maps that can be used to define the \strdicts{}. We will use variable $\vardictionary$ to refer to an arbitrary bijective map from the set $\mathcal{B}^{\dictionary}$. For simplicity of proof, we will refer to any alphabet set $\alphabetset$ of size $\numchar$ as $\{ 0, 1, \cdots, \numchar-1 \}$.

Here, we formally mention some of the properties of $\translationtask(\depth, \numchar)$.

\begin{lemma}[Invertibility of sequence translation]
\label{lemma:setting-MLTinvertability}
    For any level $i\in[d]$, fixing $\cbr{\dictionary_i, \dictionary_{i+1},\dots,\dictionary_\depth}$ gives a bijection between $\intermseq_{i}$ and $\intermseq_{\depth+1}$.
\end{lemma}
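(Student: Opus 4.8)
The plan is to show that each single translation step $\transfunc_{\vardict_i}\colon \alphabetset_i^{\inputlength}\to\alphabetset_{i+1}^{\inputlength}$ is a bijection, and then to conclude by composing these steps.

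First I would observe that \circularshift{} is a bijection on $\alphabetset_i^{\inputlength}$ for every $i$, independently of $\numchar$: it is the map on length-$\inputlength$ strings induced by the cyclic permutation $j\mapsto (j{+}1)\,\%\,\inputlength$ of the coordinate indices, and its inverse is the analogous rightward cyclic shift $j \mapsto (j{-}1)\,\%\,\inputlength$. In particular it simply permutes coordinates and is invertible.

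Next I would handle \translate{}. Since $\inputlength$ is even, the odd indices $j\in\{1,3,\dots,\inputlength-1\}$ split $[1,\inputlength]$ into the $\inputlength/2$ disjoint consecutive pairs $(j,j{+}1)$, and by definition \translate{} acts on $\shiftintermseq_i$ by applying the fixed map $\vardict_i$ to each such pair independently, i.e. $(\intermseq_{i+1,j},\intermseq_{i+1,j+1}) = \vardict_i(\shiftintermseq_{i,j},\shiftintermseq_{i,j+1})$ for every odd $j$. Identifying $\alphabetset_i^{\inputlength}$ with $(\alphabetset_i^2)^{\inputlength/2}$ (and likewise for $\alphabetset_{i+1}$) via this blocking of coordinates, \translate{} is exactly the $\inputlength/2$-fold product map $\vardict_i\times\cdots\times\vardict_i$. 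Because $\vardict_i\colon \alphabetset_i^2\to\alphabetset_{i+1}^2$ is a bijection, this product is a bijection from $\alphabetset_i^{\inputlength}$ onto $\alphabetset_{i+1}^{\inputlength}$, with inverse the blockwise application of $\vardict_i^{-1}$. Hence $\transfunc_{\vardict_i}$, being the composition of the \circularshift{} map followed by this \translate{} map, is a composition of two bijections and therefore a bijection $\alphabetset_i^{\inputlength}\to\alphabetset_{i+1}^{\inputlength}$.

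Finally, for any $i\in[\depth]$ the map carrying $\intermseq_i$ to $\intermseq_{\depth+1}$ is, by definition of the $\depth$-step translation, the composition $\transfunc_{\vardict_\depth}\circ\transfunc_{\vardict_{\depth-1}}\circ\cdots\circ\transfunc_{\vardict_i}$, which is fully determined once $\{\vardict_i,\vardict_{i+1},\dots,\vardict_\depth\}$ are fixed. A composition of bijections is a bijection, which gives the claim; an explicit inverse is $\transfunc_{\vardict_i}^{-1}\circ\cdots\circ\transfunc_{\vardict_\depth}^{-1}$. I do not anticipate a genuine obstacle here: the only points requiring a little care are (a) using evenness of $\inputlength$ to guarantee that \translate{} really does operate on disjoint coordinate blocks, so that ``applying $\vardict_i$ blockwise'' is literally a product of bijections, and (b) tracking that the alphabet changes from $\alphabetset_i$ to $\alphabetset_{i+1}$ at each step, so the statement concerns bijections between distinct (but equinumerous) sets rather than permutations of a single set.
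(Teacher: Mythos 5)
Your proof is correct and follows essentially the same route as the paper, which simply notes that every operation in each translation step is invertible; you merely spell out the details (circular shift as a coordinate permutation, \translate{} as a blockwise product of the bijection $\vardict_i$, then composition of bijections). No gaps.
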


\begin{proof}
    All of the four operations involved in the mapping from $\intermseq_i$ to $\intermseq_{i+1}$ are invertible. 
\end{proof}

\paragraph{Structure of the mappings:}
The mappings $\dictionary_i$ are selected as random bijective maps between 2-tuples of characters in $\alphabetset_i^2$ and 2-tuples of characters in $\alphabetset_{i+1}^2$. A combinatorial argument can then give the number of such possible \strdicts{} to be $\numchar^2!$.

\begin{lemma}[Number of mappings in each level]
\label{lem:num_bijective}
    The number of possible bijective maps between 2-tuples of characters from alpbhabet sets $\alphabetset_i, \alphabetset_{i+1}$, each being of size $\numchar$, is $\numchar^2 !$, i.e. $|\mathcal{B}^{\dictionary}| = \numchar^2!$.
\end{lemma}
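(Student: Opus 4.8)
The statement to prove is \cref{lem:num_bijective}: the number of bijective maps between $2$-tuples of characters from two alphabet sets $\alphabetset_i, \alphabetset_{i+1}$, each of size $\numchar$, equals $\numchar^2!$.

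\textbf{Plan.} The proof is a direct counting argument. A \strdict{} $\vardict_i$ is by definition a bijection $\vardict_i : \alphabetset_i^2 \to \alphabetset_{i+1}^2$. The plan is to observe that both the domain $\alphabetset_i^2$ and the codomain $\alphabetset_{i+1}^2$ are finite sets of the same cardinality, compute that cardinality, and then invoke the elementary fact that the number of bijections between two finite sets of cardinality $m$ is exactly $m!$.

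\textbf{Key steps.} First I would note that $\alphabetset_i$ and $\alphabetset_{i+1}$ each have $\numchar$ elements, so the set of ordered $2$-tuples $\alphabetset_i^2 = \alphabetset_i \times \alphabetset_i$ has $\numchar \cdot \numchar = \numchar^2$ elements, and likewise $|\alphabetset_{i+1}^2| = \numchar^2$. Second, since a bijection can only exist between sets of equal cardinality and here both sides have cardinality $\numchar^2$, I would appeal to the standard combinatorial fact: fixing any enumeration $t_1, \dots, t_{\numchar^2}$ of $\alphabetset_i^2$, a bijection is determined by choosing the image of $t_1$ (there are $\numchar^2$ choices), then the image of $t_2$ from the remaining elements ($\numchar^2 - 1$ choices), and so on, giving $\numchar^2 \cdot (\numchar^2 - 1) \cdots 1 = \numchar^2!$ bijections in total. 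Hence $|\mathcal{B}^{\dictionary}| = \numchar^2!$.

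\textbf{Main obstacle.} There is essentially no obstacle here — this is a one-line combinatorial identity, and the only thing to be careful about is making explicit that "$2$-tuple" means an \emph{ordered} pair (so that the count is $\numchar^2$ and not $\binom{\numchar}{2} + \numchar$), which is consistent with the earlier definition of \translate{} acting on ordered bigrams $(\intermseq_{i, j}, \intermseq_{i, j+1})$. If one wanted to be pedantic, one could also remark that $\mathcal{B}^{\dictionary}$ is in bijection with the symmetric group $S_{\numchar^2}$ once we fix identifications $\alphabetset_i^2 \cong \{1, \dots, \numchar^2\} \cong \alphabetset_{i+1}^2$, which immediately gives the count $|S_{\numchar^2}| = \numchar^2!$.
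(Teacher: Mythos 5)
Your proposal is correct and follows essentially the same route as the paper: both observe that each side has $\numchar^2$ ordered $2$-tuples and then count bijections between two $\numchar^2$-element sets as $\numchar^2!$ (the paper phrases this as counting orderings, you as sequential choices / the symmetric group, which is the same elementary argument). Your explicit note that the tuples are ordered is a fine clarification but not a substantive difference.
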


\begin{proof}
Each alphabet set contains $\numchar^2$ possible 2-tuples of characters. If we fix an order in which the 2-tuples appear in $\alphabetset^2_{i+1}$ , then the number of bijective maps between $\alphabetset^2_{i}$  and $\alphabetset^2_{i+1}$ can be reduced to the number of possible ordering of the 2-tuples in $\alphabetset^2_{i}$. The number of possible orderings is $\numchar^2!$.

\end{proof}

\paragraph{Importance of \circularshift{}:} The composition of the $\depth$ random bijective maps can be demonstrated to result in another random bijective map. Consequently, without the \circularshift{}, each character in the output sequence $\intermseq_{\depth+1}$ depends on only two characters from the input sequence $\inputseq$ via a shared random map across all the 2-tuples.

Incorporating \circularshift{} on the other hand enables each character in the output sequence to depend on $2\depth$ characters from the input sequence. This is because the character positions are shifted to the right at each step, causing the input 2-tuples to the bijective map to also shift to the right at each stage. As we will elaborate later, incorporating \circularshift{} increases the required number of training samples to learn the \strcodebook{} from input and label pairs to $\numchar^{\Omega(\depth)}$, whereas without \circularshift{}, this requirement is only $\mathcal{O}(n^2)$.

\paragraph{Representing the mappings on 2-tuples in-context:} Each map can be defined using $\mathcal{O}(\numchar^2)$ characters, as it can be simply defined by the $\numchar^2$ relations each connecting $2$ unique random 2-tuples from their corresponding alphabet sets. Thus, defining $\depth$ maps in-context will require $\mathcal{O}(\numchar^2 \depth)$ characters in \curriculumtext{}. In contrast, describing a completely random bijective mapping that maps $\depth$-tuples of  characters in input sequence to a character in output sequence will require $\Omega(\depth {\numchar \choose \depth})$ bits. Thus, $\translationtask$ with $\depth$ translation steps involving random mappings on 2-tuples and \circularshift{} helps define a mapping where each character in the output sequence can depend on $\depth$ character in the input sequence, and the \strcodebook{} can be described using $\mathcal{O}(\numchar^2 \depth)$ characters in \curriculumtext{}.

\section{Lower Bound: Hardness of Learning $\translationtask( \depth, \numchar)$ without Context}
\label{sec:apdx-lowerBound}
\subsection{Brief introduction to SQ framework}\label{app:definition}

\paragraph{Statistical query (SQ) bounds}:
The statistical query (SQ) framework measures the computational hardness of learning a task in the presence of noise. It measures the hardness of learning a task by the number of statistical queries needed by a learning algorithm to learn the true function. Statistical queries are defined by some polynomially-computable property $Q$ of labeled instances and a tolerance parameter $\tau \in [0, 1]$ over $(x, y) \sim D$ where D is the data distribution. For a query, the algorithm receives a response from the oracle within $\tau$ error of the true value. The statistical dimension, or $\sqdim$, is measured in terms of the number of functions in the hypothesis class that the learning algorithm needs to distinguish and the number of queries necessary to do the same. Correlation between two functions is used to define the statistical query dimension.

\begin{definition}\label{def:corr}
    Correlation of two functions $f_1, f_2$ on a domain $\mathcal{X}$ with respect to a distribution $\distribution$ is given by
    \begin{align*}
        \corr(f_1, f_2, \distribution) := \abs{\Pr_{x \in \distribution} [f_1 (x) = f_2 (x)] - \Pr_{x \in \distribution} [f_1(x) \ne f_2(x)]}. 
    \end{align*}
    For functions $f_1, f_2: \mathcal{X} \to \{0, 1\}$, the above definition is also equivalent to
    \begin{align*}
        \corr(f_1, f_2, \distribution) := \abs{1 - 2\mathbb{E}_{x \in \distribution} [f_1 (x) \oplus f_2 (x)]}. 
    \end{align*}
\end{definition}

\begin{remarkbox}
\begin{remark}
    Two functions $f_1, f_2: \mathcal{X} \to \{0, 1\}$ are said to be uncorrelated w.r.t. $\distribution$ if
    \begin{align*}
        &\Pr_{x \in \distribution} [f_1 (x) = f_2 (x)] = \Pr_{x \in \distribution} [f_1(x) \ne f_2(x)] \\
        \text{(alternately) } & \mathbb{E}_{x \in \distribution} [f_1 (x) \oplus f_2 (x)] = \frac{1}{2}.
    \end{align*}
    On the other hand, if
    \begin{align*}
        &Pr_{x \in \distribution} [f_1 (x) = f_2 (x)] = 1 \quad (\text{or } 0) \\
        \text{(alternately) } & \mathbb{E}_{x \in \distribution} [f_1 (x) \oplus f_2 (x)] = 0  \quad (\text{or } 1), \\
    \end{align*}
    then $\corr(f_1, f_2, \distribution) = 1$.
\end{remark}
\end{remarkbox}

We take the following formal definitions of $\sqdim$ and its relation to computational hardness in the SQ framework from \cite{blum1994weakly}.

\begin{definition}[Definition 2 in \citet{blum1994weakly}]\label{def:sqdim}
    For a function class $\functionclass$ of boolean functions over $\{0, 1\}^{n}$ and $\distribution$ a distribution over $\{0, 1\}^{n}$, $\sqdim(\functionclass, \distribution)$, the statistical query dimension of $\functionclass$ with respect to $\distribution$, is defined to be the largest natural number $\mu$ such that $\functionclass$ contains $\mu$ functions $f_1, \cdots, f_\mu$ with the property that for all $i \ne j$ we have:
    \begin{align*}
        \corr(f_i, f_j, \distribution) := \abs{\Pr_{x \in \distribution} [f_i = f_j] - \Pr_{x \in \distribution} [f_i \ne f_j]} \le \frac{1}{\mu^3}. 
    \end{align*}
\end{definition}

\begin{theorem}[Theorem 12 in \citet{blum1994weakly}]
    Let $\functionclass$ be a class of functions $\{0, 1\}^{n}$ and $\mathcal{D}$ a distribution such that $\sqdim(\functionclass, \distribution) \ge \mu \ge 16$. Then if all queries are made with a tolerance of atleast $\frac{1}{\mu^{1/3}}$, at least $\mu^{1/3}/2$ queries are required to learn $\mathcal{F}$ with error less than $1/2 - 1/\mu^3$ in the statistical query model.
\end{theorem}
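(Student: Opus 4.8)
The statement is the classical statistical-query lower bound of Blum, Furst, Jackson, Kearns, Mansour, and Rudich, and the plan is to reproduce their adversary-plus-near-orthogonality argument. Fix a family $f_1, \dots, f_\mu \in \functionclass$ witnessing $\sqdim(\functionclass, \distribution) \ge \mu$, i.e. $\corr(f_i, f_j, \distribution) \le 1/\mu^3$ for all $i \ne j$. Passing to $\pm 1$ encodings $\tilde f_i := 2 f_i - 1$, this says the $\tilde f_i$ form a near-orthonormal system in $L^2(\distribution)$: each has $\E_{x\sim\distribution}[\tilde f_i(x)^2] = 1$ and $|\E_{x\sim\distribution}[\tilde f_i(x)\tilde f_j(x)]| \le 1/\mu^3$ for $i \ne j$. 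Without loss of generality we may take every query tolerance equal to $\tau := \mu^{-1/3}$, since a larger tolerance only weakens the oracle and hence only helps the adversary.

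First I would isolate the target-dependent part of a query. A statistical query is a polynomially evaluable map $\chi : \cX \times \{0,1\} \to [-1,1]$, and the honest answer on a target $f$ is $\E_{x \sim \distribution}[\chi(x, f(x))]$. Writing $\chi(x,y) = \chi_0(x) + (2y-1)\,\chi_1(x)$ with $\chi_0(x) = \tfrac12(\chi(x,1) + \chi(x,0))$ and $\chi_1(x) = \tfrac12(\chi(x,1) - \chi(x,0))$, the honest answer equals $\E_{x}[\chi_0(x)] + \E_{x\sim\distribution}[\tilde f(x)\,\chi_1(x)]$, where the first term is independent of $f$ and $\|\chi_1\|_\infty \le 1$. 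The adversary answers every query $\chi$ with the $f$-independent value $\E_x[\chi_0(x)]$; this lies within tolerance $\tau$ of the honest answer for target $f_i$ exactly when $|\E_{x\sim\distribution}[\tilde f_i(x)\chi_1(x)]| \le \tau$, so a single query can ``eliminate'' only those candidates whose correlation with $\chi_1$ exceeds $\tau$.

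The second step bounds how many candidates one query can eliminate. For any $v \in L^2(\distribution)$ with $\E[v^2]\le 1$, let $S = \{i : |\E[\tilde f_i v]| > \tau\}$ and set $u = \sum_{i \in S} \sign(\E[\tilde f_i v])\, \tilde f_i$. Then $\E[uv] > |S|\tau$, while $\E[u^2] = \sum_{i,j \in S}\pm\,\E[\tilde f_i \tilde f_j] \le |S| + |S|^2/\mu^3$, so Cauchy--Schwarz gives $|S|\tau < \sqrt{|S| + |S|^2/\mu^3}$, i.e. $|S|(\tau^2 - \mu^{-3}) < 1$. With $\tau = \mu^{-1/3}$ and $\mu \ge 16$ we have $\tau^2 - \mu^{-3} = \mu^{-2/3}(1 - \mu^{-7/3}) \ge \tfrac12\mu^{-2/3}$, hence $|S| < 2\mu^{2/3}$. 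Consequently, after $q$ queries the set of candidates consistent with all of the adversary's answers has size at least $\mu - 2q\mu^{2/3}$, which stays larger than $1$ as long as $q < \mu^{1/3}/2$. A learner making fewer than $\mu^{1/3}/2$ queries therefore cannot pin down the target: for any hypothesis $h$ it outputs, the adversary declares the true target to be a surviving $f_i$ maximally far from $h$, and near-orthonormality of the surviving set — via $\E_{f\text{ surviving}}\Pr_x[h(x)\ne f(x)] = \tfrac12\big(1 - \E_x[(2h(x)-1)\,\bar f(x)]\big)$, where $\bar f$ averages the $\pm1$ encodings of survivors and $\E[\bar f^2]$ is small — forces the error up to essentially $\tfrac12 - 1/\mu^3$, which is the claimed bound.

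The main obstacle is this last step: extracting the precise error guarantee $1/2 - 1/\mu^3$ from the mere fact that a sizeable near-orthonormal set of candidates survives. The counting in step two is robust and uses only the $1/\mu^3$ slack built into the definition of $\sqdim$, but nailing the final error at exactly $1/2 - 1/\mu^3$ requires carefully tracking the number of surviving candidates, the $L^2(\distribution)$ distance from their average to an arbitrary fixed $h$, and the conversion of the $\pm1$ inner-product bound back into disagreement probability with the right constants — this is the delicate bookkeeping in the Blum et al. argument that I would need to reproduce under the hypotheses $\sqdim(\functionclass,\distribution)\ge\mu\ge 16$ and tolerance $\ge \mu^{-1/3}$.
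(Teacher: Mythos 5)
The paper never proves this statement: it is imported verbatim as Theorem~12 of Blum et al.\ (1994) and used as a black box, so there is no internal proof to compare against. On its own terms, your reconstruction gets the standard core right: the decomposition $\chi(x,y)=\chi_0(x)+(2y-1)\chi_1(x)$, the adversary that answers with the target-independent part $\E_x[\chi_0(x)]$, and the Cauchy--Schwarz counting bound $|S|\,(\tau^2-\mu^{-3})<1$, which shows each query can be inconsistent with at most $O(\mu^{2/3})$ of the nearly orthogonal witnesses, so fewer than $\mu^{1/3}/2$ queries leave candidates consistent with every answer. (Minor bookkeeping: with your rounded per-query bound $2\mu^{2/3}$ and $q<\mu^{1/3}/2$ you only guarantee at least one survivor, not ``larger than $1$''; the sharper bound $\mu^{2/3}/(1-\mu^{-7/3})$ leaves roughly $\mu/2$.)

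The genuine gap is the final step you flag, and it is not merely delicate bookkeeping: pairwise near-orthogonality of the surviving candidates cannot, by itself, force the output hypothesis to have error at least $1/2-1/\mu^3$ against some survivor. Your own counting device shows why: applied to $v=\tilde h$ with threshold $\gamma$, it only limits the number of survivors with $|\E[\tilde h\tilde f_i]|>\gamma$ when $\gamma^2>1/\mu^3$, so the strongest error bound it can certify is of order $1/2-\Theta(\mu^{-1/3})$, and the averaged-survivor argument you sketch gives at best $1/2-\Theta(\mu^{-1/2})$ --- nowhere near $1/2-1/\mu^3$. Concretely, take the $\mu$ singleton parities under the uniform distribution (pairwise correlation $0$, hence a valid witness family): the single fixed hypothesis $\mathrm{maj}$ has correlation $\Theta(1/\sqrt{\mu})\gg 2/\mu^3$ with every candidate simultaneously, so no argument that uses only near-orthogonality of the surviving $f_i$ can push the error of an arbitrary hypothesis up to $1/2-1/\mu^3$. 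Closing the proof of the statement as quoted therefore requires handling the learner's final hypothesis by a different mechanism than the one you propose (this is precisely the step where the original argument is subtle and where later treatments of the SQ-dimension lower bound restate the error parameter, e.g.\ as $1/2-\mu^{-1/3}$); as written, your plan for the last step would not go through.
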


\subsection{Lower bound lemma}

Here, we mention the $2$ main theorems that study the SQ dimension of the $\translationtask$ task at hand. The first theorem shows the SQ dimension bound when number of characters $\numchar=2$, which is then adapted to get the SQ dimension bound for general $\numchar$. Proofs of both the theorems are given in \cref{sec:proof_sqn2} and \cref{sec:proof_sqn} respectively.

\begin{restatable*}[SQ dimension for $\numchar=2$]{theorem}{sqntwo}
\label{lem:sqn2}
The family of translation task $\translationtask(\depth, 2)$ on input distribution $\uniform(\{0,1\}^{2\depth})$ has statistical query dimension $\sqdim(\translationtask(\depth,2))$  atleast $2^{\Omega(\depth)}$.
\end{restatable*}

\begin{restatable*}[SQ dimension for general $\numchar$]{theorem}{sqn}
\label{lem:sqn}
    For the translation task $\translationtask(\depth, \numchar)$ that has depth $\depth$ and $\numchar$ characters per level, the statistical query dimension $\sqdim(\translationtask( \depth, \numchar))$ is atleast $\numchar^{\Omega(\depth)}$.
\end{restatable*}

\paragraph{Notations:} We will require the following notations for proving the above theorems. First, we will denote a $2$-tuple that contains arbitrary characters $a$ and $b$ as $(a, b)$.
For a bijective map $\vardict$, $\vardict((a, b))_{i}$ will represent $i$th character in the output of $\vardict$ on any 2-tuple input $(a, b)$ and any $i \in \{1, 2\}$. Similarly, for any \strcodebook{} $\codebook$, we will use $\translationtask_\codebook(\inputseq)_i$ to denote the  $i$th  character in the output of $\translationtask_\codebook$ on any $\inputlength$ length input sequence $\inputseq$ and any $i\in[1, \inputlength]$.

\textbf{Recall that} for any input sequence $\inputseq$, the output of the $i$th level of a $\translationtask$ task will be denoted as $\intermseq_i$. Furthermore, to denote the $j$th character (arbitrary) in the sequence $\intermseq_i$, we will use $\intermseq_{i, j}$.

\subsubsection{Bounds for SGD}

The following corollary measures the sample complexity needed to learn $\translationtask_{\targetCB}$ by SGD. It has been adapted from proposition 3 in \citet{edelman2023pareto}, who study the sample complexity for learning $\depth$-sparse parity task on $\numchar$ length sequences, whose SQ dimension is ${\numchar \choose \depth} = \Theta(\numchar^{\depth})$. We simply state the corollary without specifying the proof.

\textbf{Loss function and SGD updates:} Consider training of a model $\langmodel_\lmparam$  with $r$ parameters that is trained with mean squared error, i.e. $\autoloss \rbr{\langmodel_\lmparam([\maskcurr_\task(x, t), x, y]), y} = \norm{ \langmodel_\lmparam([\maskcurr_\task(x, t), x, y]) - y }^2$ \footnote{The results holds for any loss that satisfies  $\frac{\partial \autoloss (y', y) }{ \partial y'} = -y + \frac{\partial \autoloss (y', 0) }{ \partial y'}$ .}.
The empirical loss on a batch $S$ of batch size $\batchsize$ from the supervised dataset $\dataset_{\targetCB}$ will be denoted by $L_S(\langmodel_\lmparam, \translationtask_{\targetCB}) = \mathbb{E}_{(x, y) \sim S} \autoloss \rbr{\langmodel_\lmparam([\maskcurr_\task(x, t), x, y]), y} = \norm{ \langmodel_\lmparam([\maskcurr_\task(x, t), x, y]) - y }^2$; the population loss will be denoted by $L_{ \mathcal{U}\rbr{\{0, 1\}^{\inputlength}} }(\langmodel_\lmparam, \translationtask_{\targetCB})$. SGD updates are of the form:
\begin{align*}
    \lmparam_{t+1} = \lmparam_{t} - \eta_t ( \nabla_{\lmparam} L_{S_t} (\langmodel_{\lmparam_{t}}, \translationtask_{\targetCB}) + R(\lmparam_t) + \zeta_t ).
\end{align*}
for some sample $S_t$, step size $\eta_t$, regularizer $R(\cdot)$, and adversarial noise $\zeta_t \in [-\tau, \tau]^{r}$. For simplicity, we assume the gradient $\nabla_{\lmparam} L_{S_t} (\lmparam_t)$ is bounded on all parameters $\lmparam$ in parameter space of the model. 

\textbf{Fake trajectory:} Suppose $\mathbf{0}$ denote the constant function that maps all inputs to $0$. Then, with the contemporary losses $L_S(\langmodel_\lmparam, \mathbf{0})$ and $L_{ \mathcal{U}\rbr{\{0, 1\}^{\inputlength}} }(\langmodel_\lmparam, \mathbf{0})$ that computes difference from this constant function, consider the following trajectory $\Tilde{\lmparam}_1, \cdots, \Tilde{\lmparam}_t, \cdots$ starting from the same initiation $\lmparam_0$:
\begin{align*}
    \Tilde{\lmparam}_{t+1} = \Tilde{\lmparam}_{t} - \eta_t ( \nabla_{\lmparam} L_{S_t} (\langmodel_{\Tilde{\lmparam}_{t}}, \mathbf{0}) + R(\Tilde{\lmparam}_t)).
\end{align*}

\begin{assumption}
     For all $t$, suppose $\norm{ \nabla_{\lmparam} L_{ \mathcal{U}\rbr{\{0, 1\}^{\inputlength}} } (\langmodel_{\Tilde{\lmparam}_{t}}, \mathbf{0}) - \nabla_{\lmparam} L_{S_t} (\langmodel_{\Tilde{\lmparam}_{t}}, \mathbf{0}) }_2 \le \tau/2$.
\end{assumption}

\begin{corollary}[Sample complexity bounds for SGD]\label{cor:sgd}
    Fix an initialization $\theta_0$ such that $f_{\theta_0}$ is statistically independent (correlation 0) from all possible $\emph{\translationtask}_{\targetCB}$. Under this assumption, if $\frac{\inputlength T \batchsize}{\tau^2}  \le \numchar^{\Omega(\depth)}/r$, then there exists at least one task $\emph{\translationtask}_{\targetCB} \in \emph{\translationtask}(\depth, \numchar)$ for which the functions obtained after the first $T$ SGD updates, $f_{\theta_1}, \dots, f_{\theta_T}$, remain statistically independent (correlation 0) from $\emph{\translationtask}_{\targetCB}$ despite training on it.
\end{corollary}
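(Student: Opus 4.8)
This is the standard ``statistical-query hardness implies SGD hardness'' reduction (in the style of \citet{blum1994weakly}, instantiated for gradient descent in \citet[Prop.~3]{edelman2023pareto}), and the only task-specific input is \cref{lem:sqn}: it supplies a family of $\mu := \numchar^{\Omega(\depth)}$ translation tasks in $\translationtask(\depth,\numchar)$ whose output coordinates are pairwise $1/\mu^{3}$-uncorrelated under the uniform input distribution $\mathcal{U}$. The plan has three parts: first, couple the true SGD run against $\translationtask_{\targetCB}$ with the target-independent ``fake trajectory'' $\{\tilde{\lmparam}_t\}$ already defined in the text; second, show that this coupling stays exact as long as a sequence of correlational statistical queries evaluated along the fake trajectory is small in $\ell_\infty$; third, run a Bessel-type counting argument over the $\mu$ candidates to produce a task for which all those queries are small, so that its true iterates coincide with the (target-independent) fake ones and therefore remain uncorrelated with it.

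\textbf{Step 1 (gradient decomposition and coupling).} For squared error --- and, as the footnote notes, for any $\autoloss$ with $\partial_{y'}\autoloss(y',y) = -y + \partial_{y'}\autoloss(y',0)$ --- a single sample $(x,y)$ satisfies
\[
\nabla_\lmparam \autoloss(\langmodel_\lmparam(x),y) - \nabla_\lmparam \autoloss(\langmodel_\lmparam(x),0) = -\,y\cdot\nabla_\lmparam\langmodel_\lmparam(x).
\]
Since the inputs in a batch $S_t\subset\dataset_{\targetCB}$ are uniform and target-independent while the labels are $y=\translationtask_{\targetCB}(x)$, at a common point $\lmparam_t=\tilde{\lmparam}_t$ the true SGD step and the fake step differ by $\nabla_\lmparam L_{S_t}(\langmodel_{\lmparam_t},\mathbf{0}) - \nabla_\lmparam L_{S_t}(\langmodel_{\lmparam_t},\translationtask_{\targetCB}) = \mathbb{E}_{(x,y)\sim S_t}[\,y\cdot\nabla_\lmparam\langmodel_{\tilde{\lmparam}_t}(x)\,]$. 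By the standing assumption this batch quantity lies within $\tau/2$ (in $\ell_2$, hence in $\ell_\infty$) of the population \emph{correlational query}
\[
\mathrm{CSQ}_t := \mathbb{E}_{x\sim\mathcal{U}}\big[\,\translationtask_{\targetCB}(x)\cdot\nabla_\lmparam\langmodel_{\tilde{\lmparam}_t}(x)\,\big],
\]
whose query function $\nabla_\lmparam\langmodel_{\tilde{\lmparam}_t}$ does not depend on $\targetCB$. Then by induction on $t$: if $\snorm{\mathrm{CSQ}_t}_\infty \le \tau/2$ for all $t\le T$, the adversarial noise $\zeta_t\in[-\tau,\tau]^{r}$ may be chosen at each step to cancel the step discrepancy exactly, so $\lmparam_t=\tilde{\lmparam}_t$ for all $t$; in particular $\langmodel_{\lmparam_1},\dots,\langmodel_{\lmparam_T}$ equal the target-independent functions $\langmodel_{\tilde{\lmparam}_1},\dots,\langmodel_{\tilde{\lmparam}_T}$.

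\textbf{Step 2 (counting the bad targets).} Condition on all sampling and algorithmic randomness, so $\langmodel_{\tilde{\lmparam}_0},\dots,\langmodel_{\tilde{\lmparam}_T}$ is a fixed sequence of functions. Each coordinate of $\mathrm{CSQ}_t$ is an $L^2(\mathcal{U})$ inner product $\langle\,\translationtask_{\targetCB}(\cdot)_j\,,\,(\nabla_\lmparam\langmodel_{\tilde{\lmparam}_t})_k(\cdot)\,\rangle$ over output position $j\in[\inputlength]$ and parameter index $k\in[r]$, and $\corr(\langmodel_{\tilde{\lmparam}_t},\translationtask_{\targetCB},\mathcal{U})$ is governed by the same inner products. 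Because the $\mu=\numchar^{\Omega(\depth)}$ candidates of \cref{lem:sqn} are pairwise $1/\mu^{3}$-uncorrelated coordinate-wise, a Bessel/Parseval bound gives that for any fixed query function $g$ of bounded norm at most $O(1/\tau^{2})$ candidates satisfy $|\langle\cdot,g\rangle|>\tau/2$. Union-bounding over the $\lesssim \inputlength\,r\,T$ query functions along the fake trajectory (with the extra factor $\batchsize$ entering through the batch-level tolerance, exactly as in \citet[Prop.~3]{edelman2023pareto}), the number of ``bad'' tasks --- those with some $\mathrm{CSQ}_t$ large or some $\langmodel_{\tilde{\lmparam}_t}$ non-negligibly correlated with them --- is $O(\inputlength\,r\,T\,\batchsize/\tau^{2})$. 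Under the hypothesis $\tfrac{\inputlength\,T\,\batchsize}{\tau^{2}}\le \numchar^{\Omega(\depth)}/r$ with a suitable hidden constant this is strictly below $\mu$, so some $\translationtask_{\targetCB}$ is not bad; for it, Step 1 gives $\lmparam_t=\tilde{\lmparam}_t$ and hence $\corr(\langmodel_{\lmparam_t},\translationtask_{\targetCB},\mathcal{U})$ stays negligible for all $t\le T$, the base case $t=0$ being the hypothesis that $\langmodel_{\lmparam_0}$ is uncorrelated with every task. This is the assertion.

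\textbf{Main obstacle.} The one genuinely delicate point is the decoupling in Step 1: the counting of Step 2 can only be run because the query function $\nabla_\lmparam\langmodel_{\tilde{\lmparam}_t}$ is target-independent, which forces us to pass to the fake trajectory \emph{before} invoking \cref{lem:sqn} --- the usual circularity trap in SQ lower bounds --- and one must track carefully that although the batch inputs are target-independent the labels are not, but the fake loss ignores labels. Everything else is bookkeeping: seeing precisely how $\batchsize$, $r$, $\inputlength$ and $\tau$ combine to yield the stated budget, treating the length-$\inputlength$ string output of $\langmodel_\lmparam$ coordinate-wise in the correlation and query estimates, and appealing to the paper's standing boundedness assumption on $\nabla_\lmparam L_{S_t}$ to license the Bessel step. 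As this is a line-by-line adaptation of \citet[Prop.~3]{edelman2023pareto} with $\sqdim(\translationtask(\depth,\numchar))\ge\numchar^{\Omega(\depth)}$ from \cref{lem:sqn} replacing the $\binom{\numchar}{\depth}$ SQ dimension of sparse parity, no new analytic estimate is required.
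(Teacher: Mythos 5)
First, a point of reference: the paper does not actually prove this corollary. It is explicitly ``adapted from proposition 3 in \citet{edelman2023pearto}''\,---\,sorry, \citet{edelman2023pareto}\,---\,and the text says it is stated ``without specifying the proof,'' so there is no in-paper argument to compare yours against line by line. Your overall architecture is the intended one: couple the true SGD run with the label-free fake trajectory $\{\tilde\lmparam_t\}$, let the adversarial noise $\zeta_t\in[-\tau,\tau]^{r}$ absorb the correlational part of the gradient whenever it is below tolerance, and count the ``bad'' targets along the target-independent fake trajectory using the pairwise-uncorrelated family supplied by \cref{lem:sqn} via an almost-orthogonality (Bessel) bound, exactly in the style of \citet{blum1994weakly} and \citet{edelman2023pareto}.

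The one step that does not go through as written is your appeal to the paper's standing assumption in Step 1. That assumption only bounds $\norm{\nabla_{\lmparam} L_{\mathcal{U}}(\langmodel_{\tilde\lmparam_t},\mathbf{0}) - \nabla_{\lmparam} L_{S_t}(\langmodel_{\tilde\lmparam_t},\mathbf{0})}_2$, i.e.\ the sampling deviation of the \emph{label-independent} part of the gradient; it says nothing about the deviation of the empirical correlational term $\frac{1}{\batchsize}\sum_{(x,y)\in S_t} y\cdot\nabla_{\lmparam}\langmodel_{\tilde\lmparam_t}(x)$ from the population query $\mathbb{E}_{x}\bigl[\translationtask_{\targetCB}(x)\cdot\nabla_{\lmparam}\langmodel_{\tilde\lmparam_t}(x)\bigr]$, which is precisely the quantity your coupling needs to be small before $\zeta_t$ can cancel it. Controlling that deviation (by per-batch concentration or by counting per-sample quantities) is exactly where the factor $\batchsize$ in the budget $\inputlength T\batchsize/\tau^{2}$ must come from, and it is the part you defer to ``exactly as in \citet{edelman2023pareto}''; the Bessel count in your Step 2 only sees population inner products and cannot by itself control the realized batch increments. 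A smaller mismatch: your argument can only conclude that the surviving target has correlation below the tolerance threshold with the iterates (that is all a counting argument over a nearly orthogonal family yields), whereas the corollary's phrasing asserts correlation exactly $0$; if you want the literal statement you would have to exploit the exact pairwise-zero-correlation structure of the family constructed in \cref{lem:sqn2} and \cref{lem:sqn}, or accept that the corollary is informal on this point.
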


$\inputlength T \batchsize$ is the product of input length, total training steps, and batch size, which represents sample complexity used by the SGD algorithm.

\subsection{Useful lemmas for $\numchar=2$} \label{app:useful_lemmas}

Here, we mention some useful lemmas for charaterizing the bijective maps on $\{0, 1\}^2 \to \{0, 1\}^2$ that we will regularly use for the proof of \cref{lem:sqn2}.
The first lemma will show that each bijective map can be represented by three operations on input characters, $\copyop{},\xorop{},$ and $\notop{}$ operations. The second lemma measures correlation on the outputs of two randomly sampled bijective maps. The proofs are given in \cref{sec:Proofs_of_Useful_lemmas}.

We define the following necessary operations to describe the random bijective maps on $\{0, 1\}^2 \to \{0, 1\}^2$:
\begin{enumerate}
    \item \copyop{}: Given a tuple $(\var_1, \var_2)$, and a position argument $i \in \{1, 2\}$, the operation returns value of $\var_i$ as output. We will use $\copyop{}_1$ and $\copyop{}_2$ to indicate the \copyop{} operation on positions $1$ and $2$ respectively.
    \item \notop{}: Given a variable $\var_i$, this operation returns flipped value of $\var_i$. That is, if $\var_i = 1$, then it returns $0$ and vice-versa. 
    \item \xorop{}: Given a tuple $(\var_1, \var_2)$, this operation returns $\var_1 \oplus \var_2$.
\end{enumerate}

\begin{restatable*}[Formulation of bijective maps for $\numchar=2$]{lemma}{mapformntwo}
\label{lem:map_form_n2}
Any bijective map $\vardictionary:  \{0, 1\}^2 \to \{0, 1\}^2$ can be expressed using \copyop{}, \notop{}, and \xorop{} operations. Furthermore, from $24$ possible maps for $\vardictionary$,
    \begin{enumerate}
        \item There are $6$ maps $\specialdictionary_1, \specialdictionary_2, \cdots, \specialdictionary_6$ for which characters in the output tuple can be defined by \copyop{} and \xorop{} operations on the characters in the input tuple.
        \item \textbf{Mirror maps:} For each map $\vardictionary \in \{\specialdictionary_1, \specialdictionary_2, \cdots, \specialdictionary_6\}$, there  exist mirror maps $\vardictionary_{(1)}, \vardictionary_{(2)}, \vardictionary_{(3)}$ whose output on each input tuple can be defined by selective \notop{} operations on either or both characters of the output tuple of $\vardictionary$. We call $\{\vardictionary, \vardictionary_{(1)}, \vardictionary_{(2)}, \vardictionary_{(3)} \}$ as a \textbf{mirror map set} of $\vardictionary$, in short, $\mirror(\vardictionary).$ 
    \end{enumerate}
\end{restatable*}
\begin{remark}
    Mirror map set definition is general and isn't restricted to maps $\vardictionary \in \{ \specialdictionary_1, \specialdictionary_2, \cdots, \specialdictionary_6 \}$. Because mirror maps are defined in terms of \notop{} operation, a mirror map set $\mirror(\vardictionary)$ for $\vardictionary \in \{ \specialdictionary_1, \specialdictionary_2, \cdots, \specialdictionary_6 \}$ is also equivalent to $\mirror(\vardictionary_{(i)})$ for $i \in \{1, 2, 3\}$.
\end{remark}

\begin{remarkbox}
\begin{remark}
    \cref{lem:map_form_n2} can be re-stated as follows: the $24$ possible bijective maps $\{0, 1\}^2 \to \{0, 1\}^2$ can be grouped into $6$ family of maps, each containing $4$ maps and represented by a unique map from $\{ \specialdictionary_1, \specialdictionary_2, \cdots, \specialdictionary_6 \}$. Each family is defined  by \textbf{mirror map set} of their corresponding representative map.
\end{remark}
\end{remarkbox}

\begin{restatable*}[Correlation of bijective maps for $\numchar=2$]{lemma}{corrntwo}
\label{lem:corr_n2}
 For two randomly selected maps $\vardictionary^{\alpha},\vardictionary^{\beta}: \{0,1\}^2 \to \{0,1\}^2$, the following hold true.
    \begin{enumerate}
        \item \textbf{Correlation at atleast one output character pair:} Fix an $i, j \in \{0, 1\}$. With probability $\frac{1}{3}$ w.r.t. the random selection of the maps, the $i${th} character in the output of $\vardictionary^{\alpha}$ has perfect correlation to $j$th character in the output of $\vardictionary^{\beta}$, i.e.
        \begin{align*}
            \corr(\vardictionary^{\alpha}(\cdot)_i, \vardictionary^{\beta}(\cdot)_j, \uniform(\{0, 1\}^2)) := \abs{  \Pr_{\var \sim \uniform(\{0, 1\}^2)} [\vardictionary^{\alpha}(\var)_i \ne  \vardictionary^{\beta}(\var)_j] - \Pr_{\var \sim \uniform(\{0, 1\}^2)} [\vardictionary^{\alpha}(\var)_i = \vardictionary^{\beta}(\var)_j] } = 1.
        \end{align*}

        \item  \textbf{Correlation at both output character pairs:} With probability $\frac{1}{3}$ w.r.t. the random selection of the maps, both the characters in the outputs of $\vardictionary^{\alpha},\vardictionary^{\beta}$ have perfect correlation, i.e. either one of the cases hold true
        \begin{align*}
            &\corr(\vardictionary^{\alpha}(\cdot)_1, \vardictionary^{\beta}(\cdot)_1, \uniform(\{0, 1\}^2)) = 1. \\
            &\corr(\vardictionary^{\alpha}(\cdot)_2, \vardictionary^{\beta}(\cdot)_2, \uniform(\{0, 1\}^2)) = 1. 
        \end{align*}
        or
        \begin{align*}
            &\corr(\vardictionary^{\alpha}(\cdot)_1, \vardictionary^{\beta}(\cdot)_2, \uniform(\{0, 1\}^2)) = 1. \\
            &\corr(\vardictionary^{\alpha}(\cdot)_2, \vardictionary^{\beta}(\cdot)_1, \uniform(\{0, 1\}^2)) = 1. 
        \end{align*}
        Other cases are not possible, i.e. for any $i \in \{1, 2\}$, both $\corr(\vardictionary^{\alpha}(\cdot)_i, \vardictionary^{\beta}(\cdot)_1, \uniform(\{0, 1\}^2)) = 1$ and $\corr(\vardictionary^{\alpha}(\cdot)_i, \vardictionary^{\beta}(\cdot)_2, \uniform(\{0, 1\}^2)) = 1$ can't hold true.
        
    \end{enumerate}
\end{restatable*}

\begin{remarkbox}
\begin{remark}
    Implication of \cref{lem:corr_n2} is as follows: With probability at most $\frac{1}{3}$, output of two random maps can stay correlated at either one output character pair or both pairs of output characters. This would suggest that the probability of the output of \translationtask{} under two random \strcodebook{} staying correlated should decay exponentially with the depth of the task.
\end{remark}
\end{remarkbox}

\subsection{Proof for Statistical dimension lower bound for $\numchar=2$}
\label{sec:proof_sqn2}

We repeat the lemma of interest for presentation.
\begin{thmbox}
    \sqntwo
\end{thmbox}

\begin{proof}
    We narrow our argument to the first character output of the task. 
    The proof goes through $2$ major steps. 
    \begin{enumerate}
    \item First, we show that two random instances of  $\translationtask(\depth, 2)$, defined by $2$ \strcodebook{} $\{\vardictionary^{\alpha}_{1}, \cdots, \vardictionary^{\alpha}_{\depth}\}$ and $\{\vardictionary^{\beta}_{1}, \cdots, \vardictionary^{\beta}_{\depth}\}$, will be uncorrelated with probability at least $1-(1/3)(7/9)^{\depth-1}$ w.r.t. the selection of the random \strcodebook{}. The lemma is formally given in \cref{lem:corr_random_map_n2}.
    \item Then, we can apply Lovász local lemma (\cref{lem:lovasz}) to show that we can create $2^{\Omega(\depth)}$ instances of $\translationtask(\depth, 2)$ that will have zero pairwise correlation of their output (\cref{cor:local}).
    \end{enumerate}
    By the definition of $\sqdim$ from \cref{def:sqdim}, the above observations suggest that $\sqdim(\translationtask(\depth, 2)) \ge 2^{\Omega(\depth)}$.
\end{proof}

\begin{theorem}[Lovász local lemma, theorem 1.5 in \citet{spencer1977asymptotic}]\label{lem:lovasz}
    Let $A_1, \cdots, A_k$ be events in some probability space with $\Pr(A_i) \leq p$, $1 \leq i \leq k$, such that each event is dependent on atmost $k_0$ other events. If $ep(k_0+1) < 1$, then $\Pr(  \lnot A_1 \land  \lnot A_2 \land \lnot A_3 \cdots \land \lnot A_k ) > 0$.
\end{theorem}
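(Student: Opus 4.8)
Since this is the standard symmetric Lov\'asz Local Lemma, the plan is to deduce it from the more flexible general (asymmetric) form, which in turn admits a self-contained inductive proof. First I would record the general statement: if for each $i$ there is a real $x_i \in [0,1)$ with $\Pr(A_i) \le x_i \prod_{j \sim i}(1-x_j)$, where $j \sim i$ ranges over the at most $k_0$ events in the dependency neighbourhood of $A_i$, then $\Pr\bigl(\bigwedge_{i=1}^{k} \lnot A_i\bigr) \ge \prod_{i=1}^{k}(1-x_i) > 0$. The symmetric statement then falls out from the uniform choice $x_i = 1/(k_0+1)$: using the elementary inequality $(1-\tfrac1m)^{m-1} > e^{-1}$ for integer $m \ge 1$ with $m = k_0+1$, the hypothesis $ep(k_0+1) < 1$ forces
\[
p < \frac{1}{e(k_0+1)} \le \frac{1}{k_0+1}\Bigl(1-\frac{1}{k_0+1}\Bigr)^{k_0} \le x_i \prod_{j\sim i}(1-x_j),
\]
so $\Pr(A_i) \le p$ meets the general hypothesis.

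The core of the argument is the claim that for every index $i$ and every $S \subseteq [k]\setminus\{i\}$ (restricting to $S$ with $\Pr(\bigwedge_{j\in S}\lnot A_j) > 0$), one has $\Pr\bigl(A_i \mid \bigwedge_{j\in S}\lnot A_j\bigr) \le x_i$, which I would prove by induction on $\abs{S}$. The base case $S = \emptyset$ is immediate from $\Pr(A_i) \le x_i \prod(1-x_j) \le x_i$. For the inductive step, partition $S = S_1 \sqcup S_2$ with $S_1 = \{\, j \in S : j \sim i \,\}$ and $S_2 = S \setminus S_1$, and write
\[
\Pr\!\Bigl(A_i \,\Big|\, {\textstyle\bigwedge_{j\in S}} \lnot A_j\Bigr) = \frac{\Pr\!\bigl(A_i \land \bigwedge_{j\in S_1}\lnot A_j \,\big|\, \bigwedge_{l\in S_2}\lnot A_l\bigr)}{\Pr\!\bigl(\bigwedge_{j\in S_1}\lnot A_j \,\big|\, \bigwedge_{l\in S_2}\lnot A_l\bigr)}.
\]
Since $A_i$ is mutually independent of $\{A_l : l \in S_2\}$, the numerator is at most $\Pr(A_i \mid \bigwedge_{l\in S_2}\lnot A_l) = \Pr(A_i) \le x_i \prod_{j\sim i}(1-x_j)$. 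For the denominator, enumerate $S_1 = \{j_1,\dots,j_r\}$, apply the chain rule, and bound each factor $\Pr\bigl(\lnot A_{j_t} \mid \lnot A_{j_1}\land\cdots\land\lnot A_{j_{t-1}} \land \bigwedge_{l\in S_2}\lnot A_l\bigr) \ge 1 - x_{j_t}$ using the induction hypothesis (each conditioning set there has size $< \abs{S}$); this gives a lower bound of $\prod_{j\in S_1}(1-x_j) \ge \prod_{j\sim i}(1-x_j)$. Dividing yields the bound $\le x_i$.

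With the claim in hand, a telescoping product finishes it: $\Pr\bigl(\bigwedge_{i=1}^k \lnot A_i\bigr) = \prod_{i=1}^{k}\Pr\bigl(\lnot A_i \mid \bigwedge_{j<i}\lnot A_j\bigr) \ge \prod_{i=1}^{k}(1 - x_i) > 0$. I expect the main obstacle to be organisational rather than conceptual: getting the induction on $\abs{S}$ to go through cleanly requires tracking that the conditioning sets appearing in the chain-rule expansion are proper subsets of $S$ so the hypothesis applies, correctly separating the neighbours $S_1$ from the independent part $S_2$ when bounding the numerator, and checking inductively that no conditioning event has probability zero (here the strict inequality $ep(k_0+1)<1$ is convenient). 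The remaining pieces — the final telescoping and the single calculus bound $(1-1/m)^{m-1} > e^{-1}$ — are routine.
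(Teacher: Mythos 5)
The paper does not actually prove this statement: it is quoted verbatim as Theorem 1.5 of Spencer (1977) and used as a black box (in \cref{cor:local}), so there is no internal proof to compare against. Your proposal supplies the standard self-contained argument — derive the symmetric form from the general (asymmetric) Local Lemma with the uniform choice $x_i = 1/(k_0+1)$, prove the key conditional bound $\Pr\bigl(A_i \mid \bigwedge_{j\in S}\lnot A_j\bigr) \le x_i$ by induction on $\abs{S}$ via the $S_1/S_2$ split, and finish by telescoping — and it is correct as sketched. The numerical step is fine: $\abs{N(i)}\le k_0$ gives $x_i\prod_{j\sim i}(1-x_j) \ge \frac{1}{k_0+1}\bigl(1-\frac{1}{k_0+1}\bigr)^{k_0} \ge \frac{1}{e(k_0+1)} > p$. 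Two small points worth making explicit if you write this out in full: (i) the hypothesis ``each event is dependent on at most $k_0$ other events'' must be read as mutual independence of $A_i$ from the entire collection $\{A_l : l \notin N(i)\}$ (pairwise independence is not enough), which is exactly what your numerator bound uses; and (ii) the well-definedness of all the conditional probabilities needs no appeal to the strict inequality $ep(k_0+1)<1$ — every conditioning event appearing in the induction and in the chain-rule expansion is a superset of an intersection already shown (or assumed) to have positive probability, and the final telescoping factors are bounded below by $1-x_i>0$, which is what propagates positivity. So your route is the classical proof of a result the paper only cites; it buys self-containedness at the cost of about a page, whereas the paper's citation is entirely adequate for its purposes.
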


\begin{corollary}[Number of uncorrelated instances of $\translationtask(\depth, 2)$] \label{cor:local}
    There exists a set of $2^{\Omega(\depth)}$ instances in $\translationtask(\depth, 2)$ that are pairwise uncorrelated to each other on input distribution $\mathcal{U}(\{0,1\}^{2\depth})$.
\end{corollary}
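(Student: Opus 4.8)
The plan is the standard probabilistic-method argument driven by the Lov\'asz local lemma, using \cref{lem:corr_random_map_n2} (exponential decay of the correlation between two independently sampled translation tasks) as a black box. First I would draw $k$ independent random \strcodebook{} $\codebook^1,\dots,\codebook^k$, with $k$ to be fixed at the end, each $\codebook^m=\{\vardictionary^m_1,\dots,\vardictionary^m_\depth\}$ obtained by sampling the $\vardictionary^m_i$ i.i.d.\ uniformly from the $24$ bijective maps on $\{0,1\}^2$. As in the proof of \cref{lem:sqn2} I restrict attention to the first output character, which is a Boolean function of the Boolean input sequence. For each unordered pair $\{i,j\}$ with $i\ne j$, define the bad event $A_{ij}$: that $\translationtask_{\codebook^i}$ and $\translationtask_{\codebook^j}$ have nonzero correlation on that coordinate under $\mathcal{U}(\{0,1\}^{2\depth})$. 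By \cref{lem:corr_random_map_n2}, $\Pr[A_{ij}]\le p:=\tfrac{1}{3}(7/9)^{\depth-1}$.

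The key structural observation is that $A_{ij}$ depends only on the two codebooks $\codebook^i,\codebook^j$, hence is mutually independent of the family of all $A_{i'j'}$ with $\{i',j'\}\cap\{i,j\}=\emptyset$; so in the dependency graph of $\{A_{ij}\}$ every vertex has degree at most $k_0:=2(k-2)$. I would then choose $k$ as large as the Lov\'asz local lemma (\cref{lem:lovasz}) permits: the condition $e\,p\,(k_0+1)<1$ holds once $2k\le(9/7)^{\depth-1}$, so taking $k:=\lfloor\tfrac{1}{2}(9/7)^{\depth-1}\rfloor$ gives
\begin{align*}
    e\,p\,(k_0+1)\;<\;e\cdot\tfrac{1}{3}(7/9)^{\depth-1}\cdot(9/7)^{\depth-1}\;=\;\tfrac{e}{3}\;<\;1.
\end{align*}
Hence $\Pr[\bigwedge_{i\ne j}\lnot A_{ij}]>0$, so some realization of the $\codebook^m$ yields $k$ translation tasks that are pairwise uncorrelated on the first output character (and automatically pairwise distinct, since coinciding tasks would be perfectly correlated). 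Since $9/7>1$, this is $k=2^{\Omega(\depth)}$, which is the claim; feeding $k\ge 16$ into \cref{def:sqdim} and noting $0\le 1/k^3$ then delivers the SQ-dimension bound asserted in \cref{lem:sqn2}.

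I do not expect a genuine obstacle inside this corollary: all the analytic content lives in \cref{lem:corr_n2} and \cref{lem:corr_random_map_n2}, which are invoked as given, and what remains is routine LLL bookkeeping. The two points that need a moment's care are (i) reading ``uncorrelated'' in the conclusion as \emph{exactly} zero correlation, which is legitimate because \cref{lem:corr_random_map_n2} shows the correlation is either $0$ or else triggers $A_{ij}$; and (ii) getting the dependency degree right — it is $\Theta(k)$, not $\Theta(k^2)$, precisely because each $A_{ij}$ touches only two of the $k$ sampled codebooks, and it is this linear (rather than quadratic) degree that lets the LLL inequality tolerate an exponentially large $k$. If a cleaner constant were preferred, one could instead use the crude bound $p\le(7/9)^{\depth-1}$ and set $k=\lfloor(9/7)^{\depth-1}/(2e)\rfloor$; the $2^{\Omega(\depth)}$ asymptotics are unchanged.
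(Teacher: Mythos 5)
Your proof is correct and follows essentially the same route as the paper: sample $k$ independent random \strcodebook{}, bound the probability that a pair of the resulting translation tasks is correlated by $\tfrac{1}{3}(7/9)^{\depth-1}$ via \cref{lem:corr_random_map_n2}, and invoke the Lov\'asz local lemma (\cref{lem:lovasz}) to obtain a pairwise-uncorrelated family of size $2^{\Omega(\depth)}$. The only difference is bookkeeping: the paper crudely bounds the dependency degree by $k^2$ (so it takes $k \approx (ep)^{-1/2}$), whereas you use the fact that $A_{ij}$ involves only two codebooks to get degree $2(k-2)$ and hence the larger choice $k \approx \tfrac{1}{2}(9/7)^{\depth-1}$ (and you also state the bad events the right way around, i.e.\ as the correlated pairs) --- a quantitative refinement that leaves the $2^{\Omega(\depth)}$ conclusion unchanged.
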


\begin{proof}
   Suppose $\codebook_1, \cdots, \codebook_k$ represent $k$ randomly sampled \strcodebook{}. For each $1 \leq i < j \leq k$, we will denote event $A_{ij}$ as the event that the outputs of $\translationtask_{\codebook_i}$ and $\translationtask_{\codebook_j}$ are uncorrelated on input distribution $\mathcal{U}(\{0,1\}^{2\depth})$. This happens with probability $p = 1-(1/3)(7/9)^{\depth-1}$ from \cref{lem:corr_random_map_n2}. Because each event can atmost depend on atmost $k(k+1)/2 < k^2$ events (total number of events), by \cref{lem:lovasz}, if
   \begin{align*}
       e p ( k^2 + 1) < 1,
   \end{align*}
   then there exists a list of such \strcodebook{} which are pairwise uncorrelated w.r.t. the outputs of their corresponding translation tasks on input distribution $\mathcal{U}(\{0,1\}^{2\depth})$. Solving the above for $k$, we can set $k = ((ep)^{-1} - 1)^{1/2} = 2^{\Omega(\depth)}$.
\end{proof}

\subsubsection{Auxiliary lemmas}
\label{sec:proof_sqn2_auxiliary}

Here, we will prove the following primary lemma, that is used to prove \cref{lem:sqn2}.
\begin{thmbox}
\begin{lemma}\label{lem:corr_random_map_n2}
    With probability at least $1 - \frac{1}{3}\left(\frac{7}{9}\right)^{\depth-1}$ w.r.t. random map selection, the following holds true for $2$ sets of random \strdicts{} $\codebook^\alpha = \{\vardictionary^{\alpha}_{1}, \cdots, \vardictionary^{\alpha}_{\depth}\}$ and $\codebook^{\beta} = \{\vardictionary^{\beta}_{1}, \cdots, \vardictionary^{\beta}_{\depth}\}$:
    \begin{align*}
        \corr(\translationtask_{ \codebook^\alpha = \{ \vardictionary^{\alpha}_{\ell} \}_{\ell=1}^{\depth}}(\cdot)_1, \translationtask_{\codebook^\beta = \{ \vardictionary^{\beta}_{\ell} \}_{\ell=1}^{\depth}}(\cdot)_1, \uniform(\{0, 1\}^{2\depth})) = 0.
    \end{align*}
\end{lemma}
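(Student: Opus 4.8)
The plan is to linearize everything over $\sF_2$, reduce the correlation statement to a collision statement about linear ``signatures'' of the first output bit, and then control that collision probability by a recursion on the depth in which \cref{lem:corr_n2} supplies the single-layer input.

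\textbf{Step 1 (affine reduction).} By \cref{lem:map_form_n2}, the six representatives $\specialdictionary_1,\dots,\specialdictionary_6$ are exactly the $\sF_2$-linear bijections of $\sF_2^2$ (there are $|\mathrm{GL}_2(\sF_2)|=6$ of them), and the four maps of each mirror set $\mirror(\specialdictionary_j)$ differ from $\specialdictionary_j$ only by adding affine constants in $\sF_2^2$; since $6\cdot 4 = 24$ exhausts all bijections $\{0,1\}^2\to\{0,1\}^2$, every \strdict{} is affine. When $\vardictionary_i$ is uniformly random, write $\vardictionary_i(z)=M_i z + b_i$ with $M_i$ uniform in $\mathrm{GL}_2(\sF_2)$. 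Because \circularshift{} is a coordinate permutation (hence $\sF_2$-linear) and \translate{} applies the same affine map to each bigram, $\translationtask_\codebook$ is an affine bijection of $\sF_2^{\inputlength}$, so its first coordinate equals $\langle a_\codebook,\cdot\rangle \oplus c_\codebook$ for some $a_\codebook\in\sF_2^{\inputlength}$, $c_\codebook\in\sF_2$. For affine Boolean functions under $\uniform(\{0,1\}^{2\depth})$ one has $\corr(\langle a^\alpha,\cdot\rangle\oplus c^\alpha,\langle a^\beta,\cdot\rangle\oplus c^\beta,\uniform)=1$ if $a^\alpha=a^\beta$ and $0$ otherwise, since the XOR of the two functions is constant when $a^\alpha=a^\beta$ and a nonzero linear form (hence balanced) when $a^\alpha\ne a^\beta$. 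Thus the lemma is equivalent to $\Pr_{\codebook^\alpha,\codebook^\beta}[\,a_{\codebook^\alpha}=a_{\codebook^\beta}\,]\le\tfrac13(\tfrac79)^{\depth-1}$.

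\textbf{Step 2 (propagating the signature through the layers).} I would compute $a_\codebook$ by pushing the linear form for $\intermseq_{\depth+1,1}$ backward through levels $\depth,\depth-1,\dots,1$. At each step the current form is supported on an interval of coordinates, partitioned into the bigrams on which that level acts; a bigram with nonzero coefficient pair $(\delta_1,\delta_2)\in\sF_2^2$ pulls back to $(\delta_1,\delta_2)M_i$ placed on the two coordinates that feed it. Crucially, because of \circularshift{} each coordinate of $\intermseq_i$ feeds exactly one bigram of $\intermseq_{i+1}$, so distinct bigrams have disjoint feeding pairs and \emph{no cancellation across bigrams can occur}. Conditioned on $(\delta_1,\delta_2)\ne 0$ with $M_i$ uniform in $\mathrm{GL}_2(\sF_2)$, the vector $(\delta_1,\delta_2)M_i$ is uniform over the three nonzero vectors $\{(1,0),(0,1),(1,1)\}$; moreover, since for $M_i\in\mathrm{GL}_2(\sF_2)$ the triple $(\mathrm{row}_1,\mathrm{row}_2,\mathrm{row}_1{\oplus}\mathrm{row}_2)$ is a uniformly random ordering of those three vectors, the pull-backs of all bigrams active at a level (each calling for $\mathrm{comp}_1$, $\mathrm{comp}_2$, or $\mathrm{comp}_1{\oplus}\mathrm{comp}_2$) are obtained jointly from one such random ordering. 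This is exactly the single-layer correlation behaviour quantified by \cref{lem:corr_n2}. The base case is immediate: after the first substitution (through $\vardictionary_\depth$) the form is the single coordinate at position $1$, and two independent tasks agree there iff $\mathrm{row}_1(M_\depth^\alpha)=\mathrm{row}_1(M_\depth^\beta)$, which has probability $1/3$ — matching $\tfrac13(\tfrac79)^{0}$.

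\textbf{Step 3 (depth recursion) and the main obstacle.} Running the Step-2 propagation for $\codebook^\alpha$ and $\codebook^\beta$ in a common coupling, I want to show the probability that the two partial signatures coincide is multiplied by at most $7/9$ each time a further layer is pushed through, so that after $\depth-1$ further layers it is at most $\tfrac13(\tfrac79)^{\depth-1}$. When the signatures already coincide, the disjointness of feeding pairs makes the next-level coincidence factor over the active bigrams, and a finite case check over the roles in $\{\mathrm{comp}_1,\mathrm{comp}_2,\mathrm{comp}_1{\oplus}\mathrm{comp}_2\}$ and the $3!=6$ orderings of the rows of $M_i$ gives a conditional coincidence probability of at most $7/9$, uniformly over configurations. \emph{The hard part} is that this is not a naive conditioning: a disagreement present at one level can in principle be ``healed'' at a subsequent level if the two independent maps $M^\alpha_i,M^\beta_i$ coincide on the relevant rows, and because all bigrams at a given level share the same $M_i$ their pull-backs are correlated rather than independent. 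The plan is to absorb this by choosing the right inductive quantity — e.g. tracking, jointly with the coincidence event, the innermost bigram at which the two signatures first disagree (together with its role), so that the ``healing'' contribution can be bounded by a separate $\le 7/9$-type estimate using the same case analysis and the uniform-random-ordering property; making this bookkeeping close cleanly is where the real work lies. Once the per-layer factor $7/9$ is established uniformly, multiplying over the $\depth-1$ layers below the top, together with the base value $1/3$, gives the stated bound.
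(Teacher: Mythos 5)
Your Steps 1--2 are sound and take a genuinely different route from the paper: observing that all $24$ bijections of $\{0,1\}^2$ are affine over $\sF_2$, you reduce the lemma to a purely linear-algebraic statement, namely that the linear parts ("signatures") of the first output bit of $\translationtask_{\codebook^\alpha}$ and $\translationtask_{\codebook^\beta}$ coincide with probability at most $\frac13(\frac79)^{\depth-1}$, and the single-layer law $(\delta_1,\delta_2)\mapsto(\delta_1,\delta_2)M_i$ with $M_i$ uniform in $\mathrm{GL}_2(\sF_2)$ is the correct affine counterpart of \cref{lem:corr_n2}. The paper instead runs a \emph{forward} induction: it tracks, level by level, the probabilities $p_{\ell,i;j}$ (and $p_{\ell,\mathrm{both}}$) that the intermediate characters at positions $2,3$ of the two translations are perfectly correlated, and closes the recursion using \cref{lem:corr_compose} together with two structural inequalities, circular-shift invariance ($p_{\ell,1;1}=p_{\ell,3;3}$, \cref{lem:circ_invar}) and the cross-correlation averaging bound (\cref{lem:cross_corr}), which yield the contraction $\max_{i,j}p_{\ell+1,i;j}\le \frac13 p_{\ell,\mathrm{both}}+\frac19\sum_{i,j}p_{\ell,i;j}\le\frac79\max_{i,j}p_{\ell,i;j}$.

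However, your Step 3 is a genuine gap, and it is exactly the quantitative heart of the lemma. Two concrete problems. First, the healing branch cannot be handled the way you suggest: a bound of the form "healing occurs with probability at most $7/9$" is multiplied by $\Pr[\text{signatures disagree at level } i{+}1]$, which tends to $1$, so it produces no geometric decay. What is actually needed is to observe that healing has probability zero unless the two partial signatures have the same set of active bigrams and their coefficient pairs are related by a \emph{single} matrix $N=M^{\beta}_i(M^{\alpha}_i)^{-1}$, and then to control the probabilities of these structured partial-disagreement events by their own recursion --- i.e.\ a coupled system of recursions playing the role of the paper's $(p_{\ell,i;j},p_{\ell,\mathrm{both}})$ bookkeeping together with analogues of \cref{lem:circ_invar} and \cref{lem:cross_corr}; none of this is set up in your proposal. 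Second, the $7/9$ does not come from the coincide-to-coincide branch you analyze: conditioned on the signatures coinciding at level $i{+}1$, coincidence after one more pullback requires $N$ to fix every active coefficient pair, which happens with probability $1/3$ or $1/6$ (not merely $\le 7/9$); the constant $7/9$ only emerges once the stay-coincident and healing contributions are combined correctly, as in the paper's $\frac13 p_{\mathrm{both}}+\frac19\sum p_{i;j}$ computation. So the reduction is a nice and potentially simpler framing, but the induction that actually produces the bound $\frac13\left(\frac79\right)^{\depth-1}$ is missing, as you yourself acknowledge.
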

\end{thmbox}

\begin{proof}
  
    We first dive into the dependencies between characters in input and output sequences in the translation process. In \cref{lem:dependencies}, we show that at any level $1 \leq \ell \leq \depth$ of $\translationtask_\codebook$ with \strdicts{} $\codebook = \{ \dictionary_{\ell} \}_{\ell=1}^{\depth}$, the following relation holds true on an input $\inputseq \in \{0, 1\}^{2\depth}$:
    \begin{align*}
        (\intermseq_{\ell+1, 1}, \intermseq_{\ell+1, 2}) = \dictionary_{\ell} ( (\intermseq_{\ell+1, 2}, \intermseq_{\ell+1, 3}) ).
    \end{align*}

    We will use superscripts $\alpha$ and $\beta$ to differentiate the intermediate outputs of $\translationtask$ when the \strdicts{} are set as $\codebook^\alpha=\{\vardictionary^{\alpha}_{1}, \cdots, \vardictionary^{\alpha}_{\depth}\}$ and $\codebook^\beta=\{\vardictionary^{\beta}_{1}, \cdots, \vardictionary^{\beta}_{\depth}\}$ respectively.
    We will use an induction strategy to find the correlation of $\intermseq^{\alpha}_{\depth+1, 1}$ and $\intermseq^{\beta}_{\depth+1, 1}$. To do so, we will require the following variables:

    \begin{enumerate}
        \item $p_{\ell, i;j}$: For one pair of $i, j \in \{2,3\}$, this represents the probability at a level $2 \le \ell \le 1+\depth$ w.r.t. the randomness of  $\{\vardictionary^{\alpha}_{1}, \cdots, \vardictionary^{\alpha}_{\ell-1}\}$ and $\{\vardictionary^{\beta}_{1}, \cdots, \vardictionary^{\beta}_{\ell-1}\}$, that $\intermseq^{\alpha}_{\ell, i}$ and $\intermseq^{\beta}_{\ell, j}$ are perfectly correlated. Additionally, we define $p_{\ell, 1;1}$ that represents the probability at a level $1 \le \ell \le 1+\depth$ w.r.t. the randomness of  $\{\vardictionary^{\alpha}_{1}, \cdots, \vardictionary^{\alpha}_{\ell-1}\}$ and $\{\vardictionary^{\beta}_{1}, \cdots, \vardictionary^{\beta}_{\ell-1}\}$, that $\intermseq^{\alpha}_{\ell, 1}$ and $\intermseq^{\beta}_{\ell, 1}$ are perfectly correlated.

        \item  $p_{\ell, \text{both}}$: This represents the probability at a level $2 \le \ell \le 1+\depth$ w.r.t. the randomness of  $\{\vardictionary^{\alpha}_{1}, \cdots, \vardictionary^{\alpha}_{\ell-1}\}$ and $\{\vardictionary^{\beta}_{1}, \cdots, \vardictionary^{\beta}_{\ell-1}\}$, that either $\intermseq^{\alpha}_{\ell, 2}$, $\intermseq^{\beta}_{\ell, 2}$ are correlated and  $\intermseq^{\alpha}_{\ell, 3}$, $\intermseq^{\beta}_{\ell, 3}$ are correlated, or $\intermseq^{\alpha}_{\ell, 2}$, $\intermseq^{\beta}_{\ell, 3}$ are correlated and  $\intermseq^{\alpha}_{\ell, 3}$, $\intermseq^{\beta}_{\ell, 2}$ are perfectly correlated.

    \end{enumerate}
    
    There are three relations that we need to keep in mind, before we proceed with the induction proof.

    \begin{enumerate}
        \item First, the correlations are circular in nature, i.e., for $j \in \{2, 3\}$ and any integer $k$, $p_{\ell, j;j} = p_{\ell, j; (j+2k) \% 2\depth}$ (\cref{lem:circ_invar}). We will use this relation to connect $p_{\ell, 1;1}$ with $p_{\ell,3;3}$, i.e.
        \begin{align}
        \label{eq:connect11to33}
            p_{\ell, 1;1} = p_{\ell, 3;3}. 
        \end{align}
        
        \item Second, $p_{\ell, \text{both}} \leq \max_{ij} p_{\ell, i;j}.$ Intuitively, this is because getting correlations at both positions must be at most as probable as getting correlations at one of the positions. 
        \begin{align}
            p_{\ell, \text{both}} \leq \max_{i, j} p_{\ell, i;j}. \label{eq:connectbothtoindv}
        \end{align}
    
        \item Third, cross-correlation probabilities given by $p_{\ell, 2;3}$ and $p_{\ell, 3;2}$ must be $\leq \frac{1}{2} (p_{\ell, 2;2} + p_{\ell, 3;3})$ (\cref{lem:cross_corr}).
        \begin{align}
            p_{\ell, i;j}  \leq \frac{1}{2} (p_{\ell, i;i} + p_{\ell, j;j}), \text{ for any } i, j \in \{2, 3\}, i \ne j. \label{eq:connectcrosstosame}
        \end{align}
        
    \end{enumerate}

    \paragraph{Base condition:} We will use the values of the variables at $\ell=2$ as our base condition. The input for first layer of translation is same to both $\translationtask_{\codebook^\alpha}$, $\translationtask_{\codebook^\beta}$. Then, for an input $\inputseq$,
    \begin{align*}
        &(\intermseq^{\alpha}_{2, 1}, \intermseq^{\alpha}_{2, 2}) = \vardictionary^{\alpha}_1((\intermseq_{1, 2}, \intermseq_{1, 3})) \\
        &
        (\intermseq^{\beta}_{2, 1}, \intermseq^{\beta}_{2, 2}) = \vardictionary^{\beta}_1((\intermseq_{1, 2}, \intermseq_{1, 3})).
    \end{align*}
    We can then use \cref{lem:corr_n2} to show that
    \begin{align*}
        &p_{2, i;j} = \frac{1}{3}, \text{for all } i, j \in \{1, 2\},\\
        &p_{2, \text{both}} = \frac{1}{3}.
    \end{align*}

    \paragraph{Connecting the variables at $\ell$ and $\ell+1$:}

    We connect $p_{\ell+1, 2;2}$ to $p_{\ell, 2;3}$, $p_{\ell, 3;2}$, $p_{\ell, 3;3}$ and $p_{\ell, \text{both}}$. This will undergo a case by case analysis.

    \begin{enumerate}
        \item First, if under \strdicts{} $\vardictionary^{\alpha}_1, \cdots, \vardictionary^{\alpha}_{\ell-1}$ and $\vardictionary^{\beta}_1, \cdots, \vardictionary^{\beta}_{\ell-1}$, if 
        \begin{align}
             \text{(either) } & \intermseq^{\alpha}_{\ell, 2} \text{ and } \intermseq^{\beta}_{\ell, 2} \text{ are perfectly correlated } \text{, and } \intermseq^{\alpha}_{\ell, 3} \text{ and } \intermseq^{\beta}_{\ell, 3} \text{ are perfectly correlated } \nonumber\\
             \text{(or) } & \intermseq^{\alpha}_{\ell, 2} \text{ and } \intermseq^{\beta}_{\ell, 3} \text{ are perfectly correlated } \text{, and } \intermseq^{\alpha}_{\ell, 3} \text{ and } \intermseq^{\beta}_{\ell, 2} \text{ are perfectly correlated }, \label{cond:induct_plboth}
        \end{align}
        we can use \cref{lem:corr_compose} 
        to show that with probability $\frac{1}{3}$ w.r.t. the selection of $\vardictionary^{\alpha}_{\ell}$ and $\vardictionary^{\beta}_{\ell}$, $\intermseq^{\alpha}_{\ell+1, 2}$ and $\intermseq^{\beta}_{\ell+1, 2}$ will be correlated. Conditions necessary for \cref{lem:corr_compose}, i.e. uniform distribution of the characters in the input sequence, are shown to hold true in \cref{lem:uniform_n2_dist}.
        The probability w.r.t. the selection of \strdicts{} $\vardictionary^{\alpha}_1, \cdots, \vardictionary^{\alpha}_{\ell-1}$ and $\vardictionary^{\beta}_1, \cdots, \vardictionary^{\beta}_{\ell-1}$ such  that \cref{cond:induct_plboth} holds true is given by the variable $p_{\ell, \text{both}}$.
        
        \item On the other hand, if there exists a pair $i, j \in \{2, 3\}$, such that under \strdicts{} $\vardictionary^{\alpha}_1, \cdots, \vardictionary^{\alpha}_{\ell-1}$ and $\vardictionary^{\beta}_1, \cdots, \vardictionary^{\beta}_{\ell-1}$, $\intermseq^{\alpha}_{\ell, i}$ and $\intermseq^{\beta}_{\ell, j}$ are perfectly correlated, then with probability $\frac{1}{9}$ w.r.t. the selection of $\vardictionary^{\alpha}_{\ell}$ and $\vardictionary^{\beta}_{\ell}$, $\intermseq^{\alpha}_{\ell+1, 2}$ and $\intermseq^{\beta}_{\ell+1, 2}$ will be correlated. We again refer to \cref{lem:corr_compose} for this statement. Probability that this condition happens under \strdicts{} $\vardictionary^{\alpha}_1, \cdots, \vardictionary^{\alpha}_{\ell-1}$ and $\vardictionary^{\beta}_1, \cdots, \vardictionary^{\beta}_{\ell-1}$ is given  by the variable $p_{\ell, i;j}$.

        \item If none of the above situation occurs, then for all pairs $i, j \in \{2, 3\}$,  $\intermseq^{\alpha}_{\ell, i}$ and $\intermseq^{\beta}_{\ell, j}$ are uncorrelated, and so, by \cref{lem:corr_n2}, correlation between $\intermseq^{\alpha}_{\ell+1, 2}$ and $\intermseq^{\beta}_{\ell+1, 2}$ will stay $0$ for any choice of $\vardictionary^{\alpha}_{\ell}$ and $\vardictionary^{\beta}_{\ell}$. 
        
    \end{enumerate}

    Thus, combining the $3$ cases, we must have
    \begin{align}
        &p_{\ell+1, 2;2} \leq \frac{1}{3} p_{\ell, \text{both}} + \frac{1}{9}\sum_{i, j \in \{2, 3\}} p_{\ell, i;j} \label{eq:step1_conditional_induction}\\
        &\leq \frac{1}{3} p_{\ell, \text{both}} + \frac{4}{9} \max_{i, j \in \{2, 3\}} p_{\ell, i;j} \nonumber \\
        &\leq \frac{1}{3} \max_{i, j \in \{2, 3\}} p_{\ell, i;j} + \frac{4}{9} \max_{i, j \in \{2, 3\}} p_{\ell, i;j} = \frac{7}{9} \max_{i, j \in \{2, 3\}} p_{\ell, i;j} \label{eq:step2_connectbothtoind}
    \end{align}
     
    Reasoning for each step is as follows:
    \begin{enumerate}
        \item \cref{eq:step1_conditional_induction} follows from conditional probability computations using the $3$ cases that we discussed before. 
        \item \cref{eq:step2_connectbothtoind} uses \cref{eq:connectbothtoindv} to connect $p_{\ell, \text{both}}$ to $p_{\ell, i;j}$.
    \end{enumerate}

    We can give the same inequality for $p_{\ell+1, 1;1}$. As $p_{\ell+1, 1;1} = p_{\ell+1, 3;3}$ from \cref{eq:connect11to33} and $p_{\ell, 2;3}$ and $p_{\ell, 3;2}$ must be atmost the average of $p_{\ell, 2;2}$ and $p_{\ell, 3;3}$ (\cref{eq:connectcrosstosame}), we can write
    \begin{align*}
        \max_{i, j \in \{2, 3\}} p_{\ell+1, i;j} \leq \frac{7}{9} \max_{i, j \in \{2, 3\}} p_{\ell, i;j}. 
    \end{align*}
    This implies that the probability of correlation at any output character under the two random \strcodebook{} decays at the rate of $\frac{7}{9}$. Solving the recurrence will give:
    \begin{align*}
         \max_{i, j \in \{2, 3\}} p_{\depth+1, i;j} \leq  \rbr{\frac{7}{9}}^{\depth-1} p_{2, i;j} =  \rbr{\frac{7}{9}}^{\depth-1} \frac{1}{3}.
    \end{align*}

    As $p_{\depth+1, 3;3}$ should be equal to $p_{\depth+1, 1;1}$, this implies that with probability at least $1 - p_{\depth+1, 3;3} = 1 - \rbr{\frac{7}{9}}^{\depth-1} \frac{1}{3}$, the following must hold true:
    \begin{align*}
        \corr(\translationtask_{ \codebook^\alpha}(\cdot)_1, \translationtask_{\codebook^\beta}(\cdot)_1, \uniform(\{0, 1\}^{2\depth})) = 0.
    \end{align*}

\end{proof}

\begin{lemma}\label{lem:corr_compose}
    Suppose $f, g: \{0, 1\}^{k} \to \{0, 1\}^{2}$ denote $2$ functions for some arbitrary $k$, satisfying conditions for uniformity in output distribution, i.e.
    \begin{align*}
        \mathbb{E}_{x \sim \uniform(\{0, 1\}^k)} f(x)_j = 1/2, \quad& \mathbb{E}_{x \sim \uniform(\{0, 1\}^k)} g(x)_j = 1/2 \\
        \mathbb{E}_{x \sim \uniform(\{0, 1\}^k)} f(x)_1 \oplus f(x)_2 = 1/2,\quad& \mathbb{E}_{x \sim \uniform(\{0, 1\}^k)} g(x)_1 \oplus g(x)_2 = 1/2
    \end{align*}
    for all $j \in \{1, 2\}$. Then, the following holds true:
    \begin{enumerate}
        \item \textbf{Both output character pairs have correlations between $f$ and $g$:} If 
        \begin{align*}
            \text{(either) } &\corr(f(\cdot)_1, g(\cdot)_1, \uniform(\{0, 1\}^k)) = 1 \text{ and } \corr(f(\cdot)_2, g(\cdot)_2, \uniform(\{0, 1\}^k)) = 1 \\
            \text{(or) } &\corr(f(\cdot)_1, g(\cdot)_2, \uniform(\{0, 1\}^k)) = 1 \text{ and } \corr(f(\cdot)_2, g(\cdot)_1, \uniform(\{0, 1\}^k)) = 1,
        \end{align*}
       then for two randomly picked \strdicts{} $\vardictionary^{\alpha}, \vardictionary^{\beta}$,
        \begin{enumerate}
            \item \textbf{Correlation of an output character pair of $\vardictionary^{\alpha} ( f(\cdot) )$ and $\vardictionary^{\beta} ( g(\cdot) )$:} Fix an $i, j \in \{1, 2\}$. With probability  $1/3$ w.r.t. the random selections of $\vardictionary^{\alpha}, \vardictionary^{\beta}$, 
            \begin{align*}
                \corr( \vardictionary^{\alpha} ( f(\cdot) )_i,  \vardictionary^{\alpha} ( g(\cdot) )_j, \uniform(\{0, 1\}^k) ) = 1
            \end{align*}

            \item  \textbf{Correlation of both character pairs in output  of $\vardictionary^{\alpha} ( f(\cdot) )$ and $\vardictionary^{\beta} ( g(\cdot) )$:} With probability  $1/3$ w.r.t. the random selections of $\vardictionary^{\alpha}, \vardictionary^{\beta}$, one of the following two conditions hold true.
            \begin{align*}
                \text{(either) } & \corr( \vardictionary^{\alpha} ( f(\cdot) )_1,  \vardictionary^{\alpha} ( g(\cdot) )_1, \uniform(\{0, 1\}^k) ) = 1 \text{ and } \corr( \vardictionary^{\alpha} ( f(\cdot) )_2,  \vardictionary^{\alpha} ( g(\cdot) )_2, \uniform(\{0, 1\}^k) ) = 1 \\
                \text{(or) } & \corr( \vardictionary^{\alpha} ( f(\cdot) )_1,  \vardictionary^{\alpha} ( g(\cdot) )_2, \uniform(\{0, 1\}^k) ) = 1 \text{ and } \corr( \vardictionary^{\alpha} ( f(\cdot) )_2,  \vardictionary^{\alpha} ( g(\cdot) )_1, \uniform(\{0, 1\}^k) ) = 1 \\
            \end{align*}
          
        \end{enumerate}

        \item \textbf{Only a pair of characters have correlations between $f$ and $g$:} If there exists only one pair $i, j \in \{0, 1\}$ such that  
        \begin{align*}
            \corr(f(\cdot)_i, g(\cdot)_j, \uniform(\{0, 1\}^k) = 1, 
        \end{align*}
        and for all other $i', j'$, $\corr(f(\cdot)_{i'}, g(\cdot)_{j'}, \uniform(\{0, 1\}^k)) = 0$, then
        \begin{enumerate}
            \item Fix any $i', j' \in \{1, 2\}$. With probability $\frac{1}{9}$ w.r.t. the random selections of $\vardictionary^{\alpha}, \vardictionary^{\beta}$,
            \begin{align*}
                \corr(\vardictionary^{\alpha}(f(\cdot))_{i'}, \vardictionary^{\beta}(g(\cdot))_{j'}, \uniform(\{0, 1\}^k)) = 1.
            \end{align*}
            If the above condition holds true, for all other $\bar{i}, \bar{j}$ pairs, \begin{align*}
                \corr(\vardictionary^{\alpha}(f(\cdot))_{\bar{i}}, \vardictionary^{\beta}(g(\cdot))_{\bar{j}}, \uniform(\{0, 1\}^k)) = 0.
            \end{align*}
        \end{enumerate}

        \item \textbf{No correlations between $f$ and $g$:} If for all pairs $i, j \in \{1,2\}$, $\corr(f(\cdot)_{i}, g(\cdot)_{j}, \uniform(\{0, 1\}^k)) = 0$, then for all pairs $i', j' \in \{1,2\}$,
        \begin{align*}
            \corr(\vardictionary^{\alpha}(f(\cdot))_{i'}, \vardictionary^{\beta}(g(\cdot))_{j'}, \uniform(\{0, 1\}^k)) = 0.
        \end{align*}        
        
    \end{enumerate}

\end{lemma}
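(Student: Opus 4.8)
The plan is to reduce everything to $\mathbb{F}_2$-linear algebra via the observation (implicit in \cref{lem:map_form_n2}) that \emph{every} bijection of $\{0,1\}^2$ is affine over $\mathbb{F}_2$: the six \copyop{}/\xorop{} representatives are precisely $\mathrm{GL}_2(\mathbb{F}_2)$, and appending \notop{} operations (the mirror maps) supplies the four translations, for $6\cdot 4 = 24$ maps total. Hence a uniformly random bijection is $\vardictionary(y) = My + c$ with $M$ uniform in $\mathrm{GL}_2(\mathbb{F}_2)$ and $c$ uniform in $\mathbb{F}_2^2$, independent, so the two rows of $M$ form a uniformly random ordered basis of $\mathbb{F}_2^2$. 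Therefore $\vardictionary^{\alpha}(f(x))_{i'} = \langle m, f(x)\rangle + c_{i'}$ with $m$ a uniformly random nonzero vector — uniform over the three nonzero vectors when $i'=1$, uniform over the remaining two when $i'=2$, so $\Pr[m=v]=\tfrac13$ for every fixed nonzero $v$, independently across $\vardictionary^{\alpha},\vardictionary^{\beta}$. Since $\corr(a,b)=\lvert\mathbb{E}[(-1)^{a+b}]\rvert$ for $\{0,1\}$-valued $a,b$, the additive constants are irrelevant, and the lemma becomes a question about when $\langle m,f\rangle+\langle m',g\rangle$ is a.s.\ constant, given the correlation pattern between the linear functionals of $f$ and those of $g$.

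\textbf{Case 1.} If $f_1=g_1+d_1,\ f_2=g_2+d_2$ or the swapped version holds (constants $d_i$), then $g(x)=\sigma(f(x))$ a.s.\ for a \emph{fixed} affine bijection $\sigma$ (a translation, possibly composed with the coordinate swap). Then $\vardictionary^{\beta}(g(x))=(\vardictionary^{\beta}\circ\sigma)(f(x))$, and $\vardictionary^{\beta}\circ\sigma$ is again a uniformly random bijection independent of $\vardictionary^{\alpha}$; by the uniformity hypothesis $f(x)\sim\uniform(\{0,1\}^2)$, so $(\vardictionary^{\alpha}(f(x)),\vardictionary^{\beta}(g(x)))$ has exactly the law of $(\vardictionary^{\alpha}(z),\tilde\vardictionary^{\beta}(z))$ with $z\sim\uniform(\{0,1\}^2)$ and two independent random bijections. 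That is the hypothesis of \cref{lem:corr_n2}, whose two conclusions give 1(a) and 1(b).

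\textbf{Cases 2 and 3.} Record the correlation structure as a bipartite relation $\sim$ between the three nonzero functionals $\{f_1,f_2,f_1{+}f_2\}$ and $\{g_1,g_2,g_1{+}g_2\}$, with $u\sim v$ iff $u+v$ is a.s.\ constant. Within the lemma's regime (every relevant correlation is $0$ or $1$), the uniform marginals of $f,g$ force $\sim$ to be a \emph{partial matching}, and moreover edges propagate exactly: if $f_i\sim g_j$ and $f_{i'}\sim g_{j'}$ with $\{i,i'\}=\{1,2\}$ then $f_1{+}f_2\sim g_1{+}g_2$ too (this is the Case-1 regime). In Case 2 the matching is a single edge $f_i\sim g_j$; in Case 3 it is empty. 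For fixed output positions $(i'',j'')$ the outputs $\vardictionary^{\alpha}(f)_{i''},\vardictionary^{\beta}(g)_{j''}$ are perfectly correlated iff the two independently drawn functionals are the endpoints of an edge: in Case 2 this has probability $\tfrac13\cdot\tfrac13=\tfrac19$, and when it occurs the functionals feeding the other two output positions are pushed off the matching, hence uncorrelated; in Case 3 there are no edges, so $\langle m,f\rangle+\langle m',g\rangle$ is unbiased for every choice and all output-position pairs are uncorrelated.

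\textbf{Main obstacle.} The substantive work is the $\mathbb{F}_2$ bookkeeping: (i) verifying the ``partial matching, edges propagate exactly'' behaviour of $\sim$, which rests on the same $24$-map / $6$-mirror-family enumeration as \cref{lem:map_form_n2} together with the uniform marginals of $f$ and $g$ (these force any non-edge sum $\langle m,f\rangle+\langle m',g\rangle$ to be \emph{exactly} unbiased, not merely weakly correlated); and (ii) confirming that the three cases are genuinely exhaustive for the correlation data carried through the induction — in particular one must track the XOR-characters $f_1{+}f_2,\ g_1{+}g_2$ alongside the coordinate characters, or argue they never arise, so that ``both pairs / one pair / no pair'' partitions all possibilities. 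Once that structure is in hand, the probabilities ($\tfrac13$ for a single row hitting a fixed nonzero vector, hence $\tfrac13$, $\tfrac19$, or $0$) drop out immediately.
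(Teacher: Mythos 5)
Your overall route is essentially the paper's own, re-expressed in $\mathbb{F}_2$-affine language: your Case~1 reduction (write $g=\sigma(f)$ for a fixed bijection $\sigma$, absorb $\sigma$ into $\vardictionary^{\beta}$, use uniformity of $f(x)$ to invoke \cref{lem:corr_n2}) is exactly the paper's mirror-map substitution, and your identification of the six \copyop{}/\xorop{} maps with $\mathrm{GL}_2(\mathbb{F}_2)$ and the \notop{}-mirrors with translations is a faithful (and cleaner) repackaging of \cref{lem:map_form_n2}; the $1/3$ and $1/9$ counts come out of the same "each output coordinate is a uniformly random nonzero functional" observation that the paper phrases as choosing among $\copyop{}_1,\copyop{}_2,\xorop{}$.

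The genuine gap is the step ``in Case 2 the matching is a single edge $f_i\sim g_j$; in Case 3 it is empty.'' This does not follow from the lemma's stated hypotheses, which constrain only the four coordinate--coordinate correlations and say nothing about edges through the XOR characters. Concretely, take $f=(f_1,f_2)$ uniform and $g=(f_1,\,f_1\oplus f_2)$: all uniformity conditions hold and the only correlated coordinate pair is $(f_1,g_1)$, so you are squarely in the Case~2 hypothesis; yet $g_2=f_1\oplus f_2$ and $g_1\oplus g_2=f_2$, so your bipartite relation is a \emph{perfect} matching, and the probability that $\vardictionary^{\alpha}(f(\cdot))_{i'}$ and $\vardictionary^{\beta}(g(\cdot))_{j'}$ are perfectly correlated is $3\cdot\tfrac19=\tfrac13$, not $\tfrac19$ (e.g.\ $\vardictionary^{\alpha}$ using $\xorop{}$ and $\vardictionary^{\beta}$ using $\copyop{}_2$ at those positions). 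Similarly, $g_1=u$, $g_2=u\oplus f_1\oplus f_2$ with $u$ a fresh uniform bit satisfies the Case~3 hypothesis but has $g_1\oplus g_2=f_1\oplus f_2$, so with probability $\tfrac19$ the outputs are perfectly correlated, contradicting your (and the lemma's) Case~3 conclusion. So the claim you need is really that the uncorrelated output bits are \emph{jointly independent} — equivalently that the XOR characters carry no extra edges — which is strictly stronger than the hypotheses. You flagged this yourself as the main obstacle but did not close it; be aware that the paper does not close it either: its Case~2 proof simply asserts the uncorrelated bits are ``completely independent of each other,'' its Case~3 proof asserts ``independence of inputs,'' and the induction in \cref{lem:corr_random_map_n2} never tracks the XOR-character correlations. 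To turn your sketch into a complete proof you must either build that joint-independence (no-extra-edge) condition into the hypothesis and verify it wherever the lemma is applied, or show within the induction that such extra edges never arise; as written, the partial-matching step would fail.
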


\begin{proof}
    We will prove each case separately. 

    \begin{enumerate}
        \item \textbf{Both output character pairs have correlations between $f$ and $g$:} We will consider the case when $\corr(f(\cdot)_1, g(\cdot)_1, \uniform(\{0, 1\}^k)) = 1 \text{ and } \corr(f(\cdot)_2, g(\cdot)_2, \uniform(\{0, 1\}^k)) = 1$, proof for the other case is similar. Then, there are $4$ cases possible.
        \begin{align*}
            \text{Subcase 1: }& f(x)_1 = g(x)_1, f(x)_2 = g(x)_2 \quad \text{ for all } x \in \{0, 1\}^k \\
            \text{Subcase 2: }& f(x)_1 = \notop(g(x)_1), f(x)_2 = g(x)_2 \quad \text{ for all } x \in \{0, 1\}^k \\
            \text{Subcase 3: }& f(x)_1 = g(x)_1, f(x)_2 = \notop(g(x)_2) \quad \text{ for all } x \in \{0, 1\}^k \\
            \text{Subcase 4: }& f(x)_1 = \notop(g(x)_1), f(x)_2 = \notop(g(x)_2) \quad \text{ for all } x \in \{0, 1\}^k 
        \end{align*}
        Because $f(\cdot)$ and $g(\cdot)$ are inputs to the \strdicts{} $\vardictionary^{\alpha}$ and $\vardictionary^{\beta}$ respectively and $f(\cdot)$ and $g(\cdot)$ are related by selective \notop{} operations in their outputs in each of the possible 4 subcases, the output of  $\vardictionary^{\beta}(g(\cdot))$ can be replaced by  $\Tilde{\vardictionary}^{\beta}(f(\cdot))$ for a mirror map $\Tilde{\vardictionary}^{\beta}$ of $\vardictionary^{\beta}$. We showcase this formally for one subcase, say subcase 2; argument for other subcases are similar.

        \textbf{Suppose Subcase 2 is true:} Then, for any two \strdicts{} $\vardictionary^{\alpha}$ and $\vardictionary^{\beta}$, suppose $\Tilde{\vardictionary}^{\beta}$ denotes a mirror map of $\vardictionary^{\beta}$ such that 
        \begin{align*}
            \Tilde{\vardictionary}^{\beta}(x)_1 = \notop(\vardictionary^{\alpha}(x)_1), \text{ } \Tilde{\vardictionary}^{\beta}(x)_2 = \vardictionary^{\alpha}(x)_2, \quad \text{ for all } x \in \{0, 1\}^2.
        \end{align*}
        Such a map exists by \cref{lem:map_form_n2}. Then, for a pair $i, j \in \{1, 2\}$,
        \begin{align}
            &\corr(\vardictionary^{\alpha} (f(\cdot))_i, \vardictionary^{\beta} (g(\cdot))_j, \uniform(\{0, 1\}^k))  \nonumber \\
            & = \corr(\vardictionary^{\alpha} ( (f(\cdot)_1, f(\cdot)_2 ))_i, \vardictionary^{\beta} ((g(\cdot)_1, g(\cdot)_2 ))_j, \uniform(\{0, 1\}^k)) \label{eq:bothbitcorr_step1}\\
            & = \corr(\vardictionary^{\alpha} ( (f(\cdot)_1, f(\cdot)_2 ))_i, \vardictionary^{\beta} ((\notop(f(\cdot)_1), f(\cdot)_2 ))_j, \uniform(\{0, 1\}^k)) \label{eq:bothbitcorr_step2} \\
            & =  \corr(\vardictionary^{\alpha} ( (f(\cdot)_1, f(\cdot)_2 ))_i, \Tilde{\vardictionary}^{\beta} (( f(\cdot)_1, f(\cdot)_2 ))_j, \uniform(\{0, 1\}^k)) \label{eq:bothbitcorr_step3} \\
            & = \corr(\vardictionary^{\alpha} (\cdot)_i, \Tilde{\vardictionary}^{\beta} (\cdot)_j, \uniform(\{0, 1\}^2)). \label{eq:bothbitcorr_step4}
        \end{align}

        Reasoning for each step is as follows.
        \begin{itemize}
            \item Because the outputs of $f$ and $g$ are a $2$-tuple on any input, \cref{eq:bothbitcorr_step1} simply replaces $f(x)$ (similarly $g(x)$)  as $(f(x)_1, f(x)_2)$ for any input $x$.
            \item \cref{eq:bothbitcorr_step2} uses the relation between the outputs of $f$ and $g$ when subcase 2 is true.
            \item \cref{eq:bothbitcorr_step3} simply replaces $\vardictionary^{\beta}$ with $\Tilde{\vardictionary}^{\beta}$ due to their relation as mirror \strdicts{}.
            \item Finally, because of the assumption on the output of $f$, i.e. $\mathbb{E}_{x \sim \uniform(\{0, 1\}^k)} f(x)_j = 1/2, \mathbb{E}_{x \sim \uniform(\{0, 1\}^k)} f(x)_1 \oplus f(x)_2 = 1/2$, the distribution of outputs of $f$ is identical to $\uniform(\{0, 1\}^2).$  
        \end{itemize}

        Hence, $\corr(\vardictionary^{\alpha} (f(\cdot))_i, \vardictionary^{\beta} (g(\cdot))_j, \uniform(\{0, 1\}^k))$ boils down to $\corr(\vardictionary^{\alpha} (\cdot)_i, \Tilde{\vardictionary}^{\beta} (\cdot)_j, \uniform(\{0, 1\}^2))$. We can then use \cref{lem:corr_n2} to show the probabilities of each of the desired conditions.

        \item \textbf{Only a pair of characters in output have correlations between $f$ and $g$:} For typographical simplicity, consider the case when 
        \begin{align*}
            \corr(f(\cdot)_1, g(\cdot)_1, \uniform(\{0, 1\}^k)) = 1, 
        \end{align*}
        and for all pairs $i', j'$ with atleast one of them being not equal to $1$, $\corr(f(\cdot)_{i'}, g(\cdot)_{j'}, \uniform(\{0, 1\}^k)) = 0$. The argument when the condition holds for other $i, j$ pairs can be similarly handled. Then, there are $2$ cases possible.
        \begin{align*}
            \text{Subcase 1: }& f(x)_1 = g(x)_1, \text{ for all } x \in \{0, 1\}^k \\
            \text{Subcase 2: }& f(x)_1 = \notop(g(x)_1), \text{ for all } x \in \{0, 1\}^k, 
        \end{align*}
        while in each of these pairs, $(f(\cdot)_2, g(\cdot)_2)$, $(f(\cdot)_2, g(\cdot)_1)$, and $(f(\cdot)_1, g(\cdot)_2)$, the output bits of $f$ and $g$ are completely independent of each other. We only consider subcase 1; subcase 2 can be handled using mirror \strdicts{} similar to the proof for case 1 (in particular with \cref{eq:bothbitcorr_step3}). 

        For simplicity, we will argue for $i', j' = 1, 1$; the argument about other $i', j'$ pairs are similar. The proof will follow by two steps, 
        \begin{itemize}
            \item \textbf{Step (a):} We first argue about the probability with which $\corr(\vardictionary^{\alpha} (f(\cdot))_1, \vardictionary^{\beta} (g(\cdot))_1, \uniform(\{0, 1\}^k))$ is equal to $1$,
            \item \textbf{Step (b):} We  then argue that if the condition in (a) holds true, then $\corr(\vardictionary^{\alpha} (f(\cdot))_{i'}, \vardictionary^{\beta} (g(\cdot))_{j'}, \uniform(\{0, 1\}^k))$ must be $0$ for any other pair $i', j'$ where atleast one of them is not equal to $1$. 
        \end{itemize}
        
        \textbf{Step (a):} For simplicity, we will assume that $\vardictionary^{\alpha}, \vardictionary^{\beta} \in \{\specialdictionary_1, \specialdictionary_2, \cdots, \specialdictionary_6\}$ defined in \cref{tab:truthtable_permmap}; other cases can be handled similarly as \cref{lem:corr_distinctmirrormap}.

        \begin{align}
            &\corr(\vardictionary^{\alpha} (f(\cdot))_1, \vardictionary^{\beta} (g(\cdot))_1, \uniform(\{0, 1\}^k))  \nonumber \\
            & = \corr(\vardictionary^{\alpha} ( (f(\cdot)_1, f(\cdot)_2 ))_1, \vardictionary^{\beta} ((g(\cdot)_1, g(\cdot)_2 ))_1, \uniform(\{0, 1\}^k)) \label{eq:singlebitcorr_step1}\\
            & = \corr(\vardictionary^{\alpha} ( (f(\cdot)_1, f(\cdot)_2 ))_1, \vardictionary^{\beta} ((f(\cdot)_1, g(\cdot)_2 ))_1, \uniform(\{0, 1\}^k)) \label{eq:singlebitcorr_step2} \\
            \begin{split}
            &= \left| \Pr_{ x \sim \uniform(\{0, 1\}^k) } \left[ \vardictionary^{\alpha} ( (f(x)_1, f(x)_2 ))_1 = \vardictionary^{\beta} ( (f(x)_1, g(x)_2 ))_1 \right] \right.\\
            &\qquad -\left. \Pr_{ x \sim \uniform(\{0, 1\}^k } \left[ \vardictionary^{\alpha} ( (f(x)_1, f(x)_2 ))_1 \ne \vardictionary^{\beta} ( (f(x)_1, g(x)_2 ))_1 \right] \right| \label{eq:singlebitcorr_step3} 
            \end{split}\\
            &=  \abs{ \Pr_{ x \sim \uniform(\{0, 1\}) } \left[   \Pr_{ y, y'  \sim \uniform(\{0, 1\}^2) }  \left[ \vardictionary^{\alpha} ( (x, y))_1 = \vardictionary^{\beta} ( (x, y'))_1 \right]  -    \Pr_{ y, y'  \sim \uniform(\{0, 1\}^2) }  \left[ \vardictionary^{\alpha} ( (x, y))_1 = \vardictionary^{\beta} ( (x, y'))_1 \right] \right] }  \label{eq:singlebitcorr_step4}
        \end{align}
        The reasoning for each step is as follows:
        \begin{itemize}
            \item  Because the outputs of $f$ and $g$ are a $2$-tuple on any input, \cref{eq:singlebitcorr_step1} simply replaces $f(x)$ (similarly $g(x)$)  as $(f(x)_1, f(x)_2)$ for any input $x$.
            \item \cref{eq:singlebitcorr_step2} uses the relation between the outputs of $f$ and $g$ when subcase 1 is true.
            \item  \cref{eq:bothbitcorr_step3} simply writes the definition of correlation.
            \item Finally, because of the assumption on the output of $f$, i.e. $\mathbb{E}_{x \sim \uniform(\{0, 1\}^k)} f(x)_j = 1/2, \mathbb{E}_{x \sim \uniform(\{0, 1\}^k)} f(x)_1 \oplus f(x)_2 = 1/2$, the distribution of outputs of $f$ can be shown to be identical to $\uniform(\{0, 1\}^2).$  
        \end{itemize}

        From the definitions of $\vardictionary^{\alpha}, \vardictionary^{\beta} \in \{\specialdictionary_1, \specialdictionary_2, \cdots, \specialdictionary_6\}$  in \cref{tab:truthtable_permmap}, the first character in the outputs of $\vardictionary^{\alpha}$ and $\vardictionary^{\beta}$ can be expressed using $3$ operators on the input characters, which are $\copyop{}_1$, $\copyop{}_2$, and $\xorop{}$. These operators are independently selected for $\vardictionary^{\alpha}$ and $\vardictionary^{\beta}$, as they are randomly picked from these $6$ possibilities. Formally, for any tuple of variables $(x, y)$ and $(x, y')$, 
        \begin{align*}
            &\copyop{}_1(x, y) = x, \quad \copyop{}_2(x, y) = y, \quad \xorop{}(x, y) = x \oplus y, \\
            &\copyop{}_1(x, y') = x, \quad \copyop{}_2(x, y') = y', \quad \xorop{}(x, y') = x \oplus y'.
        \end{align*}
        However, because $x,y,y'$ are independent variables, only operations $\copyop{}_1(x, y)$ and $\copyop{}_1(x, y')$ for $(x,y,y') \sim \uniform(\{0, 1\}^3)$ will be correlated. This can be verified by writing the definition of correlation and using $\copyop{}_1$ for the first character output of $\vardictionary^{\alpha}$ and $\vardictionary^{\beta}$:
        \begin{align*}
            &\abs{ \Pr_{ x \sim \uniform(\{0, 1\}) } \left[   \Pr_{ y, y'  \sim \uniform(\{0, 1\}^2) }  \left[ \vardictionary^{\alpha} ( (x, y))_1 = \vardictionary^{\beta} ( (x, y'))_1 \right]  -    \Pr_{ y, y'  \sim \uniform(\{0, 1\}^2) }  \left[ \vardictionary^{\alpha} ( (x, y))_1 = \vardictionary^{\beta} ( (x, y'))_1 \right] \right] } 
            \\&= \abs{ \Pr_{ x \sim \uniform(\{0, 1\}) } \left[ \Pr_{ y, y'  \sim \uniform(\{0, 1\}^2) }  \left[ \copyop{}_1 (x, y) = \copyop{}_1 ( x, y') \right]  -   \Pr_{ y, y'  \sim \uniform(\{0, 1\}^2)  }   \left[ \copyop{}_1 (x, y) \ne \copyop{}_1 (x, y') \right] \right] }  = 1,
        \end{align*}
        as the first term is $1$ and the second term is $0$. However, if you pick any other pair of operations, the correlation will be $0$. We demonstrate by using $\copyop{}_1$ and $\xorop{}$ for $\vardictionary^{\alpha}$ and $\vardictionary^{\beta}$ respectively.
        \begin{align*}
             &\abs{ \Pr_{ x \sim \uniform(\{0, 1\}) } \left[   \Pr_{ y, y'  \sim \uniform(\{0, 1\}^2) }  \left[ \vardictionary^{\alpha} ( (x, y))_1 = \vardictionary^{\beta} ( (x, y'))_1 \right]  -    \Pr_{ y, y'  \sim \uniform(\{0, 1\}^2) }  \left[ \vardictionary^{\alpha} ( (x, y))_1 = \vardictionary^{\beta} ( (x, y'))_1 \right] \right] } \\&=
            \abs{ \Pr_{ x \sim \uniform(\{0, 1\}) } \left[ \Pr_{ y, y' \sim \uniform(\{0, 1\}^2)} \left[ \copyop{}_1 (x, y) = \xorop{} ( x, y') \right]  -   \Pr_{ y, y' \sim \uniform(\{0, 1\}^2)} \left[ \copyop{}_1 (x, y) \ne \xorop{} (x, y') \right] \right] } \\
            &= \abs{ \Pr_{ x \sim \uniform(\{0, 1\}) } \left[ \Pr_{ y, y' \sim \uniform(\{0, 1\}^2)} \left[ x = x \oplus y' \right]  -   \Pr_{ y, y' \sim \uniform(\{0, 1\}^2)} \left[ x \ne x \oplus y' \right] \right] } = 0,
        \end{align*}
        as both terms are equal to $\frac{1}{2}$ in the final step. By a counting argument, one can reason that the probability of  $corr(\vardictionary^{\alpha} (f(\cdot))_1, \vardictionary^{\beta} (g(\cdot))_1, \uniform(\{0, 1\}^k))$ being $1$ is equal to the probability of $\copyop{}_1$ being selected to define the first characters of both $\vardictionary^{\alpha}$ and $\vardictionary^{\beta}$, which will be equal to $\frac{1}{9}$.

        \textbf{Step (b):} Now, say we have selected a pair $\vardictionary^{\alpha}$ and $\vardictionary^{\beta}$ such that $\corr(\vardictionary^{\alpha} (f(\cdot))_1, \vardictionary^{\beta} (g(\cdot))_1, \uniform(\{0, 1\}^k)) = 1$. From the proof of step (a), this is only possible when $\copyop{}_1$ was selected to define the first characters in the outputs of $\vardictionary^{\alpha}$ and $\vardictionary^{\beta}$. Any other operation pairs for defining the first characters in the outputs of $\vardictionary^{\alpha}$ and $\vardictionary^{\beta}$ would have meant correlation to be $0$.

        We can use this same argument to show that for any other pair $i', j'$ where atleast one of them is not equal to $1$.  $\corr(\vardictionary^{\alpha} (f(\cdot))_{i'}, \vardictionary^{\beta} (g(\cdot))_{j'}, \uniform(\{0, 1\}^k))$ must be $0$. This is because once $\copyop{}_1$ has been used to define the operation for the first character, it can't be used to define the operation for the second character (please refer at the truth tables for $\{\specialdictionary_1, \cdots, \specialdictionary_6\}$ in \cref{tab:truthtable_permmap}). Following a similar argument, we  can then show that correlation between output characters $\vardictionary^{\alpha}(f(\cdot))_{i'}, \vardictionary^{\beta} (g(\cdot))_{j'}$ will be $0$, as $\copyop{}_1$ can't be used to define at least one of these characters.

        \item The third case is when none of the above conditions hold true. That is, for any pair $i, j \in \{1, 2\}$,  
        \begin{align*}
            \corr(f(\cdot)_i, g(\cdot)_j, \uniform(\{0, 1\}^k)) = 0. 
        \end{align*}
        In such case, following similar arguments as case 1 and 2, one can show that for any $i, j \in \{1, 2\}$, for any choice of $\vardictionary^{\alpha}$ and $\vardictionary^{\beta}$:
        \begin{align*}
            &\corr(\vardictionary^{\alpha} (f(\cdot))_i, \vardictionary^{\beta} (g(\cdot))_j, \uniform(\{0, 1\}^k)) \\&
            = \left| \Pr_{ x, y  \sim \uniform(\{0, 1\}^2) } \Pr_{ x', y'  \sim \uniform(\{0, 1\}^2) }      \left[ \vardictionary^{\alpha} ( (x, y))_i = \vardictionary^{\beta} ( (x', y'))_j \right]\right.  \\
            &\qquad -  \left.  \Pr_{ x, y  \sim \uniform(\{0, 1\}^2) } \Pr_{ x', y'  \sim \uniform(\{0, 1\}^2) } \left[ \vardictionary^{\alpha} ( (x, y))_i \ne \vardictionary^{\beta} ( (x', y'))_j \right]\right| \\&
            = 0,
        \end{align*}        
        due to independence of inputs to $\vardictionary^{\alpha}$ and $\vardictionary^{\beta}$.
    \end{enumerate}
\end{proof}

\begin{lemma}\label{lem:dependencies}
    At any step $i$ of $\translationtask(\depth, 2)$ with \strdicts{} $\{ \dictionary_{\ell} \}_{\ell=1}^{\depth}$, the following relation holds true for the intermediate outputs $\{ \intermseq_i \}_{i=2}^{\depth+1}$ on an input $\inputseq \in \{0, 1\}^{\inputlength}$:
    \begin{align*}
        (\intermseq_{i+1, 1}, \intermseq_{i+1, 2}) = \dictionary_i ( (\intermseq_{i, 2}, \intermseq_{i, 3}) ).
    \end{align*}
\end{lemma}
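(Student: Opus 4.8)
The plan is to prove the lemma by directly unfolding the two sub-processes—\circularshift{} followed by \translate{}—that together define a single translation step $\intermseq_{i+1} = \transfunc_{\vardict_i}(\intermseq_i)$, specialized to the first (odd) bigram position $j=1$. Fix any level $1 \le i \le \depth$ and any input $\inputseq \in \{0,1\}^{\inputlength}$ with $\inputlength = 2\depth$, and let $\intermseq_2, \dots, \intermseq_{\depth+1}$ be the intermediate sequences produced by $\translationtask_{\{\dictionary_\ell\}_{\ell=1}^{\depth}}$ on $\inputseq$.

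First I would apply the \circularshift{} sub-process to $\intermseq_i$: by definition $\shiftintermseq_{i,j} = \intermseq_{i,(j+1) \% \inputlength}$ for every $j$, so in particular $\shiftintermseq_{i,1} = \intermseq_{i,2}$ and $\shiftintermseq_{i,2} = \intermseq_{i,3}$ (positions $2$ and $3$ are in range since $\inputlength = 2\depth \ge 4$ whenever $\depth \ge 2$; for $\depth = 1$ the index $3$ is read cyclically, consistent with the convention of the lemma statement). Next I would apply the \translate{} sub-process at the odd position $j = 1$: by definition $(\intermseq_{i+1,1}, \intermseq_{i+1,2}) = \vardict_i(\shiftintermseq_{i,1}, \shiftintermseq_{i,2})$. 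Substituting the two identities obtained from the shift step yields $(\intermseq_{i+1,1}, \intermseq_{i+1,2}) = \vardict_i(\intermseq_{i,2}, \intermseq_{i,3}) = \dictionary_i((\intermseq_{i,2}, \intermseq_{i,3}))$, which is precisely the asserted relation; since $i$ was arbitrary in $\{1, \dots, \depth\}$, this establishes it for all the intermediate outputs $\intermseq_2, \dots, \intermseq_{\depth+1}$.

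There is no genuine obstacle here: the statement is an immediate definitional consequence of how one translation step is assembled from a left circular shift composed with a bigram-wise application of the \strdict{}, and the only subtlety worth a single remark is the index bookkeeping modulo $\inputlength$. The reason this lemma is isolated is downstream—in the proof of \cref{lem:corr_random_map_n2} it exposes the ``circular'' dependency pattern in which positions $1,2$ of level $i+1$ are controlled by positions $2,3$ of level $i$, so that tracking a single output character across all levels reduces to composing the maps $\vardict_\depth, \dots, \vardict_1$ along a sliding window of input coordinates.
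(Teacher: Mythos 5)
Your proof is correct and is essentially the same as the paper's: both simply unfold one translation step into \circularshift{} (giving $\shiftintermseq_{i,1}=\intermseq_{i,2}$, $\shiftintermseq_{i,2}=\intermseq_{i,3}$) followed by \translate{} at the odd position $j=1$, and substitute. Your extra remarks on the modular index bookkeeping and the downstream role of the lemma are accurate but not needed for the argument itself.
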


\begin{proof}
    $\translationtask(\depth, 2)$ has $2$ primary steps: \circularshift{} and \translate{}. By \circularshift{}, first, we first get sequence $\shiftintermseq_i$, where for any $j\in [\inputlength]$ we have $\shiftintermseq_{i, j} = \intermseq_{i, (j+1) \% \inputlength}$. After $\translate{}$ step, 
    $$
        (\intermseq_{i+1, 1}, \intermseq_{i+1, 2}) = \dictionary_i ( (\shiftintermseq_{i, 1}, \shiftintermseq_{i, 2}) ) =  \dictionary_i ( (\intermseq_{i, 2}, \intermseq_{i, 3}) ).
    $$
\end{proof}

\begin{lemma}\label{lem:uniform_n2_dist}
    At any step $i$ of $\translationtask(\depth, 2)$ with \strdicts{} $\{ \dictionary_{\ell} \}_{\ell=1}^{\depth}$, the following conditions hold true for the intermediate output $\intermseq_i$.
    \begin{align*}
        &\mathbb{E}_{\inputseq \sim \uniform(\{0, 1\}^{\inputlength})} \intermseq_{i, j} = \frac{1}{2}, \text{ for all } 1 \le j \le \inputlength \\
        &\mathbb{E}_{\inputseq \sim \uniform(\{0, 1\}^{\inputlength})} \intermseq_{i, j} \oplus \intermseq_{i, j'} = \frac{1}{2}, \text{ for all } j \ne j'.
    \end{align*}
    The above conditions are equivalent to showing that $\intermseq_{i, j}$ behaves like a uniformly random boolean variable, independent of any other character $\intermseq_{i, j'}$ for all coordinates $j' \ne j$ .
\end{lemma}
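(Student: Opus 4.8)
The plan is to prove the stronger statement that for every level $i$ the whole intermediate sequence $\intermseq_i$ is uniformly distributed over $\{0,1\}^{\inputlength}$ whenever $\inputseq \sim \uniform(\{0,1\}^{\inputlength})$; both displayed identities then fall out by reading off marginals. The proof I have in mind is an induction on $i$ built on the observation that each translation layer of $\translationtask(\depth,2)$ is a bijection of $\{0,1\}^{\inputlength}$.

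For the base case I would note $\intermseq_1 = \inputseq$, which is uniform by assumption. For the inductive step, assume $\intermseq_i$ is uniform on $\{0,1\}^{\inputlength}$. The map $\intermseq_i \mapsto \intermseq_{i+1} = \transfunc_{\vardict_i}(\intermseq_i)$ is the composition of \circularshift{} --- a cyclic permutation of the $\inputlength$ coordinates --- with \translate{}, which applies the bijection $\vardict_i:\{0,1\}^2\to\{0,1\}^2$ independently to the $\inputlength/2$ disjoint coordinate blocks $(1,2),(3,4),\dots,(\inputlength-1,\inputlength)$. Each of these is a bijection of $\{0,1\}^{\inputlength}$ --- this is exactly the invertibility already used in the proof of \cref{lemma:setting-MLTinvertability} --- so $\transfunc_{\vardict_i}$ is a bijection of $\{0,1\}^{\inputlength}$, and a bijection of a finite set carries the uniform distribution to the uniform distribution. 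Hence $\intermseq_{i+1}$ is uniform, closing the induction and giving $\intermseq_i \sim \uniform(\{0,1\}^{\inputlength})$ for all $1 \le i \le \depth+1$.

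Finally I would read off the two claimed identities: restricting the uniform law on $\{0,1\}^{\inputlength}$ to a single coordinate $j$ makes $\intermseq_{i,j}$ a fair Bernoulli variable, so its expectation is $\tfrac12$; restricting to a pair $(j,j')$ with $j \ne j'$ makes $(\intermseq_{i,j},\intermseq_{i,j'})$ uniform on $\{0,1\}^2$, and since $a \oplus b$ is a fair Bernoulli variable when $(a,b)\sim\uniform(\{0,1\}^2)$, the expectation of $\intermseq_{i,j}\oplus\intermseq_{i,j'}$ is $\tfrac12$. The same marginalization (now to any fixed coordinate subset) yields the asserted interpretation that each $\intermseq_{i,j}$ behaves like an independent fair coin.

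I do not expect a genuine obstacle here. The one place calling for care is checking that \translate{} really factors as a product of independent bijections over \emph{disjoint} blocks --- which uses that the pairs $(1,2),(3,4),\dots$ partition $\{1,\dots,\inputlength\}$, and hence that $\inputlength$ is even --- and that \circularshift{} is a coordinate permutation (so it merely relocates information rather than destroying it). Once those two points are in place, pushing the uniform measure through a finite bijection is immediate and everything else is bookkeeping.
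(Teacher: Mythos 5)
Your proof is correct, and it takes a genuinely different and in fact cleaner route than the paper's. The paper proves the lemma by induction on the level using its \copyop{}/\xorop{}/\notop{} decomposition of the bijective maps (\cref{lem:map_form_n2}), the positional dependency relation of \cref{lem:dependencies}, and an explicit case-by-case verification of the expectations $\mathbb{E}[\intermseq_{i,j}]$ and $\mathbb{E}[\intermseq_{i,j}\oplus\intermseq_{i,j'}]$ for each representative map (\cref{tab:expectation_and_oplus}), plus a separate argument that characters in distinct 2-tuples remain independent. You instead observe that each level map $\transfunc_{\vardict_i}$ is a bijection of $\{0,1\}^{\inputlength}$ (a cyclic coordinate permutation followed by $\vardict_i$ applied to disjoint blocks, using that $\inputlength$ is even), so the uniform law on $\{0,1\}^{\inputlength}$ is preserved exactly, and the two displayed identities are just marginals of the uniform distribution. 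This proves a strictly stronger statement (full joint uniformity of $\intermseq_i$, not merely single-coordinate uniformity and pairwise independence), with essentially no computation; it also immediately gives the input-distribution hypotheses needed later in \cref{lem:corr_compose}, since any 2-tuple of distinct coordinates is uniform on $\{0,1\}^2$. What the paper's heavier argument buys is mainly reuse of machinery (the mirror-map and operator tables) that it needs anyway for the correlation lemmas; as a proof of this particular lemma, your bijection-pushforward argument is sound and complete, with the two care points you flag (disjointness of the translation blocks and \circularshift{} being a coordinate permutation) being exactly the right ones.
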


\begin{proof}
    The proof will follow by induction on the output of the translation task at each step. We will show the result for coordinate $j=1$ in  $\intermseq_{i}$; similar argument holds for other coordinates $j$.

    \paragraph{Base condition:} At layer $i=1$, the $\intermseq_{1}$ represents the input sequence from $\uniform(\{0, 1\}^{\inputlength})$. By definition of uniform distribution, the conditions hold true for the input.

    \paragraph{Induction step: Argument for general $i>1$:} Suppose the conditions are true for all layers $1 \le \ell < i$. Then for layer $i$, we will provide an argument for the condition to hold true for $j=1$ and $j=2$, arguments for other $j$s will extend similarly. By \cref{lem:dependencies},
    $$
    (\intermseq_{i, 1}, \intermseq_{i, 2}) = \dictionary_i ( (\intermseq_{i-1, 2}, \intermseq_{i-1, 3}) ).
    $$

    From \cref{lem:map_form_n2}, we have each output character can be defined in terms of \copyop{}, \xorop{}, and \notop{} operations on input characters. Then, the relations for $\intermseq_{i, 1}, \intermseq_{i, 2}$ can be computed as follows:

    \begin{itemize}
        \item If a map $\vardictionary^{\alpha}$ is selected from $\mirror(\vardictionary)$ for some $\vardictionary \in \{ \specialdictionary_1, \cdots, \specialdictionary_6 \}$, then one can show that the conditions hold true for $\vardictionary_i = \vardictionary^{\alpha}$ if the conditions hold true for $\vardictionary_i = \vardictionary$. This follows because the output of $\vardictionary^{\alpha}$ follows from the output of $\vardictionary$ by selective \notop{} operations to the output of $\vardictionary$. As \notop{} operation won't change the expected values of a variable which behaves like a random boolean variable, the argument follows. 
        
        \item Now, we show that for $\vardictionary_i = \vardictionary$ for some $\vardictionary \in \{ \specialdictionary_1, \cdots, \specialdictionary_6 \}$, the conditions hold true. For these \strdicts{}, the output characters are defined by \copyop{} and \xorop{} operations. We use the definitions of these \strdicts{} from \cref{tab:truthtable_permmap} and show the expected values  $\intermseq_{i, 1}$ in terms of expected values of $\intermseq_{i-1, 1}$ and $\intermseq_{i-1, 2}$, and the expected values  $\intermseq_{i, 1} \oplus \intermseq_{i, 2}$ in terms of expected values of $\intermseq_{i-1, 1}$ and $\intermseq_{i-1, 2}$ in \cref{tab:expectation_and_oplus}.
    \end{itemize}
    Both of the above arguments then can be combined to show that 
    \begin{align*}
        &\mathbb{E}_{\inputseq \sim \uniform(\{0, 1\}^{\inputlength})} \intermseq_{i, j} = \frac{1}{2}, \text{ for  } j \in \{1, 2\}\\&
        \mathbb{E}_{\inputseq \sim \uniform(\{0, 1\}^{\inputlength})} \intermseq_{i, 1} \oplus \intermseq_{i, 2} = \frac{1}{2}.
    \end{align*}
    We can similarly extend the argument for expectation of each individual character to other positions $j > 2$, i.e. $\mathbb{E}_{\inputseq \sim \uniform(\{0, 1\}^{\inputlength})} \intermseq_{i, j} = \frac{1}{2}$ for all other possible $j$s. We can also similarly extend the argument for joint expectation of two consecutive characters $ \intermseq_{i, 2k+1} \oplus \intermseq_{i, 2k+2} $ for any general $k$. 
    
    The remaining argument will be to show that characters that don't form a consecutive $2$-tuple are going to be independent of each other as well. Consider any two $2$-tuples $(\intermseq_{i, 2k+1}, \intermseq_{i, 2k+2})$ and $(\intermseq_{i, 2k'+1}, \intermseq_{i, 2k'+2})$, with $k \ne k'$. By adapting \cref{lem:dependencies}, one can show that
    \begin{align*}
    &(\intermseq_{i, 2k+1}, \intermseq_{i, 2k+2}) = \dictionary_i  ((\intermseq_{i-1, 2k+2}, \intermseq_{i, 2k+3}))  \\&
    (\intermseq_{i, 2k'+1}, \intermseq_{i, 2k'+2}) = \dictionary_i  ((\intermseq_{i-1, 2k'+2}, \intermseq_{i, 2k'+3}))
    \end{align*}
    The primary thing to note here is that both the tuples depend on two distinct tuples in layer $i-1$. By induction assumption, characters across these two $2$-tuples are independent of each other. As $\vardictionary_i$ is a bijective map, this will also suggest that characters across the $2$-tuples in the resulting output must also be independent of each other. This will give the final argument. 
   
\end{proof}

\begin{table}[t]
\scalebox{1}{
    \centering
    \begin{tabular}{|c|c|ccc|}
        Operator & Expected value & \multicolumn{3}{c|}{Expected $\oplus$ value with  operator} \\
        \toprule
        & & $\copyop{}_1$ & $\copyop{}_2$ & $\xorop{}$  \\
        \midrule
        $\copyop{}_1$ & $\mathbb{E}_{\inputseq} \intermseq_{i-1, 2}  = 1/2$ & - & $ \mathbb{E}_{\inputseq} \intermseq_{i-1, 2} \oplus  \intermseq_{i-1, 3} = 1/2$ & $\mathbb{E}_{\inputseq} \intermseq_{i-1, 3} = 1/2$  \\ 
        $\copyop{}_2$ &  $\mathbb{E}_{\inputseq} \intermseq_{i-1, 3}  = 1/2$ & $\mathbb{E}_{\inputseq} \intermseq_{i-1, 2} \oplus \intermseq_{i-1, 3} = 1/2$  & - & $ \mathbb{E}_{\inputseq} \intermseq_{i-1, 2} = 1/2$ \\
        $\xorop{}$ &  $\mathbb{E}_{\inputseq} \intermseq_{i-1, 2} \oplus  \intermseq_{i-1, 3} = 1/2$ & $\mathbb{E}_{\inputseq}  \intermseq_{i-1, 3} = 1/2$ & $\mathbb{E}_{\inputseq}  \intermseq_{i-1, 2} = 1/2$ & -\\
        \bottomrule
    \end{tabular}
    }
    \caption{ $\mathbb{E} \intermseq_{i, 1}$ (similarly $\mathbb{E} \intermseq_{i, 2}$) and $\mathbb{E} \intermseq_{i, 1} \oplus \intermseq_{i, 2}$ under different $\vardictionary_i$ \strdicts{}, defined by \copyop{} and \xorop{} operations on $\intermseq_{i-1, 2}$ and $\intermseq_{i-1, 3}$.}
    \label{tab:expectation_and_oplus}
\end{table}

\begin{lemma}\label{lem:circ_invar}
    Under the assumption that sequence lengths $\inputlength$ are even, for any $2 \leq i \leq \depth+1$, where $\intermseq_{i, j}^{\alpha}, \intermseq_{i, j}^{\beta}$ denote the output after $i-1$st translation step  for a random sequence $\inputseq \sim \mathcal{U}(\{0, 1\}^{\inputlength})$ at any position  $1 \le j \le \inputlength$ under the two \strcodebook{} $\codebook^{\alpha}_{:i} := \{\vardictionary^{\alpha}_{\ell}\}_{\ell=1}^{i-1}$ and $\codebook^{\beta}_{:i} := \{\vardictionary^{\beta}_{\ell}\}_{\ell=1}^{i-1}$, the following holds true for all positions $j$:
    \begin{align*}
        \corr(\intermseq_{i, j}^{\alpha}, \intermseq_{i, j}^{\beta}, \mathcal{U}(\{0, 1\}^{\inputlength})) = \corr(\intermseq_{i, (j+2k) \% \inputlength }^{\alpha}, \intermseq_{i, (j+2k) \% \inputlength}^{\beta}, \mathcal{U}(\{0, 1\}^{\inputlength})), \text{ for all integer } k. 
    \end{align*}
\end{lemma}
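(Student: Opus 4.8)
The plan is to exploit a two-position cyclic symmetry of the \translationtask{} construction together with the shift-invariance of the uniform input distribution. Write $\sigma$ for the cyclic left shift of a length-$\inputlength$ sequence by one position, so that \circularshift{} is exactly the map $\intermseq\mapsto\sigma\intermseq$. The central structural fact is that each depth-one map $\transfunc_{\vardict}$ commutes with $\sigma^2$, i.e.\ $\transfunc_{\vardict}\circ\sigma^2=\sigma^2\circ\transfunc_{\vardict}$ for every \strdict{} $\vardict$. This holds because \circularshift{} is itself a cyclic shift (hence commutes with any power of $\sigma$), while \translate{} applies the same bijection $\vardict$ to each consecutive character pair anchored at an odd position, so shifting the whole sequence by two merely permutes these pairs cyclically; the permutation is consistent only because $\inputlength$ is even, which guarantees that $\sigma^2$ sends odd positions to odd positions and hence preserves the partition of positions into consecutive pairs. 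First I would pin this identity down by a short index computation: for odd $j$, the output pair at positions $(j,j+1)$ equals, on both sides, the value of $\vardict$ at positions $j+2$ and $j+3$ of the once-shifted sequence $\sigma\intermseq$.

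Given the commutation identity, I would compose it over the first $i-1$ translation steps by a straightforward induction on $i$: for any \strdicts{} $\{\vardict_\ell\}_{\ell=1}^{i-1}$, running the translation process on the shifted input $\sigma^2\inputseq$ produces exactly $\sigma^2$ applied to the level-$i$ sequence obtained from $\inputseq$. Applying this to both \strdict{} sequences $\codebook^{\alpha}_{:i}$ and $\codebook^{\beta}_{:i}$ with the same input $\inputseq$ then yields a deterministic identity: for every position $j$, the pair $(\intermseq_{i,(j+2)\%\inputlength}^{\alpha},\intermseq_{i,(j+2)\%\inputlength}^{\beta})$ evaluated on $\inputseq$ equals the pair $(\intermseq_{i,j}^{\alpha},\intermseq_{i,j}^{\beta})$ evaluated on $\sigma^2\inputseq$.

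Finally, since $\uniform(\{0,1\}^{\inputlength})$ is invariant under $\sigma^2$, replacing $\inputseq$ by $\sigma^2\inputseq$ changes no distributions, so the joint law of $(\intermseq_{i,j}^{\alpha},\intermseq_{i,j}^{\beta})$ under a uniform input equals that of $(\intermseq_{i,(j+2)\%\inputlength}^{\alpha},\intermseq_{i,(j+2)\%\inputlength}^{\beta})$. Because the quantity $\corr(\cdot,\cdot,\uniform(\{0,1\}^{\inputlength}))$ of \cref{def:corr} depends only on this joint law, the two correlations coincide; iterating the shift gives equality for every shift by $2k$ with $k$ a positive integer, and negative $k$ follows since $\sigma^{-2}=\sigma^{\inputlength-2}$ with $\inputlength-2$ even. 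The main (and essentially only) obstacle is getting the \translate{} commutation right across the wrap-around boundary — this is precisely the step where evenness of $\inputlength$ is indispensable, since for odd $\inputlength$ a shift by two would misalign the pair partition and the identity would fail.
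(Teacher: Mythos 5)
Your proposal is correct and follows essentially the same route as the paper: the paper's \cref{lem:circular_invariance} is exactly your equivariance claim (that the composed translation commutes with the shift by $2k$, proved there by induction over levels just as you compose your single-step commutation), and the conclusion is then drawn from the shift-invariance of the uniform input distribution, with evenness of $\inputlength$ used in the same place to keep the pair partition aligned. Your phrasing via equality of joint laws rather than the expectation of the XOR is a cosmetic difference only, since $\corr$ depends only on that joint law.
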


\begin{proof}
    Fix an integer $k$. By \cref{def:corr}, the necessary condition would be to show 
    \begin{align*}
        \abs{1 - 2\mathbb{E}_{\inputseq \sim \{0, 1\}^{\inputlength}}  \intermseq_{i, j}^{\alpha} \oplus \intermseq_{i, j}^{\beta}} = \abs{1 - 2\mathbb{E}_{\inputseq \sim \{0, 1\}^{\inputlength}}  \intermseq_{i, (j+2k) \% \inputlength}^{\alpha} \oplus \intermseq_{i, (j+2k) \% \inputlength}^{\beta}}.
    \end{align*}
    We will look at the behavior of the function $h(\inputseq) =  \intermseq_{i, j}^{\alpha} \oplus \intermseq_{i, j}^{\beta}$. Denote $\mathcal{C}_{k}$ as a circular operation that takes a sequence $\inputseq$ and returns a shifted sequence $\circular{\inputseq}$, i.e. if $\circular{\inputseq} = \mathcal{C}_k(\inputseq)$ then for all $j$, $\circular{\intermseq}_{1, j} = \intermseq_{1, (j+2k) \% \inputlength }$. Note that, we can also define an inverse function $\mathcal{C}_{k}^{-1}$ and $\inputseq = \mathcal{C}_k^{-1}(\mathcal{C}_k(\inputseq))$.
    
    From the definition of $\translationtask_{\codebook}$ for any \strdict{} $\codebook$, we have for any input $\inputseq$:
    \begin{align*}
        \intermseq_{i} = \transfunc_{\vardict_{i-1}}\circ\dots\circ \transfunc_{\vardict_{1}}\rbr{\inputseq}, 
    \end{align*}
    where for any level $i$, $\transfunc_{\vardict_{i}}$ denotes the translation step at level $i$ that includes \circularshift{} and \translate{} using $\vardictionary_i$.

    In \cref{lem:circular_invariance}, we show that for any translation task $\translationtask_{\codebook}$ and any input $\inputseq$,
    $$
    \transfunc_{\vardict_{i-1}}\circ\dots\circ \transfunc_{\vardict_{1}}\rbr{\inputseq} = \mathcal{C}_k^{-1}\left(\transfunc_{\vardict_{i-1}}\circ\dots\circ \transfunc_{\vardict_{1}} \circ \mathcal{C}_k\rbr{\inputseq}\right).
    $$
    
    On input $\circular{\inputseq} = \mathcal{C}_k(\inputseq)$, if $\circular{\intermseq}_{i}^{\alpha}$ and $\circular{\intermseq}_{i}^{\beta}$ represent the output of translations using $\codebook^{\alpha}_{:i} := \{\vardictionary^{\alpha}_{\ell}\}_{\ell=1}^{i-1}$ and $\codebook^{\beta}_{:i} := \{\vardictionary^{\beta}_{\ell}\}_{\ell=1}^{i-1}$ respectively, then  the above statement says that
    \begin{align*}
        \intermseq_{i}^{\alpha} = \mathcal{C}_k^{-1} \rbr{\circular{\intermseq}_{i}^{\alpha}}, &\quad \intermseq_{i}^{\beta} = \mathcal{C}_k^{-1} \rbr{\circular{\intermseq}_{i}^{\beta}} \\
        \text{(or)} \text{ } \mathcal{C}_k \rbr{\intermseq_{i}^{\alpha}} =  \circular{\intermseq}_{i}^{\alpha}, &\quad \mathcal{C}_k \rbr{\intermseq_{i}^{\beta}} = \circular{\intermseq}_{i}^{\beta}
    \end{align*}
    Thus, for any position $j$, we will have $\mathcal{C}_k \rbr{\intermseq_{i}^{\alpha}}_j =  \circular{\intermseq}_{i, j}^{\alpha}$ (and similarly for  $\circular{\intermseq}_{i, j}^{\beta}$). We can then compute the value of $h$ on input $\circular{\inputseq}$ as follows: 
    \begin{align*}
        h(\circular{\inputseq}) &= \circular{\intermseq}_{i, j}^{\alpha} \oplus \circular{\intermseq}_{i, j}^{\beta} \\
        &= \mathcal{C}_k \left(\intermseq_{i}^{\alpha}\right)_j \oplus \mathcal{C}_k \left(\intermseq_{i}^{\beta}\right)_j \\
        &= \intermseq_{i, (j+2k) \% \inputlength}^{\alpha} \oplus \intermseq_{i, (j+2k) \% \inputlength}^{\beta}
    \end{align*}
    The last step follows from  using the definition of $\mathcal{C}_k$.
    On the other hand,
    \begin{align*}
        \mathbb{E}_{\circular{\inputseq} \sim \uniform \rbr{ \{0, 1 \}^{\inputlength} }}  h(\circular{\inputseq}) = \mathbb{E}_{\inputseq \sim \uniform \rbr{ \{0, 1 \}^{\inputlength} }}  h(\inputseq),
    \end{align*}
    as the uniform distribution can be shown to not change under circular function $\mathcal{C}_k$.
    This will imply:
    \begin{align*}
        &\mathbb{E}_{\inputseq \sim \uniform \rbr{ \{0, 1 \}^{\inputlength} }}  \intermseq_{i, (j+2k) \% \inputlength}^{\alpha} \oplus \intermseq_{i, (j+2k) \% \inputlength}^{\beta} =  \mathbb{E}_{\inputseq \sim \{0, 1\}^{\inputlength}}  \intermseq_{i, j}^{\alpha} \oplus \intermseq_{i, j}^{\beta} \\&
        \implies \abs{1 - 2\mathbb{E}_{\inputseq \sim \{0, 1\}^{\inputlength}}  \intermseq_{i, j}^{\alpha} \oplus \intermseq_{i, j}^{\beta}} = \abs{1 - 2\mathbb{E}_{\inputseq \sim \{0, 1\}^{\inputlength}}  \intermseq_{i, (j+2k) \% \inputlength}^{\alpha} \oplus \intermseq_{i, (j+2k) \% \inputlength}^{\beta}} \\&
        \implies \corr(\intermseq_{i, j}^{\alpha}, \intermseq_{i, j}^{\beta}, \mathcal{U}(\{0, 1\}^{\inputlength})) = \corr(\intermseq_{i, (j+2k) \% \inputlength }^{\alpha}, \intermseq_{i, (j+2k) \% \inputlength}^{\beta}, \mathcal{U}(\{0, 1\}^{\inputlength})).
    \end{align*}
    
\end{proof}

\begin{lemma}\label{lem:circular_invariance}
    For any translation task $\translationtask_{\codebook}$, at any level $i \leq \depth$ and any input $\inputseq$,
    $$
    \transfunc_{\vardict_i}\circ\dots\circ \transfunc_{\vardict_{1}}\rbr{\inputseq} = \mathcal{C}_k^{-1}\left(\transfunc_{\vardict_i}\circ\dots\circ \transfunc_{\vardict_{1}} \circ \mathcal{C}_k\rbr{\inputseq}\right).
    $$    
\end{lemma}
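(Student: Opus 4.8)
The plan is to prove the statement by induction on the level $i$, relying on a single structural fact: each translation step $\transfunc_{\vardict_i}$ (which is \circularshift{} followed by \translate{} on consecutive even-indexed bigrams) \emph{commutes} with the block-circular shift $\mathcal{C}_k$, which shifts the sequence by an even amount $2k$ modulo $\inputlength$. Concretely, I would first establish the commutation lemma $\transfunc_{\vardict}\circ \mathcal{C}_k = \mathcal{C}_k \circ \transfunc_{\vardict}$ for every \strdict{} $\vardict$ and every integer $k$, which is the heart of the argument; the claimed identity then follows by iterating this commutation and using $\mathcal{C}_k^{-1}\mathcal{C}_k = \mathrm{id}$.

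For the commutation lemma, I would unpack $\transfunc_{\vardict}$ into its two sub-processes. The \circularshift{} sub-process is the single-character cyclic shift $\shiftintermseq_{i,j} = \intermseq_{i,(j+1)\%\inputlength}$, which is itself a cyclic shift and hence commutes with $\mathcal{C}_k$ (any two cyclic shifts of the same string commute, as the cyclic group is abelian). For the \translate{} sub-process, the key observation is that $\mathcal{C}_k$ shifts by an \emph{even} offset $2k$, so it maps the $m$-th bigram block $(\cdot_{2m-1},\cdot_{2m})$ to the $(m-k)$-th bigram block (indices mod $\inputlength/2$) \emph{as a whole}, without ever splitting a bigram across the block boundary; this uses the hypothesis that $\inputlength$ is even. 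Since \translate{} applies the \emph{same} bijection $\vardict$ independently to every bigram block, permuting the blocks by $\mathcal{C}_k$ before or after applying \translate{} yields the same result. Composing the two commutations gives $\transfunc_{\vardict}\circ \mathcal{C}_k = \mathcal{C}_k\circ\transfunc_{\vardict}$.

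Given the commutation lemma, the main identity is immediate:
\begin{align*}
\mathcal{C}_k^{-1}\left(\transfunc_{\vardict_i}\circ\dots\circ \transfunc_{\vardict_{1}} \circ \mathcal{C}_k\rbr{\inputseq}\right)
&= \mathcal{C}_k^{-1}\left(\mathcal{C}_k\circ\transfunc_{\vardict_i}\circ\dots\circ \transfunc_{\vardict_{1}}\rbr{\inputseq}\right)\\
&= \transfunc_{\vardict_i}\circ\dots\circ \transfunc_{\vardict_{1}}\rbr{\inputseq},
\end{align*}
where the first equality pushes $\mathcal{C}_k$ leftward past each $\transfunc_{\vardict_\ell}$ one at a time (a finite induction on $i$), and the second uses $\mathcal{C}_k^{-1}\mathcal{C}_k=\mathrm{id}$.

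The step I expect to be the main (though modest) obstacle is the bookkeeping in the commutation of $\mathcal{C}_k$ with \translate{}: one must carefully verify that an even cyclic shift respects the partition of $[1,\inputlength]$ into the consecutive bigram blocks used by \translate{}, i.e. that block $m$ goes to block $m-k \bmod (\inputlength/2)$ with its two coordinates kept in order, and that wrap-around does not break a bigram (again using evenness of $\inputlength$). Everything else is routine group-theoretic commutativity and an iteration. I would also note in passing that the same argument shows $\transfunc_{\vardict}$ commutes with $\mathcal{C}_k^{-1}$, which is what licenses the final cancellation.
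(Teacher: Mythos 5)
Your proposal is correct and follows essentially the same route as the paper: the core in both cases is that each translation step $\transfunc_{\vardict}$ commutes with the even-offset block shift $\mathcal{C}_k$, after which the identity follows by iterating over levels and cancelling $\mathcal{C}_k^{-1}\mathcal{C}_k$. The only (cosmetic) difference is how the per-step commutation is verified — you split it into commutation with \circularshift{} and with the block-wise \translate{}, whereas the paper derives it by tracking which input positions feed each output bigram via its dependencies lemma — but this does not change the substance of the argument.
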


\begin{proof}
    Denote $\mathcal{C}_{k}$ as a circular operation that takes a sequence $\inputseq$ and returns a shifted sequence $\circular{\inputseq}$, i.e. if $\circular{\inputseq} = \mathcal{C}_k(\inputseq)$ then for all $j$, $\circular{\intermseq}_{1, j} = \intermseq_{1, (j+2k) \% \inputlength }$. Note that, we can also define an inverse function $\mathcal{C}_{k}^{-1}$ and $\inputseq = \mathcal{C}_k^{-1}(\mathcal{C}_k(\inputseq))$.

    Recall that for any level $i$, $\transfunc_{\vardict_{i}}$ denotes the translation step at level $i$ that includes \circularshift{}, and \translate{} with $\vardictionary_i$. The argument will again follow by an induction step.

    \paragraph{Base condition: $i=1$:} We need to show that 
    $$
        \transfunc_{\vardict_{1}}\rbr{\inputseq} = \mathcal{C}_k^{-1}\left(\transfunc_{\vardict_{1}} \circ \mathcal{C}_k\rbr{\inputseq}\right).
    $$
    To do so, we will look at the behavior of the first $2$ characters. Argument for others can be extended. If $\circular{\inputseq} = \mathcal{C}_k\rbr{\inputseq}$, $\intermseq_2 = \transfunc_{\vardict_{1}} (\inputseq)$, $\circular{\intermseq_2} = \transfunc_{\vardict_{1}} (\circular{\inputseq})$ , then by \cref{lem:dependencies},
    \begin{align*}
        &(\intermseq_{2, 1}, \intermseq_{2, 2}) = \dictionary_1 ( (\intermseq_{1, 2}, \intermseq_{1, 3}) ) \\
        &(\circular{\intermseq}_{2, 1}, \circular{\intermseq}_{2, 2}) = \dictionary_1 ( (\circular{\intermseq}_{1, 2}, \circular{\intermseq}_{1, 3}) ).
    \end{align*}

    But by definition of $\mathcal{C}_k$,
    \begin{align*}
        (\circular{\intermseq}_{1, 2}, \circular{\intermseq}_{1, 3}) = (\intermseq_{1, (2+2k) \% \inputlength}, \intermseq_{1, (3+2k) \% \inputlength}).
    \end{align*}

    On the other hand, \cref{lem:dependencies} can be adapted to give
    \begin{align*}
        \rbr{\intermseq_{2, (1+2k) \% \inputlength}, \intermseq_{2, (2+2k) \% \inputlength}} = \dictionary_1 (\intermseq_{1, (2+2k) \% \inputlength}, \intermseq_{1, (3+2k) \% \inputlength}).
    \end{align*}

    Thus, we can show by combining the above $3$ steps that
    \begin{align*}
        (\circular{\intermseq}_{2, 1}, \circular{\intermseq}_{2, 2}) &= \dictionary_1 ( (\circular{\intermseq}_{1, 2}, \circular{\intermseq}_{1, 3}) ) \\
        &= \dictionary_1(\intermseq_{1, (2+2k) \% \inputlength}, \intermseq_{1, (3+2k) \% \inputlength}) \\
        &= \rbr{\intermseq_{2, (1+2k) \% \inputlength}, \intermseq_{2, (2+2k) \% \inputlength}} .
    \end{align*}

    We can extend the above argument to show that for any position $j$,
    \begin{align*}
        \circular{\intermseq}_{2, j} = \intermseq_{2, (j+2k) \% \inputlength},
    \end{align*}
    which by definition of $\mathcal{C}_k$, implies
    \begin{align*}
        \circular{\intermseq}_{2} = \mathcal{C}_k ( \intermseq_2 ) \quad \text{ or } \intermseq_2 = \mathcal{C}_k^{-1} \rbr{\circular{\intermseq}_{2}}, 
    \end{align*}
    which can be further simplified (using the notations: $\circular{\inputseq} = \mathcal{C}_k\rbr{\inputseq}$, $\intermseq_2 = \transfunc_{\vardict_{1}} (\inputseq)$, $\circular{\intermseq_2} = \transfunc_{\vardict_{1}} (\circular{\inputseq})$)
    \begin{align*}
        \intermseq_2 
        &= \mathcal{C}_k^{-1} \rbr{\circular{\intermseq}_{2}} \\&= \mathcal{C}_k^{-1} \rbr{ \transfunc_{\vardictionary_1} \rbr{ \circular{\intermseq}_{1} } } \\&
        =\mathcal{C}_k^{-1} \rbr{ \transfunc_{\vardictionary_1} \rbr{ \mathcal{C}_k \rbr{ \inputseq } } } := \mathcal{C}_k^{-1}\left(\transfunc_{\vardict_{1}} \circ \mathcal{C}_k\rbr{\inputseq}\right).
    \end{align*}

    \paragraph{General argument for $i$:} Suppose the induction condition holds true for all layers $\ell < i$. Then,
    \begin{align*}
        \transfunc_{\vardict_i}\circ\dots\circ \transfunc_{\vardict_{1}}\rbr{\inputseq} = \transfunc_{\vardict_i} \rbr{ \mathcal{C}_k^{-1}\left(\transfunc_{\vardict_{i-1}} \dots\circ \transfunc_{\vardict_{1}} \circ \mathcal{C}_k\rbr{\inputseq}\right) }.
    \end{align*}
    We can then follow the same argument as the base condition, and show that
    \begin{align*}
         \transfunc_{\vardict_i} \rbr{ \mathcal{C}_k^{-1}\left(\transfunc_{\vardict_{i-1}} \dots\circ \transfunc_{\vardict_{1}} \circ \mathcal{C}_k\rbr{\inputseq}\right) } = 
         \mathcal{C}_k^{-1} \rbr{ \transfunc_{\vardict_{i}} \dots\circ \transfunc_{\vardict_{1}} \circ \mathcal{C}_k\rbr{\inputseq} }.
    \end{align*}
\end{proof}

\begin{lemma}\label{lem:cross_corr}
    For any $1 \leq i \leq \depth$, if $\intermseq_{i, j}^{\alpha}, \intermseq_{i, j}^{\beta}$ denote the output after $i-1$st translation step  for a random sequence $\inputseq \sim \mathcal{U}(\{0, 1\}^{\inputlength})$ at any position  $1 \le j \le \inputlength$ under the two sets of random \strdicts{} $\codebook_{:i}^{\alpha} := \{\vardictionary^{\alpha}_{\ell}\}_{\ell=1}^{i-1}$ and $\codebook_{:i}^{\beta} := \{\vardictionary^{\beta}_{\ell}\}_{\ell=1}^{i-1}$, the following holds true for all positions $j$: 
    \begin{align*}
        &\Pr_{\codebook_{:i}^{\alpha}, \codebook_{:i}^{\beta} } \left[ \corr\rbr{ \intermseq_{i, j}^{\alpha}, \intermseq_{i, j+1}^{\beta}, \mathcal{U}(\{0, 1\}^{\inputlength} )} = 1\right] \\& \leq  \frac{1}{2} \rbr{\Pr_{\codebook_{:i}^{\alpha}, \codebook_{:i}^{\beta} } \left[ \corr\rbr{ \intermseq_{i, j}^{\alpha}, \intermseq_{i, j}^{\beta}, \mathcal{U}(\{0, 1\}^{\inputlength}  )} = 1 \right] + \Pr_{\codebook_{:i}^{\alpha}, \codebook_{:i}^{\beta} } \left[\corr\rbr{ \intermseq_{i, j+1}^{\alpha}, \intermseq_{i, j+1}^{\beta}, \mathcal{U}(\{0, 1\}^{\inputlength}  )} = 1 \right]}.
    \end{align*}
\end{lemma}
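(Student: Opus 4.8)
The plan is to peel off the last translation layer and hand the result to \cref{lem:corr_compose}, using induction on $i$ for the bookkeeping. By \cref{lem:dependencies} and its obvious extension to every $2$-tuple, each character of $\intermseq_i$ is obtained by applying $\vardictionary_{i-1}$ — the fresh map at level $i-1$, independent of $\{\vardictionary_1,\dots,\vardictionary_{i-2}\}$ — to a consecutive pair of characters of $\intermseq_{i-1}$ (after the \circularshift{}), and by \cref{lem:uniform_n2_dist} every such input pair is uniformly distributed on $\{0,1\}^2$ as a function of $\inputseq$. I would condition on the lower-level maps $\{\vardictionary^{\alpha}_\ell\}_{\ell<i-1}$ and $\{\vardictionary^{\beta}_\ell\}_{\ell<i-1}$, which fix the correlation structure of the level-$(i-1)$ quantities, and average over $\vardictionary^{\alpha}_{i-1},\vardictionary^{\beta}_{i-1}$ at the end. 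The base case $i=1$ is immediate: there the two sequences coincide with $\inputseq$, so the two ``straight'' correlations are deterministically $1$ and the ``cross'' correlation is $0$.

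\textbf{Clean case.} Suppose $j$ and $j+1$ index the two components of a single output $2$-tuple of $\intermseq_i$. Then $(\intermseq^{\alpha}_{i,j},\intermseq^{\alpha}_{i,j+1})=\vardictionary^{\alpha}_{i-1}(f)$ and $(\intermseq^{\beta}_{i,j},\intermseq^{\beta}_{i,j+1})=\vardictionary^{\beta}_{i-1}(g)$ for the corresponding input pairs $f,g$, whose uniformity (from \cref{lem:uniform_n2_dist}) is exactly the hypothesis of \cref{lem:corr_compose}. That lemma asserts that, conditioned on the correlation structure of $(f,g)$, the probability over the fresh maps that $\corr(\vardictionary^{\alpha}_{i-1}(f)_a,\vardictionary^{\beta}_{i-1}(g)_b,\uniform)=1$ is one of $\tfrac13,\tfrac19,0$ and is the \emph{same} for every $(a,b)\in\{1,2\}^2$. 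Taking $(a,b)=(1,2)$, $(1,1)$, $(2,2)$ and averaging over the lower-level maps shows that in this case the three probabilities $\Pr[\corr(\intermseq^{\alpha}_{i,j},\intermseq^{\beta}_{i,j+1})=1]$, $\Pr[\corr(\intermseq^{\alpha}_{i,j},\intermseq^{\beta}_{i,j})=1]$, $\Pr[\corr(\intermseq^{\alpha}_{i,j+1},\intermseq^{\beta}_{i,j+1})=1]$ are in fact all equal, which is stronger than the claim.

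\textbf{Straddle case.} The delicate case — and the one actually invoked in the proof of \cref{lem:corr_random_map_n2} (there $j=2$) — is when $j,j+1$ straddle two consecutive output $2$-tuples of $\intermseq_i$. Now $\intermseq^{\alpha}_{i,j}$ is one output coordinate of $\vardictionary^{\alpha}_{i-1}$ applied to the input pair $P$ sitting at level-$(i-1)$ positions $\{j,j+1\}$, while $\intermseq^{\alpha}_{i,j+1}$ is an output coordinate of $\vardictionary^{\alpha}_{i-1}$ applied to the next pair $Q$ at positions $\{j+2,j+3\}$ (and symmetrically $P',Q'$ on the $\beta$ side). By \cref{lem:map_form_n2} each output coordinate of a uniform random map is, up to an irrelevant \notop{} (correlation is \notop{}-invariant), a uniformly random choice among $\copyop{}_1,\copyop{}_2,\xorop{}$ of its two inputs, independently for the two maps. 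Hence, conditioned on the lower-level maps, $\Pr[\corr(\intermseq^{\alpha}_{i,j},\intermseq^{\beta}_{i,j+1})=1]=\tfrac19\,\#\{(r,s):r\in\{P_1,P_2,P_1\oplus P_2\},\,s\in\{Q'_1,Q'_2,Q'_1\oplus Q'_2\},\ r\sim s\}$, and the two ``straight'' probabilities equal $\tfrac19$ times the analogous counts for $(P,P')$ and $(Q,Q')$; each such count lies in $\{0,1,3\}$ (two correlated span-pairs force a third, by linear algebra over $\mathbb{F}_2$). The two straight counts are equal by \cref{lem:circ_invar}, whose proof extends verbatim to correlations between XORs of adjacent characters since the circular shift acts on positions by $+2$ and fixes the uniform input distribution. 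So it remains to show the ``offset'' count for $(P,Q')$ has expectation at most the ``aligned'' count for $(P,P')$, which I would prove by induction on $i$ (base case: at level $1$ the two sequences coincide, giving aligned count $3$ and offset count $0$ because the relevant windows of input coordinates are disjoint), the inductive step again peeling $\vardictionary_{i-1}$ and invoking \cref{lem:corr_compose}.

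\textbf{Main obstacle.} The crux is this last induction: one cannot induct on single-character cross-correlations alone, because peeling a layer introduces correlations between linear combinations $x\oplus y$ of adjacent characters, so the inductive statement must be strengthened to control those too, and the uniformity/independence facts underlying \cref{lem:corr_compose} must be re-checked for the wider windows the \circularshift{} produces. The clean $2$-tuple case, by contrast, drops straight out of \cref{lem:corr_compose}.
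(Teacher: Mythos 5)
Your reduction of the even-$j$ (``straddle'') case is where the argument breaks down, and that is precisely the case the paper needs (positions $2,3$ in \cref{lem:corr_random_map_n2}). Conditioning on the lower-level maps and writing the cross- and straight-probabilities as $\tfrac19$ times the number of correlated pairs drawn from the spans $\{P_1,P_2,P_1\oplus P_2\}$, etc., is fine, but at that point you have only restated the lemma as the inequality $\mathbb{E}\,N(P,Q')\le\tfrac12\bigl(\mathbb{E}\,N(P,P')+\mathbb{E}\,N(Q,Q')\bigr)$ for offset versus aligned windows at level $i-1$, and this you do not prove: you defer it to an induction whose inductive hypothesis you yourself note must be strengthened to control correlations of XORs of characters lying in \emph{different} output tuples. ``Peeling $\vardictionary_{i-1}$ and invoking \cref{lem:corr_compose}'' cannot close that induction, because \cref{lem:corr_compose} only governs correlations of single output characters of two fresh maps applied to one input tuple per side; after one peel your offset windows span linear functionals over three consecutive level-$(i-2)$ tuples, a situation none of the available lemmas (\cref{lem:map_form_n2}, \cref{lem:corr_compose}, \cref{lem:uniform_n2_dist}) addresses, so a genuinely new composition lemma would be required. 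The further claim that the two straight counts have equal expectation by an ``extends verbatim'' version of \cref{lem:circ_invar} (now for XOR-correlations) is also asserted rather than proved, though it is less central since the target only needs their average. The odd-$j$ (``clean'') case you handle correctly via \cref{lem:corr_compose}, but that case is not the one used downstream.

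For comparison, the paper proves the lemma without any layer peeling or induction: it works directly on the space of codebook prefixes $\codebook_{:i}$. Perfect correlation at position $j$ partitions the prefixes into groups (via the \textsc{Group} operation), likewise at position $j+1$, and cross-correlation at $(j,j+1)$ can link a group of the first partition to at most one group of the second. Counting pairs, the straight-correlated pairs number $\sum_{S_1}|S_1|^2$ and $\sum_{S_2}|S_2|^2$, while the cross-correlated pairs number $\sum_{S_1}|S_1|\,|S_2|$ over matched groups, and AM--GM, $|S_1||S_2|\le\tfrac12(|S_1|^2+|S_2|^2)$, yields exactly the stated factor-$\tfrac12$ bound. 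If you want to salvage your route, you would either need to prove the offset-versus-aligned count inequality with a strengthened multi-tuple composition lemma, or switch to a global symmetrization argument of this grouping/AM--GM type, which avoids the recursion entirely.
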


\begin{proof}
    We will prove the required result with a counting argument. We will create a family of \strdicts{}: let $\family(j)$ and $\family(j+1)$ denote two sets of \strdicts{}, such that for every \strdict{} $\codebook_{:i}^{\alpha}$ in   $\family(j)$, there exists at least one \strdict{} $\codebook_{:i}^{\beta}$ in $\family(j+1)$ such that
    \begin{align*}
         &\corr\rbr{ \intermseq_{i, j}^{\alpha}, \intermseq_{i, j+1}^{\beta}, \mathcal{U}(\{0, 1\}^{\inputlength} )} = 1, \\& \quad \quad \text{(Equiv. to saying translations for $\codebook_{:i}^{\alpha}$ and $\codebook_{:i}^{\beta}$ have correlations at position $j$ and $j+1$)}
    \end{align*}
    where $\intermseq_{i, j}^{\alpha}, \intermseq_{i, j+1}^{\beta}$ are outputs on a random sequence $\inputseq \sim \mathcal{U}(\{0, 1\}^{\inputlength})$ corresponding to using $\codebook_{:i}^{\alpha}$ and $\codebook_{:i}^{\beta}$ respectively.

    We then apply a grouping algorithm $\textsc{Group}$ to group correlated \strdicts{} together in each family. That is, in $\family(j)$, we create groups of \strdicts{} $\{ S_1, S_2, \cdots \}$ such that for any two \strdicts{} $\codebook_{:i}^{\alpha}$ and $\codebook_{:i}^{\alpha'}$ that belong to a group $S$, 
    \begin{align*}
         &\corr\rbr{ \intermseq_{i, j}^{\alpha}, \intermseq_{i, j}^{\alpha'}, \mathcal{U}(\{0, 1\}^{\inputlength} )} = 1,\\& \quad \quad \text{(Equiv. to saying translations for $\codebook_{:i}^{\alpha}$ and $\codebook_{:i}^{\beta}$ have correlations at position $j$)}
    \end{align*}
    where $\intermseq_{i, j}^{\alpha}, \intermseq_{i, j+1}^{\alpha'}$ are outputs on a random sequence $\inputseq \sim \mathcal{U}(\{0, 1\}^{\inputlength})$ corresponding to using $\codebook_{:i}^{\alpha'}$ and $\codebook_{:i}^{\alpha'}$respectively. We call the resulting output of this operation as $\textsc{Group}(\family(j))$. Similarly, we compute $\textsc{Group}(\family(j+1))$.

    We can observe the following two characteristics of $\textsc{Group}(\family(j))$ and $\textsc{Group}(\family(j+1))$:
    \begin{enumerate}
        \item For every set $S_1 \in \textsc{Group}(\family(j))$ there will exist one set $S_2 \in \textsc{Group}(\family(j+1))$, such that for all \strdicts{} $\codebook_{:i}^{\alpha} \in S_1$ and $\codebook_{:i}^{\beta} \in S_2$,  correlation will be $1$ for output at positions $j$ and $j+1$.
       
        \item For every set $S_1 \in \textsc{Group}(\family(j))$ there can't exist two sets $S_2, S_{2}^{*} \in \textsc{Group}(\family(j+1))$, such that the \strdicts{} in $S_1$ are correlated to \strdicts{} from both $S_2, S_{2}^{*}$ for output at positions $j$ and $j+1$ respectively. Otherwise, we could have merged $S_2$ and $S_2^{*}$ under the \textsc{Group} operation.
       
    \end{enumerate}
    Let $\textsc{Correlation-map}$ denote the map between $\textsc{Group}(\family(j))$ and $\textsc{Group}(\family(j+1))$, which connects sets $S_1 \in \textsc{Group}(\family(j))$ to a set $S_2 \in \textsc{Group}(\family(j+1))$ such that any two \strdicts{} in $S_1$ and $S_2$ have  correlations for output at positions $j$ and $j+1$ respectively.

    The result will then follow from a counting argument. The number of possible pairs (can be identical \strdicts{}) that can give correlations for output at position $j$ are given by: $\sum_{S_1 \in \textsc{Group}(\family(j))} |S_1|^2$. Similarly, the number of possible pairs that can give correlations for output at position $j+1$ are given by: $\sum_{S_2 \in \textsc{Group}(\family(j+1))} |S_2|^2$. On the other hand, the number of possible pairs that can give correlations for output at position $j$ and $j+1$ respectively are given by: $\sum_{S_1 \in \textsc{Group}(j); S_2 = \textsc{Correlation-map}(S_1)} |S_1| |  S_2 |$. Applying the AM-GM inequality, we can show that the average of the number of pairs for which correlation is $1$ for output characters at either position $j$ or $j+1$ is higher than the number of pairs for which correlation is $1$ for output at positions $j$ and $j+1$.
    
\end{proof}

\subsection{Proof for Statistical query lower bound for general $\numchar$}
\label{sec:proof_sqn}
We present the main theorem statement again for readability.

\begin{thmbox}
\sqn
\end{thmbox}

\begin{proof}
    We adapt the SQ-dimension proof for $\translationtask(\depth, 2)$ to show the SQ-dimension proof for $\translationtask(\depth, \numchar)$. We will design a family of \strcodebook{} $\codebook = \{\vardictionary_i : \{0, 1, \cdots, \numchar-1\}^2 \to  \{0, 1, \cdots, \numchar-1\}^2  \}_{i=1}^{\depth}$, where \strdicts{} in $\codebook$ are built on top of a translation task in $\translationtask(\log_2 \numchar, 2)$.
    
    For a character $a \in \{0, 1, \cdots, \numchar-1\}$, suppose $\textsc{bit}(a) \in \{0, 1\}^{\log_2 \numchar}$ indicates its binary representation, and $\textsc{numeric}$ represents the map from  binary representation to its corresponding numeric representation. Then, we design each \strdict{} $\vardictionary$ using a random translation task $\nu \in \translationtask(\log_2 \numchar, 2)$. For any tuple $(a, b) \in  \{0, 1, \cdots, \numchar-1\}^2$, output of $\vardictionary$ is given as
    
    \begin{align*}
        &\vardictionary(a, b) = (o_1, o_2), \text{ where } \\ 
        &o_1 = \textsc{numeric} \rbr{ \nu  \rbr{\{ \Tilde{\va}_i \oplus \Tilde{\vb}_i \}_{i=1}^{\log_2 \numchar}} } \\
        &o_2 =  \textsc{numeric} \rbr{\nu  \rbr{ \Tilde{\vb} } } \\
        &\Tilde{\va} = \textsc{bit}(a) \\
        &\Tilde{\vb} = \textsc{bit}(b)
    \end{align*}

    Primarily, the \strdicts{} are defined as follows:
    \begin{enumerate}
          
        \item On a $2$-tuple of characters $(a, b)$, we first compute their binary representations $(\textsc{bit}(a), \textsc{bit}(b))$. We then compute two intermediate outputs, one where a \xorop{} operation is applied on $\textsc{bit}(a), \textsc{bit}(b)$ at each bit, and another where $\textsc{bit}(b)$ is simply copied. This operation is equivalent to applying a deterministic map on $2$-tuples of binary characters (identical to $\specialdictionary_6$ from \cref{tab:truthtable_permmap}), where $2$-tuples are created by pairing binary bits in binary representation of $a$ and $b$.
       
        \item We then apply a random \translationtask{} task of depth $\log_2 \numchar$ on each of the intermediate outputs. This applies a random bijective map on the sequence of bits in the intermediate outputs, using a translation task in $\translationtask(\log_2 \numchar, 2)$ (\cref{lemma:setting-MLTinvertability}).

        \item The final tuple of characters is returned by applying a \textsc{numeric} operation on the binary representations.
    \end{enumerate}

    Thus, we have narrowed our focus on a special group of tasks from $\translationtask(\depth, \numchar)$ that applies $\translationtask(\log_2 \numchar, 2)$ on the binary representations of the characters at each level.
    By \cref{lem:sqn2}, at any level $1 \leq i \leq \depth$, we can create $2^{\Omega(\log_2 \numchar)}$ \strdicts{}, the output of which are pairwise uncorrelated .
     Now, we can compose these uncorrelated \strdicts{} to give multiple \strcodebook{} that are uncorrelated. That is, following a similar proof as \cref{lem:sqn2}, we can show that we can create $\rbr{2^{\Omega(\log_2 \numchar)}}^{\Omega(\depth)} = \numchar^{\Omega(\depth)}$ \strcodebook{}, using the above restriction, that are pairwise uncorrelated on any bit in the binary representation of the output of their corresponding translation tasks. This will translate to the output of the translation task in numeric form as well, as the mapping between binary representation and numeric form of a digit is bijective. 
     
\end{proof}

\subsection{Proofs of Useful lemmas}
\label{sec:Proofs_of_Useful_lemmas}

Here we give the proofs for the useful lemmas necessary to prove \cref{lem:sqn2}. We repeat the lemma statements for easier readability.

\mapformntwo

\begin{table}[!ht]
    \centering
    \begin{tabular}{c|cccc|cc}
         Map name ($\vardictionary$) & \multicolumn{4}{c|}{Output for corresponding input tuple $\vardictionary(a, b)$} & \multicolumn{2}{c}{General formulation on output for map $\vardictionary$}\\
         \toprule
         & $(0, 0)$ & $(0, 1)$ & $(1, 0)$ & $(1, 1)$ & $\vardictionary(a, b)_1$ & $\vardictionary(a, b)_2$ \\ 
         \midrule
         $\specialdictionary_1$ & $(0, 0)$ & $(0, 1)$ & $(1, 0)$ & $(1, 1)$ & $\copyop_1(a, b)$ & $\copyop_2(a, b)$ \\
         $\specialdictionary_2$ & $(0, 0)$ & $(0, 1)$ & $(0, 1)$ & $(1, 0)$ & $\copyop_1(a, b)$ & $\xorop(a, b)$  \\
         $\specialdictionary_3$ & $(0, 0)$ & $(1, 0)$ & $(0, 1)$ & $(1, 1)$ & $\copyop_2(a, b)$ & $\copyop_1(a, b)$   \\
         $\specialdictionary_4$ & $(0, 0)$ & $(1, 1)$ & $(0, 1)$ & $(1, 0)$ & $\copyop_2(a, b)$ & $\xorop(a, b)$ \\
         $\specialdictionary_5$ & $(0, 0)$ & $(1, 0)$ & $(1, 1)$ & $(0, 1)$ & $\xorop(a, b)$ & $\copyop_1(a, b)$\\
         $\specialdictionary_6$ & $(0, 0)$ & $(1, 1)$ & $(1, 0)$ & $(0, 1)$ & $\xorop(a, b)$ & $\copyop_2(a, b)$ \\
         \bottomrule
    \end{tabular}
    \caption{The table captures the definition of $6$  bijective maps (\strdicts{}) on $2$-tuples $\{0, 1\}^2 \to \{0, 1\}^2$, whose output characters can be defined in terms of $\copyop{}$ and $\xorop{}$ operations on the input characters. Any other bijective map can be shown to belong to $\mirror$ of one of these maps.}
    \label{tab:truthtable_permmap}
\end{table}

\begin{proof}
    There are $4$ possible tuples $(0, 0), (0, 1), (1, 0), (1, 1)$. By \cref{lem:num_bijective}, the number of possible bijective maps (\strdicts{}) that connect $2$-tuples are $4! = 24$. We will show that the output tuple of each map can be represented by $3$ operations.

    \paragraph{Operation \notop{} creates mirror maps:} For each map $\vardictionary$, there exists $3$ alternative maps $\vardictionary_{(1)}, \vardictionary_{(2)}, \vardictionary_{(3)}$ such that if $\vardictionary(a, b)_{i}$ represents the $i$th character in the output tuple for input tuple $(a, b)$:
    \begin{itemize}
        \item $\vardictionary_{(1)}$ selectively applies the \notop{} operation to the first character in the output tuple of $\vardictionary$ on any input tuple, i.e. $$\vardictionary_{(1)} (a, b) = (\notop{} (\vardictionary(a, b)_1), \vardictionary(a, b))$$  for all tuples $(a, b) \in \{0, 1\}^2$.
        \item $\vardictionary_{(2)}$ selectively applies the \notop{} operation to the second character in the output tuple of $\vardictionary$ on any input tuple, i.e. $$\vardictionary_{(2)} (a, b) =  (\vardictionary(a, b)_1, \notop{} (\vardictionary(a, b)))$$ for all tuples $(a, b) \in \{0, 1\}^2$.
        
        \item $\vardictionary_{(3)}$ selectively applies the \notop{} operation to both characters in the output tuple of $\vardictionary$ on any input tuple, i.e. $$\vardictionary_{(3)} (a, b) =  (\notop{} (\vardictionary(a, b)_1), \notop{} (\vardictionary(a, b)))$$ for all tuples $(a, b) \in \{0, 1\}^2$.
    \end{itemize}
    Thus, for each map $\vardictionary$, there exist $3$ other alternative maps that simply modify the output of map $\vardictionary$ with the \notop{} operation.

    \paragraph{After removing the mirror maps:} We now show that there $6$ possible maps that apply either a \xorop{} or a \copyop{} on the input characters to get the output characters. We name them $\specialdictionary_1, \specialdictionary_2, \cdots, \specialdictionary_6$. We give the output of each map on the $4$ tuples in \cref{tab:truthtable_permmap} and show that the each character in the output tuple can be represented using \copyop{} and \xorop{} operations.

\end{proof}

\corrntwo

\begin{proof}
    We prove the lemma for case 1, cases 2 and 3 can be similarly proved. From \cref{lem:num_bijective}, $\vardictionary^{\alpha}$ and $\vardictionary^{\beta}$ can be randomly selected from a set of $24$ possible candidates. On the other hand, \cref{lem:map_form_n2} shows that there are $6$ maps $\{\specialdictionary_1, \cdots, \specialdictionary_6\}$ whose output can be defined in terms of \copyop{} and \xorop{} operations of characters in the input tuple. For each $\vardictionary$ in this set, there are mirror maps $\vardictionary_{(1)}, \vardictionary_{(2)}, \vardictionary_{(3)}$ whose output are defined by selective \notop{} operations on the output of $\vardictionary$, and the set of $4$ maps is represented by $\mirror(\vardictionary)$.

    The proof will follow from $2$ steps: 
    first, we argue about correlations when $\vardictionary^{\alpha}$ and $\vardictionary^{\beta}$ are selected from $\{\specialdictionary_1, \cdots, \specialdictionary_6\}$, and then we argue about the general case when $\vardictionary^{\alpha}$ and $\vardictionary^{\beta}$ are selected from the general set of bijective maps.
    
    \begin{itemize}
        \item In \cref{lem:nonmirror_corr}, we show that for two maps that are randomly selected from $\{\specialdictionary_1, \cdots, \specialdictionary_6\}$, the correlation is $1$ with probability $1/3$.
        
        \item The remaining possibility is when $\vardictionary^{\alpha}$ and $\vardictionary^{\beta}$ belong to $\mirror(\vardictionary_i)$ and $\mirror(\vardictionary_j)$ for some $\vardictionary_i, \vardictionary_j \in \{\specialdictionary_1, \cdots, \specialdictionary_6\}$. We show in \cref{lem:corr_distinctmirrormap}, correlation of $\vardictionary^{\alpha}$ and $\vardictionary^{\beta}$ will be equal to correlation of $\vardictionary_i, \vardictionary_j$.
    \end{itemize}
    
    Thus, we can combine all the observations to show that for two random maps $\vardictionary^{\alpha}, \vardictionary^{\beta}$ that belong to $\mirror(\vardictionary_i)$ and $\mirror(\vardictionary_j)$ for some $\vardictionary_i, \vardictionary_j \in \{\specialdictionary_1, \cdots, \specialdictionary_6\}$,
    \begin{align*}
        &\corr(\vardictionary^{\alpha}(\cdot)_1, \vardictionary^{\beta}(\cdot)_1, \uniform(\{0, 1\}^2)) \\& = \corr(\vardictionary_{i}(\cdot)_1, \vardictionary_{j}(\cdot)_1, \uniform(\{0, 1\}^2)) = \begin{cases}
        1, \quad \text{w.p. } 1/3 \text{ w.r.t. randomness in } \vardictionary_i, \vardictionary_j\\
        0, \quad \text{otherwise.}\\
        \end{cases}
    \end{align*}

\end{proof}

\begin{lemma}\label{lem:corr_distinctmirrormap}
    The following holds true for any maps $\vardictionary^{\alpha}$ and $\vardictionary^{\beta}$ with $\vardictionary^{\alpha}, \vardictionary^{\beta} \in \{\specialdictionary_1, \cdots, \specialdictionary_6\}$ and for all $\Tilde{\vardictionary}^{\alpha} \in \mirror(\vardictionary^{\alpha}) , \Tilde{\vardictionary}^{\beta} \in \mirror(\vardictionary^{\beta})$:
    $$
        \corr(\vardictionary^{\alpha}(\cdot)_1, \vardictionary^{\beta}(\cdot)_1, \uniform(\{0, 1\}^2)) = \corr(\Tilde{\vardictionary}^{\alpha}(\cdot)_1, \Tilde{\vardictionary}^{\beta}(\cdot)_1, \uniform(\{0, 1\}^2)),
    $$
\end{lemma}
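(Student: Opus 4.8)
The plan is to reduce everything to a single elementary observation: the first output coordinate of any mirror map of $\vardictionary$ is either that coordinate of $\vardictionary$ itself or its pointwise negation, and the correlation functional of \cref{def:corr} is insensitive to negating either of its two arguments.

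First I would unpack what $\mirror$ does on the first coordinate. By \cref{lem:map_form_n2}, for $\vardictionary \in \{\specialdictionary_1,\dots,\specialdictionary_6\}$ we have $\mirror(\vardictionary) = \{\vardictionary, \vardictionary_{(1)}, \vardictionary_{(2)}, \vardictionary_{(3)}\}$, where $\vardictionary_{(1)}$ applies $\notop$ to the first output character, $\vardictionary_{(2)}$ to the second, and $\vardictionary_{(3)}$ to both. Hence for every $\Tilde{\vardictionary} \in \mirror(\vardictionary)$ there is a fixed bit $c \in \{0,1\}$, independent of the input, with $\Tilde{\vardictionary}(a,b)_1 = \vardictionary(a,b)_1 \oplus c$ for all $(a,b) \in \{0,1\}^2$ (namely $c=1$ exactly when $\Tilde{\vardictionary} \in \{\vardictionary_{(1)}, \vardictionary_{(3)}\}$). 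In particular $\Tilde{\vardictionary}^{\alpha}(\cdot)_1 \in \{\vardictionary^{\alpha}(\cdot)_1, \notop(\vardictionary^{\alpha}(\cdot)_1)\}$, and likewise $\Tilde{\vardictionary}^{\beta}(\cdot)_1 \in \{\vardictionary^{\beta}(\cdot)_1, \notop(\vardictionary^{\beta}(\cdot)_1)\}$; the second-coordinate flips are irrelevant here.

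Next I would record the invariance. For any $g_1, g_2 \colon \mathcal{X} \to \{0,1\}$ and any distribution $\distribution$ on $\mathcal{X}$, the event $\{\notop(g_1) = g_2\}$ is exactly $\{g_1 \neq g_2\}$ and $\{\notop(g_1) \neq g_2\}$ is exactly $\{g_1 = g_2\}$, so
\[
\Pr_{x \sim \distribution}[\notop(g_1) = g_2] - \Pr_{x \sim \distribution}[\notop(g_1) \neq g_2] = -\rbr{\Pr_{x \sim \distribution}[g_1 = g_2] - \Pr_{x \sim \distribution}[g_1 \neq g_2]},
\]
and taking absolute values as in \cref{def:corr} gives $\corr(\notop(g_1), g_2, \distribution) = \corr(g_1, g_2, \distribution)$. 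By the symmetry of the definition in its two arguments the same holds when negating $g_2$ instead, and composing the two statements handles negating both. Applying this with $g_1 = \vardictionary^{\alpha}(\cdot)_1$ and $g_2 = \vardictionary^{\beta}(\cdot)_1$ (at most two applications, depending on the two bits $c$) yields
\[
\corr(\Tilde{\vardictionary}^{\alpha}(\cdot)_1, \Tilde{\vardictionary}^{\beta}(\cdot)_1, \uniform(\{0,1\}^2)) = \corr(\vardictionary^{\alpha}(\cdot)_1, \vardictionary^{\beta}(\cdot)_1, \uniform(\{0,1\}^2)),
\]
which is the claim.

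There is no genuine obstacle here; the lemma is a bookkeeping invariance. The only step that needs care is citing \cref{lem:map_form_n2} precisely enough to be sure that a mirror map acts on the first output character only through a global (possibly trivial) $\notop$, so that the distinction between $\vardictionary_{(2)}$ and the identity, and between $\vardictionary_{(3)}$ and $\vardictionary_{(1)}$, is immaterial for a statement about $(\cdot)_1$.
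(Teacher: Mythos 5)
Your proposal is correct and is essentially the paper's own argument: the paper likewise reduces the claim to the observation that a mirror map changes the first output coordinate only by a global $\notop$, and then splits into the two cases (both/neither negated vs.\ exactly one negated), which exactly matches your invariance $\corr(\notop(g_1), g_2, \distribution) = \corr(g_1, g_2, \distribution)$ absorbed by the absolute value. No gaps; your packaging of the case analysis as a negation-invariance of $\corr$ is just a cleaner way of writing the same computation.
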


\begin{proof}
    We prove as follows:
    \begin{align}
        \corr(\vardictionary^{\alpha}(\cdot)_1, \vardictionary^{\beta}(\cdot)_1, \uniform(\{0, 1\}^2)) &=  \abs{  \Pr_{\var \sim \uniform(\{0, 1\}^2)} [\vardictionary^{\alpha}(\var)_1 \ne  \vardictionary^{\beta}(\var)_1] - \Pr_{\var \sim \uniform(\{0, 1\}^2)} [\vardictionary^{\alpha}(\var)_1 =  \vardictionary^{\beta}(\var)_1]} \nonumber\\
        & =  \abs{  2 \Pr_{\var \sim \uniform(\{0, 1\}^2)} [\vardictionary^{\alpha}(\var)_1 \ne  \vardictionary^{\beta}(\var)_1] - 1 } \\
        & = \begin{cases}
           \abs{  2 \Pr_{\var \sim \uniform(\{0, 1\}^2)} [\Tilde{\vardictionary}^{\alpha}(\var)_1 \ne  \Tilde{\vardictionary}^{\beta}(\var)_1] - 1 }, \text{if condition ``c1'' is true} \\
           \abs{  1 - 2 \Pr_{\var \sim \uniform(\{0, 1\}^2)} [\Tilde{\vardictionary}^{\alpha}(\var)_1 =  \Tilde{\vardictionary}^{\beta}(\var)_1] }, \text{if condition ``c2'' is true}\\
        \end{cases} \\
        & = \abs{  \Pr_{\var \sim \uniform(\{0, 1\}^2)} [\Tilde{\vardictionary}^{\alpha}(\var)_1 \ne  \Tilde{\vardictionary}^{\beta}(\var)_1] - \Pr_{\var \sim \uniform(\{0, 1\}^2)} [\Tilde{\vardictionary}^{\alpha}(\var)_1 =  \Tilde{\vardictionary}^{\beta}(\var)_1] } \nonumber\\
        & = \corr(\Tilde{\vardictionary}^{\alpha}(\cdot)_1, \Tilde{\vardictionary}^{\beta}(\cdot)_1, \uniform(\{0, 1\}^2)),
    \end{align}
    where the second and the penultimate steps follow from the law of total probability.
    Here, condition ``c1'' holds when either case is true,
    \begin{align*}
        \Tilde{\vardictionary}^{\alpha}(x)_1 = \notop(\vardictionary^{\alpha}(x)_1),& \quad \Tilde{\vardictionary}^{\beta}(x)_1 = \notop(\vardictionary^{\beta}(x)_1), \quad \text{for all } x \in \{0, 1\}^2 \\
        \Tilde{\vardictionary}^{\alpha}(x)_1 = \vardictionary^{\alpha}(x)_1,& \quad \Tilde{\vardictionary}^{\beta}(x)_1 = \vardictionary^{\beta}(x)_1, \quad \text{for all } x \in \{0, 1\}^2.
    \end{align*}
    and condition ``c2'' holds when either case is true,
    \begin{align*}
        \Tilde{\vardictionary}^{\alpha}(x)_1 = \vardictionary^{\alpha}(x)_1,& \quad \Tilde{\vardictionary}^{\beta}(x)_1 = \notop(\vardictionary^{\beta}(x)_1), \quad \text{for all } x \in \{0, 1\}^2 \\
        \Tilde{\vardictionary}^{\alpha}(x)_1 = \notop(\vardictionary^{\alpha}(x)_1),& \quad \Tilde{\vardictionary}^{\beta}(x)_1 = \vardictionary^{\beta}(x)_1, \quad \text{for all } x \in \{0, 1\}^2.
    \end{align*}
    One of condition ``c1'' or condition ``c2'' is true because $\Tilde{\vardictionary}^{\alpha}$ and $\vardictionary^{\alpha}$ (similarly, $\Tilde{\vardictionary}^{\beta}$ and $\vardictionary^{\beta}$) are mirror maps.
    
\end{proof}

\begin{lemma}\label{lem:nonmirror_corr}

    The following holds true for two randomly selected maps $\vardictionary^{\alpha}$ and $\vardictionary^{\beta}$ with $\vardictionary^{\alpha}, \vardictionary^{\beta} \in \{\specialdictionary_1, \cdots, \specialdictionary_6\}$ :
    \begin{itemize}
        \item With probability $1/3$ w.r.t. random selection, 
        \begin{align*}
            \corr(\vardictionary^{\alpha}(\cdot)_1, \vardictionary^{\beta}(\cdot)_1, \uniform(\{0, 1\}^2)) > 0 \quad (=1).
        \end{align*}
        \item With probability $1/3$ w.r.t. random selection, 
        \begin{align*}
            \corr(\vardictionary^{\alpha}(\cdot)_2, \vardictionary^{\beta}(\cdot)_2, \uniform(\{0, 1\}^2)) > 0 \quad (=1).
        \end{align*}
    \end{itemize}
\end{lemma}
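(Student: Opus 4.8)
\textbf{Proof plan for \cref{lem:nonmirror_corr}.}

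The plan is to carry out a direct finite case analysis. From \cref{tab:truthtable_permmap}, each of the six maps $\specialdictionary_1,\dots,\specialdictionary_6$ has its first output character $\specialdictionary_\ell(a,b)_1$ given by exactly one of the three operations $\copyop_1$, $\copyop_2$, $\xorop$, and likewise for the second output character; moreover each ordered pair of distinct operations occurs exactly once across the six maps. So the first step is to record, for each $\vardictionary\in\{\specialdictionary_1,\dots,\specialdictionary_6\}$, which operation $O_1(\vardictionary)\in\{\copyop_1,\copyop_2,\xorop\}$ computes its first output bit; by the table, the multiset $\{O_1(\specialdictionary_1),\dots,O_1(\specialdictionary_6)\}$ contains each of the three operations exactly twice (namely $\copyop_1$ for $\specialdictionary_1,\specialdictionary_2$, $\copyop_2$ for $\specialdictionary_3,\specialdictionary_4$, $\xorop$ for $\specialdictionary_5,\specialdictionary_6$).

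The second step is the key observation: for $(a,b)\sim\uniform(\{0,1\}^2)$, the three random variables $\copyop_1(a,b)=a$, $\copyop_2(a,b)=b$, $\xorop(a,b)=a\oplus b$ are pairwise \emph{uncorrelated} but each equals itself with correlation $1$. Concretely, $\corr(a,a,\uniform)=1$ and $\corr(a,b,\uniform)=\corr(a,a\oplus b,\uniform)=\corr(b,a\oplus b,\uniform)=0$ (each just a one-line check: e.g. $\Pr[a=a\oplus b]=\Pr[b=0]=1/2$). Hence $\corr(\vardictionary^\alpha(\cdot)_1,\vardictionary^\beta(\cdot)_1,\uniform(\{0,1\}^2))=1$ if $O_1(\vardictionary^\alpha)=O_1(\vardictionary^\beta)$ and $=0$ otherwise. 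Since $\vardictionary^\alpha,\vardictionary^\beta$ are selected independently and uniformly from the six maps, and $O_1$ takes each of its three values on exactly two of the six maps, the probability that $O_1(\vardictionary^\alpha)=O_1(\vardictionary^\beta)$ is $\sum_{\text{op}}(2/6)^2 = 3\cdot(1/3)^2 = 1/3$. This gives the first bullet. The third step is identical for the second output character: the table shows $O_2$ also takes each of $\copyop_1,\copyop_2,\xorop$ on exactly two of the six maps, so by the same argument $\corr(\vardictionary^\alpha(\cdot)_2,\vardictionary^\beta(\cdot)_2,\uniform(\{0,1\}^2))=1$ with probability $1/3$ and $=0$ otherwise, giving the second bullet.

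There is essentially no hard part here — the statement is a small combinatorial fact about six explicit maps. The only thing to be slightly careful about is making sure the operation assignments $O_1(\cdot)$ and $O_2(\cdot)$ are read off correctly from \cref{tab:truthtable_permmap} and that the "each operation appears exactly twice" count is stated for the correct coordinate; this is mechanical but must be done honestly for both coordinates. (One can also note for context that this lemma is the base case feeding \cref{lem:corr_n2}, where it is combined with the mirror-map reduction of \cref{lem:corr_distinctmirrormap} to handle the full set of $24$ bijective maps.)
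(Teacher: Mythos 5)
Your proposal is correct and follows essentially the same route as the paper's proof: partition the six maps into three families according to which of $\copyop_1,\copyop_2,\xorop$ produces the relevant output character, note that the three operations are pairwise uncorrelated under $\uniform(\{0,1\}^2)$ while perfectly correlated with themselves, and conclude the probability $1/3$ by counting (the paper does the first coordinate and declares the second analogous, exactly as you do). Your explicit $3\cdot(2/6)^2$ computation and the check that each operation appears twice in the second coordinate are the same counting the paper leaves implicit.
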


\begin{proof}
    We prove for case 1, proof for case 2 is analogous.

    Among $\{\specialdictionary_1, \cdots, \specialdictionary_6\}$, we can create three family of maps: $F_{\copyop{}_1}: \{\specialdictionary_1, \specialdictionary_2\}$, $F_{\copyop{}_2}: \{\specialdictionary_3, \specialdictionary_4\}$, $F_{\xorop{}}: \{\specialdictionary_5, \specialdictionary_6\}$ that are identical in operation ($\copyop{}_1, \copyop{}_2, \xorop{}$ respectively) at the first character in output tuple. This would imply, if the $\vardictionary^{\alpha}$ and $\vardictionary^{\beta}$ both belong to one of these families, $\corr(\vardictionary^{\alpha}(\cdot)_1, \vardictionary^{\beta}(\cdot)_1, \uniform(\{0, 1\}^2))$ will be $1$.

    On the other hand, one can show that for any operation $f_1, f_2 \in \{\copyop{}_1, \copyop{}_2, \xorop{}\}$ with $f_1 \ne f_2$ will have $\corr(f_1, f_2, \uniform(\{0, 1\}^2)) = 0$. That would then suggest that if $\vardictionary^{\alpha}$ and $\vardictionary^{\beta}$ belong to different families among $F_{\copyop{}_1}, F_{\copyop{}_2}, F_{\xorop{}}$, then $\corr(\vardictionary^{\alpha}(\cdot)_1, \vardictionary^{\beta}(\cdot)_1, \uniform(\{0, 1\}^2))$ will be $0$.

    By a simple counting argument, with probability $\frac{1}{3}$, two maps $\vardictionary^{\alpha}$ and $\vardictionary^{\beta}$ randomly selected from $\{\specialdictionary_1, \cdots, \specialdictionary_6\}$ will have non-zero correlation. 
\end{proof}


\newpage

\section{Upper Bound: Context-Enhanced Learning of $\translationtask(\depth, \numchar)$ with Simple Surrogate Model}
\label{sec:apdx-surrogateModel}

\subsection{Setup of Surrogate Model with In-context Capability}
\label{sec:apdx-surrogateModel-surrogateModelSetup}

Given $\depth+1$ alphabets $\alphabetset_1, \dots, \alphabetset_{\depth+1}$ of size $\numchar$ and $\depth$ bijective \strdicts{} $\vardictionary_i: \alphabetset_i^2\to \alphabetset_{i+1}^2$ 
The input of the translation process is an even-length sequence in the first alphabet, which we denote as $\inputseq \in \alphabetset_1^{\inputlength}$ where $\inputlength$ is the sequence length. The translation process modifies the input string recursively from $\intermseq_i$ to $\intermseq_{i+1}$ through the following 2 sub-processes:

\begin{enumerate}
    \item  \circularshift{}: The characters in $\intermseq_i \in \alphabetset_i^{\inputlength}$ are shifted by $1$ character leftward (and wrapped around to the end if necessary) to give sequence $\shiftintermseq_i \in \alphabetset_i^{\inputlength}$. Formally, for each $j \in [1, \inputlength]$ we have $\shiftintermseq_{i, j} = \intermseq_{i, (j+1) \% \inputlength}$.

    \item \translate{}: Using the \emph\strdict{} $\vardictionary_i: \alphabetset_i^2\to \alphabetset_{i+1}^2$, we translate 2-tuples (bigrams) of consecutive characters in sequence $\shiftintermseq_i$ to create $\intermseq_{i+1}$. That is, for every odd $j  \in [1, \inputlength]$, $(\intermseq_{i+1, j}, \intermseq_{i+1, j+1}) = \vardict_i (\shiftintermseq_{i, j}, \shiftintermseq_{i, j+1})$. 
    
\end{enumerate}

Now let us revisit the surrogate model introduced in \cref{sec:theory-surrogateModel}. Without loss of generality let $\alphabetset_1 = \alphabetset_2 = \cdots =\alphabetset_{d+1} := \alphabetset = \cbr{1, 2, \dots, \numchar}$. For any single character $a\in \alphabetset$, let its vector representation be a one-hot vector $\svecrep_a\in\R^\numchar$ such that $\rbr{\svecrep_a}_a = 1$. For any 2-tuple $(a,b)\in \alphabetset^2$, let its vector representation be a $\numchar^2$-dimensional vector $\vecrep(a,b) \triangleq \svecrep_a \otimes \svecrep_b$. Note that $\vecrep(a,b)$ is also a one-hot vector where \begin{align*}
    \vecrep(a,b)_i = \begin{cases}
        1 & \text{if}\ i = a\numchar + b\\
        0 & \text{elsewhere}
    \end{cases}.
\end{align*}

We use the notation $\lonehot_{a}$ (long one-hot) to denote a one-hot vector in $\R^{\numchar^2}$ with $a$-th position being 1 to avoid confusion.

\begin{defnbox}
\begin{definition}[Matrix Representation of Sequence]\emph{}\\
\label{defn:apdx-seq-matrep}
For a length-$L$ input sequence $\intermseq_i = \srbr{s_{i,1}, \dots, s_{i,\inputlength}}$, let its matrix representation be $\matricize(\intermseq_i) \triangleq 
 \matrep_i\in \R^{\numchar^2\times \inputlength/2}$ that
\begin{align*}
    \matrep_i = \begin{bmatrix}
        \vert & \vert & \cdots & \vert\\
        \vecrep\rbr{s_{i,1}, s_{i,2}} & \vecrep\rbr{s_{i,3}, s_{i,4}}&  \cdots & \vecrep\rbr{s_{i,\inputlength-1}, s_{i,\inputlength}}\\\vert& \vert& \cdots & \vert
    \end{bmatrix}
\end{align*}

For each $j\in [\inputlength/2]$, we use $\matrep_i^{(j)}$ to denote the $j$-th column of $\matrep_i$. 
We also denote the above conversion from a sequence $\intermseq_i$ to its matrix form as $\matrep_i = \matricize(\intermseq_i)$ and assume that $\matrep_1$ serves as the input to the surrogate model.
\end{definition}
\end{defnbox}

Note that the matricization operation is invertible by construction: for each column $\V{i}{j}$, let $x = \argmax{\V{i}{j}}$, we may read off the two characters in the original alphabet by computing $\matricize^{-1}(\V{i}{j}) = (\ceil{x/\numchar}, x\%\numchar)$.

At each level of translation, we assume the surrogate model will perform the following operations to $\matrep_i$:

\textbf{(a) Circular shift from $\matrep_i$ to $\shiftmatrep_i$}

\begin{defnbox}
\begin{definition}[Circular Shifting Operator $\shiftfunc$]\emph{}
\label{defn:apdx-surrogate-shift-operator}

    Given a matrix representation $\matrep\in \R^{\numchar^2\times \inputlength/2}$ of a sequence $\intermseq$, the circular shifting operator $\shiftfunc$ acts on $\matrep$ as $\shiftfunc(\matrep):=\shiftmatrep\in \R^{\numchar^2\times \inputlength/2}$ where for all $j\in [\inputlength/2]$,
\begin{align*}
    \shiftmatrep^{(j)} = \Q\matrep^{(j)}\odot \Q^\T\matrep^{((j+1)\% \inputlength)}
\end{align*}
where $\shiftmat = \rbr{I_\numchar\otimes \1_\numchar}\rbr{\1_\numchar\otimes I_\numchar}^\top$
, $\1_n \in \R^{\numchar\times 1}$ is the all-ones vector, and $\odot$ is the Hadamard product.
\end{definition}
\end{defnbox}

\begin{thmbox}
\begin{lemma}[Equivalence of $\shiftfunc$ and circular shift]
\label{lemma:shift-equiv}\emph{}

    For any sequence $\intermseq\in\alphabetset^\inputlength$, let $\shiftintermseq$ be the circular shifted $\intermseq$, then $$\emph{\matricize}\rbr{\shiftintermseq} = \shiftfunc\rbr{\emph{\matricize}\rbr{\intermseq}}.$$
\end{lemma}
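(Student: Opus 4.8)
The claim is that the linear-algebraic operator $\shiftfunc$ realizes the cyclic left-shift on the underlying sequence, so the natural approach is a column-by-column verification. The plan is to first record how the matrices $\Q$ and $\QT$ act on a one-hot bigram embedding $\vecrep(a,b) = \svecrep_a\otimes\svecrep_b$, then combine the two actions through the Hadamard product, and finally line up the column indices with the positions of the shifted sequence. For the first step I would use $\Q = \rbr{I_\numchar\otimes\1_\numchar}\rbr{\1_\numchar\otimes I_\numchar}^\top = \rbr{I_\numchar\otimes\1_\numchar}\rbr{\1_\numchar^\top\otimes I_\numchar}$ together with the Kronecker mixed-product identity $(A\otimes B)(C\otimes D) = (AC)\otimes(BD)$ and $\1_\numchar^\top\svecrep_a = 1$ to get
\[
    \Q\,\vecrep(a,b) = \rbr{I_\numchar\otimes\1_\numchar}\rbr{(\1_\numchar^\top\svecrep_a)\otimes(I_\numchar\svecrep_b)} = \rbr{I_\numchar\otimes\1_\numchar}\svecrep_b = \svecrep_b\otimes\1_\numchar,
\]
and symmetrically $\QT\,\vecrep(a,b) = \1_\numchar\otimes\svecrep_a$. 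Intuitively, $\Q$ reads off the second character of a bigram and places it into the first slot (with the second slot marginalized to all-ones), and $\QT$ does the mirror-image operation with the first character.

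\textbf{Combining columns.} Next I would invoke the Hadamard counterpart of the mixed-product rule, $(x\otimes y)\odot(u\otimes w) = (x\odot u)\otimes(y\odot w)$, and $\1_\numchar\odot\svecrep_a = \svecrep_a$, so that for any characters $a,b,c,d$,
\[
    \Q\,\vecrep(a,b)\,\odot\,\QT\,\vecrep(c,d) = \rbr{\svecrep_b\otimes\1_\numchar}\odot\rbr{\1_\numchar\otimes\svecrep_c} = \svecrep_b\otimes\svecrep_c = \vecrep(b,c).
\]
Now let $\intermseq = (s_1,\dots,s_\inputlength)$ with matrix form $\matrep = \matricize(\intermseq)$, so $\matrep^{(j)} = \vecrep(s_{2j-1},s_{2j})$; applying the previous display with $\vecrep(a,b) = \matrep^{(j)}$ and $\vecrep(c,d) = \matrep^{(j+1)}$ (the column index taken cyclically modulo $\inputlength/2$, consistent with the cyclic shift) gives $\shiftfunc(\matrep)^{(j)} = \vecrep(s_{2j},s_{2j+1})$ for $j < \inputlength/2$ and $\shiftfunc(\matrep)^{(\inputlength/2)} = \vecrep(s_\inputlength,s_1)$ through the wraparound column $\matrep^{(1)}$. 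On the other side, reading $\shiftintermseq_j = s_{(j+1)\%\inputlength}$ cyclically, the $j$-th column of $\matricize(\shiftintermseq)$ is $\vecrep(\shiftintermseq_{2j-1},\shiftintermseq_{2j})$, which equals $\vecrep(s_{2j},s_{2j+1})$ for $j<\inputlength/2$ and $\vecrep(s_\inputlength,s_1)$ for $j=\inputlength/2$. Since the two sides agree in every column, $\matricize(\shiftintermseq) = \shiftfunc(\matricize(\intermseq))$.

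\textbf{Main obstacle.} There is essentially no mathematical content here beyond standard Kronecker/Hadamard algebra; the only thing to watch is the index bookkeeping — fixing whether the one-hot coordinate for $(a,b)$ is $a\numchar+b$ or $(a-1)\numchar+b$, and handling the boundary column $j=\inputlength/2$, for which the ``next'' column wraps around to column $1$. That boundary column is precisely the one encoding the bigram $(s_\inputlength,s_1)$ manufactured by the cyclic shift, so it must be checked separately, but it goes through exactly as above.
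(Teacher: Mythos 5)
Your proof is correct and follows essentially the same route as the paper's: compute $\Q\,\vecrep(a,b)=\svecrep_b\otimes\1_\numchar$ and $\QT\,\vecrep(c,d)=\1_\numchar\otimes\svecrep_c$ via the Kronecker mixed-product rule, combine them through the Hadamard product to get $\svecrep_b\otimes\svecrep_c$, and match columns with the shifted sequence. Your explicit treatment of the wrap-around column (and reading the "next" column index modulo $\inputlength/2$) is a small bookkeeping point the paper glosses over, but the argument is identical in substance.
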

\end{thmbox}

\begin{proof}[Proof of \cref{lemma:shift-equiv}]

    In this proof we will show that $\emph{\matricize}\rbr{\shiftintermseq}$ and $\shiftfunc\rbr{\emph{\matricize}\rbr{\intermseq}}$ agrees on every column.

    Fix a column $j\in[\numchar/2]$,
    without loss of generality let the input sequence $\intermseq$ be $(a,b,c,d)\in\alphabetset$ starting from the $(2j-1)$-th position to the $(2j+2)$-th position (wrapped around when necessary).
    By construction each output column of $\shiftmatrep^{(j)}$ is dependent on at most $\matrep^{(j)}$ and $\matrep^{(j+1)}$ which corresponds to 4 characters in the sequence $\intermseq$.
    
    By definition of $\matrep$ we then have $\matrep^{(j)} = \vecrep(a,b) = \svecrep_a\otimes \svecrep_b$ and $\matrep^{(j+1)\% L} = \vecrep(c,d) = \svecrep_c\otimes \svecrep_d$. It follows that
    
    \begin{align*}
        \shiftmatrep^{(j)} &= \Q\matrep^{(j)}\odot \Q^\T\matrep^{((j+1)\% \inputlength)}\\
        &= \rbr{\rbr{I_\numchar\otimes \1_\numchar}\rbr{\1_\numchar^\top\otimes I_\numchar^\top} \rbr{\svecrep_a\otimes \svecrep_b}} \odot \rbr{\rbr{\1_\numchar\otimes I_\numchar} \rbr{I_\numchar^\top\otimes \1_\numchar^\top} \rbr{\svecrep_c\otimes \svecrep_d}}\\
        &= \rbr{\rbr{I_\numchar\otimes \1_\numchar}\rbr{\1_\numchar^\top\svecrep_a\otimes I_\numchar^\top \svecrep_b}} \odot \rbr{\rbr{\1_\numchar\otimes I_\numchar}\rbr{I_\numchar^\top\svecrep_c\otimes \1_\numchar^\top \svecrep_d}}\\
        &= \rbr{\rbr{I_\numchar\otimes \1_\numchar}\rbr{1\otimes  \svecrep_b}} \odot \rbr{\rbr{\1_\numchar\otimes I_\numchar}\rbr{\svecrep_c\otimes 1}}\tag{$\svecrep_a, \svecrep_d$ are one-hot, $\1_\numchar^\top \svecrep_a=\1_\numchar^\top \svecrep_d=1$}\\
        &= \rbr{\rbr{I_\numchar\otimes \1_\numchar}\rbr{\svecrep_b\otimes 1}} \odot \rbr{\rbr{\1_\numchar\otimes I_\numchar}\rbr{1\otimes\svecrep_c}}\\
        &=\rbr{\svecrep_b\otimes \1_\numchar} \odot \rbr{\1_\numchar \otimes \svecrep_c}\\
        &= \svecrep_b \otimes \svecrep_c. \tag{by definition of Kronecker product}
    \end{align*}
    Note that $\svecrep_b \otimes \svecrep_c$ is just $\matricize(\shiftintermseq)^{(j)}$ since the $(2j-1)$-th and the $2j$-th character of the shifted sequence $\shiftintermseq$ is now $(b,c)$.

    This concludes the proof.
\end{proof}

\textbf{(b) Translation from $\shiftmatrep_i$ to $\matrep_{i+1}$}

With $\shiftfunc$ effectively completing \merge{}, \circularshift{}, and \split{}, what remains is the translation leveraging $\pi_i$. Since $\alphabetset_i = \alphabetset_{i+1} = \alphabetset = [\numchar]$, there is a natural bijection between the space of binary column-stochastic matrix and the space of all possible (not necessarily bijective) mappings between 2-tuples from $\alphabetset$. Concretely

\begin{defnbox}
\begin{definition}[Column-Stochastic Matrix Representation of \strdict{}]\emph{}

\label{def:apdx-surrogate-translationMatrixRepresentation}
    Given \strdict{} $\dictionary:\alphabetset^2\to\alphabetset^2$, its matrix representation is defined to be $\transmat(\pi)\in\R^{\numchar^2\times\numchar^2}$ that for $i,j\in [\numchar^2]$, \begin{align*}
        \transmat(\pi)_{j,i} = \begin{cases}
            1 & \text{if}\ \pi(\ceil{i/\numchar}, i\%\numchar) = (\ceil{j/\numchar}, j\%\numchar)\\
            0 & \text{elsewhere}
        \end{cases}.
    \end{align*}
\end{definition}
\end{defnbox}

Let $\matrep_{i+1} = \transmat\shiftmatrep_{i}$ where $\transmat$ is a binary column-stochastic matrix, we can show that, a complete translation process for one level can be formally expressed as follows: 

\begin{thmbox}
\begin{lemma}[Equivalence of $\transmat$ and \translate{}]
\label{lemma:translate-equiv}\emph{}

    For any sequence $\intermseq_i\in\alphabetset^\inputlength$, let $\pi_i$ be a bijective \strdict{} $\alphabetset^2\to \alphabetset^2$ defined in \cref{sec:setup-translationTask}, then 
    \begin{align*} \matrep_{i+1} = \transmat{(\dictionary_i)}\shiftfunc\rbr{\emph{\matricize}\rbr{\intermseq_i}} = \emph{\matricize}\rbr{\transfunc_{\dictionary_i}(\intermseq_i)} = \emph{\matricize}(\intermseq_{i+1}).
    \end{align*}
\end{lemma}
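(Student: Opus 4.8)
The plan is to reduce the triple equality to a single column-wise identity and then invoke the already-established Lemma~\ref{lemma:shift-equiv}. By that lemma, $\shiftfunc\rbr{\matricize\rbr{\intermseq_i}} = \matricize\rbr{\shiftintermseq_i}$, where $\shiftintermseq_i$ is the circular shift of $\intermseq_i$; and the final equality $\matricize\rbr{\transfunc_{\dictionary_i}(\intermseq_i)} = \matricize(\intermseq_{i+1})$ is immediate since $\intermseq_{i+1} \triangleq \transfunc_{\dictionary_i}(\intermseq_i)$ by definition. Hence it suffices to prove $\transmat(\dictionary_i)\,\matricize\rbr{\shiftintermseq_i} = \matricize(\intermseq_{i+1})$, i.e.\ that left-multiplication by $\transmat(\dictionary_i)$ implements the \translate{} step at the level of matrix representations.

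First I would fix an arbitrary column index $j \in [\inputlength/2]$. By Definition~\ref{defn:apdx-seq-matrep}, the $j$-th column of $\matricize\rbr{\shiftintermseq_i}$ is $\vecrep(\shiftintermseq_{i,2j-1}, \shiftintermseq_{i,2j})$, which is the standard-basis ("long one-hot") vector $\lonehot_{a}$ where $a$ is the index of the bigram $(\shiftintermseq_{i,2j-1},\shiftintermseq_{i,2j})$ under the paper's fixed enumeration of $\alphabetset^2$. Multiplying a matrix by the standard-basis vector $\lonehot_{a}$ extracts its $a$-th column, so $\transmat(\dictionary_i)\,\lonehot_{a}$ is the $a$-th column of $\transmat(\dictionary_i)$, which by Definition~\ref{def:apdx-surrogate-translationMatrixRepresentation} is precisely the long one-hot vector indexing the bigram $\dictionary_i(\shiftintermseq_{i,2j-1},\shiftintermseq_{i,2j})$. (I would note here that $\transmat(\dictionary_i)$ has exactly one nonzero entry per column because $\dictionary_i$ is a well-defined function; bijectivity of $\dictionary_i$ is not actually needed for this equivalence.) On the other hand, by the definition of \translate{} we have $(\intermseq_{i+1,2j-1}, \intermseq_{i+1,2j}) = \dictionary_i(\shiftintermseq_{i,2j-1},\shiftintermseq_{i,2j})$, so the $j$-th column of $\matricize(\intermseq_{i+1})$ is the long one-hot vector for the very same bigram. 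Thus the $j$-th columns of $\transmat(\dictionary_i)\,\matricize\rbr{\shiftintermseq_i}$ and $\matricize(\intermseq_{i+1})$ coincide; since $j$ was arbitrary, the two matrices are equal, and combining with Lemma~\ref{lemma:shift-equiv} yields all three claimed identities.

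There is no real obstacle here: the argument is pure bookkeeping. The only point requiring a moment of care is the indexing convention for $\alphabetset^2$ — one must check that the index-to-bigram map $i \mapsto (\lceil i/\numchar\rceil,\, i \bmod \numchar)$ used in Definition~\ref{def:apdx-surrogate-translationMatrixRepresentation} is the two-sided inverse of the bigram-to-index map used to define $\vecrep$ (essentially base-$\numchar$ positional notation), so that "extract column $a$ of $\transmat(\dictionary_i)$" lines up exactly with "apply $\dictionary_i$ to the bigram encoded by $\lonehot_{a}$". I would state this inverse relationship explicitly but not belabor the one-line verification.
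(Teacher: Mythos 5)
Your proposal is correct and follows essentially the same route as the paper's own proof: fix a column $j$, use Lemma~\ref{lemma:shift-equiv} to identify the $j$-th column of the shifted matrix with the one-hot encoding of the shifted bigram, and then read off from Definition~\ref{def:apdx-surrogate-translationMatrixRepresentation} that multiplying by $\transmat(\dictionary_i)$ produces the one-hot vector of the translated bigram, which equals the corresponding column of $\matricize(\intermseq_{i+1})$. Your added remarks (that bijectivity is not needed and that the index/bigram conventions must be mutually inverse) are fine but do not change the substance of the argument.
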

\end{thmbox}

\begin{proof}[Proof of \cref{lemma:translate-equiv}] Fix any column $j\in[\inputlength/2]$, let $(a,b)$ be the $(2j-1)$-th and the $2j$-th character of the shifted sequence $\shiftintermseq_i$. Let $(c,d)=\pi_i(a,b)$, by the translation construction we know the $(2j-1)$-th and the $2j$-th character of $\intermseq_{i+1}$ is just $c$ and $d$. Hence $\matricize(\intermseq_{i+1})^{(j)} = v(c,d)$.

From \cref{lemma:shift-equiv} we know that $\shiftfunc(\matricize(\intermseq_i))^{(j)} = v(a,b)$. By construction of $\transmat{(\pi_i)}$ above, we know that $\transmat{(\pi_i)}\shiftfunc(\matricize(\intermseq_i))^{(j)}$ is a one-hot vector at the $(c\numchar + d)$-th position, i.e. $\matrep_{i+1}^{(j)} = \transmat{(\pi_i)}\shiftfunc(\matricize(\intermseq_i))^{(j)} = \svecrep_c\otimes\svecrep_c = \vecrep(c,d)$. Since $\matricize(\intermseq_{i+1})^{(j)} = \matrep_{i+1}^{(j)}$ for all $j$, we have $\matrep_{i+1} = \matricize(\intermseq_{i+1})$.
\end{proof}

\paragraph{Parameterization of the Translation Matrix $\P$.}\emph{}

With the two key operations in place, now we can introduce the surrogate model in its full detail. Since the multi-level translation task is a naturally sequential operation, we model the surrogate operations as a multi-layer network as well (which also matches with the solution found by Llama-based models when trained on real data as in \cref{sec:mechanistic}).

For each level, the surrogate model needs to present both in-context learning capability at the initialization and in-weight capability toward the end of \ICM{} on a certain \strcodebook{} $\targetCB$. The in-weight capability requires certain parameter to store $\targetCB$ on its own that is independent of the context.

To capture both capabilities at the same time, we parameterize the translation matrix $\P_i$ as a combination of in-context information $\contextmat_i\in\R^{\numchar^2\times\numchar^2}$ and in-weight memory $\weightmat_i\in\R^{\numchar^2\times\numchar^2}$.

\begin{defnbox}
\begin{definition}[Effective Translation Matrix]\label{defn:apdx-surrogate-effectivetransmat}
\emph{}\\
For in-context information $\contextmat_i\in\R^{\numchar^2\times\numchar^2}$ and in-weight memory $\weightmat_i\in\R^{\numchar^2\times\numchar^2}$ at level $i$, the corresponding effective translation matrix $\P_i$ is defined as
    \begin{align*}
    \P_i = \maxfunc\rbr{\contextmat_i + \weightmat_i}
\end{align*}
where $\maxfunc$ is the column-wise hard-max function converting $\contextmat_i + \weightmat_i$ to a binary column stochastic matrix.
\end{definition}
\end{defnbox}

Note that column $k$ in $\P_i$, which we denote as $\P_i^{(k)}$, is equal to the one-hot vector at $\argmax(\contextmat_i^{(k)} + \weightmat_i^{(k)})$.

\paragraph{Surrogate Model with In-Context Capability}\emph{}

Now we can formally introduce the surrogate model.

\begin{defnbox}
\begin{definition}[Surrogate Model for \translationtask ]\emph{}

\label{defn:apdx-surrogate-surrogateModel}
    The surrogate model for \translationtask$(\depth, \numchar)$ can be represented by the recursive expression 
\begin{align}
\label{eqn:theory-surrogateRep-recurrance-apdx}
    \matrep_{i+1} = \maxfunc(\contextmat_i + \weightmat_i)\shiftfunc(\matrep_i)
\end{align}
The learnable parameters for the surrogate model are the weight matrices $\surrogateWeights:=\cbr{\weightmat_1,\weightmat_2,\dots,\weightmat_\depth}$. 
We denote the surrogate model parameterized by $\surrogateWeights$ as $\surrogateModel_\surrogateWeights(\cdot)$ which maps the input and the in-context information to the output as 
\begin{align}
\begin{split}
    \matrep_{\depth+1} &= \surrogateModel_\surrogateWeights(\contextmat_1, \contextmat_2,\dots, \contextmat_\depth, \matrep_1)
    \\
    &\triangleq \maxfunc(\contextmat_\depth + \weightmat_\depth)\\
    &\qquad \shiftfunc\rbr{\maxfunc(\contextmat_{\depth - 1} + \weightmat_{\depth-1})\shiftfunc\rbr{\cdots \maxfunc(\contextmat_1 + \weightmat_1)\shiftfunc(\matrep_1) \cdots}}.
\end{split}
\end{align}
\end{definition}
\end{defnbox}

In this surrogate model, the in-context descriptive text  $\taskdesc$ is just $\scbr{\contextmat_1,\dots,\contextmat_\depth}$, where each matrix is directly being passed into the corresponding layer. When providing information about a \strcodebook{} $\codebook = \scbr{\dictionary_i}_{i=1}^\depth$, we have $\contextmat_i = \transmat{(\dictionary_i)}$
where $\transmat{(\dictionary_i)}$ is the column-stochastic matrix representation of $\dictionary_i$ as defined in \cref{def:apdx-surrogate-translationMatrixRepresentation}.

When partial \strdict{} information is provided (corresponding to dropping certain \texttt{ab->CD} entries in the language model context), we zero-out the corresponding column in $\contextmat_i$. When no in-context information is provided for level $i$, we just have $\contextmat_i$ be the all-zero matrix containing no information (assuming zero as prior).

For simplicity of presentation, in \cref{defn:apdx-surrogate-surrogateModel} the in-context information $\contextmat_i$'s are provided directly to the corresponding layers. We note that the equivalent operations can be exactly re-parameterized such that $\contextmat_i$’s are provided in-context (in concatenation with $\matrep_1$). The reparameterization of the surrogate model is provided below:

\begin{defnbox}
\begin{definition}[Context-Augmented Surrogate Model for \translationtask ]\emph{}

\label{defn:apdx-surrogate-surrogateModel-context-augmented}
 
Let the context-augmented input be $$\mathbf{X}_1 = [\contextmat_1,\dots, \contextmat_d, \matrep_1]\in\R^{n^2\times (dn^2+L)},$$ then we can rewrite the same surrogate model as 

\begin{align}
\begin{split}
\mathbf{X}_{i+1} &= 
\mathbf{X}_i\begin{bmatrix} I_{dn^2} & 0\\0 & 0_{L\times L} \end{bmatrix} + \rbr{\rbr{\mathbf{X}_i
\begin{bmatrix}e_i\otimes I_{n^2}\\0_{L\times n^2}\end{bmatrix}
+ \weightmat_i}\texttt{Shift}\rbr{\mathbf{X}_i
\begin{bmatrix}0_{n^2d\times L}\\I_L\end{bmatrix}
}\begin{bmatrix}0_{dn^2\times dn^2} & 0\\0 & I_{L}\end{bmatrix}}\\
&=[\contextmat_1,\dots, \contextmat_d, 0_{n^2\times L}] + [0,\dots, 0, \rbr{\contextmat_i + \weightmat_i}\texttt{Shift}\rbr{\matrep_i}]\\
&=[\contextmat_1,\dots, \contextmat_d, \matrep_{i+1}]
\end{split}
\end{align}

With the reparameterization, the model is capable of generating $[\contextmat_1,\dots, \contextmat_d, \matrep_{d+1}]$ with input $[\contextmat_1,\dots, \contextmat_d, \matrep_1]$, with all information provided in-context in the input.

\end{definition}
\end{defnbox}

Now let us check what does \cref{def:icl-capable-model} (ICL-capable) and \cref{def:task-capable-model} (specific task-capable) mean in the context of the surrogate model. To make things more rigorous we introduce two stronger notions of capabilities:

\def \surrogateICLcapable {\translationtask$(\depth, \numchar)$-ICL-capable}
\def \surrogatecapable {$\translationtask_\targetCB$-capable}

\begin{defnbox}
\begin{definition}[Strongly \surrogateICLcapable{} surrogate model]
\label{def:icl-capable-model-exact}
\emph{}\\
    We say a surrogate model $\surrogateModel_\surrogateWeights(\cdot)$ is strongly \surrogateICLcapable{} if for any \strcodebook{} $\codebook = \cbr{\dictionary_i}_{i=1}^\depth$ in \translationtask$(n,d)$, for any input sequence $\inputseq\in\alphabetset^\inputlength$ where $L$ is even, we have $$\surrogateModel_\surrogateWeights(\transmat{(\dictionary_1)}, \dots, \transmat{(\dictionary_\depth)}, \matricize(\inputseq)) = \matricize(\translationtask_\codebook(\inputseq)).$$
\end{definition}
\end{defnbox}

\begin{defnbox}
\begin{definition}[Strongly \surrogatecapable{} surrogate model]
\label{def:task-capable-model-exact}
\emph{}\\
    For a fixed \strcodebook{} $\targetCB = \cbr{\dictionary^*_i}_{i=1}^\depth$ in \translationtask$(n,d)$, we say a surrogate model $\surrogateModel_\surrogateWeights(\cdot)$ is strongly \surrogatecapable{} if for any input sequence $\inputseq\in\alphabetset^\inputlength$ where $L$ is even, we have $$\surrogateModel_\surrogateWeights(\zeromat, \dots, \zeromat, \matricize(\inputseq)) = \matricize(\translationtask_\targetCB(\inputseq)).$$
\end{definition}
\end{defnbox}

Now we can show the following properties of the surrogate model $\surrogateModel_\surrogateWeights(\cdot)$:

\begin{thmbox}
\begin{lemma}
\label{lemma:surrogateICLCapable-apdx}
    When $\norm{\weightmat_i}_0 < \frac12$ for all $i\in[\depth]$, $\emph{\surrogateModel}_\surrogateWeights$ is strongly $\emph{\translationtask}(\depth,\numchar)$-ICL-capable.
\end{lemma}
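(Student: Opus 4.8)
The plan is to observe first that the hypothesis $\norm{\weightmat_i}_0 < \frac12$ for all $i\in[\depth]$ is merely a compact way of asserting $\weightmat_i = \zeromat$ for every $i$: the quantity $\norm{\cdot}_0$ counts nonzero entries and is therefore a nonnegative integer, so the only value strictly below $\frac12$ is $0$. Substituting $\weightmat_i = \zeromat$ into the definition of the effective translation matrix (\cref{defn:apdx-surrogate-effectivetransmat}), for an arbitrary \strcodebook{} $\codebook = \cbr{\dictionary_i}_{i=1}^\depth$ supplied in context through $\contextmat_i = \transmat{(\dictionary_i)}$, we obtain $\P_i = \maxfunc(\transmat{(\dictionary_i)} + \zeromat) = \maxfunc(\transmat{(\dictionary_i)})$.

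The second step is to check that $\maxfunc$ fixes $\transmat{(\dictionary_i)}$, i.e. $\maxfunc(\transmat{(\dictionary_i)}) = \transmat{(\dictionary_i)}$. By \cref{def:apdx-surrogate-translationMatrixRepresentation} every column of $\transmat{(\dictionary_i)}$ is a one-hot vector (its unique $1$ sits at the index determined by $\dictionary_i$ applied to the corresponding input $2$-tuple), so the column-wise argmax is attained at a unique position and the column-wise hard-max returns that same one-hot column. Consequently $\P_i = \transmat{(\dictionary_i)}$ for every $i$, and the surrogate recursion of \cref{defn:apdx-surrogate-surrogateModel} collapses to $\matrep_{i+1} = \transmat{(\dictionary_i)}\,\shiftfunc(\matrep_i)$.

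Finally I would induct on the translation level. Fix any even $\inputlength$ and any input $\inputseq\in\alphabetset^{\inputlength}$; set $\intermseq_1 = \inputseq$ and $\matrep_1 = \matricize(\inputseq)$. Assuming $\matrep_i = \matricize(\intermseq_i)$, \cref{lemma:translate-equiv} (which in turn invokes \cref{lemma:shift-equiv} for the circular shift) gives $\matrep_{i+1} = \transmat{(\dictionary_i)}\shiftfunc(\matricize(\intermseq_i)) = \matricize(\transfunc_{\dictionary_i}(\intermseq_i)) = \matricize(\intermseq_{i+1})$, closing the induction. Iterating over $i = 1,\dots,\depth$ yields $\matrep_{\depth+1} = \matricize(\intermseq_{\depth+1}) = \matricize(\translationtask_\codebook(\inputseq))$, i.e. $\surrogateModel_\surrogateWeights(\transmat{(\dictionary_1)},\dots,\transmat{(\dictionary_\depth)},\matricize(\inputseq)) = \matricize(\translationtask_\codebook(\inputseq))$. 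Since $\codebook$ and $\inputseq$ (with $\inputlength$ even) were arbitrary, this is exactly the condition in \cref{def:icl-capable-model-exact}, so $\surrogateModel_\surrogateWeights$ is strongly \translationtask$(\depth,\numchar)$-ICL-capable.

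As for the main obstacle: there essentially is none — every nontrivial ingredient (the equivalences of $\shiftfunc$ with \circularshift{} and of left multiplication by $\transmat{(\dictionary_i)}$ with \translate{}) has already been established in \cref{lemma:shift-equiv,lemma:translate-equiv}, so this lemma is pure bookkeeping once the two preceding lemmas and the reduction $\weightmat_i=\zeromat$ are in hand. The only point deserving a line of care is the claim that $\maxfunc$ leaves an already-binary column-stochastic matrix unchanged, which requires the column maxima to be uniquely attained; this holds because the columns of $\transmat{(\dictionary_i)}$ are one-hot, so no tie-breaking convention for $\maxfunc$ needs to be invoked.
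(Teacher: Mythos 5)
Your argument matches the paper's proof in its main content---once each effective translation matrix satisfies $\P_i=\transmat(\dictionary_i)$, both you and the paper conclude via \cref{lemma:translate-equiv} (and \cref{lemma:shift-equiv}) that the recursion reproduces $\matricize(\translationtask_\codebook(\inputseq))$, and your explicit induction over levels is just a slightly more careful write-up of the paper's ``for all $i\in[\depth]$'' step. The divergence, and the one point you should fix, is in how you discharge the hypothesis $\norm{\weightmat_i}_0<\frac12$. You read $\norm{\cdot}_0$ as the count of nonzero entries and collapse the hypothesis to $\weightmat_i=\zeromat$. The paper instead treats the condition as a bound on the \emph{magnitude} of the entries of $\weightmat_i$ (an $\ell_\infty$-type bound, despite the subscript $0$): its proof argues that ``no entries in $\weightmat_i^{(k)}$ can flip the argmax of $\transmat(\dictionary_i)^{(k)}+\weightmat_i^{(k)}$,'' i.e.\ since each column of $\transmat(\dictionary_i)$ has a gap of $1$ between its unique $1$-entry and the zeros, adding entries of magnitude below $\frac12$ keeps the correct entry above $\frac12$ and every other entry below $\frac12$, so $\maxfunc(\transmat(\dictionary_i)+\weightmat_i)=\transmat(\dictionary_i)$.

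This is not merely a notational quibble, because the lemma is later invoked for genuinely nonzero weights satisfying the same condition: the base case of \cref{thm:apdx-surrogate-GD-learning-coverable} and the hypotheses of \cref{lemma:apdx-surrogate-learnW1,lemma:apdx-surrogate-learnW2} take arbitrary initializations with $\snorm{\W_{1(0)}}_0\le\frac12$, and their proofs explicitly compare gradient margins against nonzero ``entry-wise differences in the initialization.'' Under the paper's intended reading, your reduction to $\weightmat_i=\zeromat$ proves only a special case of the lemma and would not support those downstream uses. The repair is the one-line margin argument above (which is exactly where your aside about $\maxfunc$ fixing one-hot columns should be strengthened: you need not just that $\maxfunc$ fixes $\transmat(\dictionary_i)$, but that it fixes it after a sub-$\frac12$ perturbation); with that substitution your proof coincides with the paper's.
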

\end{thmbox}

\begin{proof}
    Fix any \strcodebook{} $\codebook = \cbr{\dictionary_i}_{i=1}^\depth$ and its corresponding matrix representations $\scbr{\transmat{(\dictionary_i)}}_{i=1}^d$. Since $\norm{\weightmat_i}_0 < \frac12$, for any column $k$, no entries in $\weightmat_i^{(k)}$ can flip the argmax of $\transmat(\dictionary_i)^{(k)}+\weightmat_i^{(k)}$ away from being $\argmax\transmat(\dictionary_i)^{(k)}$. Therefore we have $\P_i = \maxfunc(\transmat{(\dictionary_i)} + \weightmat_i) = \transmat{(\dictionary_i)}$ for all layers $i\in[\depth]$.     
    
    By \cref{lemma:translate-equiv}, for all $i\in[\depth]$ we have $\P_i = \transmat{(\dictionary_i)}$ recovering $\transfunc_{\dictionary_i}$. Hence for any input sequence $\inputseq\in\alphabetset^\inputlength$, we have $\surrogateModel_\surrogateWeights(\cbr{(\dictionary_1)}, \dots, \cbr{(\dictionary_\depth)}, \matricize(\inputseq)) = \matricize(\translationtask_\codebook(\inputseq))$.
\end{proof}

\begin{thmbox}
\begin{lemma}
\label{lemma:surrogateCapable-apdx}
    Fix a target \strcodebook{} $\targetCB = \cbr{\dictionary^*_i}_{i=1}^\depth$, when $\emph{\maxfunc}\rbr{\weightmat_i} = \transmat{(\dictionary^*_i)}$ for all  $i\in[\depth]$, $\emph{\surrogateModel}_\surrogateWeights(\cdot)$ is strongly $\emph{\translationtask}_\targetCB$-capable.
\end{lemma}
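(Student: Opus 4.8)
The plan is to mirror the proof of \cref{lemma:surrogateICLCapable-apdx}, with the roles of the in-context matrices $\contextmat_i$ and the trainable matrices $\weightmat_i$ interchanged. By \cref{def:task-capable-model-exact}, being strongly $\translationtask_\targetCB$-capable amounts to showing that $\surrogateModel_\surrogateWeights(\zeromat, \dots, \zeromat, \matricize(\inputseq)) = \matricize(\translationtask_\targetCB(\inputseq))$ for every even-length input $\inputseq \in \alphabetset^\inputlength$. So I fix such an $\inputseq$ and set $\contextmat_i = \zeromat$ for all $i \in [\depth]$, as dictated by \cref{defn:apdx-surrogate-surrogateModel} when no in-context information is supplied.

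First I would identify the effective translation matrices. By \cref{defn:apdx-surrogate-effectivetransmat}, at each layer $i$ we have $\P_i = \maxfunc(\contextmat_i + \weightmat_i) = \maxfunc(\zeromat + \weightmat_i) = \maxfunc(\weightmat_i)$, which equals $\transmat(\dictionary^*_i)$ by hypothesis. Hence the surrogate recursion of \cref{defn:apdx-surrogate-surrogateModel} collapses to $\matrep_{i+1} = \transmat(\dictionary^*_i)\,\shiftfunc(\matrep_i)$ at every level, i.e. exactly the recursion for the ground-truth translation on the target phrasebook $\targetCB$.

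Next I would run an induction on $i$ with hypothesis ``$\matrep_i = \matricize(\intermseq_i)$, where $\intermseq_i$ is the $i$-th intermediate sequence produced by $\translationtask_\targetCB$ on input $\inputseq$.'' The base case is $\matrep_1 = \matricize(\inputseq) = \matricize(\intermseq_1)$. For the inductive step, \cref{lemma:translate-equiv} gives $\transmat(\dictionary^*_i)\,\shiftfunc(\matricize(\intermseq_i)) = \matricize(\transfunc_{\dictionary^*_i}(\intermseq_i)) = \matricize(\intermseq_{i+1})$, so $\matrep_{i+1} = \matricize(\intermseq_{i+1})$. Unrolling to $i = \depth$ yields $\matrep_{\depth+1} = \matricize(\transfunc_{\dictionary^*_\depth} \circ \cdots \circ \transfunc_{\dictionary^*_1}(\inputseq)) = \matricize(\translationtask_\targetCB(\inputseq))$, which is the claim.

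I do not expect a genuine obstacle here, since the argument is the exact dual of \cref{lemma:surrogateICLCapable-apdx}. The only points that need minding are (i) that adding the all-zero matrix leaves the column-wise hard-max unchanged, which is immediate from the hypothesis $\maxfunc(\weightmat_i) = \transmat(\dictionary^*_i)$, and (ii) that \cref{lemma:translate-equiv} can be chained layer by layer, because its statement is agnostic to whether a given layer's binary column-stochastic matrix originates from the context or from the weights. Both are routine, so I would keep the written-out proof to a few lines.
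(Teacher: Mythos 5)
Your proposal is correct and follows essentially the same route as the paper's proof: with all $\contextmat_i = \zeromat$, each effective translation matrix is $\P_i = \maxfunc(\weightmat_i) = \transmat(\dictionary^*_i)$, and chaining \cref{lemma:translate-equiv} across layers yields $\surrogateModel_\surrogateWeights(\zeromat,\dots,\zeromat,\matricize(\inputseq)) = \matricize(\translationtask_\targetCB(\inputseq))$. The paper merely states this composition tersely, whereas you spell out the layer-by-layer induction; there is no substantive difference.
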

\end{thmbox}

\begin{proof}
    By \cref{lemma:translate-equiv}, for all $i\in[\depth]$ we have $\P_i = \maxfunc(\weightmat_i + \zeromat) = \transmat{(\dictionary^*_i)}$ recovering $\transfunc_{\dictionary^*_i}$. Hence for any input sequence $\inputseq\in\alphabetset^\inputlength$, we have $\surrogateModel_\surrogateWeights(\zeromat, \dots, \zeromat, \matricize(\inputseq)) = \matricize(\translationtask_\targetCB(\inputseq))$.
\end{proof}

\cref{lemma:surrogateICLCapable-apdx} suggests that when the weight matrices have small initializations, the model has perfect ICL capability. Meanwhile \cref{lemma:surrogateCapable-apdx} suggests that when the weights $\weightmat_i$ recover $\transmat{(\dictionary^*_i)}$ in the column-wise hard-max sense, then the surrogate model can perform $\translationtask_\targetCB$ when no context is being provided ($\contextmat_i = \zeromat$).

\subsection{Learning $\targetCB$ in $\translationtask(\depth,\numchar)$ with Heuristics Search}
\label{sec:apdx-surrogateModel-learningGeneral}

In this section, we provide a brute-force algorithm that can learn any target \strcodebook{} $\targetCB$ in $\translationtask(\depth,\numchar)$ using a single ``short'' sequence whose length is not exponentially dependent on $\depth$.

Before proceeding to the details, let us first investigate more on the nature of $\translationtask$ and the surrogate model. For simplicity of notations, given $\targetCB = \cbr{\dictionary^*_1,\dots,\dictionary^*_d}$, we denote the general translation operator $\transmat$ as $\P$, and denote the translation operator $\transmat^{(\dictionary^*_1)}$ as $\W_i^*$. Also, with slight abuse of notations we use ${\translationtask}_\targetCB(\matrep_1)$ to denote $\matricize\rbr{{\translationtask}_\targetCB(\matricize^{-1}(\matrep_1))}.$

First, we characterize the input sequence that is good for providing learning signals.

\begin{defnbox}
\begin{definition}[$\targetCB$-coverable input]\emph{}

    Fix a target \strcodebook{} $\targetCB = \cbr{\dictionary^*_1,\dots,\dictionary^*_d}$ in $\translationtask(\depth,\numchar)$ and an input matrix $\matrep_1\in\R^{\numchar^2\times \inputlength}$, let $\shiftmatrep^*_1, \matrep^*_2, \shiftmatrep^*_2,\dots, \shiftmatrep^*_\depth, \matrep^*_{\depth + 1}\in\R^{\numchar^2\times \inputlength}$ be the intermediate outputs when applying $\translationtask_\targetCB$ on $\matrep_1$. We say $\matrep_1$ is $\targetCB$-coverable if for all levels $i\in[d]$, $\shiftmatrep^*_i$ is of rank-$\numchar^2$.
\end{definition}
\end{defnbox}

Note that as a matrix with only one-hot columns, $\shiftmatrep^*_i$ being rank-$\numchar^2$ suggests that for all $k\in[\numchar^2]$, there exists some column $j\in[\inputlength]$ such that $\shiftmatrep_i^{*(j)} = \onehot_k$. In the context of the translation process, it means that the correct translation process of a $\targetCB$-coverable input $\matrep_1$ would require all entries of all \strdicts{} in $\targetCB$.

Next we will show that if we use the context to condition all translation operators $\P_l$ of the surrogate model to be $\P^{(\pi^*_l)}$ for all but one level $l\in[d]\backslash\cbr{i}$ ($i$ as the unconditioned level), then for the surrogate model to correctly perform $\translationtask_\targetCB$, the operator for the unconditioned level $i$ must also be equal to $\P^{(\pi^*_i)}$. 

\begin{lemma}[Uniqueness of a single $\P_i$ when conditioning all other levels]\emph{}
\label{lemma:apdx-surrogate-heuristics-PiUniqueness}

Fix a target \strcodebook{} $\targetCB = \cbr{\dictionary^*_1,\dots,\dictionary^*_d}$ in $\emph{\translationtask}(\depth,\numchar)$ and a $\targetCB$-coverable input $\matrep_1$. For any level $i\in[\depth]$, consider a surrogate model $\emph{\surrogateModel}_\surrogateWeights(\contextmat_1, \contextmat_2,\dots, \contextmat_\depth, \cdot)$ with certain context $\contextmat_1, \contextmat_2,\dots, \contextmat_\depth$ such that $\P_l = \W_l^*$ for all $l\in[\depth]$ except $l=i$. Then $\emph{\surrogateModel}_\surrogateWeights(\contextmat_1, \contextmat_2,\dots, \contextmat_\depth, \matrep_1) = \emph{\translationtask}_\targetCB(\matrep_1)$ if and only if $$\P_i = \emph{\maxfunc}\rbr{\contextmat_i + \W_i} = \W_i^*.$$
\end{lemma}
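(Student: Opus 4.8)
The plan is to prove the two directions of the equivalence separately, leaning on the layer-wise equivalences \cref{lemma:shift-equiv} and \cref{lemma:translate-equiv}, the per-level invertibility of \cref{lemma:setting-MLTinvertability}, and the $\targetCB$-coverability hypothesis on $\matrep_1$. Throughout I write $\shiftmatrep^*_1,\matrep^*_2,\dots,\matrep^*_{\depth+1}$ for the ground-truth intermediate outputs of $\translationtask_\targetCB$ on $\matrep_1$, so that $\shiftmatrep^*_l=\shiftfunc(\matrep^*_l)$ and $\matrep^*_{l+1}=\W^*_l\shiftmatrep^*_l$ for each $l$.

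For the ``if'' direction I would simply note that $\P_i=\W_i^*$ forces $\P_l=\W_l^*$ for \emph{all} $l\in[\depth]$, so by \cref{lemma:translate-equiv} every surrogate layer reproduces $\transfunc_{\dictionary^*_l}$, and composing the layers gives $\surrogateModel_\surrogateWeights(\contextmat_1,\dots,\contextmat_\depth,\matrep_1)=\translationtask_\targetCB(\matrep_1)$; this is exactly the computation already carried out in \cref{lemma:surrogateCapable-apdx}.

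For the ``only if'' direction, suppose $\surrogateModel_\surrogateWeights(\contextmat_1,\dots,\contextmat_\depth,\matrep_1)=\translationtask_\targetCB(\matrep_1)=\matrep^*_{\depth+1}$. First I would argue, by a short induction on $l$ using \cref{lemma:shift-equiv} and \cref{lemma:translate-equiv}, that since $\P_l=\W_l^*$ for every $l<i$ the surrogate's intermediate states agree with the true ones through level $i$ (the base case $\matrep_1=\matrep^*_1$ is trivial, and each step is $\matrep_{l+1}=\P_l\shiftfunc(\matrep_l)=\W^*_l\shiftmatrep^*_l=\matrep^*_{l+1}$); in particular $\shiftfunc(\matrep_i)=\shiftmatrep^*_i$ and hence $\matrep_{i+1}=\P_i\shiftmatrep^*_i$. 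Because $\P_i$ is a binary column-stochastic matrix (the output of $\maxfunc$) and $\shiftmatrep^*_i$ has one-hot columns, $\matrep_{i+1}$ still has one-hot columns, i.e. it is the matrix representation of an honest sequence. Next I would observe that running levels $i+1,\dots,\depth$ with the correct operators $\W^*_{i+1},\dots,\W^*_\depth$ is, on the set of matrix representations, exactly the matricized form of the map $\intermseq_{i+1}\mapsto\intermseq_{\depth+1}$, which is a bijection by \cref{lemma:setting-MLTinvertability}; since the surrogate runs precisely these operators on levels $>i$ and its output equals $\matrep^*_{\depth+1}$, injectivity gives $\matrep_{i+1}=\matrep^*_{i+1}=\W_i^*\shiftmatrep^*_i$, so $\P_i\shiftmatrep^*_i=\W_i^*\shiftmatrep^*_i$. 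Finally I would invoke $\targetCB$-coverability: $\shiftmatrep^*_i$ has rank $\numchar^2$, and as a matrix with one-hot columns this means every $\onehot_k$, $k\in[\numchar^2]$, appears as some column $\shiftmatrep^{*(j)}_i$; equating the $j$-th columns of $\P_i\shiftmatrep^*_i$ and $\W_i^*\shiftmatrep^*_i$ yields $\P_i^{(k)}=\P_i\onehot_k=\W_i^*\onehot_k=\W_i^{*(k)}$ for all $k$, hence $\P_i=\W_i^*$.

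The main obstacle is not any single hard calculation but making the ``tail is invertible'' step airtight: I must keep $\matrep_{i+1}$ a genuine one-hot-column matrix representation (which holds because $\P_i$ is a $\maxfunc$ output acting on one-hot columns) and I must phrase the composition of levels $i+1,\dots,\depth$ as a bijection on that discrete set — via \cref{lemma:setting-MLTinvertability} applied to the tail codebook $\{\dictionary^*_{i+1},\dots,\dictionary^*_\depth\}$ — rather than on arbitrary nonnegative matrices, so that $\matrep_{\depth+1}=\matrep^*_{\depth+1}$ genuinely forces $\matrep_{i+1}=\matrep^*_{i+1}$. The induction up to level $i$ and the final column-matching are routine bookkeeping.
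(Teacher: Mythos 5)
Your proposal is correct and follows essentially the same route as the paper's proof: agreement of intermediate states up to level $i$ from the correctly conditioned earlier layers, agreement at level $i+1$ obtained by inverting the correctly conditioned tail via \cref{lemma:setting-MLTinvertability}, and then cancelling $\shiftmatrep^*_i$ using the rank-$\numchar^2$ ($\targetCB$-coverability) property to force $\P_i=\W_i^*$. Your extra care in checking that $\matrep_{i+1}$ has one-hot columns, so that the tail bijection argument is applied on genuine matricized sequences, only makes explicit a point the paper passes over quickly, and your separate treatment of the trivial ``if'' direction is likewise consistent with the paper.
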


\begin{proof}
    This lemma is a direct consequence of the bijective property of the translation process shown in \cref{lemma:setting-MLTinvertability}. Let $\shiftmatrep^*_1, \matrep^*_2, \shiftmatrep^*_2,\dots, \shiftmatrep^*_\depth, \matrep^*_{\depth + 1}\in\R^{\numchar^2\times \inputlength}$ be the intermediate outputs when applying $\translationtask_\targetCB$ on $\matrep_1$, and let $\shiftmatrep_1, \matrep_2, \shiftmatrep_2,\dots, \shiftmatrep_\depth, \matrep_{\depth + 1}\in\R^{\numchar^2\times \inputlength}$ be the intermediate outputs when applying $\surrogateModel_\surrogateWeights(\contextmat_1, \contextmat_2,\dots, \contextmat_\depth, \cdot)$ as described. Since we assume $\P_l = \P^{(\pi^*_l)}$ for all $l < i$, we have $\matrep_{i} = \matrep^*_{i}$ and therefore $\shiftmatrep_{i} = \shiftmatrep^*_{i}$.

    On the other end, since $\surrogateModel_\surrogateWeights(\contextmat_1, \contextmat_2,\dots, \contextmat_\depth, \matrep_1) = \translationtask_\targetCB(\matrep_1) = \matrep^*_{\depth+1}$ and $\P_l = \P^{(\pi^*_l)}$ for all $l > i$, by the invertible property of the translation process (\cref{lemma:setting-MLTinvertability}) we must have $\shiftmatrep_{i+1} = \shiftmatrep^*_{i+1}$ and thus $\matrep_{i+1} = \matrep^*_{i+1}$. Combining both ends we know that \begin{align}
        \P_{i}\shiftmatrep_{i} = \matrep_{i+1} = \matrep^*_{i+1} =\W_i^*\shiftmatrep^*_{i} = \W_i^*\shiftmatrep_{i}.
    \end{align}
    Since $\shiftmatrep_{i} = \shiftmatrep^*_{i}$ is rank $\numchar^2$ by the $\targetCB$-coverable assumption and $\P_{i}\in\R^{\numchar^2\times\numchar^2}$, it must be so that $\P_{i}=\W^*_i$.
\end{proof}

If we further condition on the held-out level $\P_i$ such that we only leave one column of $\P_i^{(k)}$ not necessarily equal to $\W_i^{*(k)}$, we have the following corollary
\begin{corollary}[Uniqueness of a single $\P_i$ column when conditioning everything else]\emph{}
\label{cor:apdx-surrogate-holdingOutCol}

    Fix a target \strcodebook{} $\targetCB = \cbr{\dictionary^*_1,\dots,\dictionary^*_d}$ in $\emph{\translationtask}(\depth,\numchar)$ and a $\targetCB$-coverable input $\matrep_1$. For any level $i\in[\depth]$ and translation entry $k\in[\numchar^2]$, consider a surrogate model $\emph{\surrogateModel}_\surrogateWeights(\contextmat_1, \contextmat_2,\dots, \contextmat_\depth, \cdot)$ with certain context $\contextmat_1, \contextmat_2,\dots, \contextmat_\depth$ such that $\P_l^{(j)} = \W_l^{*(j)}$ for all $(l,j)\in[\depth]\times[\numchar^2]\backslash\cbr{(i,k)}$. Then $\emph{\surrogateModel}_\surrogateWeights(\contextmat_1, \contextmat_2,\dots, \contextmat_\depth, \matrep_1) = \emph{\translationtask}_\targetCB(\matrep_1)$ if and only if $$\P_i^{(k)} = \emph{\maxfunc}\rbr{\contextmat_i^{(j)} + \W_i^{(j)}} = \W^{*(k)}_i.$$
\end{corollary}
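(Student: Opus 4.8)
The plan is to obtain this corollary as an immediate specialization of \cref{lemma:apdx-surrogate-heuristics-PiUniqueness}, exploiting the fact that fixing \emph{every} column of \emph{every} effective translation operator except the single column $(i,k)$ is a strictly stronger hypothesis than the lemma's hypothesis of fixing every operator except the one at level $i$. So essentially no new mathematics is needed; the corollary is a bookkeeping step on top of the lemma.

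Concretely, I would proceed in two steps. First, I would check that the corollary's hypothesis entails the lemma's: the assumption $\P_l^{(j)} = \W_l^{*(j)}$ for all $(l,j) \in [\depth]\times[\numchar^2]\setminus\{(i,k)\}$ says in particular that for every level $l \neq i$, \emph{all} columns of $\P_l$ coincide with those of $\W_l^*$, hence $\P_l = \W_l^*$ for all $l \neq i$. Thus \cref{lemma:apdx-surrogate-heuristics-PiUniqueness} applies and gives the equivalence
\begin{align*}
    \surrogateModel_\surrogateWeights(\contextmat_1,\dots,\contextmat_\depth, \matrep_1) = \translationtask_\targetCB(\matrep_1) \iff \P_i = \W_i^*.
\end{align*}
Second, I would reduce the matrix identity $\P_i = \W_i^*$ to its $k$-th column: by hypothesis $\P_i^{(j)} = \W_i^{*(j)}$ for every $j \neq k$, so $\P_i$ and $\W_i^*$ can differ only in column $k$, and therefore $\P_i = \W_i^*$ holds if and only if $\P_i^{(k)} = \W_i^{*(k)}$. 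Unwinding \cref{defn:apdx-surrogate-effectivetransmat}, the left-hand side is $\maxfunc(\contextmat_i^{(k)} + \W_i^{(k)})$ (the relevant column index here is $k$). Chaining the two equivalences yields exactly the claimed biconditional.

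For completeness I would also record the ``if'' direction directly, without invoking the lemma: if $\P_i^{(k)} = \W_i^{*(k)}$, then every column of every $\P_l$ matches $\transmat(\dictionary^*_l)$, so $\P_l = \transmat(\dictionary^*_l)$ for all $l\in[\depth]$, and \cref{lemma:translate-equiv} applied level by level shows $\surrogateModel_\surrogateWeights$ computes $\translationtask_\targetCB$ exactly on every input, hence on $\matrep_1$. I do not anticipate a genuine obstacle: the only substantive content — that conditioning all other operators pins down the remaining column — is already delivered by \cref{lemma:apdx-surrogate-heuristics-PiUniqueness}, which in turn rests on the bijectivity of the translation process (\cref{lemma:setting-MLTinvertability}) together with $\shiftmatrep^*_i$ having full rank $\numchar^2$ under $\targetCB$-coverability. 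The work specific to the corollary is purely organizational: aligning the ``all columns but one'' hypothesis with the lemma's ``all levels but one'' hypothesis and tracking the column index $k$ correctly through \cref{defn:apdx-surrogate-effectivetransmat}.
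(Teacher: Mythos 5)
Your proposal is correct and follows exactly the route the paper intends: the paper states this corollary without proof as an immediate consequence of \cref{lemma:apdx-surrogate-heuristics-PiUniqueness}, and your two bookkeeping steps (the corollary's hypothesis implies the lemma's hypothesis, and under that hypothesis $\P_i = \W_i^*$ holds iff the single column $\P_i^{(k)}$ matches $\W_i^{*(k)}$) are precisely the filling-in required. No gap; the direct verification of the ``if'' direction via \cref{lemma:translate-equiv} is a harmless extra.
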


This suggests that if we condition everything else except for one column of the translation operator, then to match the final output on a $\targetCB$-coverable sequence, the model must recover the held-out column to be the same as the ground truth in the \strcodebook{}.

Now we can introduce the search algorithm, which simply enumerate over all translation columns $\W_{i}^{(j)}$ as learning target, generate a contextual information that only leaves that column unconditioned, and search over all possible one-hot vectors for $\W_{i}^{(j)}$ until the output matches with $\translationtask_\targetCB$. Once the output matches, by \cref{cor:apdx-surrogate-holdingOutCol} we know $\maxfunc\srbr{\W_{i}^{(j)}}$ recovers $\W_{i}^{*(j)}$ and we move on to the next learning target. The algorithm can be formalized as follows:

\begin{algorithm}[H]
   \caption{Context-Enhanced Searching Algorithm for \translationtask$(\depth,\numchar)$}
   \label{alg:mlt-surrogate-search}
\begin{algorithmic}[1]
    \STATE {\bfseries Input:}
    \STATE input $\matrep_1\in\R^{n^2\times L}$, label  $\matrep_{\depth+1}^*\in\R^{n^2\times L}$, descriptive text $\gtW_1,\dots,\gtW_{\depth}\in\R^{n^2\times n^2}$ 
    \STATE
    \STATE \textbf{Initialize} $\W_1,\dots, W_\depth\gets \zeromat$ \COMMENT{Start with zero initialization}
    \FOR{$i=1$ {\bfseries to} $d$} 
    \FOR{$k=1$ {\bfseries to} $n^2$} 
    \STATE \textbf{Initialize} $\CM_{i(k)}\gets \gtW_i\rbr{\mI_{\numchar^2} - \diag(\lonehot_k)}$ \COMMENT{Create masked context matrix}
    \STATE \text{\# Search Loop} \label{line:search-loop}
    \FOR{$a=1$ {\bfseries to} $\numchar^2$}
        \STATE $\W_{i}^{(k)}\gets \lonehot_a$ \COMMENT{Search over one-hot columns}
        \STATE $\matrep_{\depth+1} \gets \surrogateModel_\surrogateWeights(\W^*_1\dots,\W^*_{i-1}, \CM_{i(k)},\W^*_{i+1}\dots,\W^*_{\depth}, \matrep_1)$
        \IF{$\matrep_{\depth+1} = \matrep^*_{\depth+1}$}
            \STATE \textbf{break} \COMMENT{Break when found the right column}
        \ENDIF
    \ENDFOR
    \ENDFOR
    \ENDFOR
    \STATE \textbf{Return} $\W_1,\dots,\W_\depth$.
\end{algorithmic}
\end{algorithm}

\begin{thmbox}
\begin{theorem}[Learning $\targetCB$ with context-enhanced search with $\targetCB$-coverable input]\emph{}
\label{thm:apdx-surrogate-search-learning-coverable}

For any target \strcodebook{} $\targetCB = \cbr{\dictionary^*_1,\dots,\dictionary^*_d}$ in $\emph{\translationtask}(\depth,\numchar)$, given an $\targetCB$-coverable input $\matrep_1$ and the corresponding ground truth label $\matrep^*_{d+1} = \emph{\translationtask}_\targetCB(\matrep_1)$, \cref{alg:mlt-surrogate-search} terminates with $\W_i = \W_i^*$ for all $i\in[\depth]$ with $O(n^4d)$ forward passes through the surrogate model.
\end{theorem}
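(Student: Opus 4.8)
The plan is to verify \cref{alg:mlt-surrogate-search} directly, by maintaining a loop invariant on the learnable weights $\W_1,\dots,\W_\depth$ and then reading off correctness and the forward-pass budget from it. The invariant I would carry is: at the moment control reaches the start of the inner search loop for level $i$ and translation entry $k$, we have $\W_l = \W_l^*$ for all $l<i$, we have $\W_i^{(j)} = \W_i^{*(j)}$ for all $j<k$, and every remaining column of every $\W_l$ is still $\zeromat$. The base case $(i,k)=(1,1)$ holds by the zero-initialization of all weights.

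For the inductive step I would compute the effective translation matrices $\P_l = \maxfunc(\contextmat_l + \W_l)$ that govern the forward pass executed inside the search loop at $(i,k)$. For every level $l\neq i$ the context passed in is $\W_l^* = \transmat(\dictionary^*_l)$, while the invariant gives $\W_l = \W_l^*$ when $l<i$ and $\W_l=\zeromat$ when $l>i$; in either case each column of $\contextmat_l+\W_l$ has a strict maximum at the hot entry of $\transmat(\dictionary^*_l)^{(j)}$, so $\P_l = \W_l^* = \transmat(\dictionary^*_l)$, with no tie-breaking needed. For level $i$ the context is $\CM_{i(k)} = \W_i^*(\mI_{\numchar^2}-\diag(\lonehot_k))$, which zeroes column $k$ and leaves the other columns equal to $\W_i^{*(j)}$; matching this against the invariant's description of $\W_i$ (columns $j<k$ equal $\W_i^{*(j)}$, column $k$ equals the currently searched $\lonehot_a$, columns $j>k$ equal $\zeromat$), a short three-case check on $j$ gives $\P_i^{(j)} = \W_i^{*(j)}$ for all $j\neq k$ and $\P_i^{(k)} = \maxfunc(\zeromat+\lonehot_a) = \lonehot_a$.

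Now the hypotheses of \cref{cor:apdx-surrogate-holdingOutCol} are exactly met: $\matrep_1$ is $\targetCB$-coverable by assumption, and $\P_l^{(j)} = \W_l^{*(j)}$ for every $(l,j)\neq(i,k)$. Hence this forward pass returns $\matrep^*_{\depth+1} = \translationtask_\targetCB(\matrep_1)$ if and only if $\P_i^{(k)} = \W_i^{*(k)}$, that is, if and only if $\lonehot_a = \W_i^{*(k)}$. Since $\W_i^{*(k)} = \transmat(\dictionary^*_i)^{(k)}$ is a one-hot vector in $\R^{\numchar^2}$, exactly one index $a^*\in[\numchar^2]$ satisfies this equality, so the search loop scans $a=1,2,\dots$ and breaks precisely at $a=a^*$, leaving $\W_i^{(k)} = \W_i^{*(k)}$; this is exactly the invariant at the successor index, completing the induction. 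Carrying the invariant past $(i,k)=(\depth,\numchar^2)$ gives $\W_l = \W_l^*$ for all $l\in[\depth]$ at termination, which is the claimed output, and by \cref{lemma:surrogateCapable-apdx} the resulting surrogate is strongly $\translationtask_\targetCB$-capable.

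For the count, for each of the $\depth\cdot\numchar^2$ pairs $(i,k)$ the search loop makes exactly $a^*\le\numchar^2$ forward passes, so the algorithm terminates after at most $\depth\cdot\numchar^2\cdot\numchar^2 = O(\numchar^4\depth)$ forward passes through the surrogate model. I expect the only delicate point to be the column-wise computation of $\P_i$ together with the observation that every $\maxfunc$ here has a unique argmax (so no tie-breaking convention is invoked); beyond keeping the level and column indices straight, there is no substantive obstacle once \cref{cor:apdx-surrogate-holdingOutCol} is available.
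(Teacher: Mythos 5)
Your proposal is correct and follows essentially the same route as the paper's proof: an induction over (level, column) pairs whose inductive step verifies that the forward pass conditions every effective translation column except $\P_i^{(k)}$ and then invokes \cref{cor:apdx-surrogate-holdingOutCol} to identify the unique one-hot candidate, yielding the same $O(\numchar^4\depth)$ count. Your invariant (unlearned columns remain exactly $\zeromat$) is just a more explicit form of the paper's condition that $\emph{\maxfunc}\srbr{\W_l^{(j)}+\W_l^{*(j)}}=\W_l^{*(j)}$ is preserved throughout, so there is no substantive difference.
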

\end{thmbox}

\begin{proof}
    The statement can be proven by a simple induction.
    
    Let the inductive hypothesis be such that when the enumeration goes to $k$-th column of the $i$-th layer, if $\maxfunc\srbr{\W_l^{(j)} + \W_l^{*(j)}} = \W^{*(j)}_l$ for all $(l,j)\in[\depth]\times[\numchar^2]$ and $\W_l^{(j)} = \W_l^{*(j)}$ for all $(l,j)$ such that $l<i$ or $l=i\land j<k$, then the search loop (\cref{line:search-loop}: line 13) breaks with $\W_i^{(k)} = \W_i^{*(k)}$ while $\maxfunc\srbr{\W_l^{(j)} + \W_l^{*(j)}} = \W^{*(j)}_l$ for all $(l,j)\in[\depth]\times[\numchar^2]$ is preserved.

    The base case is satisfied as with zero initialization, we have $\maxfunc\srbr{\W_l^{(j)} + \W_l^{*(j)}} = \maxfunc\srbr{\zeromat + \W_l^{*(j)}} = \W^{*(j)}_l$ for all $(l,j)\in[\depth]\times[\numchar^2]$ and there are no requirements for $\W_i^{(k)} = \W_i^{*(k)}$ yet.

    For the induction step, we note that with the condition of $\maxfunc\srbr{\W_l^{(j)} + \W_l^{*(j)}} = \W^{*(j)}_l$ for all $(l,j)\in[\depth]\times[\numchar^2]$, $\W^*_1\dots,\W^*_{i-1}, \CM_{i(k)},\W^*_{i+1}\dots,\W^*_{\depth}$ will correctly condition all columns of $\P$'s except for the $\P_{i}^{(k)}$ since 
    \begin{align}
        \CM_{i(k)}^{(k)} = \gtW_i\rbr{\mI_{\numchar^2} - \diag(\lonehot_k)}^{(k)} = \zeromat.
    \end{align}
    Thus by \cref{cor:apdx-surrogate-holdingOutCol}, we know that the search loop will terminate when it finds $\W_{i}^{(k)} = \W_{i}^{*(k)}$. The newly added column provides the correct inductive hypothesis on $\W_l^{(j)} = \W_l^{*(j)}$ for the next enumeration step.

    By induction to $i=\depth$ and $k=\numchar^2$, we will be able to recover $\W_i = \W_i^*$ for all $i\in[\depth]$.
\end{proof}

Given that we can learn $\W_i$ effectively with $\targetCB$-coverable input, how should we construct such inputs? It turned out that with high probability, short random strings suffices.

\begin{lemma}[Distribution of intermediate sequences]
\label{lemma:apdx-surrogate-sequence-distn}
    Fix a target \strcodebook{} $\targetCB = \cbr{\dictionary^*_1,\dots,\dictionary^*_d}$ in $\emph{\translationtask}(\depth,\numchar)$. Let $\rmatrep_1\in\R^{\numchar^2\times \inputlength}$ be a random input matrix to $\emph{\translationtask}(\depth,\numchar)$ such that each column $\rmatrep_1^{(j)}$ is i.i.d. sampled from $\cU\srbr{\cbr{\lonehot_k}_{k=1}^{\numchar^2}}$ (the uniform distribution over one-hot vectors $\cbr{\lonehot_k}_{k=1}^{\numchar^2}$), the columns of intermediate random sequences $\rshiftmatrep^*_1, \rmatrep^*_2, \rshiftmatrep^*_2,\dots, \rshiftmatrep^*_\depth, \rmatrep^*_{\depth + 1}\in\R^{\numchar^2\times \inputlength}$ obtained by passing the input $\rmatrep_1$ through $\emph{\translationtask}_\targetCB$ also follow the same i.i.d. uniform distribution.
\end{lemma}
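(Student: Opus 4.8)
The plan is to prove this by induction on the translation level $i$, carrying the invariant that the columns of the (random) matrix representation at level $i$ are i.i.d.\ uniform over $\cbr{\lonehot_k}_{k=1}^{\numchar^2}$, and to make the inductive step transparent by translating back to the character picture. First I would record the elementary dictionary: since column $j$ of $\matricize(\intermseq)$ is $\vecrep(s_{2j-1},s_{2j})$ and $\vecrep$ is a bijection from $\alphabetset^2$ onto $\cbr{\lonehot_k}_{k=1}^{\numchar^2}$, the columns of $\matricize(\intermseq)$ are i.i.d.\ uniform over $\cbr{\lonehot_k}_{k=1}^{\numchar^2}$ \emph{iff} the $\inputlength$ characters $s_1,\dots,s_{\inputlength}$ of $\intermseq$ are i.i.d.\ uniform over $\alphabetset$. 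The base case $i=1$ is exactly the hypothesis on $\rmatrep_1$. By \cref{lemma:shift-equiv,lemma:translate-equiv} each level of $\translationtask_\targetCB$ acts on matrix representations as $\rmatrep^*_{i+1} = \transmat(\dictionary^*_i)\,\shiftfunc(\rmatrep^*_i)$ with $\rshiftmatrep^*_i = \shiftfunc(\rmatrep^*_i) = \matricize(\shiftintermseq^*_i)$, so it suffices to propagate the invariant through the \circularshift{} and then through the \translate{}.

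For the \circularshift{}: writing $\intermseq^*_i = \matricize^{-1}(\rmatrep^*_i)$, the shifted sequence is the cyclic reindexing $\shiftintermseq^*_{i,m} = \intermseq^*_{i,(m+1)\%\inputlength}$, so its characters are a permutation of the i.i.d.\ uniform family $\intermseq^*_{i,1},\dots,\intermseq^*_{i,\inputlength}$ and hence are themselves i.i.d.\ uniform over $\alphabetset$. By the equivalence above, $\rshiftmatrep^*_i = \matricize(\shiftintermseq^*_i)$ then has i.i.d.\ uniform one-hot columns. Concretely, column $j$ of $\rshiftmatrep^*_i$ is $\vecrep(\intermseq^*_{i,2j},\intermseq^*_{i,2j+1})$ (indices mod $\inputlength$), and for $j=1,\dots,\inputlength/2$ these bigrams use the pairwise-disjoint index sets $\cbr{2j,2j+1}$, which is exactly where the evenness of $\inputlength$ enters: it guarantees these $\inputlength/2$ pairs partition $\cbr{1,\dots,\inputlength}$, so the columns inherit independence from the i.i.d.\ character family.

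For the \translate{}: by \cref{def:apdx-surrogate-translationMatrixRepresentation} (and the computation in the proof of \cref{lemma:translate-equiv}), left-multiplying a one-hot column $\vecrep(a,b)$ by $\transmat(\dictionary^*_i)$ yields $\vecrep(\dictionary^*_i(a,b))$, so the map $\lonehot_k\mapsto \transmat(\dictionary^*_i)\lonehot_k$ is precisely the permutation of $\cbr{\lonehot_k}_{k=1}^{\numchar^2}$ induced by the bijection $\dictionary^*_i$ of $\alphabetset^2$. Applying a fixed permutation to a uniform random one-hot leaves it uniform, and applying the same deterministic map to each column preserves independence across columns; hence the columns of $\rmatrep^*_{i+1} = \transmat(\dictionary^*_i)\rshiftmatrep^*_i$ are again i.i.d.\ uniform over $\cbr{\lonehot_k}_{k=1}^{\numchar^2}$. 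Iterating the two steps for $i=1,\dots,\depth$ establishes the invariant for $\rshiftmatrep^*_1,\rmatrep^*_2,\rshiftmatrep^*_2,\dots,\rshiftmatrep^*_\depth,\rmatrep^*_{\depth+1}$, which is the claim.

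The only place requiring care --- and the step I would write out most explicitly --- is the independence of the columns of $\rshiftmatrep^*_i$ after the shift: since the shift re-groups consecutive characters into new bigrams, one must verify that these new bigrams still partition the $\inputlength$ positions into disjoint consecutive pairs (true precisely because $\inputlength$ is even), so that column independence is literally inherited from the i.i.d.\ character family rather than needing a fresh argument. Marginal uniformity of each column and the \translate{} step are then routine consequences of $\vecrep$ being a bijection and $\dictionary^*_i$ being a bijection on $\alphabetset^2$.
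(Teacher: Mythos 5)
Your proposal is correct and follows essentially the same route as the paper's proof: an induction over levels in which the i.i.d.\ uniform one-hot columns are unpacked into i.i.d.\ uniform characters, the \circularshift{} is handled by observing that the re-grouped bigrams occupy disjoint index pairs (using that $\inputlength$ is even), and the \translate{} step is handled by noting that $\transmat(\dictionary^*_i)$ acts as a fixed permutation of the one-hot vectors, which preserves the uniform distribution and column independence. Your write-up merely makes the disjoint-pairing/evenness point more explicit than the paper does, which is a fair clarification but not a different argument.
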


\begin{proof}
    We will prove the claim by induction on depth $i$. Let the inductive hypothesis be that columns in $\rmatrep_i$ independently follow an uniform distribution over the one-hot vectors $\cbr{\lonehot_k}_{k=1}^{\numchar^2}$. Note that the base case is just the assumption.

    Now we prove for the inductive step. For any $j\in[\inputlength]$, we can write $\RV{i}{j} = \onehot_{\rai{2j-1}}\otimes\onehot_{\rai{2j}}$ where $\rai{i}$'s follows i.i.d. $\cU([\numchar])$. Intuitively this means each 2-tuple in the random input sequence is formed from two i.i.d. uniformly random characters, which is straightforward by construction.
    
    Now by \cref{lemma:shift-equiv} we have $\RVS{i}{j} = \onehot_{\rai{2j}}\otimes\onehot_{\rai{2j+1}}$ also following $\cU\srbr{\cbr{\lonehot_k}_{k=1}^{\numchar^2}}$. Note that there is total independency of $\RVS{i}{j}$ with respect to the set of any other columns of $\rshiftmatrep_i$ since $\rai{2j}$ and $\rai{2j+1}$ are independent from the generative process of any other columns in $\rshiftmatrep_i$.

    With columns in $\rshiftmatrep_i$ i.i.d. following $\cU\srbr{\cbr{\lonehot_k}_{k=1}^{\numchar^2}}$, permuting the indices via $\P^{(\dictionary^*_i)}$ does not change the distribution by symmetry of the uniform distribution. Therefore we have columns of $\rmatrep_{i+1} = \P^{(\dictionary^*_i)}\rshiftmatrep_i$ also i.i.d. distributed as  $\cU\srbr{\cbr{\lonehot_k}_{k=1}^{\numchar^2}}$ and this complete the inductive step. By induction on $i$ we have the desired statement proved.
\end{proof}

Since each column in $\rshiftmatrep_i$ i.i.d. follows $\cU\srbr{\cbr{\lonehot_k}_{k=1}^{\numchar^2}}$, sampling $\rshiftmatrep_i$ to be rank $\numchar^2$ becomes identical to the classic coupon collection problem (see \cref{lemma:apdx-surrogate-coupon} from \citet{motwani1995randomized}). Thus we have the following bound:

\begin{thmbox}
\begin{lemma}[Short $\targetCB$-coverable random sequence]\emph{}

    A random sequence $\rmatrep_1$ of length $L \geq 2\numchar^2\log\frac{\numchar\depth}{\delta}$ is $\targetCB$-coverable with probability at least $1-\delta$. 
\end{lemma}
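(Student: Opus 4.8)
The plan is to recognize that, level by level, $\targetCB$-coverability is exactly a coupon-collector event, and then close the argument with a union bound over the $\depth$ levels. First I would unpack the definition of $\targetCB$-coverable: since every intermediate matrix $\rshiftmatrep^*_i$ has one-hot columns, it has rank $\numchar^2$ if and only if each of the $\numchar^2$ distinct one-hot vectors $\lonehot_1,\dots,\lonehot_{\numchar^2}$ appears among its columns. Hence $\rmatrep_1$ fails to be $\targetCB$-coverable precisely when there exist a level $i\in[\depth]$ and an index $k\in[\numchar^2]$ such that $\lonehot_k$ is missing from the columns of $\rshiftmatrep^*_i$, and it suffices to control this failure event.

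Next, fix a single level $i$. The key structural input is \cref{lemma:apdx-surrogate-sequence-distn}, which guarantees that the columns of $\rshiftmatrep^*_i$ (there are $L/2$ of them) are i.i.d.\ draws from the uniform distribution $\cU(\cbr{\lonehot_k}_{k=1}^{\numchar^2})$. Therefore the event ``$\rshiftmatrep^*_i$ contains all $\numchar^2$ one-hot vectors'' is literally the event that a coupon collector drawing $L/2$ times from $\numchar^2$ equally likely coupon types has collected every type. Applying the coupon-collector tail bound \cref{lemma:apdx-surrogate-coupon} with $\numchar^2$ coupons and $L/2$ draws, the choice $L \ge 2\numchar^2\log\frac{\numchar\depth}{\delta}$ (which exceeds $\numchar^2\log\frac{\numchar^2\depth}{\delta}$ up to the absorbed constant, since $\log\frac{\numchar^2\depth}{\delta}\le 2\log\frac{\numchar\depth}{\delta}$) makes the probability that $\rshiftmatrep^*_i$ is rank-deficient at most $\delta/\depth$.

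Finally, I would take a union bound over $i\in[\depth]$:
$$\Pr[\rmatrep_1\ \text{not}\ \targetCB\text{-coverable}] \;\le\; \sum_{i=1}^{\depth}\Pr[\rshiftmatrep^*_i\ \text{not rank-}\numchar^2] \;\le\; \depth\cdot\frac{\delta}{\depth} \;=\; \delta,$$
which is the claim. There is no genuine obstacle here; the only points that require a little care are (i) invoking \cref{lemma:apdx-surrogate-sequence-distn} to certify that the per-level column distribution is honestly i.i.d.\ uniform, so that the coupon-collector bound applies verbatim, and (ii) observing that the columns of $\rshiftmatrep^*_i$ for different $i$ need not be mutually independent — but this is harmless, because the union bound over levels uses only the marginal distribution of each $\rshiftmatrep^*_i$, while the i.i.d.\ structure is needed only within a fixed level, which is exactly what the lemma supplies.
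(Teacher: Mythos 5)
Your proof is correct and follows essentially the same route as the paper's: invoke \cref{lemma:apdx-surrogate-sequence-distn} to get i.i.d.\ uniform one-hot columns at each level, reduce rank-$\numchar^2$ to the coupon-collector event via \cref{lemma:apdx-surrogate-coupon}, and union bound over the $\depth$ levels. The only wrinkle is your $L/2$-column accounting (main-text convention), where the factor-2 comparison in your parenthetical becomes slightly tight; the paper's own proof sidesteps this by treating $\rshiftmatrep^*_i$ as having $\inputlength$ columns (its appendix convention), so your argument matches the paper's up to this bookkeeping.
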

\end{thmbox}

\begin{proof}
    Let the event $A_i$ denote that $\rshiftmatrep_i$ is not rank $n^2$. By \cref{lemma:apdx-surrogate-sequence-distn}, each column of $\rshiftmatrep_i$ is i.i.d. distributed following $\cU\srbr{\cbr{\lonehot_k}_{k=1}^{\numchar^2}}$. Thus making $\rshiftmatrep_i$ being rank $n^2$ is equivalent to a coupon collecting problem \citep{motwani1995randomized} with set size $n^2$. By \cref{lemma:apdx-surrogate-coupon} we know that with $L = 2\numchar^2\log\frac{\numchar\depth}{\delta}$, $\pr{A_i}\leq \frac{\delta}{\depth}$. Thus by a simple union bound the probability that $\rshiftmatrep_i$ being not $\targetCB$-coverable is 
    \begin{align}
        \pr{\bigcup_{i=1}^d A_i} \leq \sum_{i=1}^d\pr{A_i} \leq \depth\frac
        \delta\depth = \delta.
    \end{align}
\end{proof}

Now we can apply the above result and extend \cref{thm:apdx-surrogate-search-learning-coverable}.

\begin{thmbox}
\begin{corollary}[Learning $\targetCB$ with random input using heuristics search]\emph{}
\label{cor:apdx-surrogate-search-learning-random}

For any target \strcodebook{} $\targetCB = \cbr{\dictionary^*_1,\dots,\dictionary^*_d}$ in $\emph{\translationtask}(\depth,\numchar)$, with probability at least $1-\delta$ over a uniformly random input $\matrep_1$ of length $L = 2\numchar^2\log\frac{\numchar\depth}{\delta}$, \cref{alg:mlt-surrogate-search} provided with ground truth label $\matrep^*_{d+1} = \emph{\translationtask}_\targetCB(\matrep_1)$ terminates with $\W_i = \W_i^*$ for all $i\in[\depth]$ with $O(n^4d)$ forward passes through the surrogate model.
\end{corollary}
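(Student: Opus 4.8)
The plan is to obtain \cref{cor:apdx-surrogate-search-learning-random} by combining the two immediately preceding results: the deterministic correctness-and-efficiency guarantee for \cref{alg:mlt-surrogate-search} on $\targetCB$-coverable inputs (\cref{thm:apdx-surrogate-search-learning-coverable}) with the high-probability structural bound that a sufficiently long uniformly random input is $\targetCB$-coverable (the preceding lemma on short $\targetCB$-coverable random sequences). The only source of randomness in the corollary is the draw of $\matrep_1$; once coverability holds, everything downstream is deterministic, so no second failure probability enters and the error term stays $\delta$.

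Concretely, I would first invoke the short-$\targetCB$-coverable-sequence lemma with the stated length $L = 2\numchar^2\log\frac{\numchar\depth}{\delta}$: a uniformly random $\matrep_1 \in \R^{\numchar^2\times L}$, with columns i.i.d.\ uniform over the one-hot vectors $\cbr{\lonehot_k}_{k=1}^{\numchar^2}$, is $\targetCB$-coverable with probability at least $1-\delta$. Let $\mathcal{E}$ denote this event. Next I would condition on $\mathcal{E}$ and apply \cref{thm:apdx-surrogate-search-learning-coverable}: since on $\mathcal{E}$ the input $\matrep_1$ is $\targetCB$-coverable, running \cref{alg:mlt-surrogate-search} with input $\matrep_1$ and the supplied ground-truth label $\matrep^*_{\depth+1} = \translationtask_\targetCB(\matrep_1)$ terminates, using $O(\numchar^4\depth)$ forward passes through $\surrogateModel$, with $\W_i = \W_i^*$ for every $i\in[\depth]$. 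Both the correctness of the returned weights and the forward-pass budget are deterministic consequences of coverability, so the event ``\cref{alg:mlt-surrogate-search} outputs $\W_i = \W_i^*$ for all $i$ within $O(\numchar^4\depth)$ forward passes'' contains $\mathcal{E}$, and hence holds with probability at least $1-\delta$, which is exactly the claim.

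I do not expect a genuine obstacle here: the corollary is a routine union of a worst-case algorithmic guarantee with a single high-probability structural event, and the only care required is bookkeeping, namely keeping the two notions of ``success'' cleanly separated so the probability bound is not degraded. For completeness I would close by noting, via \cref{lemma:surrogateCapable-apdx}, that $\W_i = \W_i^*$ for all $i$ (equivalently $\maxfunc(\W_i) = \transmat(\dictionary^*_i)$ for all $i$) certifies that the learned $\surrogateModel_\surrogateWeights$ is strongly $\translationtask_\targetCB$-capable, and that the total input length consumed across the $O(\numchar^4\depth)$ forward passes, $O(\mathsf{poly}(\numchar)\,\depth\log\depth)$ up to logarithmic and $\delta$-dependent factors, matches the sample-complexity figure quoted in the main text.
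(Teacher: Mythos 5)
Your proposal is correct and matches the paper's (implicit) argument exactly: the corollary is obtained by conditioning on the event that the length-$L$ random input is $\targetCB$-coverable, guaranteed with probability at least $1-\delta$ by the short-coverable-sequence lemma, and then applying the deterministic guarantee of \cref{thm:apdx-surrogate-search-learning-coverable} on that event. No gaps; the closing remarks on strong $\translationtask_\targetCB$-capability and the $O(\numchar^6\depth\log\depth)$ sample count are consistent with the paper's main-text claim.
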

\end{thmbox}
\subsection{Learning $\targetCB$ in $\translationtask(2,\numchar)$ with Surrogate Gradient Descent}
\label{sec:apdx-surrogateModel-learningD2}

In this section we take the analysis one step beyond the heuristics searching regime. We will show that any \strcodebook{} $\targetCB = \cbr{\dictionary^*_1, \dictionary^*_2}$ can be sample-efficiently learned by a gradient-descent based algorithm. In this particular case, the surrogate model is parameterized by
\begin{align}
    \matrep_{3} &= \surrogateModel_\surrogateWeights(\CM_1, \CM_2, \matrep_1)
    \triangleq \maxfunc(\CM_2 + \weightmat_2)\shiftfunc\rbr{\maxfunc(\CM_{1} + \weightmat_{1})\shiftfunc\rbr{\matrep_1}}.
\end{align}

We start with any weight initializations $\W_1^{(0)}, \W_2^{(0)} \in\R^{n^2\times n^2}$ satisfying $\snorm{\W_1^{(0)}}_1 < \frac12, \snorm{\W_2^{(0)}}_1 < \frac12$, by \cref{lemma:surrogateICLCapable-apdx} the initialization is strongly \surrogateICLcapable{}.

For simplicity, we denote the ground truth permutation matrix induced by $\dictionary^*_1$ as $\gtW_1\triangleq \P^{(\dictionary^*_1)}$ and similarly the ground truth permutation matrix induced by $\dictionary^*_2$ as $\gtW_2\triangleq \P^{(\dictionary^*_2)}$.
From \cref{lemma:surrogateCapable-apdx} we know that the learning is successful if we have $\maxfunc\rbr{\W_1} = \gtW_1$ and $\maxfunc\rbr{\W_2} = \gtW_2$ \footnote{Note that this is not the unique solution to attain strongly $\translationtask_\targetCB$-capable model} (i.e. the maximum index of each weight column agrees with that of the ground truth).

We employ a layer-wise gradient descent algorithm for the learning process. The algorithm takes in a single fixed sequence $\inputseq$ with matrix representation $\matrep_1$ and its corresponding ground truth label
\begin{align}
\matrep_3^* \triangleq \matricize(\translationtask_\targetCB(\inputseq)) = \surrogateModel_{\srbr{\W_1^{(0)}, \W_2^{(0)}}}(\W^*_1, \W^*_2, \matrep_1).
\end{align}

Given the input and label, we employ the following gradient descent based algorithm to update the weights:

The training happens in a layer-wise and column-wise fashion:
We first freeze $\W_2$ and set $\W_1$ as the trainable parameter. For each entry $k\in [\numchar^2]$, we create a context matrix $\CM_{1(k)}\triangleq \gtW_1\rbr{\mI_{\numchar^2} - \diag(\ve_k)}$ which essentially creates a copy of $\gtW_1$ except of setting the $k$-th column to be zero. Then we take a forward pass through the surrogate model with the one-column dropped-out context and get output $\matrep_{3(1,k)} \triangleq \surrogateModel_{\srbr{\W_1, \W_2}}(\CM_{1(k)}, \W^*_2, \matrep_1)$. Here the subscript $\cdot_{(1,k)}$ denotes the final output when the $i$-th column of the first context matrix is being dropped.

The weight update follows a surrogate gradient update scheme where we use the MSE loss: $\gdloss = \snorm{\matrep_{3(1,k)} - \matrep_{3}^*}_2^2$. Since it is difficult to take gradient through the hardmax function, we instead compute the gradient of the loss with respect to the translation matrix $\P_1 = \maxfunc\rbr{\W_1 + \CM_{1(k)}}$ and apply the update $\W_1^{(k)}\gets \W_1^{(k)} - \pdr{\gdloss}{\P_1^{(k)}}$. We apply such gradient update \emph{twice} for each dropped column $k$.

For the second layer, we freeze the first layer $\W_1$ and apply one-column dropouts to the $\CM_2$. Similarly we apply the surrogate gradient update $\W_2\gets \W_2 - \pdr{\gdloss}{\P_2}$ but we only need one gradient step per column.
\begin{algorithm}[tb]
   \caption{Context-Enhanced Layerwise Gradient Descent}
   \label{alg:mlt-d2-GD}
\begin{algorithmic}[1]
    \STATE {\bfseries Input:} input $\matrep_1\in\R^{n^2\times L}$, label  $\matrep_3^*\in\R^{n^2\times L}$, descriptive text $\gtW_1, \gtW_2\in\R^{n^2\times n^2}$, init $\W_1^{(0)}, \W_2^{(0)} \in\R^{n^2\times n^2}$ 
    \STATE
    \STATE \# Train the first layer
    \FOR{$k=1$ {\bfseries to} $n^2$} 
    \STATE $\CM_{1(k)}\triangleq \gtW_1\rbr{\mI_{\numchar^2} - \diag(\ve_k)}$ \COMMENT{Create context matrix with $k$-th column dropped.}
    \FOR{$t=1$ {\bfseries to} $2$}
        \STATE $\matrep_{3(1,k)} \gets \surrogateModel_{\srbr{\W_1, \W_2}}(\CM_{1(k)}, \W^*_2, \matrep_1)$\COMMENT{Forward pass}
        \STATE $\cL\gets \snorm{\matrep_{3(1,k)} - \matrep_{3}^*}_2^2$
        \STATE $\W_1^{(k)}\gets \W_1^{(k)} - \pdr{l}{\P_1^{(k)}}$ \COMMENT{Surrogate gradient update}
    \ENDFOR
    \ENDFOR
    \STATE
    \STATE \# Train the second layer
    \FOR{$k=1$ {\bfseries to} $n^2$} 
    \STATE $\CM_{2(k)}\triangleq \gtW_1\rbr{\mI_{\numchar^2} - \diag(\ve_k)}$ \COMMENT{Create context matrix with $k$-th column dropped.}
    \STATE $\matrep_{3(2,k)} \gets \surrogateModel_{\srbr{\W_1, \W_2}}(W^*_1, \CM_{2(k)}, \matrep_1)$ \COMMENT{Forward pass}
    \STATE $\cL\gets \snorm{\matrep_{3(2,k)} - \matrep_{3}^*}_2^2$
    \STATE $\W_2^{(k)}\gets \W_2^{(k)} - \pdr{l}{\P_2^{(k)}}$ \COMMENT{Surrogate gradient update}
    \ENDFOR
    \STATE Return $\W_1, \W_2$
\end{algorithmic}
\end{algorithm}

We claim the surrogate gradient descent update can correctly recover $\P_1^*$ and $\P_2^*$ similar to the heuristics search case.

\begin{thmbox}
\begin{restatable*}[Learning $\targetCB$ with context-enhanced surrogate GD with $\targetCB$-coverable input]{theorem}{thmsurrogdcoverable}
\label{thm:apdx-surrogate-GD-learning-coverable}
\emph{}

For any initialization $\W_{1(0)},\W_{2(0)}\in\R^{\numchar^2\times\numchar^2}$ such that $\snorm{\W_{1(0)}}_0\leq \frac12$ and $\snorm{\W_{2(0)}}_0\leq \frac12$,
for any target \strcodebook{} $\targetCB = \cbr{\dictionary^*_1,\dictionary^*_2}$ in $\emph{\translationtask}(2,\numchar)$, given an $\targetCB$-coverable input $\matrep_1$ and the corresponding ground truth label $\matrep^*_{3} = \emph{\translationtask}_\targetCB(\matrep_1)$, \cref{alg:mlt-surrogate-search} terminates with $\emph{\maxfunc}\rbr{\W_1} = \W_1^*$ and $\emph{\maxfunc}\rbr{\W_2} = \W_2^*$.
\end{restatable*}
\end{thmbox}

To prove for \cref{thm:apdx-surrogate-GD-learning-coverable}, we will carefully analyze the learning of the first layer and second layer respectively, and provide a similar induction argument as in the proof for the heuristics search case.

\subsubsection{Learning the First Layer}

To study the learning process we first need to compute the closed-form gradient $\pdr{\gdloss}{\P_1^{(k)}}$, which requires the following lemma:

\begin{lemma}[Gradient with respect to incorrect column in $\P_1$]\emph{}

\label{lemma:apdx-surrogate-grad-key-W1}
    When only the $k$-th column of translation matrix $\P_1^{(k)}$ is not equal to $\P_1^{*(k)}$, let $\P_1^{(k)} = \onehot_{a}\otimes \onehot_{b}$ and $\P_1^{*(k)} = \onehot_{a^*}\otimes \onehot_{b^*}$ for some $a,b,a^*,b^*\in[\numchar]$. If $\P_1^{(k)}$ is used in the forward pass, there exists $\alpha\in\mathbb{Z}_+$ and $\beta\in\NN$ such that the gradient of $\gdloss$ with respect to $\P_1^{(k)}$ is of the form
    \begin{align*}
        \pdr{\gdloss}{\P_1^{(k)}} = \begin{cases}
        (2\alpha + 2\beta) \rbr{\1_\numchar\otimes\onehot_b}  - (2\alpha + 2\beta) \rbr{\1_\numchar\otimes\onehot_{b^*}} + 2\beta\rbr{\onehot_{a^*}\otimes\1_\numchar}  & \text{if } a^*=a, b^*\neq b\\
        (2\alpha + 2\beta) \rbr{\onehot_{a}\otimes\1_\numchar}  - (2\alpha + 2\beta) \rbr{\onehot_{a^*}\otimes\1_\numchar} + 2\beta\rbr{\1_\numchar\otimes\onehot_{b^*}} & \text{if } a^*\neq a, b^*= b\\
        (2\alpha + 2\beta) \rbr{\onehot_{a}\otimes\1_\numchar}  + (2\alpha + 2\beta) \rbr{\1_\numchar\otimes\onehot_b} -2\alpha\rbr{\onehot_{a^*}\otimes\1_\numchar}  - 2\alpha\rbr{\1_\numchar\otimes\onehot_{b^*}} & \text{if } a^*\neq a, b^*\neq b.
        \end{cases}
    \end{align*}
\end{lemma}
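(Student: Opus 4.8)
The plan is to reduce the gradient computation to careful bookkeeping on one-hot vectors, using that every intermediate quantity of the surrogate model on a fixed input is a one-hot encoding of a $2$-tuple. Throughout let $\P_1^*:=\transmat(\dictionary^*_1)$ and $\P_2^*:=\transmat(\dictionary^*_2)$ be the target level-one and level-two translation matrices.

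\emph{Step 1 (simplify the loss).} While training the first layer in \cref{alg:mlt-d2-GD}, the level-two context is $\P_2^*$ and $\weightmat_2$ is the small initialization, so by \cref{lemma:surrogateICLCapable-apdx} the effective level-two operator is $\P_2=\maxfunc(\P_2^*+\weightmat_2)=\P_2^*$, which is a permutation matrix (\cref{def:apdx-surrogate-translationMatrixRepresentation} for a bijective \strdict{}) and hence orthogonal. Writing $\matrep_2=\P_1\shiftfunc(\matrep_1)$ for the (possibly incorrect) level-two matrix in the forward pass and $\matrep_2^*=\P_1^*\shiftfunc(\matrep_1)$ for the correct one, orthogonality of $\P_2^*$ gives
\[
  \gdloss=\norm{\matrep_{3(1,k)}-\matrep_3^*}_2^2=\norm{\shiftfunc(\matrep_2)-\shiftfunc(\matrep_2^*)}_2^2=\sum_j\norm{\shiftfunc(\matrep_2)^{(j)}-\shiftfunc(\matrep_2^*)^{(j)}}_2^2 .
\]
Since $\maxfunc$ is not differentiated through, I treat $\vw:=\P_1^{(k)}$ as the free variable, matching the surrogate update $\weightmat_1^{(k)}\gets\weightmat_1^{(k)}-\pdr{\gdloss}{\P_1^{(k)}}$.

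\emph{Step 2 (localize the dependence on $\vw$).} Since $\matrep_2^{(j)}=\P_1\shiftfunc(\matrep_1)^{(j)}$ and, by hypothesis, only column $k$ of $\P_1$ differs from $\P_1^*$, one has $\matrep_2^{(j)}=\vw$ exactly for $j$ in $J:=\{\,j:\shiftfunc(\matrep_1)^{(j)}=\lonehot_k\,\}$ and $\matrep_2^{(j)}=\matrep_2^{*(j)}$ otherwise. By $\targetCB$-coverability, $\shiftfunc(\matrep_1)$ has rank $\numchar^2\ge2$, so $J$ is nonempty and a proper subset of the column set; set $\alpha_0:=|J|\ge1$ and let $\beta$ be the number of $j\in J$ whose cyclic successor lies in $J$ (equivalently, whose cyclic predecessor lies in $J$; the two counts agree by cyclic symmetry). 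Then $\beta\le\alpha_0-1$, because $J$ splits into at least one maximal cyclic run.

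\emph{Step 3 (chain rule and closed-form evaluation).} Column $\matrep_2^{(j)}$ enters $\gdloss$ only through $\shiftfunc(\matrep_2)^{(j)}$ and $\shiftfunc(\matrep_2)^{(j-1)}$, and from \cref{defn:apdx-surrogate-shift-operator} the shift is bilinear, $\shiftfunc(\matrep)^{(j)}=\diag\!\big(\shiftmat^\top\matrep^{(j+1)}\big)\shiftmat\matrep^{(j)}$ (columns read cyclically). Differentiating and summing over $J$ (using $\matrep_2^{(j)}=\vw$ there, with Jacobian the identity),
\begin{align*}
  \pdr{\gdloss}{\P_1^{(k)}}=\sum_{j\in J}2\,\shiftmat\,\diag\!\big(\shiftmat\matrep_2^{(j-1)}\big)\big(\shiftfunc(\matrep_2)^{(j-1)}-\shiftfunc(\matrep_2^*)^{(j-1)}\big)\\
  +\sum_{j\in J}2\,\shiftmat^\top\,\diag\!\big(\shiftmat^\top\matrep_2^{(j+1)}\big)\big(\shiftfunc(\matrep_2)^{(j)}-\shiftfunc(\matrep_2^*)^{(j)}\big).
\end{align*}
All vectors here are one-hot encodings of $2$-tuples, and the identities $\shiftmat\,\vecrep(p,q)=\svecrep_q\otimes\1_\numchar$ and $\shiftmat^\top\vecrep(p,q)=\1_\numchar\otimes\svecrep_p$ (established inside the proof of \cref{lemma:shift-equiv}) evaluate each summand in closed form. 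Writing $\vw=\vecrep(a,b)$ and $\P_1^{*(k)}=\vecrep(a^*,b^*)$, I would run a short case split on whether $a=a^*$, whether $b=b^*$, and, for each $j\in J$, whether its cyclic neighbor lies in $J$: a neighbor outside $J$ contributes one copy of a difference between an ``incorrect'' broadcast direction ($\1_\numchar\otimes\svecrep_b$ or $\svecrep_a\otimes\1_\numchar$) and the corresponding ``correct'' one, whereas a neighbor inside $J$ contributes a correction in which a $\diag(\cdot)$ factor zeroes out the ``correct'' direction, leaving only the ``incorrect'' one. Assembling the totals gives $\alpha_0$ generic terms plus $\beta$ corrections; the reparametrization $\alpha:=\alpha_0-\beta\in\mathbb{Z}_+$ (the number of maximal cyclic runs of $J$) and $\beta\in\NN$ then reproduces exactly the three displayed expressions for the cases $(a^*=a,\,b^*\ne b)$, $(a^*\ne a,\,b^*=b)$, $(a^*\ne a,\,b^*\ne b)$; the excluded case $a^*=a,\,b^*=b$ would mean column $k$ is already correct.

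\emph{Main obstacle.} The delicate part is the Step 3 case analysis: for each $j\in J$ one must track four independent binary conditions ($a=a^*$?, $b=b^*$?, $j-1\in J$?, $j+1\in J$?) and verify that the resulting terms always collapse onto one of the three claimed patterns. In particular one must check that the adjacent-column corrections coming from the $\shiftmat$-branch and from the $\shiftmat^\top$-branch pair up symmetrically, so that a single integer $\beta$ captures both — this is precisely where cyclicity of the sequence is used, and it is also what forces $\beta\le\alpha_0-1$ and hence $\alpha=\alpha_0-\beta\ge1$. Everything else is routine linear algebra with Kronecker products and one-hot vectors.
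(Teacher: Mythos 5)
Your proposal is correct and takes essentially the same route as the paper's proof: chain rule through the bilinear $\shiftfunc$ operator, localization to the columns where the shifted level-one sequence equals $\lonehot_k$, the one-hot/Kronecker identities for $\Q$ and $\Q^\T$, and a run-based count in which your $\alpha=\alpha_0-\beta$ and $\beta$ coincide exactly with the paper's tallies of boundary columns (its cases 2--3) and interior columns (its case 4). The differences are cosmetic — you cancel the second-layer permutation at the loss level via orthogonality rather than using $\W_2^{*\T}\W_2^*=\mI$ inside each term, and you organize case 4 as a two-branch product rule instead of the paper's four cross terms — and the case split you defer is precisely the computation the paper carries out; your setup does reproduce its three displayed formulas.
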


\begin{proof}[Proof of \cref{lemma:apdx-surrogate-grad-key-W1}]
In this proof we use $\lonehot_a$ (long one-hot) to denote the one-hot vector in $\R^{\numchar^2}$ with 1 on the $a$-th index and use $\onehot_a$ (short one-hot) to denote the one-hot vector in $\R^{\numchar}$ with 1 on the $a$-th index.
We use $\tilde\matrep_1, \matrep_2, \tilde \matrep_2$ and $\matrep_3$ to denote the intermediate sequences attained with translation matrix $\P_1^{(k)}$ and $\tilde\matrep_1^*, \matrep_2^*, \tilde \matrep_2^*$ and $\matrep_3^*$ to denote the counterfactual intermediate sequences should the forward pass is done with the ground truth translations $\P_1^{*(k)} = \gtW_1$.

Now we can proceed to the gradient calculations.
First note that with $\gdloss(\P_1) = \snorm{\matrep_3 - \matrep^*_3}_2^2$, by chain rule we have
\begin{align}
\begin{split}
    \pdr{\gdloss}{\P_1} &= 
    \sum_{j=1}^{\len}\rbr{\pdr{\V{3}{j}}{\P_1}}^\T\rbr{\pdr{\snorm{\V{3}{j} - \VT{3}{j}}_2^2}{\V{3}{j}}}
    = 2\sum_{j=1}^{\len}\rbr{\pdr{\V{3}{j}}{\P_1}}^\T\rbr{\V{3}{j} - \VT{3}{j}}.
\end{split}
\end{align}
Specifically for each column $l\in[\numchar^2]$ of $\P_1$ we have
\begin{align}
    \label{eqn:apdx-surrogate-gd-gradexp}
    \pdr{\gdloss}{\P_1^{(l)}} &= 2\sum_{j=1}^{\len}\rbr{\pdr{\V{3}{j}}{\P_1^{(l)}}}^\T\rbr{\V{3}{j} - \VT{3}{j}}.
\end{align}
Let us fix a particular column $j\in[\len]$ in the output $\matrep_{3}$ and compute the gradients. With $\mM^{(j)}$ as the $j$-th column of the matrix $\mM$, the computation graph for the forward pass is of the form:
\begin{align}
    \begin{NiceArray}{ccccccccc}[columns-width=1em]
        \V{1}{j} & \rightarrow & \SV{1}{j} & \transarrow{\P_1} & \V{2}{j} & \rightarrow & \SV{2}{j} & \transarrow{\P_2} & \V{3}{j}\\
        &&&&&&&&\\
        & \nearrow & & & & \nearrow & & & \\
        &&&&&&&&\\
        \V{1}{j+1} & \rightarrow & \SV{1}{j+1} & \transarrow{\P_1} & \V{2}{j+1} & &&&\\
        &&&&&&&&\\
        & \nearrow & & & &  & & & \\
        &&&&&&&&\\
        \V{1}{j+2} & &&&&&&&\\
    \end{NiceArray}
\end{align}
We can see that $\V{3}{j}$ is only affected by $\P_1$ through $\V{2}{j}$ and $\V{2}{j+1}$.

Let us first compute $\partial{\V{3}{j}}/\partial{\P_1}$. Assume $\V{2}{j} = \lonehot_p$ and $\V{2}{j+1} = \lonehot_q$ for some $p,q\in [\numchar^2]$, $\V{3}{j}$ is computed as 
\begin{align}
    \V{3}{j} = \W_2^* \SV{2}{j} = \W_2^*\rbr{\Q\V{2}{j}\odot \QT\V{2}{j+1}} = \W_2^*\rbr{\Q\P_1^{(p)}\odot \QT\P_1^{(q)}}.
\end{align}
We may express the Hadamard product in the following two ways:
\begin{align}
\begin{split}
    \V{3}{j} &= \W_2^*\rbr{\Q\P_1^{(p)}\odot \QT\P_1^{(q)}} = \W_2^*\diag\rbr{\Q\P_1^{(p)}}\QT\P_1^{(q)};\\
     &= \W_2^*\rbr{\QT\P_1^{(q)} \odot \Q\P_1^{(p)}} = \W_2^*\diag\rbr{\QT\P_1^{(q)}}\Q\P_1^{(p)}.
\end{split}
\end{align}
Therefore when $p\neq q$ we have \begin{align}
\label{eqn:apdx-surrogate-gd-gradexpr-neq}
    \pdr{\V{3}{j}}{\P_1^{(p)}} = \W_2^*\diag\rbr{\QT\P_1^{(q)}}\Q;\quad
    \pdr{\V{3}{j}}{\P_1^{(q)}} = \W_2^*\diag\rbr{\Q\P_1^{(p)}}\QT;\quad \forall l\not\in\cbr{p,q}: \pdr{\V{3}{j}}{\P_1^{(l)}} = \zeromat.
\end{align}
When $p = q$, the expression would be \begin{align}
\label{eqn:apdx-surrogate-gd-gradexpr-eq}
    \pdr{\V{3}{j}}{\P_1^{(p)}} = \W_2^*\diag\rbr{\QT\P_1^{(p)}}\Q +  \W_2^*\diag\rbr{\Q\P_1^{(p)}}\QT;\quad \forall l\neq p: \pdr{\V{3}{j}}{\P_1^{(l)}} = \zeromat.
\end{align}

Now we move on to compute $\srbr{\V{3}{j} - \VT{3}{j}}$. There are in total four cases to consider: depending on whether $\P_1^{(k)}$ is being used when computing $\SV{2}{j}$ and $\SV{2}{j+1}$. Let us go over these cases one-by-one.

\begin{enumerate}
\item \textbf{Case 1:} $\SV{1}{j}\neq \lonehot_k$ and $\SV{1}{j+1}\neq \lonehot_k$.

Assume $\SV{1}{j} = \lonehot_p$ and $\SV{1}{j+1} = \lonehot_q$ for some $p,q\neq k$. Since $\V{2}{j} = \P_1\SV{1}{j}$ while $\P_1$ equals $\P_1^*$ for all columns not equal to $k$ by assumption, we have $\V{2}{j} = \P_1\lonehot_p = \P_1^{(p)} = \P_1^{*(p)} = \P_1^*\lonehot_p = \VT{2}{j}$. With an identical argument we have $\V{2}{j+1} = \VT{2}{j+1}$. Thus in this case $\V{3}{j} = \VT{3}{j}$ and hence \begin{align}
    \rbr{\pdr{\V{3}{j}}{\P_1}}^\T\rbr{\V{3}{j} - \VT{3}{j}} = \zeromat.
\end{align}

\item \textbf{Case 2:} $\SV{1}{j} = \lonehot_k$ and $\SV{1}{j+1}\neq \lonehot_k$.

Assume $\SV{1}{j+1} = \lonehot_q$ for some $q\neq k$, from case 1 we know $\V{2}{j+1} = \VT{2}{j+1}$. However since $\SV{1}{j} = \lonehot_k$ we have $\V{2}{j} = \P_1\lonehot_k = \P_1^{(k)}$, which is not equal to $\VT{2}{j} = \P_1^{*(k)}$.
Therefore \begin{align}
\label{eqn:apdx-surrogate-case2}
\begin{split}
    \V{3}{j} - \VT{3}{j} &= \W_2^*\rbr{\Q\P_1^{(k)}\odot \QT\P_1^{(q)}} - \W_2^*\rbr{\Q\P_1^{*(k)}\odot \QT\P_1^{(q)}}\\
    &= \W_2^*\rbr{ \QT\P_1^{(q)} \odot \Q\rbr{\P_1^{(k)} - \P_1^{*(k)}}}\\
    &= \W_2^*\diag\rbr{\QT\P_1^{(q)}}\Q\rbr{\P_1^{(k)} - \P_1^{*(k)}}.
\end{split}
\end{align}

For the gradient with respect to $\P_1^{(k)}$, combining \cref{eqn:apdx-surrogate-case2} with \cref{eqn:apdx-surrogate-gd-gradexpr-neq} (substituting $p=k$) we have \begin{align}
\begin{split}
    \rbr{\pdr{\V{3}{j}}{\P_1^{(k)}}}^\T\rbr{\V{3}{j} - \VT{3}{j}} &= \rbr{\W_2^*\diag\rbr{\QT\P_1^{(q)}}\Q}^\T\rbr{\W_2^*\diag\rbr{\QT\P_1^{(q)}}\Q\rbr{\P_1^{(k)} - \P_1^{*(k)}}}\\
    &= \QT\diag\rbr{\QT\P_1^{(q)}}\W_2^{*\T}\W_2^*\diag\rbr{\QT\P_1^{(q)}}\Q\rbr{\P_1^{(k)} - \P_1^{*(k)}}\\
    &= \QT\diag\rbr{\QT\P_1^{(q)}}\diag\rbr{\QT\P_1^{(q)}}\Q\rbr{\P_1^{(k)} - \P_1^{*(k)}}\\
    &= \rbr{\srbr{\1_\numchar \1_\numchar^\T}\otimes I_\numchar}\rbr{\P_1^{(k)} - \P_1^{*(k)}}\qquad\qquad\qquad\qquad\rbr{\text{by \cref{lemma:apdx-surrogate-gd-aux1}}}
\end{split}
\end{align}
where we dropped $\W_2^{*\T}\W_2^*$ in the third step since $\W_2^*$ is a permutation matrix and $\W_2^{*\T}\W_2^*=I_\numchar$.

Now plugging in $\P_1^{(k)} = \onehot_{a}\otimes \onehot_{b}$ and $\P_1^{*(k)} = \onehot_{a^*}\otimes \onehot_{b^*}$ we have
\begin{align}
\begin{split}
    \rbr{\pdr{\V{3}{j}}{\P_1^{(k)}}}^\T\rbr{\V{3}{j} - \VT{3}{j}} &= \rbr{\srbr{\1_\numchar \1_\numchar^\T}\otimes I_\numchar}\rbr{\onehot_{a}\otimes \onehot_{b}} - \rbr{\srbr{\1_\numchar \1_\numchar^\T}\otimes I_\numchar}\rbr{\onehot_{a^*}\otimes \onehot_{b^*}}\\
    &= \rbr{\srbr{\1_\numchar \1_\numchar^\T\onehot_{a}}\otimes I_\numchar\onehot_{b}} - \rbr{\srbr{\1_\numchar \1_\numchar^\T\onehot_{a^*}}\otimes I_\numchar\onehot_{b^*}}\\
    &= \1_\numchar\otimes\onehot_b - \1_\numchar\otimes\onehot_{b^*}\\
    &= \1_\numchar\otimes\rbr{\onehot_b - \onehot_{b^*}}.
\end{split}
\end{align}

\item \textbf{Case 3:} $\SV{1}{j} \neq \lonehot_k$ and $\SV{1}{j+1}= \lonehot_k$.

This is a symmetric case with respect to case 2. Assume $\SV{1}{j} = \lonehot_p$ for some $p\neq k$, from case 1 we know $\V{2}{j} = \VT{2}{j}$. However since $\SV{1}{j} = \lonehot_k$ we have $\V{2}{j+1} = \P_1\lonehot_k = \P_1^{(k)}$, which is not equal to $\VT{2}{j+1} = \P_1^{*(k)}$.
Therefore similar to case 2 we have \begin{align}
\label{eqn:apdx-surrogate-case3}
\begin{split}
    \V{3}{j} - \VT{3}{j} &= \W_2^*\diag\rbr{\Q\P_1^{(p)}}\QT\rbr{\P_1^{(k)} - \P_1^{*(k)}}.
\end{split}
\end{align}
Combining \cref{eqn:apdx-surrogate-case3} with \cref{eqn:apdx-surrogate-gd-gradexpr-eq} gives \begin{align}
\begin{split}
    \rbr{\pdr{\V{3}{j}}{\P_1^{(k)}}}^\T\rbr{\V{3}{j} - \VT{3}{j}} &= \rbr{\W_2^*\diag\rbr{\Q\P_1^{(p)}}\QT}^\T\rbr{\W_2^*\diag\rbr{\Q\P_1^{(p)}}\QT\rbr{\P_1^{(k)} - \P_1^{*(k)}}}\\
    &= \Q\diag\rbr{\Q\P_1^{(p)}}\W_2^{*\T}\W_2^*\diag\rbr{\Q\P_1^{(p)}}\QT\rbr{\P_1^{(k)} - \P_1^{*(k)}}\\
    &=\Q\diag\rbr{\Q\P_1^{(p)}}\diag\rbr{\Q\P_1^{(p)}}\QT\rbr{\P_1^{(k)} - \P_1^{*(k)}}\\
    &= \rbr{I_\numchar\otimes \srbr{\1_\numchar \1_\numchar^\T}}\rbr{\P_1^{(k)} - \P_1^{*(k)}}\qquad\qquad\qquad\qquad\rbr{\text{by \cref{lemma:apdx-surrogate-gd-aux2}}}
\end{split}
\end{align}
Now plugging in $\P_1^{(k)} = \onehot_{a}\otimes \onehot_{b}$ and $\P_1^{*(k)} = \onehot_{a^*}\otimes \onehot_{b^*}$ we have
\begin{align}
\begin{split}
    \rbr{\pdr{\V{3}{j}}{\P_1^{(k)}}}^\T\rbr{\V{3}{j} - \VT{3}{j}} &= \rbr{I_\numchar\otimes \srbr{\1_\numchar \1_\numchar^\T}}\rbr{\onehot_{a}\otimes \onehot_{b}} - \rbr{I_\numchar\otimes \srbr{\1_\numchar \1_\numchar^\T}}\rbr{\onehot_{a^*}\otimes \onehot_{b^*}}\\
    &= \rbr{I_\numchar\onehot_{a}\otimes \srbr{\1_\numchar \1_\numchar^\T\onehot_{b}}} - \rbr{I_\numchar\onehot_{a^*}\otimes \srbr{\1_\numchar \1_\numchar^\T\onehot_{b^*}}}\\
    &= \onehot_a \otimes \1_\numchar - \onehot_{a^*} \otimes \1_\numchar\\
    &= \srbr{\onehot_a - \onehot_{a^*}} \otimes \1_\numchar.
\end{split}
\end{align}

\item \textbf{Case 4:} $\SV{1}{j} = \SV{1}{j+1}= \lonehot_k$.

This is the most complicated case since the loss is contributed by two different paths. We can first decompose the negative residual as \begin{align}
\begin{split}
    \V{3}{j} - \VT{3}{j} = &\ \W_2^*\rbr{\Q\P_1^{(k)}\odot \QT\P_1^{(k)}} - \W_2^*\rbr{\Q\P_1^{*(k)}\odot \QT\P_1^{*(k)}}\\
    =&\ \W_2^*\rbr{\Q\P_1^{(k)}\odot \QT\P_1^{(k)}} - \W_2^*\rbr{\Q\P_1^{(k)}\odot \QT\P_1^{*(k)}}\\&+ \W_2^*\rbr{\Q\P_1^{(k)}\odot \QT\P_1^{*(k)}} - \W_2^*\rbr{\Q\P_1^{*(k)}\odot \QT\P_1^{*(k)}}\\
    =&\ \W_2^*\diag\rbr{\Q\P_1^{(k)}}\QT\rbr{\P_1^{(k)} - \P_1^{*(k)}}+ \W_2^*\diag\rbr{\QT\P_1^{*(p)}}\Q\rbr{\P_1^{(k)} - \P_1^{*(k)}}.
\end{split}
\end{align}
Combining with \cref{eqn:apdx-surrogate-gd-gradexpr-eq}, we have \begin{align*}
        &\ \rbr{\pdr{\V{3}{j}}{\P_1}}^\T\rbr{\V{3}{j} - \VT{3}{j}}\\
        =&\ \rbr{\W_2^*\diag\rbr{\QT\P_1^{(k)}}\Q +  \W_2^*\diag\rbr{\Q\P_1^{(k)}}\QT}^\T\\
        &\ \qquad\rbr{\W_2^*\diag\rbr{\Q\P_1^{(k)}}\QT\rbr{\P_1^{(k)} - \P_1^{*(k)}}+ \W_2^*\diag\rbr{\QT\P_1^{*(k)}}\Q\rbr{\P_1^{(k)} - \P_1^{*(k)}}}\\
        =&\ \QT\diag\rbr{\QT\P_1^{(k)}}\diag\rbr{\Q\P_1^{(k)}}\QT\rbr{\P_1^{(k)} - \P_1^{*(k)}}\tag{a}\\
        &\ + \Q\diag\rbr{\Q\P_1^{(k)}}\diag\rbr{\Q\P_1^{(k)}}\QT\rbr{\P_1^{(k)} - \P_1^{*(k)}}\tag{b}\\
        &\ + \QT\diag\rbr{\QT\P_1^{(k)}}\diag\rbr{\QT\P_1^{*(k)}}\Q\rbr{\P_1^{(k)} - \P_1^{*(k)}}\tag{c}\\
        &\ +
        \Q\diag\rbr{\Q\P_1^{(k)}}\diag\rbr{\QT\P_1^{*(k)}}\Q\rbr{\P_1^{(k)} - \P_1^{*(k)}}\tag{d}.
\end{align*}
Now let us analyze the four cross terms term-by-term.
\begin{enumerate}
    \item With $\P_1^{(k)} = \onehot_{a}\otimes \onehot_{b}$, by \cref{lemma:apdx-surrogate-gd-aux-QvQTv} we have $\QT\P_1^{(k)} = \1_\numchar\otimes \onehot_a$ and $\Q\P_1^{(k)} = \onehot_b\otimes \1_\numchar.$ Therefore $\QT\P_1^{(k)}\odot \Q\P_1^{(k)} = \onehot_b\otimes\onehot_a$ and hence
    \begin{align}
        \diag\rbr{\QT\P_1^{(k)}}\diag\rbr{\Q\P_1^{(k)}} = \diag\rbr{\onehot_b\otimes\onehot_a} = \diag\rbr{\onehot_b}\otimes\diag\rbr{\onehot_a}
    \end{align}
    It follows that\begin{align}
    \begin{split}
        &\quad\QT\diag\rbr{\QT\P_1^{(k)}}\diag\rbr{\Q\P_1^{(k)}}\QT\\
        &= \rbr{\1_\numchar\otimes I_\numchar} \rbr{I_\numchar^\T\otimes \1_\numchar^\T}\rbr{\diag\rbr{\onehot_b}\otimes\diag\rbr{\onehot_a}}\rbr{\1_\numchar\otimes I_\numchar} \rbr{I_\numchar^\T\otimes \1_\numchar^\T}\\
        &= \rbr{\1_\numchar\otimes I_\numchar} \rbr{I_\numchar^\T\otimes \1_\numchar^\T}\rbr{\onehot_b\otimes\diag\rbr{\onehot_a}}\rbr{I_\numchar^\T\otimes \1_\numchar^\T}\\
        &= \rbr{\1_\numchar\otimes I_\numchar} \rbr{I_\numchar^\T\otimes \1_\numchar^\T}\rbr{\onehot_b\otimes\onehot_a^\T}\\
        &= \rbr{\1_\numchar\otimes I_\numchar} \rbr{I_\numchar^\T\otimes \1_\numchar^\T}\rbr{\onehot_b\onehot_a^\T\otimes 1}\\
        &= \rbr{\1_\numchar\otimes I_\numchar} \rbr{\onehot_b\onehot_a^\T\otimes \1_\numchar^\T}\\
    \end{split}
    \end{align}
    Plugging into (a) we have
    \begin{align}
    \begin{split}
        \text{(a)} &= \QT\diag\rbr{\QT\P_1^{(k)}}\diag\rbr{\Q\P_1^{(k)}}\QT\rbr{\P_1^{(k)} - \P_1^{*(k)}}\\
        &= \rbr{\1_\numchar\otimes I_\numchar} \rbr{\onehot_b\onehot_a^\T\otimes \1_\numchar^\T}\rbr{\onehot_a\otimes \onehot_b} - \rbr{\1_\numchar\otimes I_\numchar} \rbr{\onehot_b\onehot_a^\T\otimes \1_\numchar^\T}\rbr{\onehot_{a^*}\otimes \onehot_{b^*}}\\
        &= \rbr{\1_\numchar\otimes I_\numchar} \rbr{\onehot_b\onehot_a^\T\onehot_a\otimes 1} - \rbr{\1_\numchar\otimes I_\numchar} \rbr{\onehot_b\onehot_a^\T\onehot_{a^*}\otimes 1}\\
        &= \rbr{\1_\numchar\otimes I_\numchar} \rbr{1\otimes \onehot_b\onehot_a^\T\onehot_a} - \rbr{\1_\numchar\otimes I_\numchar} \rbr{1\otimes \onehot_b\onehot_a^\T\onehot_{a^*}}\\
        &= \begin{cases}
            \1_\numchar \otimes\onehot_b & \text{when } a\neq a^*\\
            \zeromat & \text{otherwise  }
        \end{cases}
    \end{split}
    \end{align}
    \item We have seen the same term as in case 3, by \cref{lemma:apdx-surrogate-gd-aux2} we have \begin{align}
        \text{(b)} = \Q\diag\rbr{\Q\P_1^{(k)}}\diag\rbr{\Q\P_1^{(k)}}\QT\rbr{\P_1^{(k)} - \P_1^{*(k)}} = \srbr{\onehot_a - \onehot_{a^*}} \otimes \1_\numchar.
    \end{align}
    \item With $\P_1^{(k)} = \onehot_{a}\otimes \onehot_{b}$ and $\P_1^{*(k)} = \onehot_{a^*}\otimes \onehot_{b^*}$, we have \begin{align}
        \diag\rbr{\QT\P_1^{(k)}}\diag\rbr{\QT\P_1^{*(k)}} = \diag\rbr{\1_\numchar \otimes\onehot_a}\diag\rbr{\1_\numchar \otimes\onehot_{a^*}} = \begin{cases}
            \diag\rbr{\1_\numchar \otimes\onehot_a} & \text{if } a=a^*\\
            \zeromat & \text{otherwise}
        \end{cases}
    \end{align}
    Thus by \cref{lemma:apdx-surrogate-gd-aux1} we have\begin{align}
        \QT\diag\rbr{\QT\P_1^{(k)}}\diag\rbr{\QT\P_1^{*(k)}}\Q = \begin{cases}
            \rbr{\1_\numchar\1_\numchar^\T}\otimes I_\numchar  & \text{if } a=a^*\\
            \zeromat & \text{otherwise}
        \end{cases}
    \end{align}
    When $a=a^*$, we then have \begin{align}
    \begin{split}
        &\quad \QT\diag\rbr{\QT\P_1^{(k)}}\diag\rbr{\QT\P_1^{*(k)}}\Q\rbr{\P_1^{(k)} - \P_1^{*(k)}}\\
        &= \rbr{\rbr{\1_\numchar\1_\numchar^\T}\otimes I_\numchar}\rbr{\onehot_a\otimes \onehot_b} - \rbr{\rbr{\1_\numchar\1_\numchar^\T}\otimes I_\numchar}\rbr{\onehot_{a^*}\otimes \onehot_{b^*}}\\
        &= \1_\numchar\otimes\rbr{\onehot_{b} - \onehot_{b^*}}.
    \end{split}
    \end{align}
    Thus in summary \begin{align}
        \text{(c)} = \QT\diag\rbr{\QT\P_1^{(k)}}\diag\rbr{\QT\P_1^{*(k)}}\Q\rbr{\P_1^{(k)} - \P_1^{*(k)}} = \begin{cases}
            \1_\numchar\otimes\rbr{\onehot_{b} - \onehot_{b^*}}  & \text{if } a=a^*\\
            \zeromat & \text{otherwise}.
        \end{cases}
    \end{align}
    \item Now for the last term, note that
    \begin{align}
    \begin{split}
        &\quad\
        \Q\diag\rbr{\Q\P_1^{(k)}}\diag\rbr{\QT\P_1^{*(k)}}\Q\\
        &= \Q\diag\rbr{\onehot_b\otimes\1_\numchar}\diag\rbr{\1_\numchar\otimes \onehot_{a^*}}\Q\\
        &= \Q\diag\rbr{\onehot_b\otimes\onehot_{a^*}}\Q\\
        &= \rbr{I_\numchar\otimes \1_\numchar}\rbr{\1_\numchar^\T\otimes I_\numchar^\T}\rbr{\diag\rbr{\onehot_{b}}\otimes\diag\rbr{\onehot_{a^*}}}\rbr{I_\numchar\otimes \1_\numchar}\rbr{\1_\numchar^\T\otimes I_\numchar^\T}\\
        &= \rbr{I_\numchar\otimes \1_\numchar}\rbr{\1_\numchar^\T\otimes I_\numchar^\T}\rbr{\diag\rbr{\onehot_{b}}\otimes\onehot_{a^*}}\rbr{\1_\numchar^\T\otimes I_\numchar^\T}\\
        &= \rbr{I_\numchar\otimes \1_\numchar}\rbr{\onehot_{b}^\T\otimes\onehot_{a^*}}\rbr{\1_\numchar^\T\otimes I_\numchar^\T}\\
        &= \rbr{I_\numchar\otimes \1_\numchar}\rbr{\onehot_{a^*}\onehot_{b}^\T\otimes 1}\rbr{\1_\numchar^\T\otimes I_\numchar^\T}\\
        &= \rbr{\onehot_{a^*}\onehot_{b}^\T\otimes \1_\numchar}\rbr{\1_\numchar^\T\otimes I_\numchar^\T}.
    \end{split}
    \end{align}
    Plugging in $\srbr{\P_1^{(k)} - \P_1^{*(k)}}$, we have that
    \begin{align}
    \begin{split}
        &\ \Q\diag\rbr{\Q\P_1^{(k)}}\diag\rbr{\QT\P_1^{*(k)}}\Q\rbr{\P_1^{(k)} - \P_1^{*(k)}}\\
        =&\ \rbr{\onehot_{a^*}\onehot_{b}^\T\otimes \1_\numchar}\rbr{\1_\numchar^\T\otimes I_\numchar^\T}\rbr{\onehot_a\otimes \onehot_b} - \rbr{\onehot_{a^*}\onehot_{b}^\T\otimes \1_\numchar}\rbr{\1_\numchar^\T\otimes I_\numchar^\T}\rbr{\onehot_{a^*}\otimes \onehot_{b^*}}\\
        =&\ \rbr{\onehot_{a^*}\onehot_{b}^\T\otimes \1_\numchar}\rbr{1\otimes \onehot_b} - \rbr{\onehot_{a^*}\onehot_{b}^\T\otimes \1_\numchar}\rbr{1\otimes \onehot_{b^*}}\\
        =&\ \rbr{\onehot_{a^*}\onehot_{b}^\T\otimes \1_\numchar}\rbr{\onehot_b\otimes 1} - \rbr{\onehot_{a^*}\onehot_{b}^\T\otimes \1_\numchar}\rbr{\onehot_{b^*}\otimes 1}\\
        =&\ \rbr{\onehot_{a^*}\onehot_{b}^\T\onehot_b\otimes \1_\numchar} - \rbr{\onehot_{a^*}\onehot_{b}^\T\onehot_{b^*}\otimes \1_\numchar}\\
        =&\begin{cases}
            \onehot_{a^*}\otimes \1_\numchar & \text{if}\ b\neq b^*\\
            \zeromat &\text{otherwise}.
        \end{cases}
    \end{split}
    \end{align}
\end{enumerate}
Summing the four terms together, we then have\begin{align}
        \rbr{\pdr{\V{3}{j}}{\P_1}}^\T\rbr{\V{3}{j} - \VT{3}{j}} = 
        \begin{cases}
            0 & \text{when } a^*=a, b^*=b\\
            \1_\numchar\otimes\rbr{\onehot_b - \onehot_{b^*}} + \onehot_{a^*}\otimes\1_\numchar & \text{when } a^*=a, b^*\neq b\\
            \1_\numchar\otimes\onehot_{b^*} + \rbr{\onehot_a - \onehot_{a^*}}\otimes\1_\numchar & \text{when } a^*\neq a, b^*= b\\
            \1_\numchar\otimes\onehot_{b} + \onehot_a \otimes\1_\numchar & \text{when } a^*\neq a, b^*\neq b.
        \end{cases}
    \end{align}
\end{enumerate}
Now we are ready to provide the gradient expression for the loss over the entire sequence. Observe that for every consecutive sequence of $m$ columns $\scbr{\VT{1}{j},\VT{1}{j+1},\dots,\VT{1}{j+m-1}}$ that all equals to $\lonehot_k$, it will result in one incorrect column $\V{3}{j-1}$ in case 2, one incorrect column $\V{3}{j+m-1}$ in case 3, and $m-1$ incorrect columns ($\V{3}{j},\dots,\V{3}{j+m-2}$) in case 4.

\begin{figure}[h]
    \centering
    \begin{align*}
    \begin{NiceArray}{ccccccccc}[columns-width=1em]
        \V{1}{j-1} & \rightarrow & \green{\SV{1}{j-1}} (=\lonehot_p\neq\lonehot_k) & \green{\transarrow{\P_1^{(p)}}} & \green{\V{2}{j-1}} & \green{\rightarrow} & \orange{\SV{2}{j-1}} & \orange{\transarrow{\W^*_2}} & \orange{\V{3}{j-1}\text{\ \ (case 2)}}\\
        &&&&&&&&\\
        & \nearrow & & & & \red{\nearrow} & & & \\
        &&&&&&&&\\
        \V{1}{j} & \rightarrow & \green{\SV{1}{j}} (=\lonehot_k) & \red{\transarrow{\P_1^{(k)}}} & \red{\V{2}{j}} & \red{\rightarrow} & \red{\SV{2}{j}} & \red{\transarrow{\W^*_2}} & \red{\V{3}{j}\text{\ \ (case 4)}}\\
        &&&&&&&&\\
        & \nearrow & & & & \red{\nearrow} & & & \\
        &&&&&&&&\\
        & \vdots & & & & \vdots & & & \\
        &&&&&&&&\\
        & \nearrow & & & & \red{\nearrow} & & & \\
        &&&&&&&&\\
        \V{1}{j+m-2} & \rightarrow & \green{\SV{1}{j+m-2} }(=\lonehot_k) & \red{\transarrow{\P_1^{(k)}}} & \red{\V{2}{j+m-2}} & \red{ \rightarrow} & \red{\SV{2}{j+m-2}} & \red{\transarrow{\W^*_2}} & \red{\V{3}{j+m-2}\text{\ \ (case 4)}}\\
        &&&&&&&&\\
        & \nearrow & & & & \red{\nearrow}  & & & \\
        &&&&&&&&\\
        \V{1}{j+m-1} & \rightarrow & \green{\SV{1}{j+m-1} }(=\lonehot_k) & \red{\transarrow{\P_1^{(k)}}} & \red{\V{2}{j+m-1}} & \red{\rightarrow} & \orange{\SV{2}{j+m-1}} & \orange{\transarrow{\W^*_2}} & \orange{\V{3}{j+m-1}\text{\ \ (case 3)}}\\
        &&&&&&&&\\
        & \nearrow & & & & \green{\nearrow}  & & & \\
        &&&&&&&&\\
        \V{1}{j+m} & \rightarrow & \green{\SV{1}{j+m}} (=\lonehot_q\neq\lonehot_k) & \green{\transarrow{\P_1^{(q)}}} & \green{\V{2}{j+m}} & \green{\rightarrow} & \green{\SV{2}{j+m}} & \green{\transarrow{\W^*_2}} & \green{\V{3}{j+m}\text{\ \ (case 1 or 2)}}\\
    \end{NiceArray}
\end{align*}
    \caption{Error propagation of $\P_1^{(k)}$}
    \label{fig:apdx-surrogate-error-prop}
\end{figure}

For illustration, one can refer to the computation graph in \cref{fig:apdx-surrogate-error-prop}. In the graph, \green{green} entries agrees with the counterfactual values with correct $\P_1^{*(k)}$, \red{Red} and \orange{pink} entries are incorrect entries where \red{red} entries are consequence solely dependent on $\P_1^{(k)}$ (case 4) and \orange{pink} entries depend on other correct columns (case 2,3).

Assume that in total there are $\alpha$ columns in $\matrep_3$ under case 2, $\alpha$ columns in $\matrep_3$ under case 3, and $\beta$ columns in $\matrep_3$ under case 4, then by \cref{eqn:apdx-surrogate-gd-gradexp} the total gradient can be expressed as follows:
\begin{itemize}
    \item When $a=a^*, b\neq b^*$:\begin{align}
    \begin{split}
        \pdr{\gdloss}{\P_1^{(k)}} &= 2\alpha \rbr{\1_\numchar\otimes\rbr{\onehot_b - \onehot_{b^*}} + 2\alpha\rbr{\onehot_a - \onehot_{a^*}} \otimes \1_\numchar} + 2\beta\rbr{\1_\numchar\otimes\rbr{\onehot_b - \onehot_{b^*}} + \onehot_{a^*}\otimes\1_\numchar}\\
        &= (2\alpha + 2\beta) \rbr{\1_\numchar\otimes\onehot_b}  - (2\alpha + 2\beta) \rbr{\1_\numchar\otimes\onehot_{b^*}} + 2\beta\rbr{\onehot_{a^*}\otimes\1_\numchar}.
    \end{split}
    \end{align}
    \item When $a\neq a^*, b= b^*$:\begin{align}
    \begin{split}
        \pdr{\gdloss}{\P_1^{(k)}} &= 2\alpha \rbr{\1_\numchar\otimes\rbr{\onehot_b - \onehot_{b^*}}} + 2\alpha\rbr{\rbr{\onehot_a - \onehot_{a^*}} \otimes \1_\numchar} + 2\beta\rbr{\1_\numchar\otimes\onehot_{b^*} + \rbr{\onehot_a - \onehot_{a^*}}\otimes\1_\numchar}\\
        &= (2\alpha + 2\beta) \rbr{\onehot_{a}\otimes\1_\numchar}  - (2\alpha + 2\beta) \rbr{\onehot_{a^*}\otimes\1_\numchar} + 2\beta\rbr{\1_\numchar\otimes\onehot_{b^*}}.
    \end{split}
    \end{align}
    \item When $a\neq a^*, b\neq b^*$:\begin{align}
    \begin{split}
        \pdr{\gdloss}{\P_1^{(k)}} &= 2\alpha \rbr{\1_\numchar\otimes\rbr{\onehot_b - \onehot_{b^*}}} + 2\alpha\rbr{\rbr{\onehot_a - \onehot_{a^*}} \otimes \1_\numchar} + 2\beta\rbr{\1_\numchar\otimes\onehot_{b} + \onehot_a \otimes\1_\numchar}\\
        &= (2\alpha + 2\beta) \rbr{\onehot_{a}\otimes\1_\numchar}  + (2\alpha + 2\beta) \rbr{\1_\numchar\otimes\onehot_b} -2\alpha\rbr{\onehot_{a^*}\otimes\1_\numchar}  - 2\alpha\rbr{\1_\numchar\otimes\onehot_{b^*}}.
    \end{split}
    \end{align}
\end{itemize}
This gives the desired expression of gradient.⁄⁄
\end{proof}

Now with the gradient expression, we are ready to prove for the learning of a single missing column.

\begin{thmbox}
\begin{lemma}[Learning Column of $\weightmat_1$]\emph{}

\label{lemma:apdx-surrogate-learnW1}
    Fix an input sequence $\matrep_1\in\R^{\numchar^2\times \inputlength}$ and any column index $k\in[\numchar^2]$, if $\emph{\maxfunc}\rbr{\contextmat_{2} + \weightmat_2} = \W_2^*$ and $\emph{\maxfunc}\rbr{\contextmat_{1(k)} + \weightmat_1}$ equals to $\W_1^*$ everywhere except for the $k$-th column and if there exists a non-empty subset of indices $\cJ\subset[\inputlength]$ such that $\VT{1}{j} = \lonehot_k$ for all $j\in\cJ$,
    for any initialization ${\W_{1(0)}^{(k)}}\in\R^{\numchar^2}$ such that $\snorm{\W_{1(0)}^{(k)}}_0\leq \frac12$. Taking two surrogate gradient updates on $\W_{1(0)}^{(k)}$ as described in \cref{alg:mlt-d2-GD} gives $\W_{1(2)}^{(k)}$ such that $\emph{\maxfunc}\srbr{\W_{1(2)}^{(k)}} = \W_1^{*(k)}.$
\end{lemma}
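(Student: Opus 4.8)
The plan is to feed the closed-form gradient of \cref{lemma:apdx-surrogate-grad-key-W1} into the two update steps of \cref{alg:mlt-d2-GD} and track how the one-hot vector $\maxfunc(\W_1^{(k)})$ --- equivalently, the $2$-tuple it encodes --- evolves. First I would check that the hypotheses of \cref{lemma:apdx-surrogate-grad-key-W1} hold at both steps: since $\contextmat_{1(k)}$ has its $k$-th column zeroed out, $\P_1^{(k)}=\maxfunc(\W_1^{(k)})$ depends only on $\W_1^{(k)}$; since \cref{alg:mlt-d2-GD} never touches any other column of $\W_1$, every column of $\P_1$ except the $k$-th stays equal to the corresponding column of $\gtW_1$; and $\maxfunc(\contextmat_2+\W_2)=\W_2^*$ is fixed, so the gradient always flows back through the correct second layer. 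Hence ``only the $k$-th column of $\P_1$ is incorrect'' at both steps, and \cref{lemma:apdx-surrogate-grad-key-W1} applies with the \emph{same} input-dependent constants $\alpha\in\mathbb{Z}_+$, $\beta\in\NN$; the existence of the index set $\cJ$ forces $\alpha\ge 1$. I would also record that the norm constraint on $\W_{1(0)}^{(k)}$ makes the initialization negligible --- its entries cannot override an argmax gap of size $\ge 2\alpha\ge 2$ --- with any residual ties resolved by the fixed tie-breaking convention of $\maxfunc$; and that once $\P_1^{(k)}=\W_1^{*(k)}$ the residual $\matrep_{3(1,k)}-\matrep_3^*$ vanishes, so the remaining gradient step is a no-op.

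\emph{Case analysis.} Write $\W_1^{*(k)}=\onehot_{a^*}\otimes\onehot_{b^*}$ and $\maxfunc(\W_{1(0)}^{(k)})=\onehot_{a_0}\otimes\onehot_{b_0}$, using that a one-hot in $\R^{\numchar^2}$ encodes a $2$-tuple. If $(a_0,b_0)=(a^*,b^*)$ there is nothing to do; otherwise there are three cases. \textbf{(i) Both coordinates wrong} ($a_0\ne a^*$, $b_0\ne b^*$): substituting the third branch of the gradient formula into the first update, the contributions $+2\alpha\,(\onehot_{a^*}\otimes\1_\numchar)$ and $+2\alpha\,(\1_\numchar\otimes\onehot_{b^*})$ reinforce at entry $(a^*,b^*)$ to give it value $\approx 4\alpha$, while every other entry gains at most $\approx 2\alpha$; since $\alpha\ge 1$, entry $(a^*,b^*)$ is the strict argmax, so $\maxfunc(\W_{1(1)}^{(k)})=\W_1^{*(k)}$ and the second update does nothing. \textbf{(ii)/(iii) Exactly one coordinate wrong}, say $a_0=a^*$, $b_0\ne b^*$ (the mirror case is symmetric): substituting the first branch of the gradient formula into the first update, the terms $-(2\alpha+2\beta)(\1_\numchar\otimes\onehot_{b_0})$ and $+(2\alpha+2\beta)(\1_\numchar\otimes\onehot_{b^*})$ drive every entry outside column $b^*$ below $\tfrac12$ while leaving each entry of column $b^*$ at least $2\alpha-\tfrac12$ (hence above $\tfrac12$ as $\alpha\ge 1$); therefore $\maxfunc(\W_{1(1)}^{(k)})=\onehot_{a_1}\otimes\onehot_{b^*}$ for some $a_1$, i.e.\ the second coordinate is now correct. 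If $a_1=a^*$ we are done; if $a_1\ne a^*$ we are in the ``only the first coordinate is wrong'' situation, so for the second update I would use the second branch of the gradient formula and, after combining terms, the accumulated $+2\alpha$ along row $a^*$ and along column $b^*$ gives entry $(a^*,b^*)$ value $\approx 4\alpha$ and every other entry at most $\approx 2\alpha$, so $\maxfunc(\W_{1(2)}^{(k)})=\W_1^{*(k)}$. Combining the cases proves the lemma. (Two updates, rather than one, are genuinely needed: in the one-coordinate-wrong case a single update fixes only the coordinate aligned with the ``$\1_\numchar\otimes(\cdot)$'' stripe.)

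\emph{Main obstacle.} There is no analytic difficulty; the entire effort is bookkeeping of the Kronecker-structured ``stripe'' vectors $\1_\numchar\otimes\onehot_{(\cdot)}$ and $\onehot_{(\cdot)}\otimes\1_\numchar$ as they accumulate across the two updates, verifying in every sub-case that the target entry $(a^*,b^*)$ strictly dominates. The one spot that needs care is the regime $\beta=0$, where after the first update the entries of column $b^*$ are exactly tied, so $a_1$ is decided only by the tie-breaking convention (and may differ from $a^*$), and correctness is guaranteed only after the second update. Keeping the case split organized --- which coordinate is wrong, whether $\beta$ vanishes, how $\maxfunc$ breaks ties --- is the only real care required.
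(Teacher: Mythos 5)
Your proposal is correct and follows essentially the same route as the paper's proof: invoke the closed-form gradient expression of \cref{lemma:apdx-surrogate-grad-key-W1}, split into the three cases $(a_0\neq a^*,b_0\neq b^*)$, $(a_0=a^*,b_0\neq b^*)$, $(a_0\neq a^*,b_0=b^*)$, show the first update already wins in the both-wrong case, and in the one-wrong cases fix the second coordinate first and then sum the two gradient updates so that the combined step reduces to the both-wrong form with margin $4\alpha$ at $(a^*,b^*)$, dominating the $\le\tfrac12$ initialization. The observations you flag (same input-dependent $\alpha,\beta$ at both steps, the $\beta=0$ tie after the first update, and the second step being static once zero loss is reached) are exactly the points the paper's argument relies on as well.
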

\end{thmbox}

\begin{proof}[Proof of \cref{lemma:apdx-surrogate-learnW1}]\emph{}

    Without loss of generality, assume at the initialization $\P_{1(0)}^{(k)} = \onehot_{a_{(0)}}\otimes \onehot_{b_{(0)}}$ and $\P_1^{*(k)} = \onehot_{a^*}\otimes \onehot_{b^*}$ for some $a_{(0)},b_{(0)},a^*,b^*\in[\numchar]$.
    We first note that the conditions specified in the lemma meets the assumptions required by \cref{lemma:apdx-surrogate-grad-key-W1}, namely there is only one incorrect column in $\W_1$ missing and that column is being used in the forward pass at least one time (since $\cJ$ is non-empty).

    To prove $\maxfunc\srbr{\W_{1(2)}^{(k)}} = \W_1^{*(k)}$, it is sufficient to show that $\argmax\srbr{\W_{1(2)}^{(k)}} = a^*\numchar + b^*$.
    
    Now we will leverage the gradient expressions in \cref{lemma:apdx-surrogate-grad-key-W1}. We will dive into three different cases:
    \begin{itemize}
        \item \textbf{When $\ai{0} \neq a^*$ and $\bi{0} \neq b^*$}.

        By \cref{lemma:apdx-surrogate-grad-key-W1} we have for some $\alpha\in\mathbb{Z}_+$ and $\beta\in\NN$ that \begin{align}
        \label{eqn:apdx-surrogate-W1-aneqbneq-grad}
            \pdr{\gdloss}{\P_{1(0)}^{(k)}}=(2\alpha + 2\beta) \rbr{\onehot_{\ai{0}}\otimes\1_\numchar}  + (2\alpha + 2\beta) \rbr{\1_\numchar\otimes\onehot_{\bi{0}}} -2\alpha\rbr{\onehot_{a^*}\otimes\1_\numchar}  - 2\alpha\rbr{\1_\numchar\otimes\onehot_{b^*}}.
        \end{align}
        It is not hard to verify that the unique smallest entry is of index $a^*\numchar+b^*$ with value $-4\alpha$. This entry is contributed by the intersection of $-2\alpha\rbr{\onehot_{a^*}\otimes\1_\numchar}  - 2\alpha\rbr{\1_\numchar\otimes\onehot_{b^*}}$, the remaining smaller entries are of value $-2\alpha$ contributed by non-intersecting entries in the same expression above.
        
        Thus we know $\argmin{\partial\gdloss/\partial\P_{1(0)}^{(k)}} = a^*\numchar+b^*$ with a margin of at least $-2$ (since $\alpha\geq 1$).

        Now we can apply the gradient step to the weight initialization. Since $\snorm{\W_{1(0)}^{(k)}}_0\leq \frac12$, the margin of $a^*\numchar+b^*$ dominates the largest margin in the initialization (which is 1), we have 
        \begin{align}
            \argmax\rbr{\W_{1(1)}^{(k)}} = \argmax\rbr{\W_{1(0)}^{(k)} - \pdr{\gdloss}{\P_{1(0)}^{(k)}}} = a^*\numchar+b^*.
        \end{align}
        Therefore $\P_{1(1)}^{(k)} = \P_1^{*(k)}$ with the first step. We will reach zero loss after the first step and hence the second step is static, so we have shown $\argmax\srbr{\W_{1(2)}^{(k)}} = a^*\numchar+b^*$ as desired.

        \item \textbf{When $\ai{0} = a^*$ and $\bi{0} \neq b^*$}.
        
        By \cref{lemma:apdx-surrogate-grad-key-W1} we have for some $\alpha\in\mathbb{Z}_+$ and $\beta\in\NN$ that 
        \begin{align}
            \pdr{\gdloss}{\P_{1(0)}^{(k)}} &= (2\alpha + 2\beta) \rbr{\1_\numchar\otimes\onehot_{\bi{0}}}  - (2\alpha + 2\beta) \rbr{\1_\numchar\otimes\onehot_{b^*}} + 2\beta\rbr{\onehot_{a^*}\otimes\1_\numchar}.
        \end{align}
        In this case we no longer have an unique smallest entry, the set of negative entries is of index $\numchar x+b^*$ where $x\in [\numchar]$. The values are $-2\alpha$ for the case of $x = a^*$ and $-2(\alpha + \beta)$ for other $x$'s. These negative entries are contributed by the $- (2\alpha + 2\beta) \rbr{\1_\numchar\otimes\onehot_{b^*}}$, and all other entries are at least 0.
        
        Thus we also have a negative margin of at least $-2$ since $\alpha\geq 1$. Therefore after taking the first gradient step, we know that there exists some $x\in [\numchar]$ such that 
        \begin{align}
            \argmax\rbr{\W_{1(1)}^{(k)}} = \argmax\rbr{\W_{1(0)}^{(k)} - \pdr{\gdloss}{\P_{1(0)}^{(k)}}} = \numchar x+b^*.
        \end{align}
        Let $P_{1(1)}^{(k)} = \argmax\srbr{\W_{1(1)}^{(k)}} = \onehot_{a_{(1)}}\otimes \onehot_{b_{(1)}}$, then now we are in the case of $b_{(1)} = b^*$ since the negative margin of $-2$ dominates any entry-wise difference in the initialization as $\snorm{\W_{1(0)}^{(k)}}_0\leq \frac12$. If $\beta=0$ and it happens that $a_{(1)} = a^*$, then we have zero loss after the first step and we are done as the second step will be static. If $\bi{1}\neq b^*$, then by \cref{lemma:apdx-surrogate-grad-key-W1} the second step gradient is of the form
        \begin{align}
            \pdr{\gdloss}{\P_{1(1)}^{(k)}} &= (2\alpha + 2\beta) \rbr{\onehot_{\ai{1}}\otimes\1_\numchar}  - (2\alpha + 2\beta) \rbr{\onehot_{a^*}\otimes\1_\numchar} + 2\beta\rbr{\1_\numchar\otimes\onehot_{b^*}}.
        \end{align}
        This is a bit tricky to analyze directly since we no longer have the small entry-wise difference from the initialization in the weights, but one may note that the sum of the two update steps is of the form
        \begin{align}
        \begin{split}
            &\ \pdr{\gdloss}{\P_{1(1)}^{(k)}} + \pdr{\gdloss}{\P_{1(0)}^{(k)}}\\
            =&\ (2\alpha + 2\beta) \rbr{\onehot_{\ai{1}}\otimes\1_\numchar}  - (2\alpha + 2\beta) \rbr{\onehot_{a^*}\otimes\1_\numchar} + 2\beta\rbr{\1_\numchar\otimes\onehot_{b^*}}\\&\ + (2\alpha + 2\beta)\rbr{\1_\numchar\otimes\onehot_{\bi{0}}}  - (2\alpha + 2\beta) \rbr{\1_\numchar\otimes\onehot_{b^*}} + 2\beta\rbr{\onehot_{a^*}\otimes\1_\numchar}\\
            =&\ (2\alpha + 2\beta) \rbr{\onehot_{\ai{1}}\otimes\1_\numchar}  + (2\alpha + 2\beta) \rbr{\1_\numchar\otimes\onehot_{\bi{0}}} -2\alpha\rbr{\onehot_{a^*}\otimes\1_\numchar}  - 2\alpha\rbr{\1_\numchar\otimes\onehot_{b^*}}.
        \end{split}
        \end{align}
        This is identical to the single step gradient as in \cref{eqn:apdx-surrogate-W1-aneqbneq-grad} in the first case, so follow the identical argument we have 
        \begin{align}
            \argmax\rbr{\W_{1(2)}^{(k)}} = \argmax\rbr{\W_{1(0)}^{(k)} - \rbr{\pdr{\gdloss}{\P_{1(0)}^{(k)}} + \pdr{\gdloss}{\P_{1(1)}^{(k)}}}} = a^*\numchar+b^*
        \end{align}
        as desired.
        \item \textbf{When $\ai{0} \neq a^*$ and $\bi{0} = b^*$}

        Note that all expressions are symmetric with respect to $a$ and $b$ with an additional swap of the Kronecker products, so we may follow the exact same argument as in the case of $\ai{0} = a^*$ and $\bi{0} \neq b^*$ and arrive at the same conclusion.
    \end{itemize}
    Thus for any initializations, after two surrogate gradient steps on $\W_1^{(k)}$, we have $\maxfunc\srbr{\W_{1(2)}^{(k)}} = \W_1^{*(k)}.$
\end{proof}

\subsubsection{Learning the Second Layer}

Now for the second layer, we will similarly first derive the gradient (which is much simpler) and show that with only one gradient step, one can learn the correct entry.

\begin{lemma}[Gradient with respect to incorrect column in $\P_2$]\emph{}

\label{lemma:apdx-surrogate-grad-key-W2}
    When only the $k$-th column of translation matrix $\P_2^{(k)}$ is not equal to $\P_2^{*(k)}$. If $\P_2^{(k)}$ is used in the forward pass, there exists $\alpha\in\mathbb{Z}_+$ such that the gradient of $\gdloss$ with respect to $\P_2^{(k)}$ is of the form
    \begin{align*}
        \pdr{\gdloss}{\P_2^{(k)}} = 
        2\alpha \rbr{\P_2^{(k)} - \P_2^{*(k)}}.
    \end{align*}
\end{lemma}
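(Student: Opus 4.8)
The plan is to use that, unlike the first-layer case (\cref{lemma:apdx-surrogate-grad-key-W1}) where a \shiftfunc{} sits between $\P_1$ and the output, here $\P_2$ is the \emph{last} operation of the surrogate model, applied directly to a fixed one-hot intermediate representation. Consequently the output is linear in $\P_2$, the MSE loss collapses to a nonnegative-integer multiple of $\snorm{\P_2^{(k)} - \P_2^{*(k)}}_2^2$, and differentiating is immediate.

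First I would set up notation. In the second-layer loop of \cref{alg:mlt-d2-GD} the first layer has already been learned (its columns recover $\W_1^*$ by the inductive guarantee carried from \cref{lemma:apdx-surrogate-learnW1}), so $\P_1 = \W_1^*$ and the intermediate representation $\shiftmatrep_2 := \shiftfunc\rbr{\W_1^*\,\shiftfunc(\matrep_1)}$ entering layer $2$ equals the ground-truth intermediate (via \cref{lemma:shift-equiv} and \cref{lemma:translate-equiv}); crucially $\shiftmatrep_2$ is independent of $\P_2$. Then $\matrep_{3(2,k)} = \P_2\shiftmatrep_2$ while $\matrep_3^* = \W_2^*\shiftmatrep_2$, so
\[
\gdloss \;=\; \snorm{\matrep_{3(2,k)} - \matrep_3^*}_2^2 \;=\; \sum_{j=1}^{\inputlength}\snorm{\P_2\,\shiftmatrep_2^{(j)} - \W_2^*\,\shiftmatrep_2^{(j)}}_2^2.
\]

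Next I would evaluate the column terms. Each $\shiftmatrep_2^{(j)}$ is one-hot, say $\shiftmatrep_2^{(j)} = \lonehot_{p_j}$. If $p_j\neq k$ then $\P_2\lonehot_{p_j} = \P_2^{(p_j)} = \W_2^{*(p_j)} = \W_2^*\lonehot_{p_j}$ (since $\P_2$ agrees with $\W_2^*$ on every column except $k$), so that term vanishes; if $p_j = k$ the term equals $\snorm{\P_2^{(k)} - \W_2^{*(k)}}_2^2 = \snorm{\P_2^{(k)} - \P_2^{*(k)}}_2^2$. Setting $\alpha := \abs{\scbr{j\in[\inputlength]:\shiftmatrep_2^{(j)} = \lonehot_k}}$, which is a positive integer because the hypothesis says $\P_2^{(k)}$ is used in the forward pass, this gives $\gdloss = \alpha\,\snorm{\P_2^{(k)} - \P_2^{*(k)}}_2^2$. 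Differentiating with respect to $\P_2^{(k)}$ in the surrogate-gradient sense (legitimate since $\shiftmatrep_2$ does not depend on $\P_2^{(k)}$) yields $\pdr{\gdloss}{\P_2^{(k)}} = 2\alpha\rbr{\P_2^{(k)} - \P_2^{*(k)}}$, as claimed.

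The computation is routine; the only point needing care is the first step --- confirming that in the regime where the lemma is invoked the first-layer weights already recover $\W_1^*$, so that $\shiftmatrep_2 = \shiftmatrep_2^*$ and the residual on the affected columns is exactly $\P_2^{(k)} - \P_2^{*(k)}$ with no extra terms. That collapse is what makes the gradient a clean scalar multiple of the ideal update direction, which (unlike the first-layer gradient analyzed in \cref{lemma:apdx-surrogate-grad-key-W1}) will then require only a single surrogate step to move the argmax onto the correct entry.
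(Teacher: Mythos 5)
Your proposal is correct and follows essentially the same route as the paper: both use that $\P_1=\W_1^*$ forces $\shiftmatrep_2=\shiftmatrep_2^*$, so only the columns with $\shiftmatrep_2^{(j)}=\lonehot_k$ contribute a residual of exactly $\P_2^{(k)}-\P_2^{*(k)}$, and counting those $\alpha$ occurrences gives $\pdr{\gdloss}{\P_2^{(k)}}=2\alpha\rbr{\P_2^{(k)}-\P_2^{*(k)}}$. Writing the loss in closed form before differentiating, as you do, is an equivalent presentation of the paper's chain-rule computation.
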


\begin{proof}
    We follow the same set of notations as used in the proof for \cref{lemma:apdx-surrogate-grad-key-W1}. In particular, we use $\tilde\matrep_1, \matrep_2, \tilde \matrep_2$ and $\matrep_3$ to denote the intermediate sequences attained with translation matrix $\P_2^{(k)}$ and $\tilde\matrep_1^*, \matrep_2^*, \tilde \matrep_2^*$ and $\matrep_3^*$ to denote the counterfactual intermediate sequences should the forward pass is done with the ground truth translations $\P_2^{*(k)} = \gtW_1$. Since we assume that $\P_1 = \P_1^*$, we have $\tilde\matrep_2 = \tilde\matrep_2^*$. Therefore for all column $j$ such that $\V{3}{j}\neq\VT{3}{j}$, it must be so that $\SVT{2}{j}=\lonehot_k$, and the residual can be written as \begin{align}
        \V{3}{j} - \VT{3}{j} = \P_2\SVT{2}{j} - \P_2^*\SVT{2}{j} = \P_2\lonehot_k - \P_2^*\lonehot_k = \P_2^{(k)} -\P_2^{*(k)}.
    \end{align}
    Assume that $\P_2^{(k)}$ has been used $\alpha$ times in the forward pass, follow the chain rule we then have 
    \begin{align}
        \pdr{\gdloss}{\P_2^{(k)}} &= 2\sum_{j=1}^{\len}\rbr{\pdr{\V{3}{j}}{\P_2^{(k)}}}^\T\rbr{\V{3}{j} - \VT{3}{j}} = 2\alpha\rbr{\P_2^{(k)} -\P_2^{*(k)}}.
    \end{align}
\end{proof}

\begin{thmbox}
\begin{lemma}[Learning $\weightmat_2$]\emph{}

\label{lemma:apdx-surrogate-learnW2}
    Fix an input sequence $\matrep_1\in\R^{\numchar^2\times \inputlength}$ and any column index $k\in[\numchar^2]$, if $\maxfunc\rbr{\contextmat_{1} + \weightmat_1} = \W_1^*$ and $\maxfunc\rbr{\contextmat_{2(k)} + \weightmat_2}$ equals to $\W_2^*$ everywhere except for the $k$-th column and if there exists a non-empty subset of indices $\cJ\subset[\inputlength]$ such that $\VT{2}{j} = \lonehot_k$ for all $j\in\cJ$,
    for any initialization ${\W_{2(0)}^{(k)}}\in\R^{\numchar^2}$ such that $\snorm{\W_{2(0)}^{(k)}}_0\leq \frac12$. Taking one surrogate gradient updates on $\W_{2(0)}^{(k)}$ as described in \cref{alg:mlt-d2-GD} gives $\W_{2(1)}^{(k)}$ such that $\maxfunc\srbr{\W_{2(1)}^{(k)}} = \W_2^{*(k)}.$
\end{lemma}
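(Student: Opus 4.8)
The plan is to reduce the target identity $\maxfunc\srbr{\W_{2(1)}^{(k)}} = \W_2^{*(k)}$ to an $\argmax$ statement about the single updated column, and then read that $\argmax$ off directly from the closed-form surrogate gradient provided by \cref{lemma:apdx-surrogate-grad-key-W2}. First I would fix notation: since the first layer is frozen at the ground truth ($\maxfunc\rbr{\contextmat_1 + \weightmat_1} = \W_1^*$), we have $\matrep_2 = \matrep_2^*$ and hence $\shiftfunc(\matrep_2) = \shiftfunc(\matrep_2^*)$; and since the $k$-th column of $\contextmat_{2(k)}$ is zeroed out in \cref{alg:mlt-d2-GD}, the second-layer translation column actually used on the forward pass is $\P_{2(0)}^{(k)} = \maxfunc\srbr{\W_{2(0)}^{(k)}} = \lonehot_p$ where $p = \argmax \W_{2(0)}^{(k)}$, while all other columns of $\P_2$ equal those of $\W_2^*$ by hypothesis. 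Let $p^\ast$ be the index with $\W_2^{*(k)} = \lonehot_{p^\ast}$. Because $\cJ$ is non-empty, the column $\P_2^{(k)}$ is applied at least once during the forward pass, so the hypotheses of \cref{lemma:apdx-surrogate-grad-key-W2} hold and the surrogate gradient step uses
\[
\pdr{\gdloss}{\P_{2(0)}^{(k)}} \;=\; 2\alpha\rbr{\P_{2(0)}^{(k)} - \P_2^{*(k)}} \;=\; 2\alpha\rbr{\lonehot_p - \lonehot_{p^\ast}}
\]
for some integer $\alpha \ge 1$.

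The case analysis is then immediate. If $p = p^\ast$ the gradient vanishes (the residual is zero on every output column), the update is static, and $\maxfunc\srbr{\W_{2(1)}^{(k)}} = \lonehot_p = \W_2^{*(k)}$ already. If $p \ne p^\ast$, one surrogate update gives
\[
\W_{2(1)}^{(k)} \;=\; \W_{2(0)}^{(k)} - 2\alpha\rbr{\lonehot_p - \lonehot_{p^\ast}},
\]
which raises the coordinate at index $p^\ast$ by $2\alpha$, lowers the (distinct) coordinate at index $p$ by $2\alpha$, and leaves every other coordinate unchanged. Invoking the small-initialization bound on $\W_{2(0)}^{(k)}$ that is used throughout the surrogate-model analysis (the same bound behind \cref{lemma:surrogateICLCapable-apdx} and the proof of \cref{lemma:apdx-surrogate-learnW1}), every coordinate of $\W_{2(0)}^{(k)}$ has magnitude below $\tfrac12$; hence the $p^\ast$-coordinate of $\W_{2(1)}^{(k)}$ is at least $2\alpha - \tfrac12 \ge \tfrac32$, whereas every other coordinate is at most $\tfrac12$. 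Thus $\argmax \W_{2(1)}^{(k)} = p^\ast$ and $\maxfunc\srbr{\W_{2(1)}^{(k)}} = \lonehot_{p^\ast} = \W_2^{*(k)}$, which is the claim.

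I do not anticipate a genuine obstacle here: the substantive work is the gradient computation \cref{lemma:apdx-surrogate-grad-key-W2}, which is already proved and is much simpler than its first-layer counterpart \cref{lemma:apdx-surrogate-grad-key-W1} — with $\P_1$ frozen at $\W_1^*$ the only erroneous output columns are exactly those on which $\P_2^{(k)}$ is applied, so the residual is literally a positive multiple of $\P_2^{(k)} - \P_2^{*(k)}$ and a single gradient step suffices (no two-step bookkeeping as in \cref{lemma:apdx-surrogate-learnW1}). The only points needing a little care are checking that the precondition ``$\P_2^{(k)}$ is used in the forward pass'' follows from $\cJ \ne \emptyset$ (so that $\alpha \ge 1$ rather than $\alpha = 0$) and keeping track of the $<\tfrac12$ initialization bound so that the gradient-induced margin of $2\alpha$ dominates the initial spread between coordinates; both are routine.
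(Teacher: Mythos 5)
Your proposal is correct and follows essentially the same route as the paper: reduce the claim to an $\argmax$ statement about the single updated column, invoke the closed-form gradient $2\alpha\rbr{\P_2^{(k)}-\P_2^{*(k)}}$ from \cref{lemma:apdx-surrogate-grad-key-W2} (with $\alpha\ge 1$ because $\cJ\ne\emptyset$), and conclude via the margin argument that the $2\alpha$ shift dominates the small initialization. Your explicit handling of the already-correct case $p=p^\ast$ and the numeric bounds $2\alpha-\tfrac12\ge\tfrac32$ versus $\le\tfrac12$ are just slightly more detailed versions of the paper's own argument.
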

\end{thmbox}

\begin{proof}
    Without loss of generality, assume at the initialization $\P_{2(0)}^{(k)} = \onehot_{a_{(0)}}\otimes \onehot_{b_{(0)}}$ and $\P_2^{*(k)} = \onehot_{a^*}\otimes \onehot_{b^*}$ for some $a_{(0)},b_{(0)},a^*,b^*\in[\numchar]$.
    We first note that the conditions specified in the lemma meets the assumptions required by \cref{lemma:apdx-surrogate-grad-key-W2}, namely there is only one incorrect column in $\W_2$ missing and that column is being used in the forward pass at least one time (since $\cJ$ is non-empty).
    To prove $\maxfunc\srbr{\W_{2(1)}^{(k)}} = \W_2^{*(k)}$, it is sufficient to show that $\argmax\srbr{\W_{2(1)}^{(k)}} = a^*\numchar + b^*$.
    By \cref{lemma:apdx-surrogate-grad-key-W2}, we have \begin{align}
        \pdr{\gdloss}{\P_2^{(k)}} &= 2\sum_{j=1}^{\len}\rbr{\pdr{\V{3}{j}}{\P_2^{(k)}}}^\T\rbr{\V{3}{j} - \VT{3}{j}} = 2\alpha\rbr{\P_2^{(k)} -\P_2^{*(k)}} = 2\alpha\onehot_{\ai{0}}\otimes\onehot_{\bi{0}} - 2\alpha\onehot_{a^*}\otimes\onehot_{b^*}.
    \end{align}
    Since $\alpha\geq 1$, the negative margin of the $a^*\numchar + b^*$-th entry dominates the initial difference in the initialization which is bounded by $\snorm{\W_{2(0)}^{(k)}}_0\leq \frac12$. Thus we have \begin{align}
        \argmax\rbr{\W_{2(1)}^{(k)}} = \argmax\rbr{\W_{2(0)}^{(k)} - \pdr{\gdloss}{\P_{2(0)}^{(k)}}} = a^*\numchar+b^*
    \end{align}
    as desired.
\end{proof}

\subsubsection{Proof for \cref{thm:apdx-surrogate-GD-learning-coverable}}

Now we are ready to prove for the main theorem restated below:
\begin{thmbox}
\thmsurrogdcoverable
\end{thmbox}

\begin{proof}
The statement can be proven by a similar induction as in the proof for \cref{thm:apdx-surrogate-search-learning-coverable}.
    
    Let the induction hypothesis be such that when the enumeration goes to $k$-th column of the $i$-th layer, if $\maxfunc\srbr{\W_l^{(j)} + \W_l^{*(j)}} = \W^{*(j)}_l$ for all $(l,j)\in[2]\times[\numchar^2]$ and $\maxfunc\rbr{\W_l^{(j)}} = \W_l^{*(j)}$ for all $(l,j)$ such that $l<i$ or $l=i\land j<k$, then the gradient update on the $k$-th column of the $i$-th layer ends with $\W_i^{(k)} = \W_i^{*(k)}$ while $\maxfunc\srbr{\W_l^{(j)} + \W_l^{*(j)}} = \W^{*(j)}_l$ for all $(l,j)\in[2]\times[\numchar^2]$ is preserved.

    The base case is satisfied as with initialization of $\snorm{\W_{1(0)}}_0\leq \frac12$ and $\snorm{\W_{2(0)}}_0\leq \frac12$, by \cref{lemma:surrogateICLCapable-apdx}, we have $\maxfunc\srbr{\W_l^{(j)} + \W_l^{*(j)}} = \maxfunc\srbr{\zeromat + \W_l^{*(j)}} = \W^{*(j)}_l$ for all $(l,j)\in[\depth]\times[\numchar^2]$ and there are no requirements for $\W_i^{(k)} = \W_i^{*(k)}$ yet.

    For the induction step, we note that with the inductive hypothesis of $\maxfunc\srbr{\W_l^{(j)} + \W_l^{*(j)}} = \W^{*(j)}_l$ for all $(l,j)\in[2]\times[\numchar^2]$, $\srbr{\CM_{1(k)}, \W^*_2}$ (when $i=1$) or $\srbr{\W^*_1, \CM_{2(k)}}$ (when $i=2$) will correctly condition all columns of $\P$'s except for the $\P_{i}^{(k)}$ since 
    \begin{align}
        \CM_{i(k)}^{(k)} = \gtW_i\rbr{\mI_{\numchar^2} - \diag(\lonehot_k)}^{(k)} = \zeromat.
    \end{align}
    Thus by \cref{lemma:apdx-surrogate-learnW1} (when $i=1$) or \cref{lemma:apdx-surrogate-learnW2} (when $i=2$), we know that after updating  $\W_{i}^{(k)}$, we have $\maxfunc\srbr{\W_{i}^{(k)}} = \W_{i}^{*(k)}$. The newly added column provides the correct inductive hypothesis on $\maxfunc\srbr{\W_l^{(j)}} = \W_l^{*(j)}$ for the next enumeration step.

    By induction to $i=2$ and $k=\numchar^2$, we will be able to recover $\maxfunc\srbr{\W_i} = \W_i^*$ for all $i\in[\depth]$.
\end{proof}

Similarly we may use the coupon collecting argument to generalize the input to uniformly random strings as follows:

\begin{thmbox}
\begin{corollary}[Learning $\targetCB$ in $\translationtask(2,\numchar)$ with context-enhanced surrogate GD with random input]
\label{thm:apdx-surrogate-GD-learning-random}
\emph{}

For any initialization $\W_{1(0)},\W_{2(0)}\in\R^{\numchar^2\times\numchar^2}$ such that $\snorm{\W_{1(0)}}_0\leq \frac12$ and $\snorm{\W_{2(0)}}_0\leq \frac12$,
for any target \strcodebook{} $\targetCB = \cbr{\dictionary^*_1,\dictionary^*_2}$ in $\emph{\translationtask}(2,\numchar)$, with probability at least $1-\delta$ over a uniformly random input $\matrep_1$ of length $L = 2\numchar^2\log\frac{2\numchar}{\delta}$, \cref{alg:mlt-surrogate-search} provided with the ground truth label $\matrep^*_{3} = \emph{\translationtask}_\targetCB(\matrep_1)$ terminates with $\emph{\maxfunc}\rbr{\W_1} = \W_1^*$ and $\emph{\maxfunc}\rbr{\W_2} = \W_2^*$.
\end{corollary}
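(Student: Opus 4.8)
The plan is to reduce the random-input statement to the already-proved $\targetCB$-coverable case, namely \cref{thm:apdx-surrogate-GD-learning-coverable}, exactly as \cref{cor:apdx-surrogate-search-learning-random} reduces \cref{thm:apdx-surrogate-search-learning-coverable} to the heuristic-search case. Concretely, I will show that a uniformly random input $\matrep_1\in\R^{\numchar^2\times L}$ with $L = 2\numchar^2\log\frac{2\numchar}{\delta}$ is $\targetCB$-coverable with probability at least $1-\delta$, and then invoke \cref{thm:apdx-surrogate-GD-learning-coverable} as a black box on the event that coverability holds.

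The key steps, in order, are: (i) Apply \cref{lemma:apdx-surrogate-sequence-distn}: since every column of $\matrep_1$ is i.i.d. uniform over the one-hot set $\{\lonehot_k\}_{k=1}^{\numchar^2}$, all intermediate matrices produced while translating $\matrep_1$ under $\translationtask_\targetCB$ — in particular $\rshiftmatrep^*_1$ and $\rshiftmatrep^*_2$ — again have i.i.d. uniform one-hot columns. (ii) For each level $i\in\{1,2\}$, the event $A_i$ that $\rshiftmatrep^*_i$ fails to have rank $\numchar^2$ is precisely the event that some ``coupon'' $\lonehot_k$ is missing among the $L$ i.i.d. columns, so by the coupon-collector tail bound (\cref{lemma:apdx-surrogate-coupon}) we get $\pr{A_i}\le \numchar^2 e^{-L/\numchar^2}\le \delta/2$ for the stated $L$. (iii) A union bound over $i\in\{1,2\}$ gives $\pr{A_1\cup A_2}\le\delta$, i.e. $\matrep_1$ is $\targetCB$-coverable with probability at least $1-\delta$; equivalently one may simply quote the ``short $\targetCB$-coverable random sequence'' lemma of \cref{sec:apdx-surrogateModel-learningGeneral} instantiated at $\depth=2$, which states exactly this. (iv) On that event, apply \cref{thm:apdx-surrogate-GD-learning-coverable} with input $\matrep_1$, label $\matrep^*_3 = \translationtask_\targetCB(\matrep_1)$, and any initialization with $\snorm{\W_{1(0)}}_0,\snorm{\W_{2(0)}}_0\le\tfrac12$, to conclude that the layer-wise surrogate-gradient procedure terminates with $\maxfunc(\W_1)=\W_1^*$ and $\maxfunc(\W_2)=\W_2^*$.

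I do not expect a genuine obstacle here: all of the delicate content — the closed-form surrogate-gradient computations of \cref{lemma:apdx-surrogate-grad-key-W1} and \cref{lemma:apdx-surrogate-grad-key-W2}, and the column-by-column induction over layers — is already carried out inside \cref{thm:apdx-surrogate-GD-learning-coverable}, and \cref{lemma:apdx-surrogate-sequence-distn} is what makes the coupon-collector reduction legitimate. The only point requiring care is the elementary check that $L = 2\numchar^2\log\frac{2\numchar}{\delta}$ is large enough to beat the coupon bound, which is immediate since $2\log\frac{2\numchar}{\delta} = \log\frac{4\numchar^2}{\delta^2}\ge \log\frac{2\numchar^2}{\delta}$ for all $\delta\le 2$, so $L\ge \numchar^2\log\frac{2\numchar^2}{\delta}$ as needed for $\numchar^2 e^{-L/\numchar^2}\le\delta/2$. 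One should also note in passing that $\targetCB$-coverability (full rank of $\rshiftmatrep^*_1$ and $\rshiftmatrep^*_2$) indeed supplies the precise full-rank hypotheses used by \cref{lemma:apdx-surrogate-learnW1} and \cref{lemma:apdx-surrogate-learnW2}, since the bijective \strdicts{} induce permutation matrices and hence $\matrep^*_2 = \transmat(\dictionary^*_1)\rshiftmatrep^*_1$ has the same rank as $\rshiftmatrep^*_1$.
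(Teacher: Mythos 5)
Your proposal is correct and follows essentially the same route as the paper: the paper treats this corollary as an immediate consequence of \cref{thm:apdx-surrogate-GD-learning-coverable} combined with the coupon-collector coverability argument (via \cref{lemma:apdx-surrogate-sequence-distn} and \cref{lemma:apdx-surrogate-coupon}, union-bounded over the $d=2$ levels), which is exactly your reduction, with the stated $L = 2\numchar^2\log\frac{2\numchar}{\delta}$ being the general bound specialized to $\depth=2$. Your arithmetic check that this $L$ suffices is also sound, so no gap remains.
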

\end{thmbox}

\subsection{Auxiliary Lemmas for Learning Surrogate Models}

\begin{lemma}
\label{lemma:apdx-surrogate-gd-aux-QvQTv}
    For any long one-hot vector $\vv = \onehot_a\otimes\onehot_b$, with $\shiftmat = \rbr{I_\numchar\otimes \1_\numchar}\rbr{\1_\numchar\otimes I_\numchar}^\top$ we have \begin{align}
            \Q\vv 
            &= \ve_b\otimes \1_\numchar;\qquad
            \Q^\T\vv 
            = \1_\numchar\otimes \ve_a.
    \end{align}
\end{lemma}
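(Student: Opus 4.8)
The plan is to prove both identities by a direct Kronecker-product computation. The key tool is the mixed-product identity $\rbr{A\otimes B}\rbr{C\otimes D} = \rbr{AC}\otimes\rbr{BD}$, together with the elementary observations that $\1_\numchar^\T\onehot_c = 1$ and $I_\numchar\onehot_c = \onehot_c$ for every $c\in[\numchar]$, and the identification $\R^\numchar\cong\R^\numchar\otimes\R^1\cong\R^1\otimes\R^\numchar$ which lets me write a vector $u\in\R^\numchar$ either as $u\otimes 1$ or as $1\otimes u$ as convenient. First I would rewrite the shift matrix using $\rbr{\1_\numchar\otimes I_\numchar}^\T = \1_\numchar^\T\otimes I_\numchar$, so that $\Q = \rbr{I_\numchar\otimes\1_\numchar}\rbr{\1_\numchar^\T\otimes I_\numchar}$ and hence $\QT = \rbr{\1_\numchar\otimes I_\numchar}\rbr{I_\numchar\otimes\1_\numchar^\T}$.

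For $\Q\vv$ with $\vv = \onehot_a\otimes\onehot_b$, I would apply the two factors in turn. Applying the right factor gives $\rbr{\1_\numchar^\T\otimes I_\numchar}\rbr{\onehot_a\otimes\onehot_b} = \rbr{\1_\numchar^\T\onehot_a}\otimes\rbr{I_\numchar\onehot_b} = 1\cdot\onehot_b = \onehot_b$. Then, writing $\onehot_b = \onehot_b\otimes 1$ and applying the left factor, $\rbr{I_\numchar\otimes\1_\numchar}\rbr{\onehot_b\otimes 1} = \rbr{I_\numchar\onehot_b}\otimes\rbr{\1_\numchar\cdot 1} = \onehot_b\otimes\1_\numchar$, which is the claimed expression for $\Q\vv$. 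The computation for $\QT\vv$ is symmetric: $\rbr{I_\numchar\otimes\1_\numchar^\T}\rbr{\onehot_a\otimes\onehot_b} = \rbr{I_\numchar\onehot_a}\otimes\rbr{\1_\numchar^\T\onehot_b} = \onehot_a$, and then writing $\onehot_a = 1\otimes\onehot_a$, $\rbr{\1_\numchar\otimes I_\numchar}\rbr{1\otimes\onehot_a} = \rbr{\1_\numchar\cdot 1}\otimes\rbr{I_\numchar\onehot_a} = \1_\numchar\otimes\onehot_a$, as claimed.

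There is no genuine obstacle here; the statement is pure bookkeeping and could even be read off by thinking of $\Q$ as the linear map that, written on one-hot columns, extracts the ``second character'' index and broadcasts it, while $\QT$ extracts the ``first character'' index. The only point requiring mild care is keeping the Kronecker shapes consistent, i.e. deciding whether an intermediate vector in $\R^\numchar$ should be treated as $u\otimes 1$ or $1\otimes u$ before reapplying the mixed-product rule, which is precisely why the two identities come out with the all-ones block $\1_\numchar$ on opposite sides of $\onehot$.
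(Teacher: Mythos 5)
Your proof is correct and follows essentially the same route as the paper's: apply the two Kronecker factors of $\Q$ (resp. $\QT$) in sequence using the mixed-product identity, with the same re-identification of an intermediate vector $u \in \R^\numchar$ as $u\otimes 1$ versus $1\otimes u$ before the second factor. No gaps; the bookkeeping of shapes is handled exactly as in the paper's computation.
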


\begin{proof} Note that with  $\vv = \onehot_a\otimes\onehot_b$, we have
    \begin{align}
        \begin{split}
            \Q\vv 
            &= \rbr{I_\numchar\otimes \1_\numchar}\rbr{\1_\numchar^\T\otimes I_\numchar^\T}\rbr{\ve_a\otimes \ve_b}
            = \rbr{I_\numchar\otimes \1_\numchar}\rbr{1\otimes \ve_b}
            = \rbr{I_\numchar\otimes \1_\numchar}\rbr{\ve_b \otimes 1}
            = \ve_b \otimes \1_\numchar;\\
            \Q^\T\vv 
            &= \rbr{\1_\numchar\otimes I_\numchar}\rbr{I_\numchar^\T\otimes \1_\numchar^\T}\rbr{\ve_a\otimes \ve_b}
            = \rbr{\1_\numchar\otimes I_\numchar}\rbr{\ve_a\otimes 1}
            = \rbr{\1_\numchar\otimes I_\numchar}\rbr{1\otimes\ve_a}
            = \1_\numchar\otimes \ve_a.
        \end{split}
    \end{align}
\end{proof}

\begin{lemma}
\label{lemma:apdx-surrogate-gd-aux1}
    For any one-hot vector  $\vv = \onehot_a\otimes\onehot_b\in\R^{\numchar^2}$, \begin{align*}
        \Q^\T\diag\rbr{\Q^\T\vv}\diag\rbr{\Q^\T\vv}\Q = \srbr{\1_\numchar \1_\numchar^\T}\otimes I_\numchar.
    \end{align*}
\end{lemma}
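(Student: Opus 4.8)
The plan is to reduce the left-hand side to a single Kronecker-product identity and then collapse it using the mixed-product property $(A\otimes B)(C\otimes D) = (AC)\otimes(BD)$.

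First I would invoke \cref{lemma:apdx-surrogate-gd-aux-QvQTv} with $\vv = \onehot_a\otimes\onehot_b$ to obtain $\QT\vv = \1_\numchar\otimes\onehot_a$. Consequently $\diag\rbr{\QT\vv} = \diag\rbr{\1_\numchar\otimes\onehot_a} = I_\numchar\otimes\diag\rbr{\onehot_a}$, and since $\diag\rbr{\onehot_a}$ is a projection (idempotent), $\diag\rbr{\QT\vv}\diag\rbr{\QT\vv} = I_\numchar\otimes\diag\rbr{\onehot_a}$. Note this already no longer depends on $b$, consistent with the target. Next I would substitute the definitions $\Q = \rbr{I_\numchar\otimes \1_\numchar}\rbr{\1_\numchar^\T\otimes I_\numchar}$ and $\QT = \rbr{\1_\numchar\otimes I_\numchar}\rbr{I_\numchar\otimes \1_\numchar^\T}$ (the latter from transposing the former), so the left-hand side becomes
\[
\rbr{\1_\numchar\otimes I_\numchar}\rbr{I_\numchar\otimes \1_\numchar^\T}\rbr{I_\numchar\otimes\diag\rbr{\onehot_a}}\rbr{I_\numchar\otimes \1_\numchar}\rbr{\1_\numchar^\T\otimes I_\numchar}.
\]

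Applying the mixed-product property to the three middle Kronecker factors collapses them to $I_\numchar\otimes\rbr{\1_\numchar^\T\diag\rbr{\onehot_a}\1_\numchar} = I_\numchar\otimes 1 = I_\numchar$, using $\1_\numchar^\T\diag\rbr{\onehot_a}\1_\numchar = \1_\numchar^\T\onehot_a = 1$. What remains is $\rbr{\1_\numchar\otimes I_\numchar}\rbr{\1_\numchar^\T\otimes I_\numchar} = \rbr{\1_\numchar\1_\numchar^\T}\otimes I_\numchar$ by one more application of the mixed-product property, which is exactly the claimed identity.

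The computation is entirely routine; the only point requiring care is tracking the transposes in $\Q$ versus $\QT$ and the left-to-right ordering of the Kronecker factors so that the mixed-product rule is applied with matching block dimensions ($\numchar$ versus $\numchar^2$). I do not anticipate any genuine obstacle.
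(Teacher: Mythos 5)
Your proposal is correct and follows essentially the same route as the paper: both invoke \cref{lemma:apdx-surrogate-gd-aux-QvQTv} to get $\QT\vv = \1_\numchar\otimes\onehot_a$ and then collapse the expression with the Kronecker mixed-product property, the only cosmetic difference being that you merge the two $\diag$ factors by idempotence while the paper splits one onto each side and uses $\onehot_a^\T\onehot_a=1$. No gap.
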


\begin{proof}
    By \cref{lemma:apdx-surrogate-gd-aux-QvQTv}, $\Q^\T\vv= \1_\numchar\otimes \ve_a.$
    Therefore $\diag\rbr{\Q^\T\vv} = \diag\rbr{\1_\numchar}\otimes\diag\rbr{\ve_a} = I_\numchar\otimes \diag\rbr{\ve_a}.$ Thus
    \begin{align}
    \begin{split}
        \diag\rbr{\Q^\T\vv}\Q = \rbr{I_\numchar\otimes \diag\rbr{\ve_a}}\rbr{I_\numchar\otimes \1_\numchar}\srbr{\1_\numchar^\T\otimes I_\numchar^\T} = \rbr{I_\numchar\otimes \ve_a}\srbr{\1_\numchar^\T\otimes I_\numchar^\T}
    \end{split}
    \end{align}
    and therefore we have\begin{align}
    \begin{split}
        \Q^\T\diag\rbr{\Q^\T\vv}\diag\rbr{\Q^\T\vv}\Q &= \srbr{\1_\numchar\otimes I_\numchar}\rbr{I_\numchar\otimes \ve_a^\T}\rbr{I_\numchar\otimes \ve_a}\srbr{\1_\numchar^\T\otimes I_\numchar^\T}\\
        &= \srbr{\1_\numchar\otimes I_\numchar}\rbr{I_\numchar\otimes 1}\srbr{\1_\numchar^\T\otimes I_\numchar^\T}\qquad\qquad\rbr{\text{since }\ve_a^\T\ve_a=1}\\
        &= \srbr{\1_\numchar\otimes I_\numchar}\rbr{1\otimes I_\numchar}\srbr{\1_\numchar^\T\otimes I_\numchar^\T}\\
        &= \srbr{\1_\numchar\otimes I_\numchar}\srbr{\1_\numchar^\T\otimes I_\numchar^\T}\\
        &= \srbr{\1_\numchar \1_\numchar^\T}\otimes I_\numchar
    \end{split}
    \end{align}
\end{proof}

\begin{lemma}
\label{lemma:apdx-surrogate-gd-aux2}
    For any one-hot vector  $\vv = \onehot_a\otimes\onehot_b\in\R^{\numchar^2}$, \begin{align*}
        \Q\diag\rbr{\Q\vv}\diag\rbr{\Q\vv}\Q^\T = I_\numchar\otimes \srbr{\1_\numchar \1_\numchar^\T}.
    \end{align*}
\end{lemma}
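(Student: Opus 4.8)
The plan is to prove this identity by mirroring the computation already carried out for \cref{lemma:apdx-surrogate-gd-aux1}, exploiting the symmetry between $\Q$ and $\QT$ recorded in \cref{lemma:apdx-surrogate-gd-aux-QvQTv}: there $\QT\vv = \1_\numchar\otimes\ve_a$ while $\Q\vv = \ve_b\otimes\1_\numchar$, so the two statements are exactly each other's factor-swapped (and $a\leftrightarrow b$) counterparts. Concretely, the first step is to invoke \cref{lemma:apdx-surrogate-gd-aux-QvQTv} to write $\Q\vv = \ve_b\otimes\1_\numchar$, and hence
\[
\diag\rbr{\Q\vv} = \diag(\ve_b)\otimes\diag(\1_\numchar) = \diag(\ve_b)\otimes I_\numchar,
\]
which is symmetric and idempotent.

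Next I would form the ``half product'' $\mM \triangleq \diag\rbr{\Q\vv}\QT$. Substituting $\QT = \rbr{\1_\numchar\otimes I_\numchar}\rbr{I_\numchar^\T\otimes\1_\numchar^\T}$ and using the mixed-product rule for Kronecker products together with $\diag(\ve_b)\1_\numchar = \ve_b$ gives $\mM = \rbr{\ve_b\otimes I_\numchar}\rbr{I_\numchar^\T\otimes\1_\numchar^\T}$. Because $\diag\rbr{\Q\vv}$ is symmetric and idempotent, the target quantity factors as
\[
\Q\,\diag\rbr{\Q\vv}\,\diag\rbr{\Q\vv}\,\QT = \rbr{\Q\,\diag\rbr{\Q\vv}}\rbr{\diag\rbr{\Q\vv}\,\QT} = \mM^\T \mM,
\]
where I used $\Q\,\diag\rbr{\Q\vv} = \rbr{\diag\rbr{\Q\vv}\QT}^\T = \mM^\T$. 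Then I would multiply out
\[
\mM^\T\mM = \rbr{I_\numchar\otimes\1_\numchar}\rbr{\ve_b^\T\otimes I_\numchar}\rbr{\ve_b\otimes I_\numchar}\rbr{I_\numchar^\T\otimes\1_\numchar^\T},
\]
collapse the two middle factors via $\ve_b^\T\ve_b = 1$ to leave $\rbr{I_\numchar\otimes\1_\numchar}\rbr{I_\numchar^\T\otimes\1_\numchar^\T}$, and apply the mixed-product rule once more to obtain $I_\numchar\otimes\rbr{\1_\numchar\1_\numchar^\T}$, which is the claimed identity.

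There is no genuine obstacle here: the entire argument is bookkeeping with Kronecker products, and the only points needing a moment's care are (i) tracking which tensor factor carries $I_\numchar$ versus $\1_\numchar$ at each stage, and (ii) noting that $\diag(\ve_b)$ is an orthogonal projection, so squaring it (and applying it to $\1_\numchar$) is harmless. In fact, since \cref{lemma:apdx-surrogate-gd-aux1} already establishes the analogue with the Kronecker factors in the opposite order, one could alternatively just remark that $\Q$ is obtained from $\QT$ by conjugating with the perfect-shuffle permutation, which simultaneously interchanges the two tensor slots of every matrix involved; I would nonetheless include the short explicit computation above so that the lemma is self-contained.
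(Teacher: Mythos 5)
Your argument is correct and is essentially the paper's own proof: both use \cref{lemma:apdx-surrogate-gd-aux-QvQTv} to get $\Q\vv=\onehot_b\otimes\1_\numchar$, compute $\diag\rbr{\Q\vv}\QT=\rbr{\onehot_b\otimes I_\numchar}\rbr{I_\numchar^\T\otimes\1_\numchar^\T}$, and collapse the product via $\onehot_b^\T\onehot_b=1$ to obtain $I_\numchar\otimes\srbr{\1_\numchar\1_\numchar^\T}$. Your framing of the quantity as $\mM^\T\mM$ (using symmetry/idempotence of $\diag\rbr{\Q\vv}$) is just a slightly more explicit packaging of the same computation, so nothing further is needed.
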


\begin{proof}
    This proof is very similar to the proof for \cref{lemma:apdx-surrogate-gd-aux1}. By \cref{lemma:apdx-surrogate-gd-aux-QvQTv}, $\Q^\T\vv= \1_\numchar\otimes \ve_a.$
    Therefore $\diag\rbr{\Q\vv} = \diag\rbr{\ve_b}\otimes\diag\rbr{\1_\numchar} =  \diag\rbr{\ve_b}\otimes I_\numchar.$ Thus
    \begin{align}
    \begin{split}
        \diag\rbr{\Q\vv}\Q^\T = \rbr{\diag\rbr{\ve_b}\otimes I_\numchar}\rbr{\1_\numchar\otimes I_\numchar} \rbr{I_\numchar^\top\otimes \1_\numchar^\top} = \rbr{\ve_b\otimes I_\numchar}\rbr{I_\numchar^\top\otimes \1_\numchar^\top}
    \end{split}
    \end{align}
    and therefore we have\begin{align}
    \begin{split}
        \Q\diag\rbr{\Q\vv}\diag\rbr{\Q\vv}\Q^\T &= \srbr{I_\numchar\otimes \1_\numchar}\rbr{\ve_b^\T\otimes I_\numchar}\rbr{\ve_b\otimes I_\numchar}\srbr{I_\numchar^\T\otimes \1_\numchar^\T}\\
        &= \srbr{I_\numchar\otimes \1_\numchar}\rbr{1\otimes I_\numchar}\srbr{I_\numchar^\T\otimes \1_\numchar^\T}\qquad\qquad\rbr{\text{since }\ve_b^\T\ve_b=1}\\
        &= \srbr{I_\numchar\otimes \1_\numchar}\rbr{I_\numchar\otimes 1}\srbr{I_\numchar^\T\otimes \1_\numchar^\T}\\
        &= \srbr{I_\numchar\otimes \1_\numchar}\srbr{I_\numchar^\T\otimes \1_\numchar^\T}\\
        &= I_\numchar \otimes \srbr{\1_\numchar \1_\numchar^\T}.
    \end{split}
    \end{align}
\end{proof}

\begin{thmbox}
\begin{lemma}[Tail Bound for Coupon Collector Problem \citep{motwani1995randomized}]\emph{}

\label{lemma:apdx-surrogate-coupon}
    For a set $S$ of size $n$, with probability at least $1-\delta$ one can cover all unique elements of $S$ in $n\log\frac n\delta$ independent uniformly random sampling trials from $S$.
\end{lemma}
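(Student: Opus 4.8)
The plan is the textbook union-bound argument for the coupon collector problem. Set $t := \lceil n\log(n/\delta)\rceil$, where $\log$ denotes the natural logarithm, and let the trials be $X_1,\dots,X_t$, each drawn independently and uniformly from $S$. First I would fix an arbitrary element $s\in S$ and bound the probability of the ``bad'' event $A_s$ that $s$ is never drawn in the $t$ trials. Since each trial misses $s$ with probability $1-\frac1n$ and the trials are independent, $\Pr[A_s] = \left(1-\tfrac1n\right)^{t} \le e^{-t/n} \le e^{-\log(n/\delta)} = \tfrac{\delta}{n}$, using the elementary inequality $1-x\le e^{-x}$.

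Next I would take a union bound over all $n$ elements of $S$: $\Pr\left[\bigcup_{s\in S}A_s\right] \le \sum_{s\in S}\Pr[A_s] \le n\cdot\tfrac{\delta}{n} = \delta$. The event $\bigcup_{s\in S}A_s$ is exactly the event that at least one element of $S$ is missing after $t$ trials, so its complement is the event that every element has been drawn at least once, i.e.\ that the $t$ trials cover $S$. Hence coverage occurs with probability at least $1-\delta$, which is the statement of the lemma.

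There is essentially no obstacle here; the only points worth a sentence are (i) the base of the logarithm: the step $\left(1-\tfrac1n\right)^t\le e^{-t/n}$ is what pins $\log$ to the natural logarithm, with any other base changing only a constant factor, and (ii) integrality, handled by replacing $n\log(n/\delta)$ with its ceiling, which only increases $t$ and hence only decreases each $\Pr[A_s]$. This is precisely the standard tail bound for the coupon collector problem and could alternatively just be cited from \citet{motwani1995randomized}; I include the short argument for completeness and because \cref{lemma:apdx-surrogate-coupon} is invoked with the specific constant $n\log\frac n\delta$ in the preceding proofs.
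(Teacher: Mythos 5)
Your argument is correct and complete: the paper itself states this lemma only by citation to \citet{motwani1995randomized} without reproducing a proof, and your union-bound derivation (per-element miss probability $(1-1/n)^t \le e^{-t/n} \le \delta/n$, then summing over the $n$ elements) is exactly the standard argument behind the cited tail bound, including the correct handling of the natural logarithm and the ceiling. Nothing further is needed.
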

\end{thmbox}

\subsection{Learning $\targetCB$ in $\translationtask_\targetCB$ with Gradient Descent (Empirical Evidence)}
\label{sec:apdx-experiments-linear-surrogate}
\label{sec:apdx-surrogateModel-GDexp}
In this section we provide more details on empirically optimizing the simple surrogate model $\surrogateModel_\surrogateWeights$, which was only briefly discussed in the main text by the end of \cref{sec:theory-surrogateModel}. We will first introduce the approximations we made to the surrogate model to make gradient-based optimization easy and stable, then we will present empirical results on the model learning target sets of phrasebooks $\translationtask_\targetCB$ in $\translationtask(5,10)$, $\translationtask(10,10)$, and even $\translationtask(20,10)$.

\subsubsection{Approximated Latent Model for GD}
\label{sec:apdx-surrogateModel-GDexp-apprGD}
Recall that with input sequence represented by $\matrep_1\in\R^{\numchar^2\times L}$, the surrogate model for a depth-$d$ translation is being recursively defined by the translation + shifting operations
\begin{align}
    \matrep_{i+1} = \maxfunc(\contextmat_i + \weightmat_i)\shiftfunc(\matrep_i)
\end{align}
until we reach $\matrep_{d+1}$. While this model captures the essence of transition from ICL capability to memorization of specific set of phrasebooks, $\maxfunc$ is making it not directly differentiable and hard to optimize. To address this issue, we approximate it with an column-wise softmax function with very low temperature ($T=1/25$). The recursive definition in the approximated model is then
\begin{align}
    \tilde\matrep_{i+1} = \softmaxfunc(25 (\contextmat_i + \weightmat_i))\shiftfunc(\tilde\matrep_i).
\end{align}
We denote the recursive surrogate model with the softmax substitution as $\tilde\surrogateModel_\surrogateWeights(\contextmat_1, \contextmat_2,\dots, \contextmat_\depth, \matrep_1)$ where
\begin{align}
\begin{split}
    \tilde\matrep_{\depth+1} &= \tilde\surrogateModel_\surrogateWeights(\contextmat_1, \contextmat_2,\dots, \contextmat_\depth, \matrep_1)
    \\
    &\triangleq \softmaxfunc(25\contextmat_\depth + 25\weightmat_\depth)\\
    &\qquad \shiftfunc\rbr{\softmaxfunc(25\contextmat_{\depth - 1} + 25\weightmat_{\depth-1})\shiftfunc\rbr{\cdots \softmaxfunc(25\contextmat_1 + 25\weightmat_1)\shiftfunc(\tilde\matrep_1) \cdots}}.
\end{split}
\end{align}

We define the objective function as the column-wise cross-entropy loss between the final output and the input. Namely for input $\matrep_1$ with prediction $\tilde\matrep_{\depth+1}$ and ground truth label $\matrep_{\depth + 1}^*$, the loss is computed as \begin{align}
    \cL &=\sum_{k=1}^L\text{CrossEntropy}(\tilde\matrep^{(k)}_{\depth+1}, \matrep^{*(k)}_{\depth+1}).
\end{align}
We follow the same masking (dropout) curriculum as described in \cref{sec:apdx-surrogateModel-learningGeneral} and \cref{sec:apdx-surrogateModel-learningD2}, that at each step we zero-out a single column from a single context matrix $\contextmat_i$. We experiment on two gradient update schemes:
\begin{itemize}
    \item Layer-wise Training: at each step, if we are masking a column on $\contextmat_i$, we only compute the gradient with respect to $\weightmat_i$ and update it. This training is more akin to the theoretical analysis described in \cref{sec:apdx-surrogateModel-learningD2}.

    \item Full Parameter Training: at any step, we compute the gradient with respect to each of the weight matrices and update all parameters. This is more akin to the real gradient-based training as we do not have the heuristics for localized update.
\end{itemize}

To allow for fast and stable training, we adopt a very large learning rate of $\eta = 100$ and apply parameter clipping between $[0,1]$ after each update. The complete algorithm is described as follows:

\begin{algorithm}[H]
   \caption{Layerwise Gradient Descent with Context-Enhanced Learning For Optimizing $\surrogateModel$}
   \label{alg:mlt-surrogate-layerwisegd}
\begin{algorithmic}[1]
    \STATE {\bfseries Input:}
    \STATE input $\matrep_1\in\R^{n^2\times L}$, label  $\matrep_{\depth+1}^*\in\R^{n^2\times L}$, descriptive text $\gtW_1,\dots,\gtW_{\depth}\in\R^{n^2\times n^2}$, learning rate $\eta$, total steps $T$
    \STATE
    \STATE \textbf{Initialize} $\W_1,\dots, W_\depth\gets \zeromat$ \COMMENT{Start with zero initialization}
    \FOR{$t=1$ {\bfseries to} $T$}
    \STATE $i\gets \floor{(t-1)/n^2}\% d + 1$ \COMMENT{Get the layer to be masked}
    \STATE $k\gets ((t-1)\%n^2) + 1$ \COMMENT{Get the column index to be masked}
    \STATE \textbf{Initialize} $\CM_{i(k)}\gets \gtW_i\rbr{\mI_{\numchar^2} - \diag(\lonehot_k)}$ \COMMENT{Create masked context matrix}
    \STATE $\tilde\matrep_{\depth+1} \gets \surrogateModel_\surrogateWeights(\W^*_1\dots,\W^*_{i-1}, \CM_{i(k)},\W^*_{i+1}\dots,\W^*_{\depth}, \matrep_1)$
    \STATE $\cL\gets \text{CrossEntropy}(\tilde\matrep_{\depth+1}, \matrep^{*}_{\depth+1})$
    \STATE $\W_i \gets \W_i - \eta\nabla_{\W_i}\cL$ \COMMENT{Update the weight for the layer with mask}
    \ENDFOR
    \STATE \textbf{Return} $\W_1,\dots,\W_\depth$.
\end{algorithmic}
\end{algorithm}

\begin{algorithm}[H]
   \caption{Full Parameter Gradient Descent with Context-Enhanced Learning For Optimizing $\surrogateModel$}
   \label{alg:mlt-surrogate-fullparamgd}
\begin{algorithmic}[1]
    \STATE {\bfseries Input:}
    \STATE input $\matrep_1\in\R^{n^2\times L}$, label  $\matrep_{\depth+1}^*\in\R^{n^2\times L}$, descriptive text $\gtW_1,\dots,\gtW_{\depth}\in\R^{n^2\times n^2}$, learning rate $\eta$, total steps $T$
    \STATE
    \STATE \textbf{Initialize} $\W_1,\dots, W_\depth\gets \zeromat$ \COMMENT{Start with zero initialization}
    \FOR{$t=1$ {\bfseries to} $T$}
    \STATE $i\gets \floor{(t-1)/n^2}\% d + 1$ \COMMENT{Get the layer to be masked}
    \STATE $k\gets ((t-1)\%n^2) + 1$ \COMMENT{Get the column index to be masked}
    \STATE \textbf{Initialize} $\CM_{i(k)}\gets \gtW_i\rbr{\mI_{\numchar^2} - \diag(\lonehot_k)}$ \COMMENT{Create masked context matrix}
    \STATE $\tilde\matrep_{\depth+1} \gets \surrogateModel_\surrogateWeights(\W^*_1\dots,\W^*_{i-1}, \CM_{i(k)},\W^*_{i+1}\dots,\W^*_{\depth}, \matrep_1)$
    \STATE $\cL\gets \text{CrossEntropy}(\tilde\matrep_{\depth+1}, \matrep^{*}_{\depth+1})$
    \FOR{$l=1$ {\bfseries to} $d$}
    \STATE $\W_l \gets \W_l - \eta\nabla_{\W_l}\cL$ \COMMENT{Update the weight for all layers}
    \ENDFOR
    \ENDFOR
    \STATE \textbf{Return} $\W_1,\dots,\W_\depth$.
\end{algorithmic}
\end{algorithm}

\newpage

\begin{figure}[H]
     \centering
     \begin{subfigure}[b]{0.7\linewidth}
         \centering
         \includegraphics[width=\textwidth]{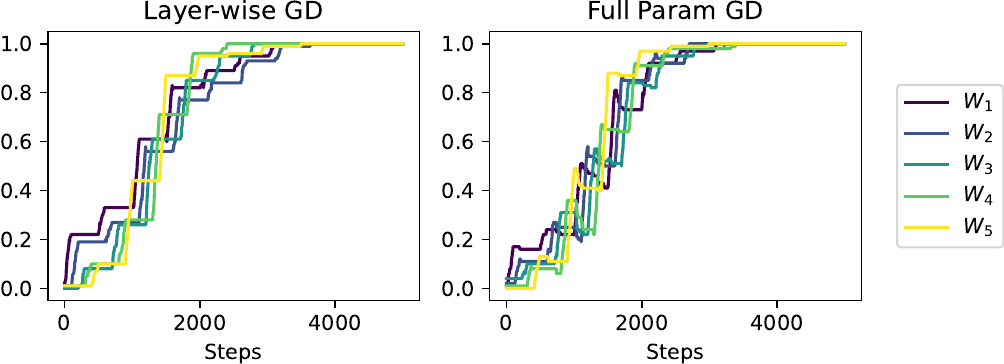}
         \caption{$\translationtask(5,10)$}
     \end{subfigure}
     \vspace{0.2in}
     
     \begin{subfigure}[b]{0.7\linewidth}
         \centering
         \includegraphics[width=\textwidth]{files/Appendix/experiments_apdx/figs/surrogate_linear/GD/N10D10_layerwise_vs_fullparam.pdf}
         \caption{$\translationtask(10,10)$}
     \end{subfigure}
     \vspace{0.2in}
     
     \begin{subfigure}[b]{0.7\linewidth}
         \centering
         \includegraphics[width=\textwidth]{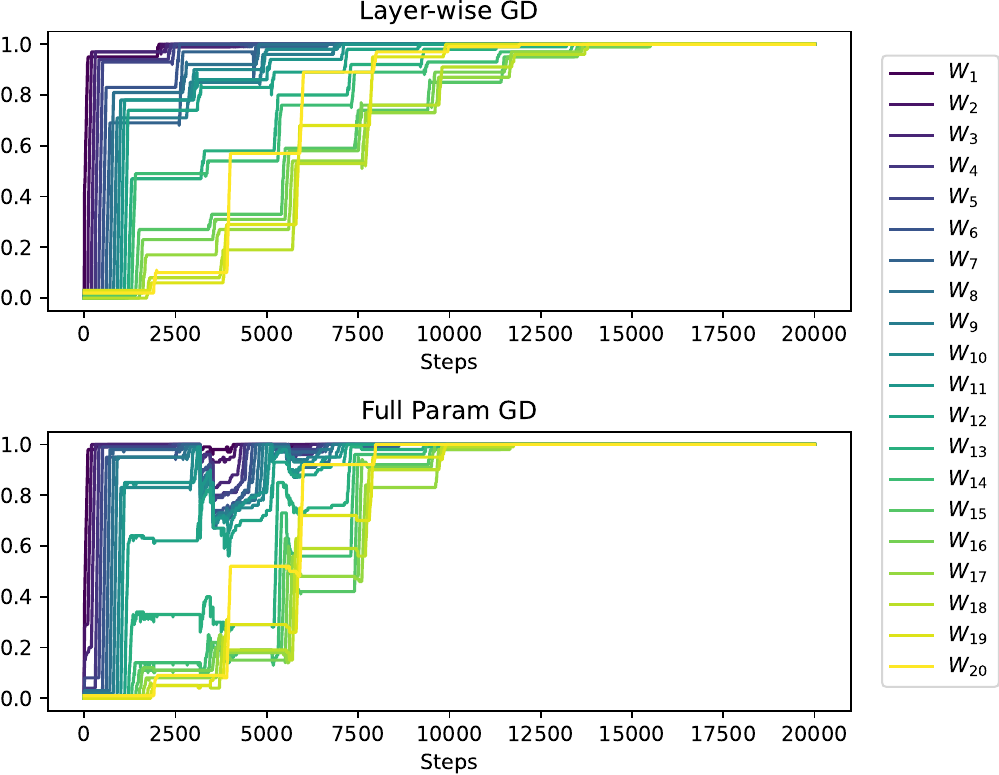}
         \caption{$\translationtask(20,10)$}
     \end{subfigure}
        \caption{We perform Layer-wise and Full Parameter Training with Gradient Descent (defined in \cref{sec:apdx-surrogateModel-GDexp-apprGD}) on the $\surrogateModel$ (\cref{defn:apdx-surrogate-surrogateModel}) designed for $\translationtask_\targetCB$ in $\translationtask(5,10), \translationtask(10,10), \translationtask(20, 10)$ respectively (alternately, depth $\depth=5,10,20$ respectively, while number of characters is fixed at $\numchar=10$). Here, we report the portion of columns from the trainable parameters, which after $\maxfunc$ application $\{ \maxfunc\rbr{\weightmat_i}\}_{i=1}^{\depth}$,  align with the corresponding stochastic matrices of the \strdicts{} $\{\transmat\rbr{{\vardict^*_i}}\}_{i=1}^{\depth}$.We observe that under both algorithms, the trainable parameters quickly learn the relevant stochastic matrices. }
        \label{fig:GD_latent}
\end{figure}

\begin{figure}[H]
    \centering
    \includegraphics[width=0.75\linewidth]{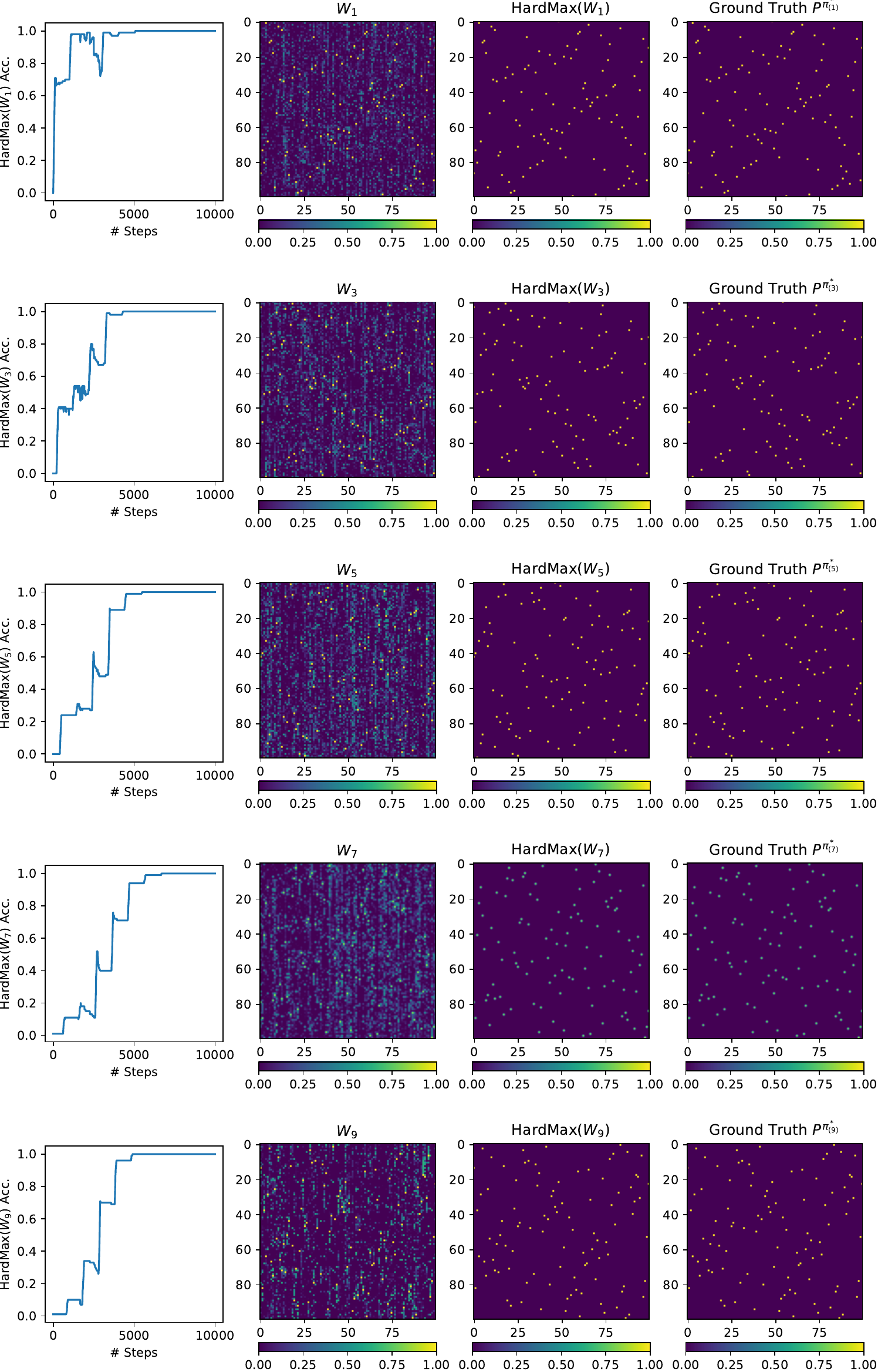}
    \caption{Detailed analysis on the Full parameter training behavior of the trainable parameters in $\surrogateModel$ from \cref{fig:GD_latent} for $\translationtask_\targetCB$ in $\translationtask(10, 10)$ (i.e. depth $\depth=10$ and number of characters $\numchar=10$). We report the behavior of all odd-index parameters $\weightmat_1, \weightmat_3, \cdots, \weightmat_9$. (left to right) first, we show the number of columns of the trainable parameter, which after $\maxfunc\rbr{\weightmat_i}$ align with the corresponding columns of $\transmat\rbr{{\vardict^*_i}}.$ Second, third and fourth visualize the matrices, $\weightmat_i$, $\maxfunc\rbr{\weightmat_i}$, and $\transmat\rbr{{\vardict^*_i}}$ respectively. All matrices learn to match $\transmat\rbr{{\vardict^*_i}}$ at the end of training with $\maxfunc$ operation. }
    \label{fig:enter-label}
\end{figure}

\newpage
\section{Construction of a Transformer that can Simulate the Latent Model} \label{app:construction_transformer}

\subsection{Useful definitions and lemmas}
\begin{definition}[Relative self-attention with $1$ head] \label{def:rel_attn}
    For a set of matrices $\{\mW_{query}, \mW_{key}, \mW_{value}\}$ with each matrix $\in \mathbb{R}^{k \times k}$ for some $k > 0$ and a set of $(t+1)$ biases $\{b_i\}_{t \leq i \leq 0}$,  the self-attention computation on an input sequence $\vx_1, \cdots, \vx_{\inputlength}$ with each $\vx_{p_2} \in \mathbb{R}^{k}$ is given by the output sequence $\vy_1, \cdots, \vy_{\inputlength}$, where for all $p_1 \in [1, \inputlength]$
    \begin{align*}
        &\vy_{p_1} = \sum_{p_2=1}^{\inputlength} a_{p_1, p_2} \vo_j, \text{ where } a_{{p_1}{p_2}} = \frac{e^{\vq_{p_1}^{\top}\vk_{p_2} + b_{{p_2}-{p_1}}}}{\sum_{{p_2}'\leq {p_1}}e^{\vq_{p_1}^{\top}\vk_{{p_2}'}+ b_{{p_2}'-{p_1}}}} \text{ if } {p_2} \leq {p_1}, 0 \text{ otherwise} \\
        &\vq_{p_2} = \mW_{query} \vx_{p_2}, \quad \vk_{p_2} = \mW_{key} \vx_{p_2}, \quad \vo_{p_2} = \mW_{value} \vx_{p_2}, \quad \text{ for all } p_2  \in [1, \inputlength]. 
    \end{align*}
\end{definition}

For a relative self-attention with $H$ heads, we will simply add the output of the $H$ heads as the final output.

\begin{definition}[MLP]\label{def:mlp}
    For a set of matrices $\mW_{outer} \in \mathbb{R}^{k\times H}, \mW_{inner} \in \mathbb{R}^{H\times k}$ for some $k, H > 0$ and an activation function $\sigma$, the output of the MLP layer on an input sequence $\vx_1, \cdots, \vx_{\inputlength}$ with each $\vx_i \in \mathbb{R}^{k}$ is given by the output sequence $\vy_1, \cdots, \vy_{\inputlength}$, where for all $i$
    \begin{align*}
        \vy_i = \mW_{outer} \sigma ( \mW_{inner} \vx_i ).
    \end{align*}
\end{definition}

\begin{lemma}\label{lem:gelu_mul}
    For GELU \citep{hendrycks2016gaussian} activation function, which takes $x \in \mathbb{R}$ as input and returns $x \Phi(x)$ as output, with $\Phi(x)$ representing the standard Gaussian cumulative distribution function, for any two variables $x, y \in \mathbb{R}$, the following holds true:
    \begin{align*}
        \sqrt{\pi/2}\left(\text{GELU}(x+y) - \text{GELU}(x) - \text{GELU}(y)\right) = xy + \mathcal{O}(x^3 y^3).
    \end{align*}
\end{lemma}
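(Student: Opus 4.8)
The plan is to prove the identity by Taylor-expanding the Gaussian CDF $\Phi$ (equivalently, $\text{GELU}$) about the origin and matching low-order coefficients. Write $\Phi(x)=\int_{-\infty}^{x}\tfrac{1}{\sqrt{2\pi}}e^{-t^2/2}\,dt$, so that $\Phi(0)=\tfrac12$, $\Phi'(0)=\tfrac{1}{\sqrt{2\pi}}$, and $\Phi''(0)=\bigl(\tfrac{1}{\sqrt{2\pi}}e^{-t^2/2}\bigr)'\big|_{t=0}=0$. Since $\Phi$ is entire (an affine rescaling of $\operatorname{erf}$), so is $\text{GELU}(x)=x\Phi(x)$, hence its power series about $0$ converges on all of $\mathbb{R}$ and has the form
\begin{equation*}
\text{GELU}(x)=\tfrac12 x+\tfrac{1}{\sqrt{2\pi}}x^2+R(x),\qquad R(x)=\sum_{k\ge 4}c_k x^k,
\end{equation*}
i.e. the linear coefficient is $\tfrac12$, the quadratic coefficient is $\tfrac{1}{\sqrt{2\pi}}$, the cubic coefficient vanishes (because $\Phi''(0)=0$, so $\text{GELU}'''(0)=3\Phi''(0)+0\cdot\Phi'''(0)=0$), and all remaining terms have degree at least $4$.

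Next I would substitute $x+y$, $x$, and $y$ into this expansion and subtract. The linear parts cancel exactly, $\tfrac12(x+y)-\tfrac12 x-\tfrac12 y=0$; the quadratic parts contribute $\tfrac{1}{\sqrt{2\pi}}\bigl((x+y)^2-x^2-y^2\bigr)=\tfrac{2}{\sqrt{2\pi}}xy$; and the tail contributes $R(x+y)-R(x)-R(y)=\sum_{k\ge4}c_k\sum_{j=1}^{k-1}\binom{k}{j}x^{j}y^{k-j}$. Every monomial in this tail has positive degree in both $x$ and $y$ and total degree at least $4$, so on any bounded region it is dominated by $|xy|$ times a genuinely higher-order factor, and I would absorb it into the $\mathcal{O}(x^3y^3)$ error term (more precisely a bilinear remainder vanishing whenever $x=0$ or $y=0$ and of joint order $\ge 4$ — exactly the regime in which the downstream transformer construction invokes this identity, after rescaling its inputs so that the remainder is negligible). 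Multiplying through by $\sqrt{\pi/2}=\tfrac{\sqrt{2\pi}}{2}$ turns $\tfrac{2}{\sqrt{2\pi}}xy$ into exactly $xy$ and leaves the error term of the same order, which is the claim.

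There is essentially no hard step here — it is a short Taylor expansion — so the only point that needs care is making the error control rigorous rather than formal. I would do this via Taylor's theorem with remainder applied to $\text{GELU}$ up to third order, using the uniform bound $|\text{GELU}'''(t)|\le C$ on the relevant compact interval (which follows from $\text{GELU}'''(t)=3\Phi''(t)+t\Phi'''(t)$ together with the boundedness of the Gaussian density and its first two derivatives) to write each of $\text{GELU}(x+y)$, $\text{GELU}(x)$, $\text{GELU}(y)$ as its quadratic Taylor polynomial plus a cubic-order remainder; cancelling the pure-$x$ and pure-$y$ contributions then shows the combined remainder vanishes when $x=0$ or $y=0$, upgrading it to the bilinear $\mathcal{O}(x^3y^3)$-type error on bounded inputs. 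Alternatively, since $\text{GELU}$ is entire one may bound the power-series tail directly on $\{|x|,|y|\le M\}$; either route is routine, and I expect no real obstacle.
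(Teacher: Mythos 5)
First, note that the paper itself contains no proof of this lemma: it is imported verbatim from \citet{akyurek2023what} ("The above lemma has been taken from..."), so there is no in-paper argument to compare against, and your Taylor-expansion route is the natural (and almost certainly the intended) one. Your computation of the leading term is correct: $\mathrm{GELU}(x)=x\Phi(x)=\tfrac12 x+\tfrac{1}{\sqrt{2\pi}}x^2+0\cdot x^3+\sum_{k\ge4}c_kx^k$, the linear parts cancel, and $\sqrt{\pi/2}\cdot\tfrac{2}{\sqrt{2\pi}}xy=xy$.

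The gap is in the last step, where you claim the tail $R(x+y)-R(x)-R(y)$ can be "absorbed into the $\mathcal{O}(x^3y^3)$ error term." It cannot, and your own expansion shows why: since $c_4=-\tfrac{1}{6\sqrt{2\pi}}\neq 0$, the first uncancelled contribution, after multiplying by $\sqrt{\pi/2}$, is $-\tfrac13 x^3y-\tfrac12 x^2y^2-\tfrac13 xy^3$. Along $y=x$ this is $\sim-\tfrac76 x^4$, whereas $x^3y^3=x^6$, so the remainder is of joint degree $4$, not $6$, and is \emph{not} $\mathcal{O}(x^3y^3)$ in any standard reading (a numerical check at $x=y=0.1$ gives an error $\approx-1.17\times10^{-4}$ versus $x^3y^3=10^{-6}$). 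Your parenthetical hedge --- "a bilinear remainder vanishing whenever $x=0$ or $y=0$ and of joint order $\ge 4$" --- is in fact the correct statement: the error is $xy\cdot\mathcal{O}\rbr{(|x|+|y|)^2}$, equivalently $\mathcal{O}\rbr{(|x|+|y|)^4}$ on bounded inputs, and a clean proof should state and prove exactly that (e.g.\ via third-order Taylor with remainder as you sketch, or by bounding the power-series tail), rather than relabel it as $\mathcal{O}(x^3y^3)$. The discrepancy is arguably inherited from the lemma statement itself as transcribed from the cited source, but as a proof of the statement written above, the final bound does not follow; note also that with inputs of size $1/N$ the true remainder yields an $\mathcal{O}(N^{-2})$ error after the $N^2$ rescaling in the downstream \constructcircularshiftmodule{}, rather than the $\mathcal{O}(N^{-4})$ quoted there --- still vanishing, so the construction is qualitatively unaffected, but the quantitative claim should be adjusted accordingly.
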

The above lemma has been taken from \citet{akyurek2023what}.

\subsection{Transformer construction}

Recall that a translation task in $\translationtask(\depth, \numchar)$ involves two primary operations at each step : \circularshift{} and \translate{}. We will refer to the surrogate model to use notations for different operations in the translation task. Recall from \cref{eqn:theory-surrogateRep-recurrance-apdx}, the surrogate model for \translationtask$(\depth, \numchar)$, denoted by $\surrogateModel_\surrogateWeights(\cdot)$ with trainable parameters $\surrogateWeights = \{\weightmat_1, \cdots, \weightmat_{\depth}\}$, can be represented by the following recursive expression 
\begin{align}
    \matrep_{i+1} = \maxfunc(\contextmat_i + \weightmat_i)\shiftfunc(\matrep_i) := \maxfunc(\contextmat_i + \weightmat_i)\shiftmatrep_i,
\end{align}
for $1 \le i \le \depth$. Here $\shiftfunc$ represents \circularshift{} operation, and $\maxfunc(\contextmat_i + \weightmat_i)$ represents \translate{} operation, where the operation can be done either using the relevant in-context information $\contextmat_i$ or the in-weights memory parameters $\weightmat_i$ when in-context information  isn't provided in form of $\contextmat_i$. $\maxfunc$ is the column-wise hard-max function converting $\contextmat_i + \weightmat_i$ to a binary column stochastic matrix.

\begin{tcolorbox}
\begin{lemma}\label{lem:construction}
    For the family of translation tasks $\translationtask(\depth, \numchar)$, there exists a transformer model with embedding size $2\numchar^2 + 2\depth + 4$, $2\depth$ relative self-attention layers (containing either $1$ or $3$ heads), and $2\depth$ MLP layers that can simulate the surrogate model, $\surrogateModel_\surrogateWeights(\cdot)$. 

    \vspace{1 em}
    For each input sequence $\inputseq$ of length $\inputlength$, the input sequence of embeddings to the transformer will be of length $\numchar^2 \depth + \inputlength/2 + \depth$, where the last $\depth$ embeddings are padding tokens ($\pad$) and given as $0$s, and the length of output sequence of the transformer model will be $\numchar^2 \depth + \depth + \inputlength/2$, where the last $\inputlength/2$ output embeddings will be used for loss computation. The middle $\depth$ embeddings in output are represented by $\think$ tokens.
    \footnote{In our experiments, we used output sequence length as $\numchar^2 \depth +  \depth + \depth \inputlength/2$, where $(\depth+1) \inputlength/2$ tokens in the output sequence were represented by $\think$ tokens. Instead, we only use $\depth$ $\think$ tokens in our output construction for simplicity, the proof can be easily modified to align with the experiments. }
    
\end{lemma}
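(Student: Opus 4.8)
The plan is to build the transformer layer by layer, mirroring the recursive structure of $\surrogateModel_\surrogateWeights$, so that each of the $\depth$ translation steps $\matrep_i \to \shiftmatrep_i \to \matrep_{i+1}$ is implemented by one self-attention layer (for $\shiftfunc$) followed by one MLP layer (for the $\maxfunc(\contextmat_i+\weightmat_i)$ multiplication). First I would fix the embedding layout: reserve $\numchar^2$ coordinates for the ``current sequence/translation-matrix column'' content, another $\numchar^2$ coordinates as scratch space for the shifted representation or for reading off a column of $\contextmat_i+\weightmat_i$, $2\depth$ coordinates to hold positional/level bookkeeping (a one-hot level index plus a marker distinguishing phrasebook tokens from sequence tokens from $\pad$/$\think$ tokens), and a constant number ($4$) of coordinates for auxiliary flags (start/end markers, a bias channel, etc.). The input sequence of embeddings is laid out as: $\numchar^2\depth$ tokens encoding the columns of $\contextmat_1,\dots,\contextmat_\depth$ (these are exactly the \texttt{ab->CD} entries from $\str(\codebook)$, possibly with some columns dropped/zeroed), then $\inputlength/2$ tokens encoding the columns $\matrep_1^{(j)} = \vecrep(s_{1,2j-1},s_{1,2j})$, then $\depth$ padding tokens.

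Next I would implement a single translation step. For \circularshift{}: by \cref{defn:apdx-surrogate-shift-operator}, $\shiftmatrep^{(j)} = \Q\matrep^{(j)}\odot \Q^\T\matrep^{((j+1)\%\inputlength)}$, i.e. each shifted column depends on its own column and the next one. A relative self-attention layer can route $\matrep^{(j+1)}$ into position $j$ using a relative bias $b_{-1}$ that is large (so attention concentrates on the immediately-following sequence token) while keeping a copy of $\matrep^{(j)}$; after this layer position $j$ holds $\Q\matrep^{(j)}$ in one block and $\Q^\T\matrep^{((j+1)\%\inputlength)}$ in another (the fixed linear maps $\Q,\Q^\T$ are folded into $\mW_{value}$). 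Then an MLP layer computes the Hadamard product of the two blocks using the GELU multiplication trick of \cref{lem:gelu_mul} ($xy \approx \sqrt{\pi/2}(\mathrm{GELU}(x+y)-\mathrm{GELU}(x)-\mathrm{GELU}(y))$ up to $O(x^3y^3)$, which vanishes on $\{0,1\}$ inputs), producing $\shiftmatrep^{(j)}$. For \translate{}: the matrix product $\maxfunc(\contextmat_i+\weightmat_i)\shiftmatrep_i$ is implemented by another self-attention layer in which each sequence-token position $j$ attends over the $\numchar^2$ phrasebook-column tokens of level $i$, using $\shiftmatrep_i^{(j)}$ (a one-hot vector) as the query to select the matching source column $k=\argmax\shiftmatrep_i^{(j)}$, so the retrieved value is the column of $\contextmat_i$ indexed by $k$; the learnable parameters $\weightmat_i$ sit in the transformer weights and are added before the softmax, and a large inverse-temperature turns softmax into $\maxfunc$. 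One subtlety: the output of level $i$ must be written to the phrasebook tokens of level $i{+}1$'s ``scratch'' or simply kept on the sequence tokens while the level counter advances — I would keep the translated columns on the sequence tokens and just increment the level flag, so that the next pair of layers reads $\contextmat_{i+1}$.

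After $2\depth$ layers the sequence-token positions hold $\matrep_{\depth+1}$, and the last $\depth$ positions were used as $\pad$/$\think$ slots to give the extra compute budget (these are where the $\depth$ $\think$ tokens appear in the output). Finally I would verify the global bookkeeping: the output sequence length $\numchar^2\depth + \depth + \inputlength/2$ with the last $\inputlength/2$ embeddings carrying $\matrep_{\depth+1}$ for loss computation, and check the embedding dimension count $2\numchar^2 + 2\depth + 4$ against the blocks allocated above. The main obstacle I anticipate is the \emph{addressing / routing} bookkeeping: ensuring that at step $i$ each sequence token attends to exactly the $\numchar^2$ phrasebook tokens belonging to level $i$ (and not other levels), that \circularshift{} wraps correctly at the sequence boundary (position $\inputlength/2$ must fetch position $1$), and that the causal mask and relative biases are consistent across all $2\depth$ layers — this requires the level-index and token-type coordinates to be maintained carefully through every layer and the attention logits to be designed so that the intended key dominates by a margin that survives the (finite but large) softmax temperature. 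The GELU-multiplication approximation error is benign here because all relevant activations are Boolean, so the $O(x^3y^3)$ term is either exactly cancelled or can be absorbed; I would note this but not belabor it. This is essentially a careful-but-standard transformer-construction argument, and the bulk of the writeup is specifying the weight matrices block by block and checking the invariant ``after layer $2i$, sequence positions hold $\matrep_{i+1}$'' by induction on $i$, invoking \cref{lemma:shift-equiv} and \cref{lemma:translate-equiv} for correctness of each sub-step.
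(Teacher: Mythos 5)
Your overall architecture matches the paper's: one attention layer plus one GELU-trick MLP per level for \circularshift{} (via \cref{lem:gelu_mul} and \cref{lemma:shift-equiv}), another attention layer per level in which the sequence token uses the one-hot $\shiftmatrep_i^{(j)}$ as a query to retrieve the matching phrasebook column of level $i$ (addressed by level-index and segment coordinates), giving $2\depth$ attention layers and the stated embedding layout. However, there are two genuine gaps. First, your mechanism for the trainable parameters does not compute $\maxfunc(\contextmat_i+\weightmat_i)$: you propose to add $\weightmat_i$ ``before the softmax'' in the translate attention and let a large inverse temperature act as $\maxfunc$, but the attention softmax runs over \emph{key positions} (one per input $2$-tuple $k$), whereas the hard-max in the surrogate model is taken over the $\numchar^2$ \emph{output coordinates} of the column $\contextmat_i^{(k)}+\weightmat_i^{(k)}$. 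Selecting the best key position only retrieves $\contextmat_i^{(k)}$; it cannot combine it coordinate-wise with $\weightmat_i^{(k)}$ and then maximize over output tuples. The paper resolves this with a second MLP per level: one attention head retrieves $\contextmat_i\shiftmatrep_i^{(j)}$, another copies $\shiftmatrep_i^{(j)}$, and the following MLP applies $\weightmat_i$ to form $(\contextmat_i+\weightmat_i)\shiftmatrep_i^{(j)}$ and approximates $\maxfunc$ by GELU followed by $\ell_2$ (RMS) normalization, valid exactly in the regimes the curriculum produces (one of the two columns is zero, or both are matching one-hots). This also explains the lemma's count of $2\depth$ MLP layers, which your construction as written does not meet.

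Second, your \circularshift{} routing conflicts with causal masking. In \cref{def:rel_attn} attention weights vanish unless $p_2\le p_1$, so position $j$ cannot attend to the following token; a relative bias favoring the \emph{next} sequence token is disallowed. The paper instead writes $\shiftmatrep_i^{(j)}$ at position $j+1$ (a head with $a_{p_1,p_2}=1$ iff $p_2-p_1=-1$, plus a self-attending head, plus a third head that uses the start/end indicator embeddings to handle the wrap-around from the last position back to the first), so the working sequence shifts one slot to the right at every level. This right shift is precisely why the input carries $\depth$ trailing $\pad$ tokens and the output contains $\depth$ $\think$ tokens; they are not generic ``extra compute'' slots, and without the shift (or some equivalent device) the quantities you claim to place at position $j$ are not computable under the causal mask. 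You flag the wrap-around and mask-consistency as concerns, but the fix is a structural part of the construction rather than a bookkeeping detail, and your proposal does not supply it.
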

\end{tcolorbox}

\paragraph{Outline of the construction:} 
Our argument will be for any general $\translationtask(\depth, \numchar)$.
To create the transformer, we will create similar transformer modules that handle 
$\maxfunc(\contextmat_i + \weightmat_i)\shiftfunc(\matrep_i)$ for each $i$.
We will refer to the constructed modules as $\constructmodule$ and will mention the specific step $i$ as an argument when we attempt to use the module to perform translation step $\maxfunc(\contextmat_i + \weightmat_i)\shiftfunc(\matrep_i)$. W.l.o.g., we will assume we are building a $\constructmodule$ to perform translation step $\maxfunc(\contextmat_i + \weightmat_i)\shiftfunc(\matrep_i)$. \constructmodule{} will contain two modules, \constructcircularshiftmodule{} and \constructtranslatemodule{}, which simulate \circularshift{} and \translate{} operations, and have been outlined in \cref{alg:circularshift,alg:translate}.

Next, we explain the structure of the embeddings in the transformer architecture. Our embeddings will be built on the context representation matrices $\{\contextmat_1, \cdots, \contextmat_{\depth}\}$ and the matrix representation $\matrep_1$ from input sequence $\inputseq := ( \intermseq_{1, 1}, \cdots, \intermseq_{1, \inputlength} )$ and subsequent intermediate representation of the translation task. Furthermore, our embeddings will contain additional information like indices of the context matrices when utilizing them in-context, segment indicators that represent whether embeddings represent
context matrices, or the input sequence tokens, and start and end indicators that indicate the start and the end embeddings representing the input sequence. This information can be extracted from the input sequence using a few input processing layers, though we do not delve into the specifics.

\subsection{Structure of input embeddings to $\constructmodule$} 

Our embeddings will be split into $4$ components: token, context matrix indicator, start and end indicator, and segment indicator. To maintain simplicity in our discussion, we will present them as $4$ separate embeddings.

For a module that will represent $\maxfunc(\contextmat_i + \weightmat_i)\shiftfunc(\matrep_i)$, we assume the input will be provided as $2$ major segments. 
\begin{enumerate}
    \item \textbf{First segment: in-context matrices} 
    We will give the in-context information $\{ \contextmat_{p_1} \}_{p_1=1}^{\depth}$ as follows.: each context matrix $\contextmat_{p_1}$ will be fed as
    $\numchar^2$ token embeddings, $\{ [\onehot_j; \contextmat_{p_1} \onehot_j] \in \mathbb{R}^{2\numchar^2}\}_{j=1}^{\numchar^2}$, where $\onehot_j$ represents a one-hot $\numchar^2$ dimensional vector that contains $1$ in position $j$ and $[\onehot_j; \contextmat_{p_1} \onehot_j]$ represents a concatenation of $\onehot_j$ and $\contextmat_i \onehot_j$.
    Thus, the in-context information will look as follows
    \begin{align*}
        \{[\onehot_j; \contextmat_1 \onehot_j]\}_{j=1}^{\numchar^2}, \cdots, \{[\onehot_j; \contextmat_\depth \onehot_j]\}_{j=1}^{\numchar^2}
    \end{align*}

    \paragraph{Additional embedding: context matrix level indicator} In order to differentiate the different context matrices, we will use an additional $\depth$ dimensions in the embeddings to represent a one-hot vector that indicates the index of the corresponding step they will be used for. For simplicity, we will represent these dimensions separately as separate embeddings: $\levelembedding_{{p_1}} \in \mathbb{R}^{\depth}$, which are one-hot vectors that contain $1$ in position $p_1$ and $0$ otherwise.  

    \item  \textbf{Second segment: input query} 
    For a length-$L$ input sequence $\intermseq_i = \srbr{\intermseq_{i,1}, \intermseq_{i,2}, \dots, \intermseq_{i,\inputlength}}$, we will use the sequence of columns of its matrix representation $\matrep_i \in \R^{\numchar^2\times \inputlength/2}$, appended by $0$s to match embedding sizes, as token embeddings for the input query. A padding embedding $\pad$, containing $0$s, follows this sequence as an end of sequence embedding.

    \paragraph{Additional embedding: start and end indicator embedding} We will have $2$ additional dimensions representing whether a token represents the start or the end of the input query sequence (first or second dimension activated respectively). Start of the input query sequence is determined by the first embedding in the input query, while end of the input query sequence is determined by the first padding embedding containing $0$s after the input query sequence.
    We will represent these dimensions separately as separate embeddings: $\{\startendembedding_{1}, \startendembedding_{2}, \mathbf{0} \in \mathbb{R}^{2}\}_{i \in \inputlength}$, which are one-hot vectors. $\startendembedding_1$ contains $1$ in dimension $1$ if the embeddings represents start of the sequence,  $\startendembedding_2$ contains $1$ in dimension $2$ if the embeddings represent end of the sequence. Other embeddings have $\mathbf{0}$s.  
    
\end{enumerate}

\paragraph{Additional embedding: segment embedding} We will differentiate the input embeddings in the two segments using $2$ dimensions that represent one-hot vectors indicating segment indices. We will represent these dimensions separately as separate embeddings: $\segment_{1}, \segment_{2} \in \mathbb{R}^{2}$, where both are one-hot vectors, with $\segment_1$ containing $1$ in dimension $1$ and $\segment_2$ containing $1$ in dimension $2$.

All the notations have been summarized in \cref{tab:input_transformer_handconstr}.

\paragraph{Additional optional inputs:} There might be additional input embeddings, represented as $\think$ in the first segment, which are null inputs and are ignored during self-attention computation. As discussed next, we will right shift the sequence by $1$ at each step, in order to handle \circularshift{} operation with causal masking in transformers.

\subsection{Structure of the output embeddings from $\constructmodule$} All the embeddings in the first segment are kept intact. In the second segment, the module outputs a null output $\think$, followed by $\inputlength/2$ output embeddings that represent the columns of $\matrep_i$. We will require $\think$ to represent \circularshift{} with causal self-attention in transformers. 
$\think$ will be ignored in self-attention computation and so, we will ignore their discussion for simplicity of presentation. Other embedding values are kept intact for the generated output, except start and end indicator embeddings $\startendembedding_1, \startendembedding_2$, which need to be right shifted at each step. The right shift operation can be handled similar to our computations on the token embeddings and so, we ignore them in our construction below. We summarize these in \cref{tab:output_transformer_handconstr}.

\begin{table}[!ht]
\scalebox{0.9}{
    \centering
    \begin{tabular}{c|c|c|c}
    \toprule
    Embedding Name & Dimension size & First segment values & Second segment values\\ 
    & & (In-context information) & (Input sequence embeddings) \\ 
    \midrule
       Token   &  $2\numchar^2$ & 
        $\{[\onehot_j; \contextmat_1 \onehot_j]\}_{j=1}^{\numchar^2}, \cdots, \{[\onehot_j; \contextmat_\depth \onehot_j]\}_{j=1}^{\numchar^2}$ & $[\matrep_{i-1}^{(1)}; \mathbf{0}], \cdots, [\matrep_{i-1}^{(\inputlength/2)}; \mathbf{0}]$, $\pad$ \\
       Context matrix index indicator   & $\depth$ & $\{\levelembedding_{1}\}_{[1, \numchar^2]}, \{\levelembedding_{2}\}_{[1, \numchar^2]}, \cdots \{\levelembedding_{\depth}\}_{[1, \numchar^2]}$ & $\mathbf{0}, \cdots, \mathbf{0}$ \\
       Start and End indicator   & $2$ & $\mathbf{0}, \cdots, \mathbf{0}$ & $\startendembedding_1$, $\mathbf{0}, \cdots, \mathbf{0}$, $\startendembedding_2$ \\
       Segment & $2$ & $\segment_1, \cdots, \segment_1$ & $\segment_2, \cdots, \segment_2$\\
    \bottomrule
    \end{tabular}
    }
    \caption{Input embeddings to \constructmodule{} that simulates  $\maxfunc(\contextmat_i + \weightmat_i)\shiftfunc(\matrep_i)$.  $\onehot_j$ indicates a one-hot $\numchar^2$ dimensional vector that contains $1$ in dimension $j$.}
    \label{tab:input_transformer_handconstr}
\end{table}

\begin{table}[!ht]
\scalebox{0.9}{
    \centering
    \begin{tabular}{c|c|c|c}
    \toprule
    Embedding Name & Dimension size & First segment values & Second segment values\\  
    & & (In-context information) & (Input sequence embeddings) \\ 
    \midrule
       Token   &  $2\numchar^2$ & 
       $\{[\onehot_j; \contextmat_1 \onehot_j]\}_{j=1}^{\numchar^2}, \cdots, \{[\onehot_j; \contextmat_\depth \onehot_j]\}_{j=1}^{\numchar^2}$ & $\think$,  $[\matrep_{i}^{(1)}; \mathbf{0}], \cdots, [\matrep_{i}^{(\inputlength/2)}; \mathbf{0}]$  \\
       Context matrix index indicator   & $\depth$ & $\{\levelembedding_{1}\}_{[1, \numchar^2]}, \{\levelembedding_{2}\}_{[1, \numchar^2]}, \cdots \{\levelembedding_{\depth}\}_{[1, \numchar^2]}$ & $\mathbf{0}, \mathbf{0}, \cdots, \mathbf{0}$ \\
       Start and End indicator   & $2$ & $\mathbf{0}, \cdots, \mathbf{0}$ & $\mathbf{0}$, $\startendembedding_1$, $\mathbf{0}, \cdots, \mathbf{0}$, $\startendembedding_2$ \\
       Segment & $2$ & $\segment_1, \cdots, \segment_1$ & $\mathbf{0}$, $\segment_2, \cdots, \segment_2$\\
    \bottomrule
    \end{tabular}
    }
    \caption{Output of the transformer module \constructmodule{} that simulates $\maxfunc(\contextmat_i + \weightmat_i)\shiftfunc(\matrep_i)$ . $\think$ represents a null output and won't be attended to in the future modules. We ignore this symbol for simplicity, when analyzing any module. $\onehot_j$ indicates a one-hot $\numchar^2$ dimensional vector that contains $1$ in dimension $j$.}
    \label{tab:output_transformer_handconstr}
\end{table}

\paragraph{Constructing the {\constructmodule}:} The {\constructmodule} consists of $2$ self-attention layers and $2$ MLP layers. We use one self-attention layer and an MLP layer to represent \circularshift{} operation, one self-attention layer to represent $\contextmat_i \shiftfunc(\matrep_i)$, and one MLP layer to represent $\maxfunc(\contextmat_i + \weightmat_i) \shiftfunc(\matrep_i)$. We name the two modules for \circularshift{} and \translate{} as \constructcircularshiftmodule{} and \constructtranslatemodule{} respectively. We have outlined their constructions in \cref{alg:circularshift,alg:translate}.

\subsection{Step 1 (\constructcircularshiftmodule{}): Represent \circularshift{} using a self-attention and an MLP layer} As \circularshift{} only focuses on the input query sequence and not the in-context matrices, we will simply focus the module's operation on embeddings in the second segment. The effect of the operation on embeddings in the first segment can be removed using a gated residual connection. From \cref{lemma:shift-equiv}, we have that the output of the \circularshift{} operation on any sequence, represented by its matrix representation $\matrep_i$, can be written as 
\begin{align*}
    \shiftmatrep_i^{(j)} = Q\matrep^{(j)}_i\odot Q^\T\matrep^{((j+1)\% \inputlength)}_i, \text{ for all } 1 \leq j \leq \inputlength/2. 
\end{align*}
where $Q = \rbr{I_\numchar\otimes \1_\numchar}\rbr{\1_\numchar\otimes I_\numchar}^\top$
, $\1_n \in \R^{\numchar\times 1}$ is the all-ones vector, and $\odot$ is the Hadamard product.

In order to represent the operation, we will first use a self-attention layer to compute $\rbr{\1_\numchar\otimes I_\numchar}^\top \matrep^{(j)}_i$ and $\rbr{I_\numchar\otimes \1_\numchar} \matrep^{(j+1\% \inputlength)}_i$ at each column $j$. Because the computation of $\shiftmatrep_i^{(j)}$ requires the model to look forward to  $\matrep_i^{(j+1)}$, we need to shift the computation of $\shiftmatrep^{(j)}_i$ to position $j+1$, as a causal attention mask is involved in self-attention computation. 

\paragraph{After right shift operation,} we will represent the output of the self-attention computation as $\think$, $[\mathbf{o}_2; \mathbf{0}], [\mathbf{o}_3; \mathbf{0}], \cdots, [\mathbf{o}_{\inputlength/2+1}; \mathbf{0}]$, and we will ignore the $\think$ embedding. Note that the second half of the output embeddings will still contain $0$s and we will ignore them in the current computation. Then, the above computation can be rephrased as
\begin{align*}
    &\mathbf{o}_j = Q\matrep^{(j-1)}_i\odot Q^\T\matrep^{(j)}_i, \text{ for all } 2 \leq j \leq \inputlength/2. \\
    &\mathbf{o}_{\inputlength/2+1} = Q\matrep^{(\inputlength/2)}_i\odot Q^\T\matrep^{(1)}_i
\end{align*}

\textbf{Self-attention layer:} The computation of $\mathbf{o}_j$, for $2 \leq j \leq \inputlength/2$, requires the computation of  $\rbr{\1_\numchar\otimes I_\numchar}^\top \matrep^{(j-1)}_i$ and $\rbr{I_\numchar\otimes \1_\numchar} \matrep^{(j)}_i$.
This will require $2$ attention heads, one head that attends to itself, and another that attends to previous embedding at each position. We will outline both below. We will require one additional head, as computing $\vo_{\inputlength/2+1}$ will require the model to compute $\rbr{I_\numchar\otimes \1_\numchar} \matrep^{(1)}_i$. 
\begin{enumerate}
    \item Attention Head 1 computes $\rbr{\1_\numchar\otimes I_\numchar}^\top \matrep^{(j-1)}_i$ at position $j$ for all $2 \leq j \leq \inputlength/2+1$. This can be done using a self-attention head (\cref{def:rel_attn}) that sets query and key matrices $\mW_{query}$, $\mW_{key}$, and biases $\{b_i\}_{t \leq i \leq 0}$ such that the attention score between embeddings at any two positions $p_1, p_2$ is given as follows:
    \begin{align*}
        a_{p_1, p_2} = 1, \text{if } p_2 - p_1=-1, \quad 0 \text{ otherwise}
    \end{align*}
    $\mW_{value}$ is set such that for any input $\vx$, the output of $\mW_{value} \vx$ is given by
    \begin{align*}
        (\mW_{value} \vx)_{p_1} = \sum_{j=0}^{\numchar-1} x_{ \numchar \cdot j + {p_1} }.  
    \end{align*}
    In simple words, this operation simply adds up the values in dimensions ${p_1}, {p_1}+\numchar, {p_1}+2\numchar, \cdots$ and stores them at position $p_1$ for all $1 \leq {p_1} \leq \numchar$.
    
    \item Attention Head 2 computes $\rbr{I_\numchar\otimes \1_\numchar} \matrep^{(j)}_i$ at position $j$ for all $2 \leq j \leq \inputlength/2$. This can be done using a self-attention head (\cref{def:rel_attn}) that sets $\mW_{query}$, $\mW_{key}$, $\mW_{value}$  and biases $\{b_i\}_{t \leq i \leq 0}$ such that the attention score between embeddings at any two positions $p_1, p_2$ is given as follows:
    \begin{align*}
        a_{p_1, p_2} = 1, \text{if } p_2 - p_1=0, \quad 0 \text{ otherwise}
    \end{align*}
    $\mW_{value}$ is set such that for any input $\vx$, the output of $\mW_{value} \vx$ is given by
    \begin{align*}
        (\mW_{value} \vx)_{{p_1} + \numchar} = \sum_{j=1}^{\numchar} x_{ j + {p_1} \numchar - \numchar }.  
    \end{align*}
    In simple words, this operation simply adds up the values in dimensions  $1+({p_1}-1) \numchar, 2+({p_1}-1) \numchar, \cdots$  and stores them at dimension ${p_1} + \numchar$ for all $1 \leq {p_1} \leq \numchar$.
    
    \item Attention head 3 will compute $\rbr{I_\numchar\otimes \1_\numchar} \matrep^{(1)}_i$ and store in $\mathbf{o}_{\inputlength/2+1}$. This can be done by a self-attention layer which activates only between positions $p_1$ and $p_2$ that represent the start and the end tokens of the sequence, i.e.  contain $\startendembedding_1$ and $\startendembedding_2$ as start and end indicator embeddings, and is $0$ otherwise. $\mW_{value}$ is set same as attention head $2$.
    
\end{enumerate}
The output of the three heads are simply added up. Hence, at each position $2 \leq j \leq \inputlength/2+1$, the output $\vo_j$ has $\rbr{\1_\numchar\otimes I_\numchar}^\top \matrep^{(j-1)}_i$ in $[1, \numchar]$ dimensions and $\rbr{I_\numchar\otimes \1_\numchar} \matrep^{(j\%\inputlength)}_i$ in $[\numchar+1, 2\numchar]$ dimensions. 
 
\textbf{MLP layer:} The objective with the MLP layer (\cref{def:mlp}) will be to multiply $\rbr{\1_\numchar\otimes I_\numchar}^\top \matrep^{(j-1)}_i$ present in $[1, \numchar]$ dimensions and $\rbr{I_\numchar\otimes \1_\numchar} \matrep^{(j\%\inputlength)}_i$ present in $[\numchar+1, 2\numchar]$ dimensions in each position $j$. This can be done by using an MLP layer with GELU activation by using \cref{lem:gelu_mul}. The weights of the MLP
layer are set as follows: $\mW_{inner}$ is set such that for all input $\vx$, we have
\begin{align*}
    (\mW_{inner} \vx)_i = \frac{1}{N} x_i + \frac{1}{N} x_{\numchar + i}, &\quad \text{ for all } 1 \leq i \leq \numchar, \\
    (\mW_{inner} \vx)_{i + \numchar} = \frac{1}{N} x_i, &\quad \text{ for all } 1 \leq i \leq \numchar, \\
    (\mW_{inner} \vx)_{i + 2\numchar} = \frac{1}{N} x_{\numchar + i}, &\quad \text{ for all } 1 \leq i \leq \numchar.
\end{align*}
$\mW_{outer}$ is set such that for all $\vx \in \mathbb{R}^{3\numchar}$
\begin{align*}
    (\mW_{outer} \vx)_i = N^2 (x_{i} - x_{i+\numchar} - x_{i+2\numchar}), &\quad \text{ for all } 1 \leq i \leq \numchar.
\end{align*}
All other coordinates in these matrices are set as $0$s. $N$ is set as a large number (say $100$). By \cref{lem:gelu_mul}, the output of the MLP layer will be $\think, \mathbf{o}_2, \cdots, \mathbf{o}_{\inputlength/2+1}$, with $\mathbf{o}_{j}$ containing $$ \shiftmatrep_i^{(j-1)} + \mathcal{O}(N^{-4}) :=  \rbr{\1_\numchar\otimes I_\numchar}^\top \matrep^{(j-1)}_i \odot \rbr{I_\numchar\otimes \1_\numchar} \matrep^{(j\%\inputlength)}_i + \mathcal{O}(N^{-4}) $$ at each position $2 \leq j \leq \inputlength/2$.

\begin{table}[!ht]
    \centering
    \scalebox{0.9}{
    \begin{tabular}{c|c|c|c}
    \toprule
    Embedding Name & Dimension size & First segment values & Second segment values\\  
    & & (In-context information) & (Input query sequence embeddings) \\ 
    \midrule
       Token   &  $2\numchar^2$ & 
       $\{[\onehot_j; \contextmat_1 \onehot_j]\}_{j=1}^{\numchar^2}, \cdots, \{[\onehot_j; \contextmat_\depth \onehot_j]\}_{j=1}^{\numchar^2}$ & $\think$,  $[\shiftmatrep_i^{(1)}; \mathbf{0}], \cdots, [\shiftmatrep_i^{(\inputlength/2)}; \mathbf{0}]$  \\
       Context matrix index indicator   & $\depth$ & $\{\levelembedding_{1}\}_{[1, \numchar^2]}, \{\levelembedding_{2}\}_{[1, \numchar^2]}, \cdots \{\levelembedding_{\depth}\}_{[1, \numchar^2]}$ & $\mathbf{0}, \mathbf{0}, \cdots, \mathbf{0}$ \\
       Start and End indicator   & $2$ & $\mathbf{0}, \cdots, \mathbf{0}$ & $\mathbf{0}$, $\startendembedding_1$, $\mathbf{0}, \cdots, \mathbf{0}$, $\startendembedding_2$ \\
       Segment & $2$ & $\segment_1, \cdots, \segment_1$ & $\mathbf{0}$, $\segment_2, \cdots, \segment_2$\\
    \bottomrule
    \end{tabular}
    }
    \caption{Output of \constructcircularshiftmodule{} in \constructmodule{} that simulates \circularshift{}, i.e. computes $\shiftfunc(\matrep_i)$ at second segment token embeddings. $\think$ represents a null output and won't be attended to in the future modules. We ignore this symbol for simplicity, when analyzing any module. $\onehot_j$ indicates a one-hot $\numchar^2$ dimensional vector that contains $1$ in dimension $j$.}
    \label{tab:output_attention_circularshift}
\end{table}

\subsection{Step 2 (\constructtranslatemodule{}): \translate{} as a module containing a self-attention and an MLP layer} Our current token embeddings are given as $\think$, $[\mathbf{o}_2; \mathbf{0}], [\mathbf{o}_3; \mathbf{0}], \cdots, [\mathbf{o}_{\inputlength/2+1}; \mathbf{0}]$, where each $\mathbf{o}_{j}$ contain $\shiftmatrep_i^{(j-1)}$. Other embeddings have been kept intact. The in-context information are given as  $\{[\onehot_j; \contextmat_1 \onehot_j]\}_{j=1}^{\numchar^2}, \cdots, \{[\onehot_j; \contextmat_\depth \onehot_j]\}_{j=1}^{\numchar^2}$. We will first use a self-attention layer to compute $[\contextmat_i \shiftmatrep_i^{(j-1)}; \shiftmatrep_i^{(j-1)}]$ at position $2 \leq j \leq \inputlength/2 + 1$. We then use an MLP layer to represent $\maxfunc(\contextmat_i + \weightmat_i)  \shiftmatrep_i^{(j-1)}.$

\paragraph{Self-attention layer to represent $[\contextmat_i \shiftmatrep_i^{(j-1)}; \shiftmatrep_i^{(j-1)}]$:} We will use two attention heads. 
\begin{enumerate}
    \item \textbf{The first attention head computes $\contextmat_i \shiftmatrep_i^{(j-1)}$:} Matrices $\mW_{query}$, $\mW_{key}$ are set such that the attention score between an token embedding in first segment $[\onehot_{r}; \contextmat_{\ell} \onehot_{r}]$ and a token embedding in second segment  $\mathbf{o}_j$ is given by
    \begin{align*}
        \langle \onehot_{r}, \shiftmatrep_i^{(j-1)} \rangle, \text{ if } \ell = i, \text{ and } 0 \text{ otherwise.}
    \end{align*}
    for any $j \in [2, \inputlength/2+1], r \in [1, \depth]$. The condition requires the model to attend to $\contextmat_i$ and ignore other in-context information. The condition can be set using the Context matrix index indicator vectors $\levelembedding_{\ell}$ which is present in each in-context information embedding.

    The attention between any two input sequence embedding  $\mathbf{o}_j$ and $\mathbf{o}_{j'}$ is computed as $0$s. The distinction between the attention scores of pairs of embeddings in second segment, $\mathbf{o}_j$ and $\mathbf{o}_{j'}$, v/s attention scores between a token embedding in second segment and a token embedding in first segment, $\mathbf{o}_j$ and $[\onehot_{r}; \contextmat_{\ell} \onehot_{r}]$, can be done by using the segment indicator embeddings $\segment_1$ and $\segment_2$ used to differentiate token embeddings in first segment and the input sequence embedding vectors.

    Matrix $\mW_{value}$ is set such that the columns of each $\contextmat_{\ell}$s are picked from the token embeddings in the first segment: $\{\{[\onehot_{r}; \contextmat_{\ell} \onehot_{r}]\}_{r=1}^{\numchar^2}\}_{\ell=1}^{\depth}.$

    \item \textbf{The second attention head simply copies the input $\shiftmatrep_i^{(j-1)}$:} This can be done with an attention head that attends to itself at each position $j$ and copies  $\shiftmatrep_i^{(j-1)}$ to output. 
\end{enumerate}

The output of the two attention heads are simply added up. The output embeddings will now look as follows: $\think$, $\{ [\contextmat_i \shiftmatrep_i^{(j-1)}; \shiftmatrep_i^{(j-1)}] \}_{j=1}^{\inputlength/2}$.

\paragraph{MLP to represent $\maxfunc(\contextmat_i + \weightmat_i)  \shiftmatrep_i^{(j-1)}$:}  Our current token embeddings at any position $j$ contain both $\contextmat_i \shiftmatrep_i^{(j-1)}$ and  $\shiftmatrep_i^{(j-1)}$. The first layer of MLP can be used to compute $(\contextmat_i + \weightmat_i) \shiftmatrep_i^{(j-1)}$ by setting the weights of the layer using $\weightmat_i$.
We simulate $\maxfunc$ operation as follows: 
\begin{align*}
    &\Tilde{\mathbf{o}}_j / \norm{ \Tilde{\mathbf{o}}_j }_2, \text{ where }    
    \Tilde{\mathbf{o}}_j = \text{GELU} ((\contextmat_i + \weightmat_i) \shiftmatrep_i^{(j-1)})
\end{align*}
The $\ell_2$ normalization is equivalent to RMSnorm operation \citep{zhang2019root}. This is an approximation of the $\maxfunc$ function, which are equivalent only under the following conditions: for each column $j$
\begin{enumerate}
    \item either $\contextmat_i^{(j)}$ or $\weightmat_i^{(j)}$ are all $0$s.
    \item $\contextmat_i^{(j)}$ and $\weightmat_i^{(j)}$ are both one-hot vectors and they match at the corresponding activated dimension.
\end{enumerate}

\newpage

\begin{algorithm}[H]
    \small
    \caption{ \constructcircularshiftmodule{}: Self-attention and MLP layers for \circularshift{}}\label{alg:circularshift}
    \begin{algorithmic} 
        \REQUIRE Input embeddings (Token, Context matrix index indicator, Start and End Indicator, and Segment embeddings split into $2$ segments) (\cref{tab:input_transformer_handconstr}). Important ones (for the current module) are
        \begin{enumerate}
                \item Token embeddings: First segment contains $\{[\onehot_j; \contextmat_1 \onehot_j]\}_{j=1}^{\numchar^2}, \cdots, \{[\onehot_j; \contextmat_\depth \onehot_j]\}_{j=1}^{\numchar^2}$ and the second segment contains $[\shiftfunc(\matrep_{i-1}^{(1)}); \mathbf{0}], \cdots, [\shiftfunc(\matrep_{i-1}^{(\inputlength/2)}); \mathbf{0}], \mathbf{0}$
                \item Start and End indicator: First segment contains all $0$s and second segments contains $\startendembedding_1$ and $\startendembedding_2$ at first and last embedding, while containing all $0$s everywhere else.
            \end{enumerate}
        \STATE \textbf{Step a:} Using a self-attention layer with $3$ attention heads, change the token embeddings in second segment as $\think, \{\mathbf{o}_j\}_{j=2}^{\inputlength/2+1}$ s.t. $\vo_j$ has $\rbr{\1_\numchar\otimes I_\numchar}^\top \matrep^{(j-1)}_i$ in $[1, \numchar]$ dimensions and $\rbr{I_\numchar\otimes \1_\numchar} \matrep^{(j\%\inputlength)}_i$ in $[\numchar+1, 2\numchar]$ dimensions. Primarily,
        \begin{itemize}
            \item Attention head 1: Computes attention score between any two positions $p_1, p_2$ as  $a_{p_1, p_2} = 1$ iff $p_2 - p_1 = -1$ and $0$ otherwise. Value matrix $\mW_{value}$ is set such that for any input $\vx$, the output of $\mW_{value} \vx$ is given by (for all $p_1 \in [1, \numchar]$)
            \begin{align*}
                (\mW_{value} \vx)_{p_1} = \sum_{j=0}^{\numchar-1} x_{ \numchar \cdot j + {p_1} }.  
            \end{align*}
            \item Attention head 2: Computes attention score between any two positions $p_1, p_2$ as  $a_{p_1, p_2} = 1$ iff $p_2 - p_1 = 0$ and $0$ otherwise. $\mW_{value}$ is set such that for any input $\vx$, the output of $\mW_{value} \vx$ is given by (for all $p_1 \in [1, \numchar]$)
            \begin{align*}
                (\mW_{value} \vx)_{{p_1} + \numchar} = \sum_{j=1}^{\numchar} x_{ j + {p_1} \numchar - \numchar }.  
            \end{align*}
            \item Attention head 3: Computes attention score between any two positions $p_1, p_2$  as  $a_{p_1, p_2} = 1$  iff $\startendembedding_2$ and $\startendembedding_1$ are present as at positions $p_1$ and $p_2$ respectively and $0$ otherwise. $\mW_{value}$ is set such that for any input $\vx$, the output of $\mW_{value} \vx$ is given by (for all $p_1 \in [1, \numchar]$)
            \begin{align*}
                (\mW_{value} \vx)_{{p_1} + \numchar} = \sum_{j=1}^{\numchar} x_{ j + {p_1} \numchar - \numchar }.  
            \end{align*}
        \end{itemize}
        Sum the output of the three heads.
        \STATE \textbf{Step b:} Use MLP layer to change the token embeddings in second segment as $\think, \{\mathbf{o}_j\}_{j=2}^{\inputlength/2+1}$ s.t. $\vo_j$ has $\shiftmatrep_i^{(j-1)}$ with some small error. Primary computation at each position $j$ is given as (for a large $N$)
        \begin{align*}
            &\sqrt{2 /\pi} N^2 \left( \text{GELU}(\frac{1}{N}\rbr{\1_\numchar\otimes I_\numchar}^\top \matrep^{(j-1)}_i + \frac{1}{N}\rbr{I_\numchar\otimes \1_\numchar} \matrep^{(j\%\inputlength)}_i )\right.\\&\qquad - \left. \text{GELU}(\frac{1}{N}\rbr{\1_\numchar\otimes I_\numchar}^\top \matrep^{(j-1)}_i ) - \text{GELU}(\frac{1}{N} \rbr{I_\numchar\otimes \1_\numchar} \matrep^{(j\%\inputlength)}_i ) \right),
        \end{align*}
        which will return $$ \shiftmatrep_i^{(j-1)} + \mathcal{O}(N^{-4}) :=  \rbr{\1_\numchar\otimes I_\numchar}^\top \matrep^{(j-1)}_i \odot \rbr{I_\numchar\otimes \1_\numchar} \matrep^{(j\%\inputlength)}_i + \mathcal{O}(N^{-4}) $$
    \end{algorithmic}
    Return the output embeddings (as given in \cref{tab:output_attention_circularshift}).
\end{algorithm}

\begin{algorithm}[H]
    \small
    \caption{ \constructtranslatemodule{}: Self-attention and MLP layers for \translate{}}\label{alg:translate}
    \begin{algorithmic} 
        \REQUIRE We will require an index, and input embeddings as input:
        \begin{itemize}
            \item  Index $i$ (indicating index of the \constructmodule{} it is a part of), 
            \item  Embeddings (Token, Context matrix index indicator, Start and End Indicator, and Segment embeddings split into $2$ segments) from the output of its preceding \constructcircularshiftmodule{} (\cref{tab:output_attention_circularshift}). Important ones (for the current module) are
            \begin{enumerate}
                \item Token embeddings: First segment contains $\{[\onehot_j; \contextmat_1 \onehot_j]\}_{j=1}^{\numchar^2}, \cdots, \{[\onehot_j; \contextmat_\depth \onehot_j]\}_{j=1}^{\numchar^2}$ and the second segment contains $\think$,  $[\shiftmatrep_i^{(1)}; \mathbf{0}], \cdots, [\shiftmatrep_i^{(\inputlength/2)}; \mathbf{0}]$
                \item Context matrix index indicator: First segment contains  $\{\levelembedding_{1}\}_{[1, \numchar^2]}, \{\levelembedding_{2}\}_{[1, \numchar^2]}, \cdots \{\levelembedding_{\depth}\}_{[1, \numchar^2]}$ and second segments contains all $0$s vectors. 
            \end{enumerate}
        \end{itemize}
      
        \STATE \textbf{Step a:} Using a self-attention layer, change token embeddings in the second segment as $\think$, $\{ [\contextmat_i \shiftmatrep_i^{(j-1)}; \shiftmatrep_i^{(j-1)}] \}_{j=1}^{\inputlength/2}$. 
        \begin{itemize}
            \item Primarily, the self-attention score between embeddings that contain a second segment token embedding $[ \shiftmatrep_i^{(p_2)}; \mathbf{0} ]$ and a first segment token embedding $[\onehot_{p_1}; \contextmat_r \onehot_{p_1}]$ 
        (for any $r \in [1, \depth]$, $p_1 \in [1, \numchar^2], p_2 \in [1, \inputlength]$) is computed as
        \begin{align*}
            \langle \onehot_{p_1}, \shiftmatrep_i^{(p_2)} \rangle \cdot \langle \levelembedding_r, \levelembedding_{i} \rangle,
        \end{align*}
        where $\levelembedding_r$ is the corresponding context matrix index indicator for the first segment embedding under consideration and $\levelembedding_i$ is constructed using the index $i$.
        \item $\contextmat_r \onehot_{p_1}$ is used as value vector from each first segment embeddings.
        \end{itemize}

        \STATE \textbf{Step b:} Using an MLP layer, change token embeddings in the second segment to contain $\think$, $\{\mathbf{o}_j\}_{j=2}^{\inputlength/2+1}$, where
        \begin{align*}
            &\mathbf{o}_j = \Tilde{\mathbf{o}}_j / \norm{ \Tilde{\mathbf{o}}_j }_2, \text{ where }  \Tilde{\mathbf{o}}_j = \text{GELU} ((\contextmat_i + \weightmat_i) \shiftmatrep_i^{(j-1)})
        \end{align*}
        \STATE Return the output embeddings (as given in \cref{tab:output_transformer_handconstr}).
    \end{algorithmic}
\end{algorithm}

\newpage
\subsection{Trainability of the constructed transformer}
How does our constructed transformer perform on the $\translationtask$ task? To do so, we hand-construct the designed transformer and train the transformer model on a random translation task in $\translationtask(5, 10)$, that has depth $5$ and number of characters $10$ in each translation level. We train only parameters  $\{\weightmat_i\}_{i=1}^{\depth}$ using Layer-wise and Full Parameter optimization algorithms from \cref{alg:mlt-surrogate-layerwisegd,alg:mlt-surrogate-fullparamgd} respectively. 

However, we make two changes. First, we use Adam optimizer instead of SGD, as we found SGD to get stuck frequently at bad minimas. Next, instead of masking in-context representation of layers in a rotating curriculum fashion, where each layer $i$'s in-context representation were masked in intervals $[ j \cdot (i-1) \numchar^2, j \cdot i \numchar^2 ]$ for all integers $j \ge 0$, we mix in the masking of all layers together by randomly picking a layer $i$ and mask a random column from $\contextmat_i$. We found that mixing in masking of all layers helps train the model faster. 

We vary the peak learning rate as $\{10^{-2}, 10^{-3}, 10^{-4}\}$. We use a cosine decay learning rate schedule with warmup steps $100$ and minimum learning rate at end of training as $0.1 \times$ the peak learning rate. We train with cross entropy loss, and use a total of $51200$ training sequences of length $20$ during the course of training. We use a batch size of $32$ per gradient update step. We also use a small $\ell_1$ regularization, with strength $10^{-4}$, on the rows of the parameters to help optimization.

\textbf{Observations:} We report the performance of a model trained with both Layer-wise and Full Parameter optimization algorithm with peak learning rate $10^{-3}$ and $\ell_1$ regularization with strength $10^{-4}$ in \cref{fig:Mechtransformer_D5C10}. We observe that under both algorithms, the trainable parameters quickly learn the relevant matrices that represent the true \strcodebook{} to minimize the training loss.

\begin{figure}[!htbp]
    \centering
    \includegraphics[width=0.8\linewidth]{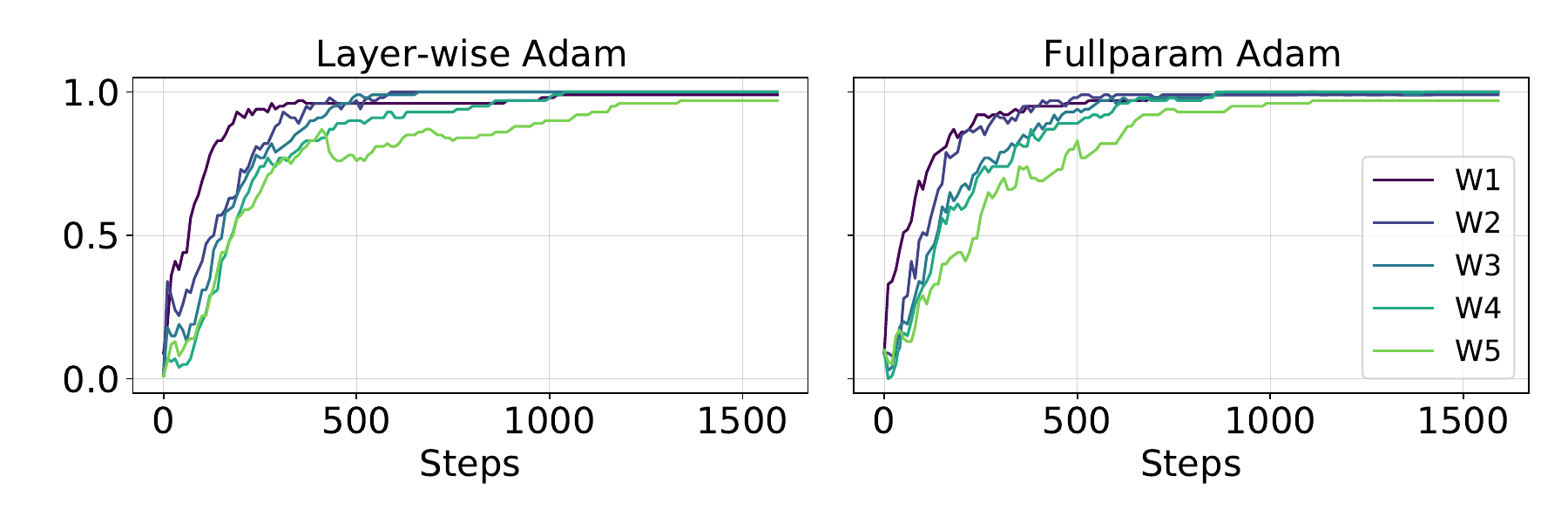}
    \caption{We perform Layer-wise and Full Parameter Training with Gradient Descent (defined in \cref{sec:apdx-surrogateModel-GDexp-apprGD}) on the handconstructed transformer (\cref{lem:construction}) that represents $\surrogateModel$ (\cref{defn:apdx-surrogate-surrogateModel}) designed for $\translationtask_\targetCB$ in $\translationtask(5,10)$ . Here, we report the portion of columns from the trainable parameters, which after $\maxfunc$ application $\{ \maxfunc\rbr{\weightmat_i}\}_{i=1}^{\depth}$,  align with the corresponding stochastic matrices of the \strdicts{} $\{\transmat\rbr{{\vardict^*_i}}\}_{i=1}^{\depth}$. We observe that under both algorithms, the trainable parameters quickly learn the relevant stochastic matrices.}
    \label{fig:Mechtransformer_D5C10}
\end{figure}

\newpage
\section{Additional Experiment Setups}
\label{sec:training_data_setups}

\subsection{Format of Training Text}
\label{sec:apdx-expconfig-inputformatting}

Here we present examples of training data used for experiments in \cref{sec:experiments}

\begin{figure}[H]
\begin{promptbox}
\input{files/Appendix/experiment_config/samples/cot}
\end{promptbox}
\caption{Input with complete context and explicit chain-of-thought. We use this data format at the begining of stage 1 training.}
\label{fig:apdx-expconfig-inputformat-explicitcot}
\end{figure}

\begin{figure}[H]
\begin{promptbox}
\input{files/Appendix/experiment_config/samples/internalized_cot}
\end{promptbox}
\caption{Input with complete context and internalized chain-of-thought. We use this data format by the end of stage 1 training as well as the base format for stage 2 training (before dropout)}
\label{fig:apdx-expconfig-inputformat-internalizedcot}
\end{figure}

\begin{figure}[H]
\begin{promptbox}
\input{files/Appendix/experiment_config/samples/masked}
\end{promptbox}
\caption{Input with completely masked context and internalized chain-of-thought. We use this data format to evaluate the model's capability on conducting translation without anything (useful information) in context.}
\label{fig:apdx-expconfig-inputformat-masked}
\end{figure}

\end{document}